\theoremstyle{definition}
\newtheorem{defn}{Definition}
\newcommand{\mypar}[1]{{\bf #1.}}
\newtheorem{myThm}{Theorem}
\newtheorem{myCorollary}{Corollary}
\newtheorem{myConj}{Conjecture}
\newcommand{\R}{\ensuremath{\mathbb{R}}}
\DeclareMathOperator{\Id}{I}
\DeclarePairedDelimiter\ceil{\lceil}{\rceil}
\def\a{\mathbf{a}}
\def\x{\mathbf{x}}
\def\y{\mathbf{y}}
\def\d{\mathbf{d}}
\def\e{\mathbf{e}}
\def\t{\mathbf{t}}
\def\vv{\mathbf{v}}
\def\w{\mathbf{w}}
\def\N{\mathcal{N}}
\def\V{\mathcal{V}}
\def\E{\mathcal{E}}
\def\M{\mathcal{M}}
\DeclareMathOperator{\LL}{L}
\DeclareMathOperator{\BL}{BL}
\DeclareMathOperator{\PL}{PL}
\DeclareMathOperator{\PC}{PC}
\DeclareMathOperator{\PPL}{PPL}
\DeclareMathOperator{\PBL}{PBL}
\DeclareMathOperator{\STV}{S}
\DeclareMathOperator{\Adj}{A}
\DeclareMathOperator{\Cc}{C}
\DeclareMathOperator{\D}{D}
\DeclareMathOperator{\F}{F}
\DeclareMathOperator{\Pj}{P}
\DeclareMathOperator{\RR}{R}
\DeclareMathOperator{\Vm}{V}
\DeclareMathOperator{\X}{X}
\DeclareMathOperator{\Um}{U}
\DeclareMathOperator{\W}{W}
\DeclareMathOperator{\Ss}{S}
\begin{document}
\title{ Signal Representations on Graphs: Tools and Applications }
\author{Siheng~Chen,~\IEEEmembership{Student~Member,~IEEE}, Rohan~Varma,~\IEEEmembership{Student~Member,~IEEE}, Aarti Singh,   Jelena~Kova\v{c}evi\'c,~\IEEEmembership{Fellow,~IEEE}
  \thanks{S. Chen and R. Varma are with the Department of Electrical and Computer
    Engineering, Carnegie Mellon University, Pittsburgh, PA, 15213
    USA. Email: sihengc@andrew.cmu.edu, rohanv@andrew.cmu.edu. A. Singh is with the Department of Machine Learning, Carnegie Mellon University, Pittsburgh, PA. Email: aarti@cs.cmu.edu. J. Kova\v{c}evi\'c is with the
      Departments of Electrical and
      Computer Engineering and Biomedical Engineering, Carnegie Mellon University, Pittsburgh,
      PA. Email: jelenak@cmu.edu.  }
  \thanks{The authors gratefully acknowledge support from the NSF through awards
1130616,1421919,  the University Transportation Center grant (DTRT12-GUTC11)
from the US Department of Transportation, and the CMU Carnegie
Institute of Technology Infrastructure Award.}
}
 \maketitle

\tableofcontents

\begin{abstract}
We present a framework for representing and modeling data on graphs. Based on this framework, we study three typical classes of graph signals: smooth graph signals, piecewise-constant graph signals, and piecewise-smooth graph signals. For each class, we provide an explicit definition of the graph signals and construct a
corresponding graph dictionary with desirable properties. We then study how such graph dictionary works in two
standard tasks: approximation and sampling followed
with recovery, both from theoretical as well as algorithmic perspectives. Finally, for each class, we present a case study of a real-world problem by using the proposed methodology.
\end{abstract}
\begin{keywords}
Discrete signal processing on graphs, signal representations
\end{keywords}

\section{Introduction}
\label{sec:intro}
Signal processing on graphs is a framework that extends classical discrete signal processing to
signals with an underlying complex and irregular
structure. The framework models that underlying structure by a graph and signals by graph signals, generalizing concepts and tools from classical discrete signal
processing to graph signal processing.  Recent work includes graph-based transforms~\cite{SandryhailaM:13,HammondVG:11,NarangSO:10}, sampling and interpolation
on graphs~\cite{Pesenson:08, AnisGO:14, ChenVSK:15a}, graph signal recovery~\cite{ChenSMK:14, WangLG:14, WangLG:15a, ChenVSK:15}, uncertainty principle on graphs~\cite{AgaskarL:13, TsitsveroBL:15},  graph
dictionary learning~\cite{DongTFV:14, ThanouSF:14},  community detection~\cite{Tremblay:14, ChenH:15, ChenH:15a} , and many others.

In this paper, we consider the signal representations on graphs. Signal representation is one of the most fundamental tasks in our discipline. For example, in classical signal processing, we use the Fourier basis to represent the sine waves; we use the wavelet  basis to represent smooth signals with transition changes. Signal representation is highly related to approximation, compression, denoising, inpainting, detection, and localization~\cite{Mallat:09, VetterliKG:12}.  Previous works along those lines consider representations based on the graph Fourier domain, which emphasize the smoothness and global behavior of a graph signal~\cite{ShumanRV:15}, as well as  the representations based on the graph vertex domain, which emphasize the connectivity and localization of a graph signal~\cite{GavishNC:10, CrovellaK:13}.

We start by proposing  a representation-based  framework, which provides a recipe to model real-world data on graphs. Based on this framework, we study three typical classes of graph signals: smooth graph signals, piecewise-constant graph signals, and piecewise-smooth graph signals. For each class, we provide an explicit definition for the graph signals and construct a corresponding graph dictionary with desirable properties. We then study how the proposed graph dictionary works in two standard tasks: approximation and sampling followed with recovery, both from theoretical as well as algorithmic perspectives. Finally, for each class, we present a case study of a real-world problem by using the proposed methodology.

\mypar{Contributions}  The main contribution of this
  paper is to build a novel and unified framework to analyze graph signals. The framework provides a general solution allowing us to study real-world data on graphs. Based on the framework, we study three typical classes of graph signals:
  
 \begin{itemize}
 \item Smooth graph signals. We explicitly define the smoothness criterion and construct corresponding representation dictionaries. We propose a generalized uncertainty principle on graphs and study the localization phenomenon of graph Fourier bases. We then investigate how the proposed graph dictionary works in approximation and sampling followed with recovery. Finally, we demonstrate a case study on a co-authorship network.
 
  \item Piecewise-constant graph signals. We explicitly define piecewise-constant graph signals and construct the multiresolution local sets, a local-set-based piecewise-constant  dictionary and local-set-based piecewise-constant wavelet basis, which provide a multiresolution analysis on graphs and promote sparsity for  such graph signals.  We then investigate how the proposed local-set-based piecewise-constant  dictionary works for approximation and sampling followed with recovery. Finally, we demonstrate a case study on epidemics processes.

   \item  Piecewise-smooth graph signals. We explicitly define  piecewise-smooth graph signals and construct a local-set-based piecewise-smooth  dictionary, which promotes sparsity for  such graph signals.  We then investigate how the proposed local-set-based piecewise-smooth dictionary works in approximation. Finally, we demonstrate a case study on environmental change detection.
\end{itemize}  

\mypar{Outline of the paper} Section~\ref{sec:spg} introduces and reviews the background on signal processing on graphs; Section~\ref{sec:foundations} proposes a representation-based framework to study graph signals, which lays the foundation for this paper; Sections~\ref{sec:R_Smooth},~\ref{sec:R_PC}, and~\ref{sec:R_PS} present representations of smooth, piecewise constant, and piecewise smooth graph signals, respectively, including their effectiveness in approximation and sampling followed by recovery, as well as validation on three different real-world problems: co-authorship network, epidemics processes, and environmental change detection. Section~\ref{sec:conclusions} concludes the paper.

\section{Signal Processing on Graphs}
\label{sec:spg}
Let $\mathcal{G} = (\V,\E, \W)$  be a directed, irregular and weighted graph, where $\V = \{v_i \}_{i=1}^N$ is the set of nodes, $\E$ is the set of weighted edges, and $\W \in \R^{N \times N}$ is the weighted adjacency matrix, whose element $\W_{i,j}$ measures the underlying relation between the $i$th and the $j$th nodes. Let $\d \in \R^N$ be a degree vector, where $d_i = \sum_j \W_{i,j}$. Given a fixed ordering of nodes, we assign a signal coefficient to each node; a~\emph{graph signal} is then defined as a vector,
$$
	\x = [x_1,x_2,\cdots,x_N]^T \in \R^N,
$$
with $x_n$ the signal coefficient corresponding to the node $v_n$. 

To represent the graph structure by a matrix, two basic approaches have been considered. The first one is based on algebraic signal processing~\cite{PueschelM:08}, and is the one we follow. We use the graph shift operator $\Adj \in \R^{N \times N}$ as a graph representation, which is an elementary filtering operation that replaces a signal coefficient at a node with a weighted linear combination of coefficients at its neighboring nodes. Some common choices of a graph shift are weighted adjacency matrix $\W$, normalized adjacency matrix $\W_{\rm norm} = {\rm diag}(\d)^{-\frac{1}{2}} \W {\rm diag}(\d)^{-\frac{1}{2}}$, and transition matrix $\Pj = {\rm diag}(\d)^{-1} \W$~\cite{ChenSMK:13}.  The second one is based on the spectral graph theory~\cite{Chung:96}, where the graph Laplacian matrix $\LL \in \R^{N \times N}$ is used  as a graph representation, which is a second-order difference operator on graphs. Some common choices of a graph Laplacian matrix are the unnormalized Laplacian  ${\rm diag}(\d) - \W$, the normalized Laplacian  $\Id - {\rm diag}(\d)^{-\frac{1}{2}} \W {\rm diag}(\d)^{-\frac{1}{2}}$, and the transition Laplacian $\Id - {\rm diag}(\d)^{-1} \W $. 

\begin{table}[htbp]
  \footnotesize
  \begin{center}
    \begin{tabular}{@{}lll@{}}
      \toprule
     & Graph shift  $\Adj$ &  Graph Laplacian $\LL$ \\
      \midrule \addlinespace[1mm]
 Unnormalized &   $\W$ & ${\rm diag}(\d) - \W$ \\

Normalized & $\W_{\rm norm} = {\rm diag}(\d)^{-\frac{1}{2}} \W {\rm diag}(\d)^{-\frac{1}{2}}$ & $\Id - \W_{\rm norm}$ \\

Transition  & $\Pj = {\rm diag}(\d)^{-1} \W$  & $\Id - \Pj$ \\

\bottomrule
\end{tabular} 
\caption{\label{tab:graph_structure_matrix}. Graph structure matrix $\RR$ can be either a graph shift $\Adj $ or a graph Laplacian. $\LL$. }
\end{center}
\end{table}

A graph shift emphasizes the similarity while a graph Laplacian matrix emphasizes the difference between each pair of nodes. The graph shift and graph Laplacian matrix often appear in pairs; see Table~\ref{tab:graph_structure_matrix}. Another overview is presented in~\cite{AnisAO:15}. We use $\RR \in \R^{N \times N}$ to represent a graph structure matrix, which can be either a graph shift or a graph Laplacian matrix.  Based on this graph structure matrix, we are able to generalize many tools from traditional signal processing to graphs, including filtering~\cite{NarangO:12, ChenCRBGK:13}, Fourier transform~\cite{ShumanNFOV:13, SandryhailaM:13}, wavelet transforms~\cite{CrovellaK:03,HammondVG:11}, and many others.  We now briefly review the graph Fourier transform.

The graph Fourier basis generalizes the traditional Fourier basis and is used to represent the graph signal in the graph spectral domain.  The~\textit{graph Fourier basis} $\Vm \in \R^{N \times N}$ is defined to be the eigenvector matrix of $\RR$, that is,
\begin{equation*}
	\RR =  \Vm \Lambda \Vm^{-1},
\end{equation*}
where the $i$th column vector of $\Vm$ is the~\textit{graph Fourier basis vector} $\vv_i$ corresponding to the eigenvalue $\lambda_i$ as the $i$th diagonal element in $\Lambda$. The graph Fourier transform \rm{of $\x \in \mathbb{R}^N$} is 
$
	\widehat{\x} = \Um \x,
$
where $\Um = \Vm^{-1}$ is called the graph Fourier transform matrix. 
When $\RR$ is symmetric, then $\Um = \Vm^T$ is orthornormal; the graph Fourier basis vector $\vv_i$ is the $i$th row vector of $\Um$. The~\textit{inverse graph Fourier transform} is
$
	\x =  \Vm  \widehat{\x}.
$
The vector $\widehat{\x}$ represents the frequency coefficients corresponding to the graph signal $\x$, and the graph Fourier basis vectors can be regarded as graph frequency components. In this paper, we use $\Vm$, $\Um$ to denote the inverse graph Fourier transform matrix and graph Fourier transform matrix for a general graph structure matrix, which can be an adjacency matrix, a graph Laplacian, or a transition matrix. When we emphasize that $\Vm$ and $\Um$ is generated from a certain graph structure matrix, we add a subscript. For example, $\Um_{\W}$ is the graph Fourier transform matrix of the weighted adjacency matrix $\W$.

The ordering of graph Fourier basis vectors depends on their variations. The variations of a graph signal $\x$ are defined in different ways. When the graph representation matrix is the graph shift, the variation of a graph signal $\x$ is defined as,
\begin{equation*} 	\STV_{\Adj} ( \x ) = \left\| \x - \frac{1}{\left\lvert\lambda_{\max}(\Adj)\right\rvert} \Adj \x \right\|_2^2, \end{equation*}
 where $\lambda_{\max}(\Adj)$ is the eigenvalue of $\Adj$ with the largest magnitude. We can show that when the eigenvalues of the graph shift $\Adj $ are sorted in a nonincreasing order $\lambda^{(\Adj)}_1 \geq \lambda^{(\Adj)}_2 \geq \ldots \geq \lambda^{(\Adj)}_N$, the variations of the corresponding eigenvectors follow a nondecreasing order $\STV_{\Adj}(\vv^{(\Adj)}_1) \leq \STV_{(\Adj)}(\vv^{(\Adj)}_2) \leq \ldots \leq \STV_{(\Adj)}(\vv^{(\Adj)}_N)$.

 When the graph representation matrix is the graph Laplacian matrix, the variation of a graph signal $\x$ is defined as,
 \begin{equation*} 	\STV_{\LL} (\x) = \sum_{i,j=1}^N \Adj_{i,j}(x_i-x_j)^2 = \x^T \LL \x. \end{equation*}
Similarly, we can show that when the eigenvalues of the graph Laplacian $\LL$ are sorted in a nondecreasing order $\lambda^{(\LL)}_1 \leq \lambda^{(\LL)}_2 \leq \ldots \leq \lambda^{(\LL)}_N$, the variations of the corresponding eigenvectors follows a nondecreasing order $\STV_{\LL}(\vv^{(\LL)}_1) \leq \STV_{\LL}(\vv^{(\LL)}_2) \leq \ldots \leq \STV_{\LL}(\vv^{(\LL)}_N)$.

 The variations of graph Fourier basis vectors thus allow us to provide the ordering:  the Fourier basis vectors with small variations are considered as \textit{low-frequency} components while the vectors with large variations are considered as \textit{high-frequency} components~\cite{SandryhailaM:131}. We will discuss the difference between these two variations in Section~\ref{sec:R_Smooth}.

 Based on the above discussion, the eigenvectors associated with large eigenvalues of the graph shift  (small eigenvalues of the graph Laplacian) represent low-frequency components and the eigenvectors associated with small eigenvalues of the graph shift (large eigenvalues of graph Laplacian) represent high-frequency components. In the following discussion, we assume all graph Fourier bases are ordered from low frequencies to high frequencies.

\section{Foundations}
\label{sec:foundations}
In this section, we introduce a representation-based framework with three components, including graph signal models, representation dictionaries and the related tasks, in a general and abstract level. This lays a foundation for the following sections, which are essentially special cases that follow this general framework.

\begin{figure}[htb]
  \begin{center}
    \begin{tabular}{cc}
 \includegraphics[width=0.8\columnwidth]{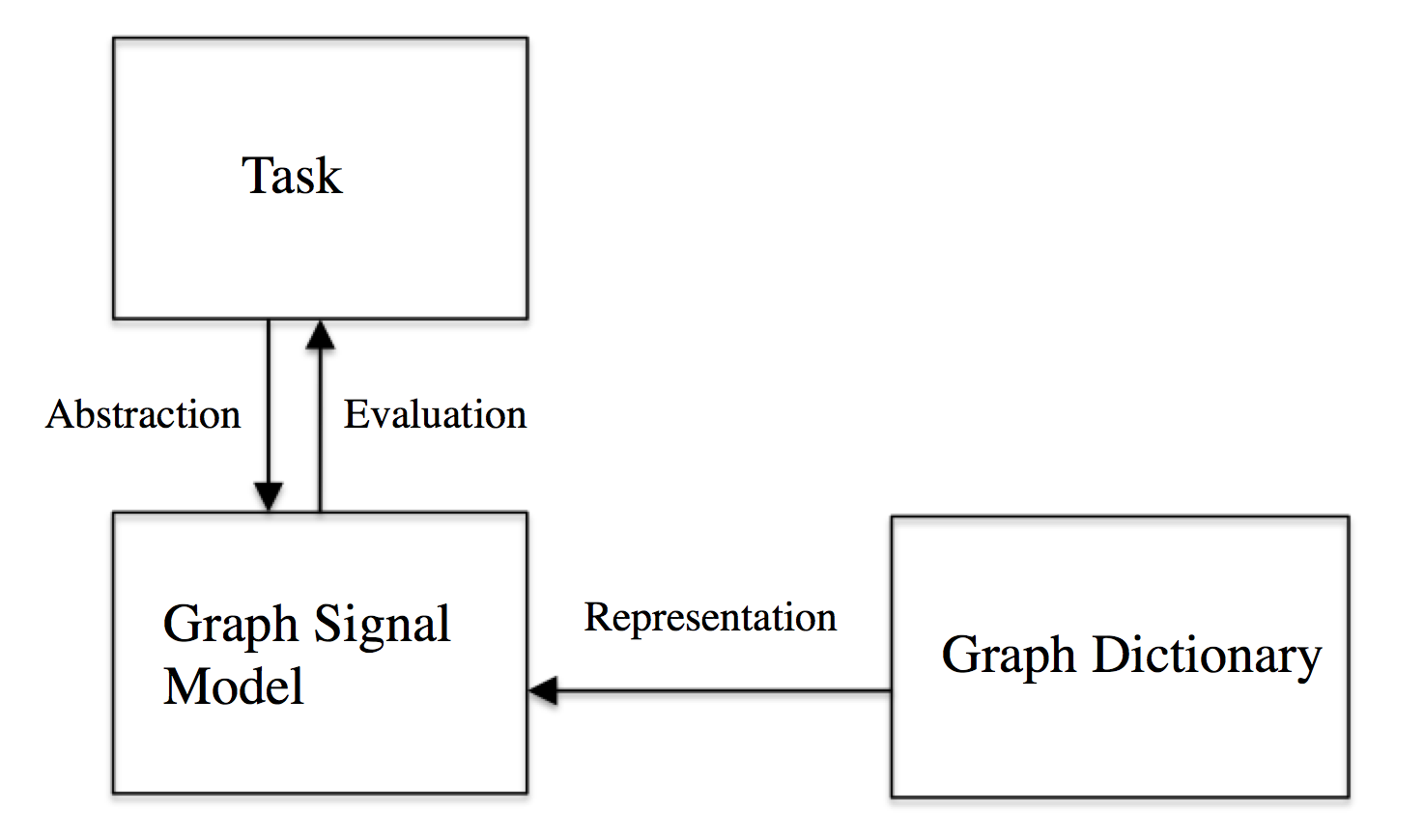}
\end{tabular}
  \end{center}
   \caption{\label{fig:framework} The central concept here is the graph signal model, which is abstracted from given data and is represented by some dictionary.
}
\end{figure}

As shown in Figure~\ref{fig:framework}, when studying a task with a graph, we first model the given data with some graph signal model. The model describes data by capturing its important properties. Those properties can be obtained from observations, domain knowledge, or statistical learning algorithms. We then use a representation dictionary to represent the graph signal model. In the following discussion, we go through each component one by one.

\subsection{Graph Signal Model}
In classical signal processing, people often work with signals with some specific properties, instead of arbitrary signals. For example, smooth signals have been studied extensively over decades; sparse signals are intensively studied recently. Here we also need a graph signal model to describe a class of graph signals with specific properties. In general, there are two approaches to mathematically model a graph signal, including a descriptive approach and a generative approach.

For the descriptive approach, we describe the properties of a graph signal by bounding the output of some operator. Let $\x$ be a graph signal, $f(\cdot)$ be a function operating on $\x$, we define a class of graph signals by restricting
\begin{equation}
\label{eq:descriptive}
 f(\x)  \leq C,
\end{equation}
where $C$ is some constant. For example, we define smooth graph signals by restricting $\x^T \LL \x$ be small~\cite{BelkinNS:06}. 

For the generative approach, we describe the properties of a graph signal by using a graph dictionary. Let $\D$ be a graph dictionary, we define a class of graph signals by restricting
\begin{equation*}
 \x \ = \ \D \a,
\end{equation*}
where $\a$ is a vector of expansion coefficients. For the descriptive approach, we do not need to know everything about a graph signal, instead, we just need to its output of some operator; for the generative approach, we need to reconstruct a graph signal, which requires to know everything about a graph signal.

\subsection{Graph Dictionary}
For a certain graph signal model, we aim to find some dictionary to provide accurate and concise representations.

\subsubsection{Design}
In general, there are two approaches to design a graph dictionary, including a passive approach and a active one.

For the passive approach, the graph dictionary is designed only based on the graph structure; that is,
\begin{equation*}
 \D  = g(\RR),
\end{equation*}
where $g(\cdot)$ is some operator on the graph structure matrix $\RR$. For example, $\D$ can be the eigenvector matrix of $\RR$, which is the graph Fourier basis. In classical signal processing, the Fourier basis, wavelet bases, wavelet frames, and Gabor frames are all constructed using this approach, where the graph is a line graph or a lattice graph~\cite{PueschelM:08}.

For the active approach, the graph dictionary is designed based on both graph structure and a set of given graph signals; that is,
\begin{equation*}
 \D  = g(\Adj, \X),
\end{equation*}
where $\X$ is a matrix representation of a set of graph signals. We can fit to those given graph signals and provide a specialized dictionary. Some related works see~\cite{DongTFV:14, ThanouSF:14}.

\subsubsection{Properties}
\label{sec:properties}
The same class of graph signals can be modeled by various dictionaries. For example, whenever $\D$ is an identity matrix, it can represent arbitrary graph signals, but may not be appealing to represent a non-sparse signals. Depending on the application, we may have different requirements for the constructed graph dictionary. Here are some standard properties of a graph dictionary $\D$, we aim to study.
\begin{itemize}
\item Frame bounds. For any $\x$ in  a certain graph signal model,
\begin{equation*}
\alpha_1  \left\|  \x  \right\|_2 \leq  \left\| \D \x  \right\|_2 \leq  \alpha_2 \left\| \x  \right\|_2,
\end{equation*}
where $\alpha_1, \alpha_2$ are some constants;

\item Sparse representations. For any $\x$ in a certain graph signal model, there exists a sparse coefficient $\a$ with $\left\| \a \right\|_0 \leq C$, which satisfies 
\begin{equation*}
\left\| \x - \D \a \right\|_2^2  \leq \epsilon,
\end{equation*}
where $C, \epsilon$ are some constants;

\item Uncertainty principles. For  any $\x$ in a certain graph signal model,  the following is satisfied 
\begin{equation*}
\left\| \a_1 \right\|_0 + \left\| \a_2 \right\|_0   \geq C,
\end{equation*}
where $\left\| \x - \D_1 \a_1 \right\|_2^2  \leq \epsilon$ and
$\left\| \x - \D_2 \a_2 \right\|_2^2 \leq \epsilon$, and $\D = \begin{bmatrix}
\D_1 & \D_2
\end{bmatrix}$.
\end{itemize}

\subsection{Graph Signal Processing Tasks}
\label{sec:template_tasks}
We mainly consider two standard tasks in signal processing, approximation and sampling followed with recovery.

\subsubsection{Approximation}
Approximation is a standard task to evaluate a representation. The goal is to use a few expansion coefficients to approximate a graph signal.  We consider approximating a graph signal by using a linear combination of a few atoms from $\D$ and solving the following sparse coding problem,
\begin{eqnarray}
\label{eq:sparse_coding}
   \x^*, \a^*=  &  \arg \min_{\x, \a} & d(\x', \x ),
   \\
   \nonumber
   & {\rm subject~to:~} & \x' = \D \a, 
   \\
   \nonumber
   && \left\| \a \right\|_0 \leq K.
\end{eqnarray}
where $d(\cdot, \cdot)$ is some evaluation metric. The objective function measures the difference between the original signal and the approximated one, which evaluates how well the designed graph dictionary represents a given graph signal. The same formulation can also be used for denoising graph signals.

\subsubsection{Sampling and Recovery} 
\label{sec:samplingandrecovery}
The goal is to recover an original graph signal from a few samples. We consider a general sampling and recovery setting. We consider any decrease in dimension via a linear operator as sampling, and, conversely, any increase in dimension via a linear operator as recovery~\cite{VetterliKG:12}.  Let $\F \in \R^{N \times N}$ be a sampling pattern matrix, which is constrained by a given application and the sampling operator is 
\begin{equation}
\label{eq:Psi}
\Psi = \Cc \F \in \R^{M \times N},
\end{equation}
where $\Cc \in \R^{M \times N}$ selects rows from $\F$. For example, when we choose the $k$th row of $\F$ as the $i$th sample, the $i$th row of $\Cc$ is
\begin{equation*}
\label{eq:C}
 \Cc_{i,j} = 
  \left\{ 
    \begin{array}{rl}
      1, & j = k;\\
      0, & \mbox{otherwise}.
  \end{array} \right.
\end{equation*}

There are three sampling strategies: (1) uniform sampling when designing $\Cc$, , where row indices are chosen from from $\{0, 1, \cdots, N-1\}$ independently and uniformly; and~\emph{experimentally design sampling}, where row indices can be chosen beforehand; and~\emph{active sampling}, where we will use feedback as samples are sequentially collected to decide the next row to be sampled. Each sampling strategy can be implemented by two approaches, including a random approach and a deterministic one. The sampling pattern matrix $\F$ constraints the following sampling patterns: when $\F$ is an identity matrix, $\Psi$ is a subsampling operator; when $\F$ is a Gaussian random matrix,  $\Psi$ is a compressed sampling operator.

In the sampling phase, we take
samples with the sampling operator $\Psi$,
\begin{equation*}
\x_{\Psi} = \Psi \y = \Psi ( \x  + \epsilon),
\end{equation*}
 is a vector of samples and $\epsilon$ is
noise with zero mean and $\sigma^2$ as variance. In the recovery phase, we
reconstruct the graph signal by using a recovery operator $\Phi$, 
\begin{equation*}
\x' = \Phi \x_{\Psi},
\end{equation*}
where $\x'$ recovers $\x$ either
exactly or approximately. The evaluation metric can be the mean square error or other metrics.  Without any property of $\x$, it is hopeless to design an efficient sampling and recovery strategy. Here we focus on a special graph signal model, which can be described by a graph dictionary.  The prototype of designing sampling and recovery strategies is
\begin{eqnarray*}
\Psi^*(\D), \Phi^*(\D) = \min_{\Psi, \Phi} \max_{\a} && d(\x', \x),
\\
 {\rm subject~to} && \x' = \Phi \Psi (\x + \epsilon),
\\
&& \x = \D \a,
\end{eqnarray*}
where $d(\cdot, \cdot)$ is some evaluation metric. The optimal sampling and recovery strategies $\Psi^*, \Phi^* $ are influenced by the given graph dictionary $\D$. We often consider fixing either the sampling strategy or the recovery strategy and optimizing over the other one.

\section{Representations of Smooth Graph Signals}
\label{sec:R_Smooth}
Smooth graph signals are mostly studied in the previous literature; however, many works only provide a heuristic. Here we rigorously define graph signal models and design the corresponding graph dictionaries.

\begin{figure*}[htb]
  \begin{center}
    \begin{tabular}{cccc}
 \includegraphics[width=0.4\columnwidth]{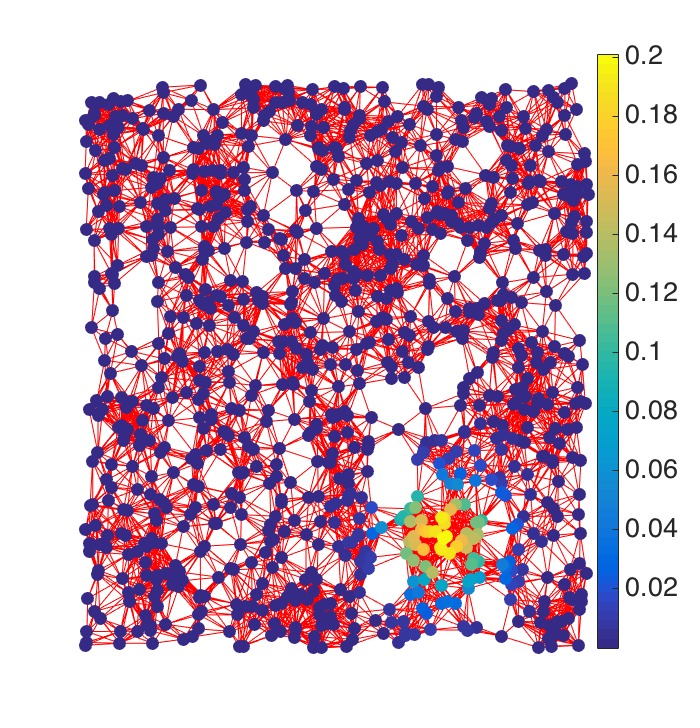} &
  \includegraphics[width=0.4\columnwidth]{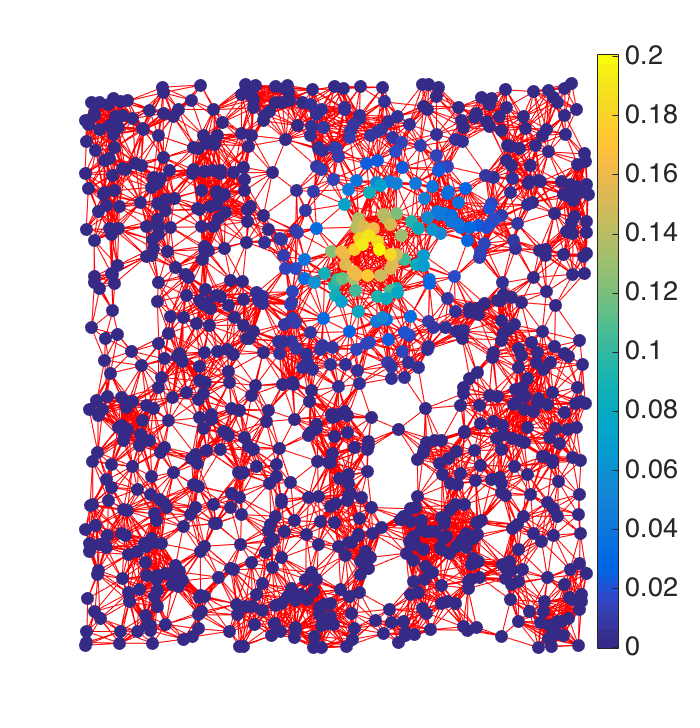} 
&
 \includegraphics[width=0.4\columnwidth]{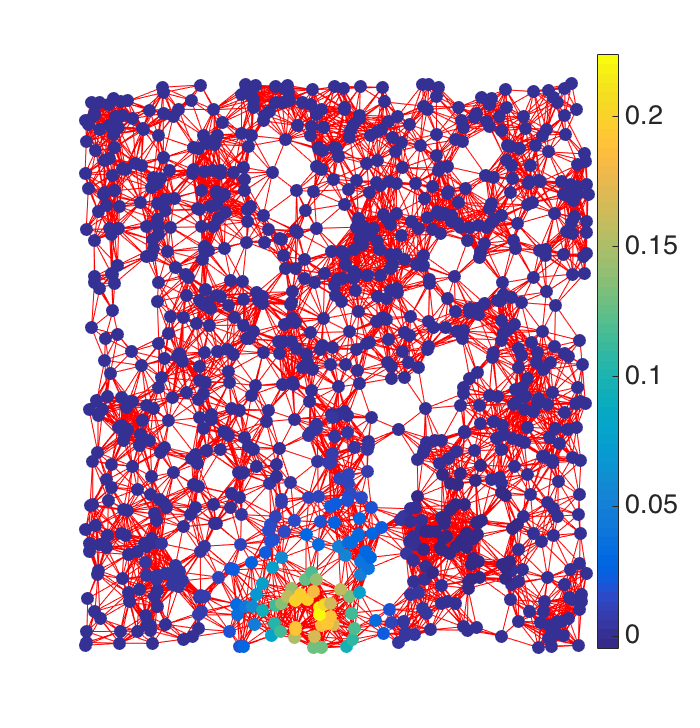}
 &
  \includegraphics[width=0.4\columnwidth]{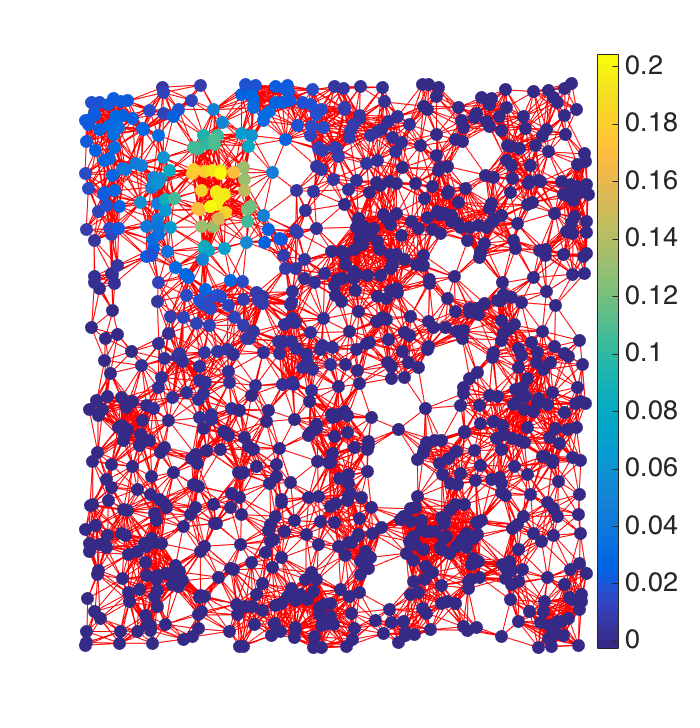}
\\
{\small (a) $\vv_1^{(\W)}$.} & {\small (b) $\vv_2^{(\W)}$.}  &
{\small (c) $\vv_3^{(\W)}$.} &
{\small (d) $\vv_4^{(\W)}$.}
  \\
\includegraphics[width=0.4\columnwidth]{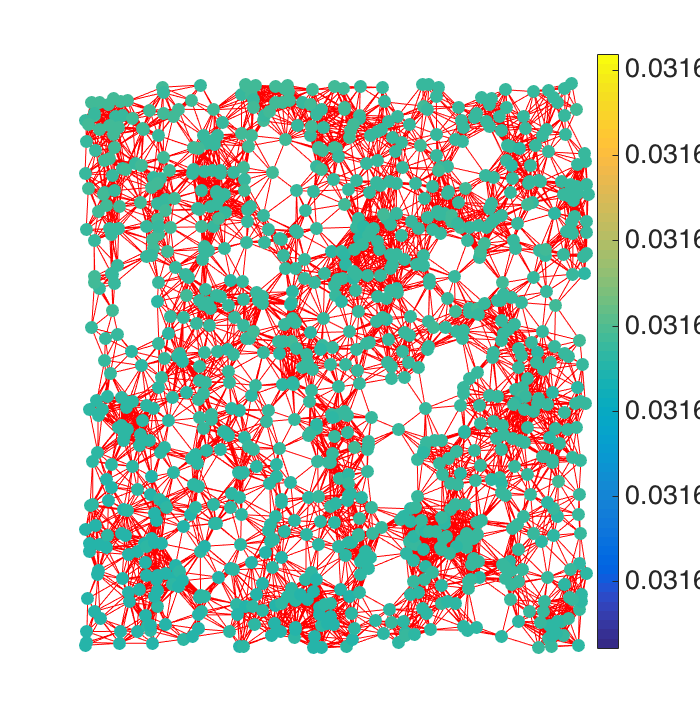} &
  \includegraphics[width=0.4\columnwidth]{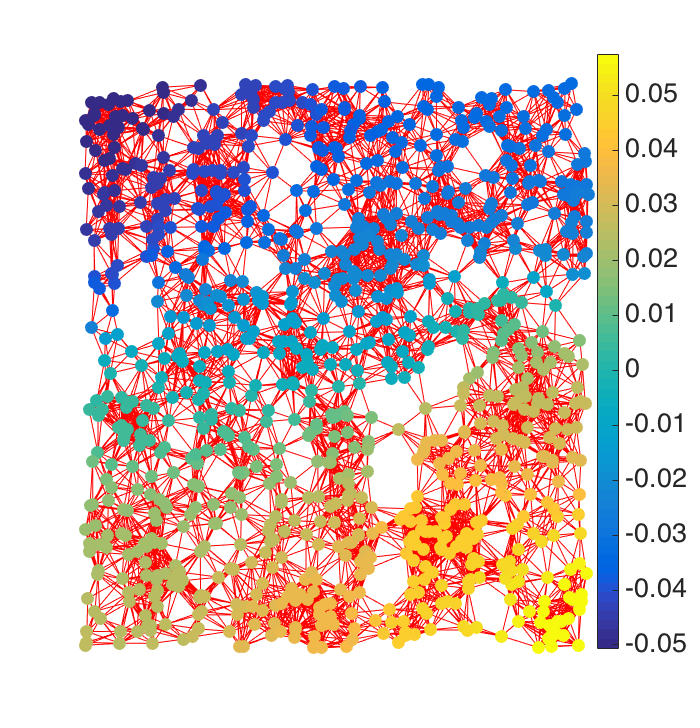} 
&
 \includegraphics[width=0.4\columnwidth]{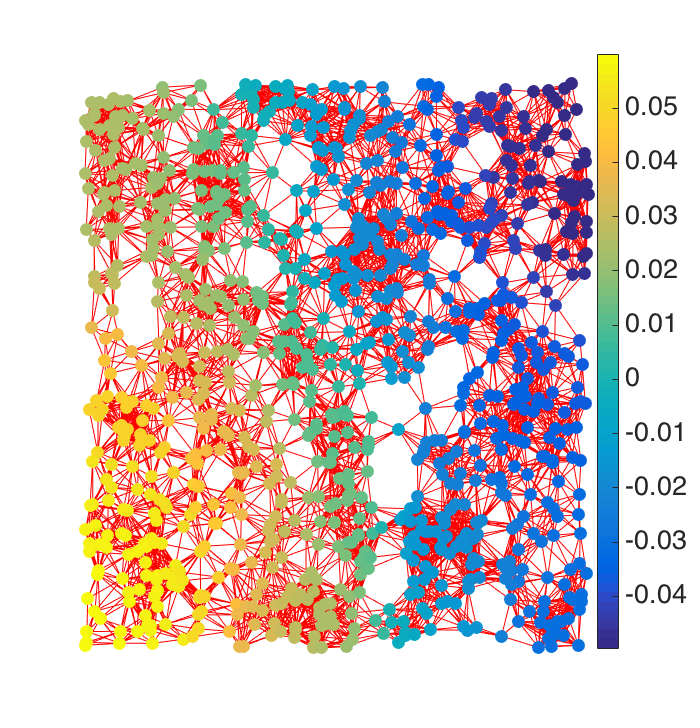}
 &
  \includegraphics[width=0.4\columnwidth]{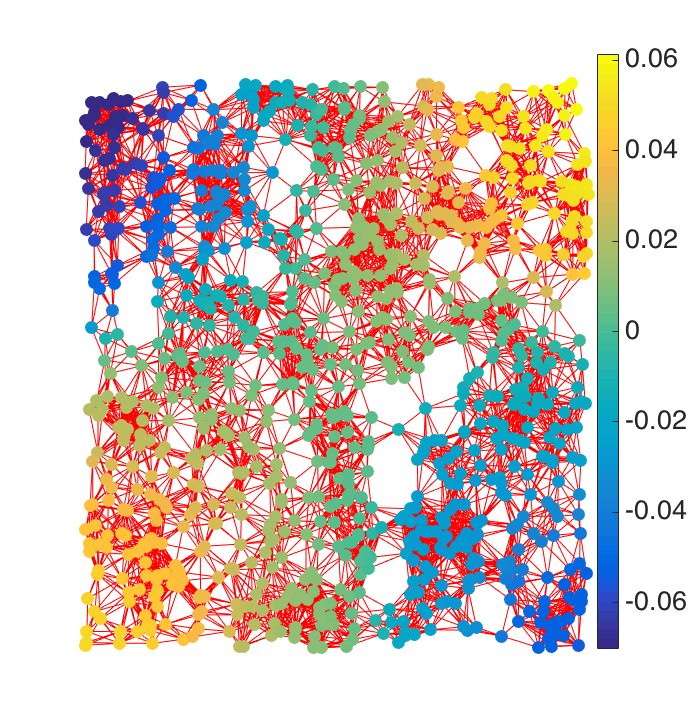}
\\
{\small (e) $\vv_1^{(\LL)}$.} & {\small (f) $\vv_2^{(\LL)}$.}  &
{\small (g) $\vv_3^{(\LL)}$.} &
{\small (h) $\vv_4^{(\LL)}$.}
  \\
  \includegraphics[width=0.4\columnwidth]{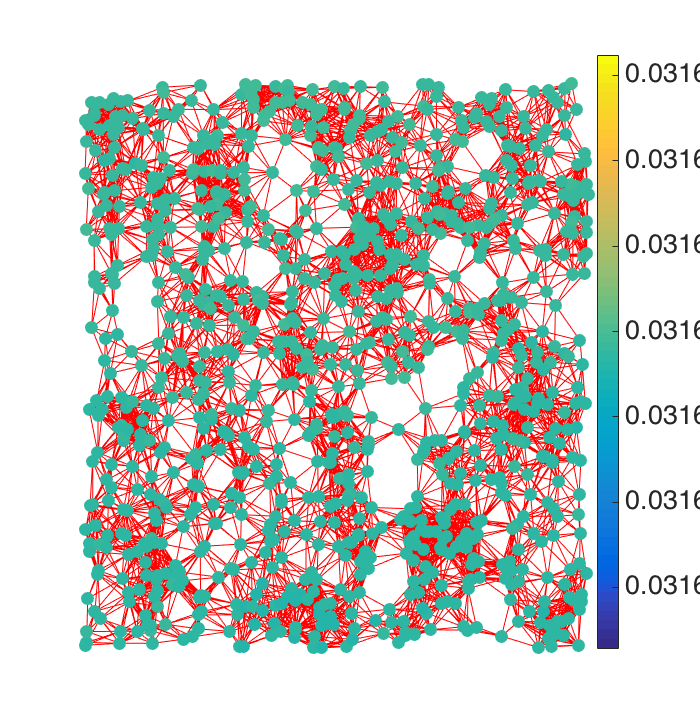} &
  \includegraphics[width=0.4\columnwidth]{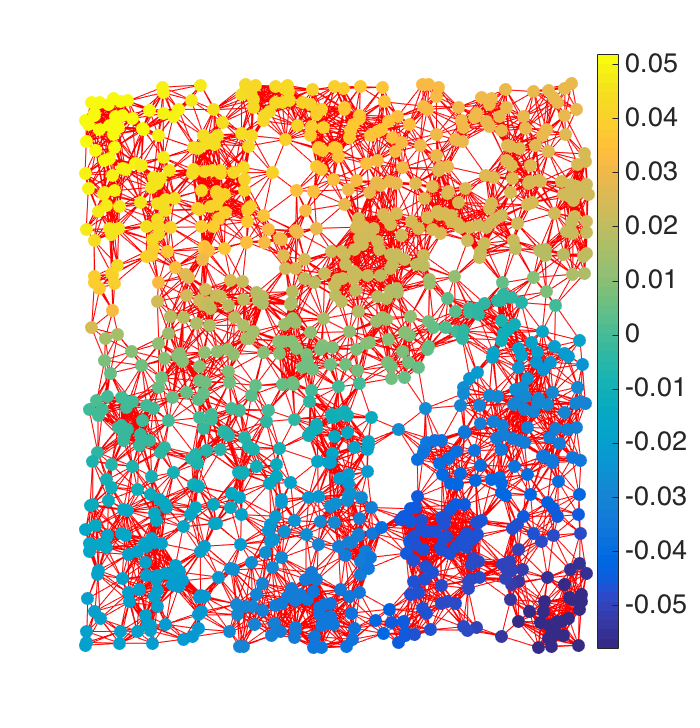} 
&
 \includegraphics[width=0.4\columnwidth]{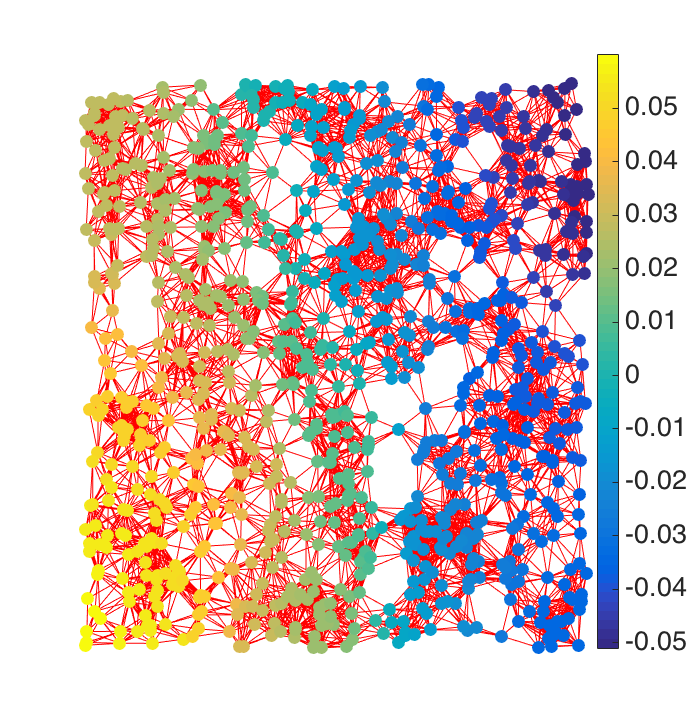}
 &
  \includegraphics[width=0.4\columnwidth]{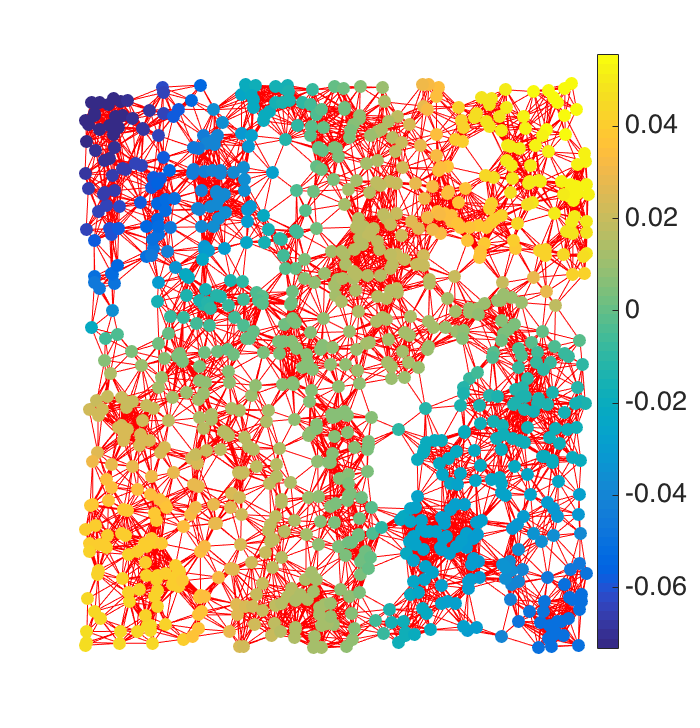}
\\
{\small (i) $\vv_1^{(\Pj)}$.} & {\small (j) $\vv_2^{(\Pj)}$.}  &
{\small (k) $\vv_3^{(\Pj)}$.} &
{\small (l) $\vv_4^{(\Pj)}$.}
  \\
	\end{tabular}
  \end{center}
   \caption{\label{fig:geo_fourier} Graph Fourier bases of a geometric graph. $\Vm_{\W}$ localizes in some small regions;  $\Vm_{\LL}$ and $\Vm_{\Pj}$ have similar behaviors.}
\end{figure*}

\subsection{Graph Signal Models}
We introduce four smoothness criteria for graph signals; while they have been implicitly mentioned previously, none have been rigorously defined. The goal here is not to conclude which criterion or representation approach works best; instead, we aim to study the properties of various smoothness criteria and model a smooth graph signal with a proper criterion.

We start with the pairwise Lipschitz smooth criterion.
\begin{defn}
\label{df:pairwise_Lip}
A graph signal $\x$ with unit norm is~\emph{pairwise Lipschitz smooth} with parameter $C$ when it satisfies
\begin{equation*}
	| x_i - x_j |   \ \leq \  C \ d(v_i, v_j),~{\rm for~all~} i,j = 0, 1, \ldots, N-1,
\end{equation*}
with $d(v_i, v_j)$ the distance between the $i$th and the $j$th nodes.
\end{defn}
\noindent  We can choose the geodesic distance, the diffusion
distance~\cite{CoifmanL:06}, or some other distance metric for
$d(\cdot, \cdot)$.  Similarly to the traditional Lipschitz
criterion~\cite{Mallat:09}, the pairwise Lipschitz smoothness criterion
emphasizes pairwise smoothness, which zooms into the difference
between each pair of adjacent nodes.

\begin{defn}
\label{df:global_Lip}
A graph signal $\x$ with unit norm is~\emph{total Lipschitz smooth} with parameter $C$ when it satisfies
\begin{equation*}
	\sum_{(i,j) \in \E} \W_{i,j} ( x_i - x_j )^2   \ \leq \  C.
\end{equation*}
\end{defn}
\noindent The total Lipschitz smoothness criterion generalizes the pairwise Lipschitz smoothness criterion while still emphasizing pairwise smoothness, but in a less restricted manner; it is also known as the Laplacian smoothness criterion~\cite{BelkinN:03}.

\begin{defn}
\label{df:local_neighboring_smooth}
A graph signal $\x$ with unit norm  is~\emph{local normalized neighboring smooth  } with parameter $C$ when it satisfies
\begin{equation*}
	  \sum_i \left(  x_i -  \frac{1}{\sum_{j: (i,j) \in \E} \W_{i,j} }  \sum_{j: (i,j) \in \E} \W_{i,j} x_j  \right)^2  \ \leq \  C.
\end{equation*}
\end{defn}
\noindent The local normalized neighboring smoothness criterion compares each node to the local normalized average of its immediate neighbors.

\begin{defn}
\label{df:global_neighboring_smooth}
A graph signal $\x$ with unit norm  is~\emph{global normalized neighboring smooth} with parameter $C$ when it satisfies
\begin{equation*}
	  \sum_i \left(  x_i -  \frac{1}{ |\lambda_{\max}(\W)| }  \sum_{j: (i,j) \in \E} \W_{i,j} x_j  \right)^2  \ \leq \  C.
\end{equation*}
\end{defn}
\noindent The global normalized neighboring smoothness criterion compares each node to the global normalized  average of its immediate neighbors. The difference between the local normalized neighboring smoothness criterion and the global normalized neighboring smoothness criterion is the normalization factor. For the local normalized neighboring smoothness criterion, each node has its own normalization factor;
for the global normalized neighboring smoothness criterion, all nodes have the same normalization factor.

The four criteria quantify smoothness in different ways: the pairwise
and the total Lipschitz ones focus on the variation of two signal
coefficients connected by an edge with the pairwise Lipschitz one more
restricted, while the local and global neighboring smoothness criterion focuses on
comparing a node to the average of its neighbors.

\subsection{Graph Dictionary}
As shown in~\eqref{eq:descriptive}, The graph signal models in Definitions~\ref{df:pairwise_Lip},~\ref{df:global_Lip},~\ref{df:local_neighboring_smooth},~\ref{df:global_neighboring_smooth} are introduced in a descriptive approach. Following these, we are going to translate the descriptive approach into a generative approach; that is,  we  represent
the corresponding signal classes satisfying each of the four criteria by some representation graph dictionary.

\subsubsection{Design}
 We first construct polynomial graph signals that satisfy the Lipschitz smoothness criterion.

\begin{defn}
\label{df:polynomial_sig}
A graph signal $\x$ is polynomial with degree $K$ when
\begin{equation*}
	\x \ = \  \D_{\rm poly(K)}  \a \ = \ \begin{bmatrix}
	{\bf 1}  &  \D^{(1)}  & \D^{(2)}  &  \ldots &  \D^{(K)} 
	\end{bmatrix} 
	\a \in \R^N,
\end{equation*}
where $\a \in \R^{KN+1}$ and $\D_{\rm poly(K)} $ is a graph polynomial dictionary with $\D^{(k)}_{i,j} = d^k(v_i, v_j)$. Denote this class by $\PL(K)$. 
\end{defn}

\begin{figure}[htb]
  \begin{center}
    \begin{tabular}{cc}
 \includegraphics[width=0.4\columnwidth]{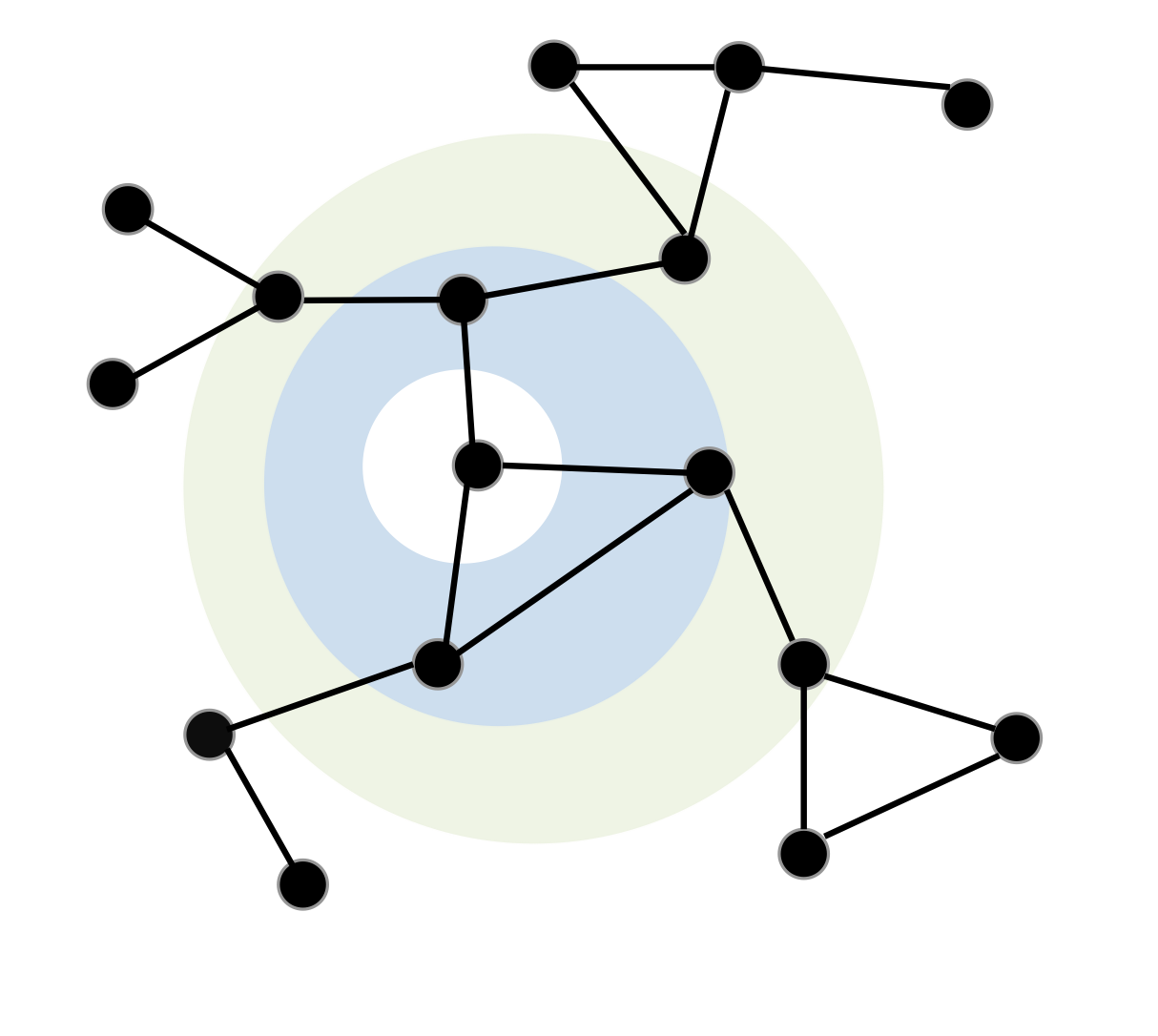} &
 \includegraphics[width=0.4\columnwidth]{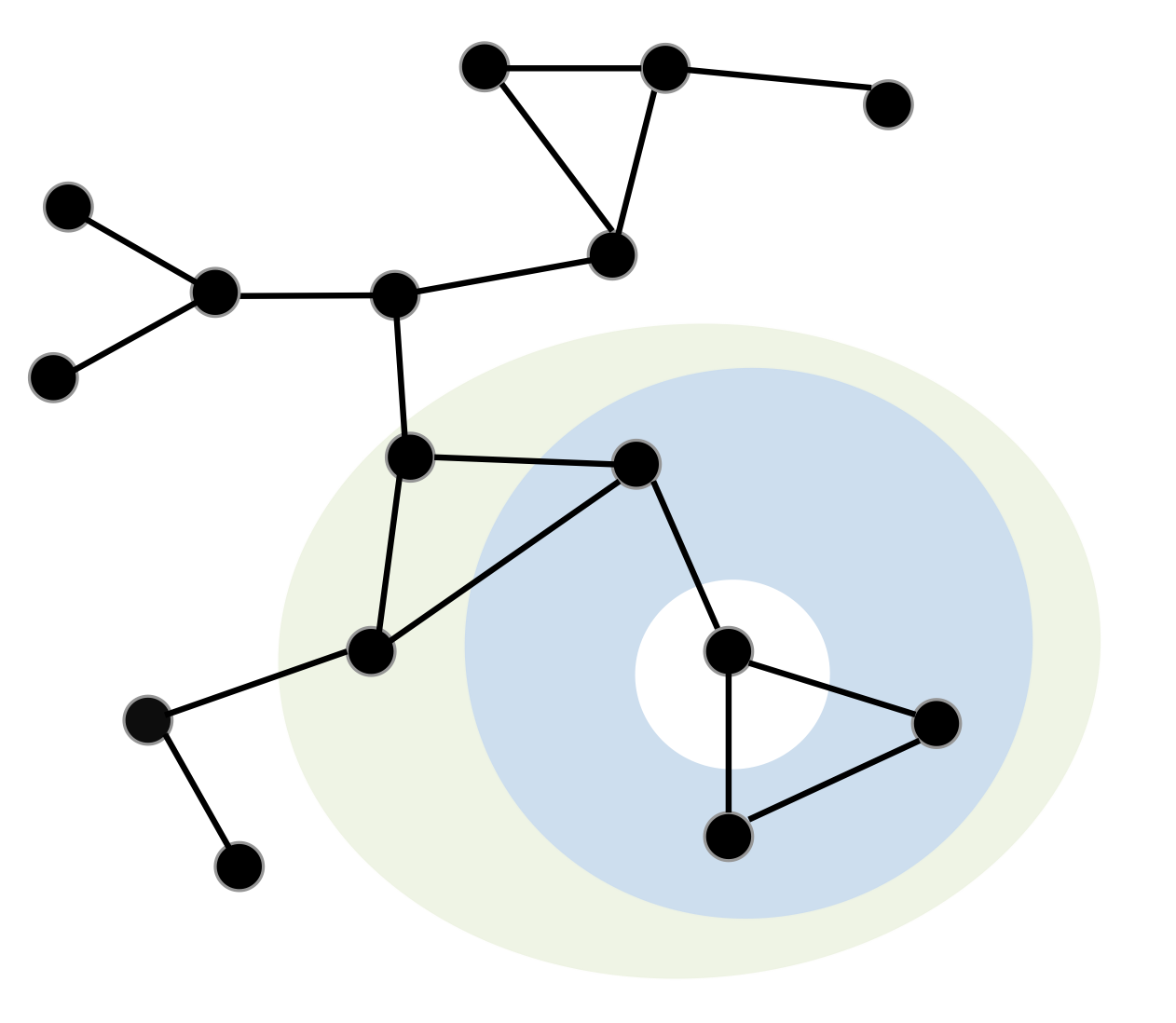}
\end{tabular}
  \end{center}
   \caption{\label{fig:origin} Different origins lead to different  coordinate systems; white, blue, and green denote the origin, nodes with geodesic distance $1$ from the origin, and nodes with geodesic distance $2$ from the origin, respectively. }
\end{figure}

\noindent  
In classical signal processing, polynomial time signals can be
expressed as $x_n = \sum_{k=0}^K a_k n^k$, $n = 1, \ldots, N$; we
can rewrite this as in the above definition as $\x = \D_{K} \a$, with
$(D_K)_{n,k} = n^k$. The columns of $\D_{K}$ are denoted as $\D^{(k)}$, $k
= 0, \ldots, K$, and called \emph{atoms}; the elements of each atom
$\D^{(k)}$ are $n^k$.  Since polynomial time signals are
shift-invariant, we can set any time point as the origin; such signals
are thus characterized by $K+1$ degrees of freedom $a_k$, $k = 0,
\ldots, K$. This is not true for graph signals, however; they are not
shift-invariant and any node can serve as the origin (see
Figure~\ref{fig:origin}). In the above definition, $\D^{(k)}$ are now
matrices with the number of atoms equal to the number of nodes $N$
(with each atom corresponding to the node serving as the origin). The
dictionary $\D_{K}$ thus contains $KN+1$ atoms.

\begin{myThm} ({\bf Graph polynomial dictionary represents the pairwise Lipschiz smooth signals})
\label{thm:BLvsGS}
$\PL(1)$ is a subset of the pairwise Lipschitz smooth with some parameter $C$.
\end{myThm}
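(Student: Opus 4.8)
The plan is to work directly from Definitions~\ref{df:pairwise_Lip} and~\ref{df:polynomial_sig} and to exploit the fact that the atoms of $\D_{\rm poly(1)}$ are built from a genuine distance metric. First I would write an arbitrary $\x \in \PL(1)$ in coordinates: since $\D_{\rm poly(1)} = \begin{bmatrix} \mathbf{1} & \D^{(1)} \end{bmatrix}$ with $\D^{(1)}_{i,j} = d(v_i,v_j)$, any such signal has the form $x_i = a_0 + \sum_{j=1}^N a_j\, d(v_i,v_j)$ for some coefficient vector $\a = (a_0,a_1,\ldots,a_N)^T \in \R^{N+1}$, where $a_0$ multiplies the constant atom $\mathbf{1}$ and $a_1,\ldots,a_N$ multiply the columns of $\D^{(1)}$.

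Next I would form the pairwise difference appearing in Definition~\ref{df:pairwise_Lip}. Subtracting $x_{i'}$ from $x_i$ makes the constant atom cancel, leaving
\[
x_i - x_{i'} = \sum_{j=1}^N a_j \bigl( d(v_i,v_j) - d(v_{i'},v_j) \bigr).
\]
The key step, and essentially the only nontrivial idea, is the reverse triangle inequality: because $d(\cdot,\cdot)$ is a metric (whether geodesic or diffusion distance), $\lvert d(v_i,v_j) - d(v_{i'},v_j) \rvert \le d(v_i,v_{i'})$ for every $j$. Combining the triangle inequality for absolute values with this bound yields
\[
\lvert x_i - x_{i'} \rvert \le \sum_{j=1}^N \lvert a_j \rvert\, \lvert d(v_i,v_j) - d(v_{i'},v_j) \rvert \le \Bigl( \sum_{j=1}^N \lvert a_j \rvert \Bigr) d(v_i,v_{i'}),
\]
which is exactly the pairwise Lipschitz condition with constant $C = \sum_{j=1}^N \lvert a_j \rvert = \lVert (a_1,\ldots,a_N) \rVert_1$.

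Finally I would reconcile this with the unit-norm normalization built into Definition~\ref{df:pairwise_Lip}. Since the signal there is assumed to have unit norm, I would either restrict attention to $\x \in \PL(1)$ with $\lVert \x \rVert_2 = 1$, or rescale and absorb the factor $1/\lVert \x \rVert_2$ into the constant; because the dictionary is overcomplete the representation $\a$ is not unique, so the cleanest phrasing is that there exists a representation yielding a finite parameter, e.g.\ $C = \lVert (a_1,\ldots,a_N) \rVert_1 / \lVert \x \rVert_2$. I do not anticipate a genuine obstacle here: the entire argument rests on the reverse triangle inequality, and the only points demanding care are stating the dependence of $C$ on the (non-unique) coefficients and recording the assumption that the chosen $d(\cdot,\cdot)$ is a true metric so that the triangle inequality is available.
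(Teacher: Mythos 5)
Your proposal is correct and follows essentially the same route as the paper's proof: expand $\x$ in the dictionary atoms, note the constant atom cancels in the pairwise difference, apply the reverse triangle inequality $|d(v_i,v_j) - d(v_{i'},v_j)| \le d(v_i,v_{i'})$, and conclude with $C = \left\| \a \right\|_1$. Your treatment is in fact slightly more careful than the paper's, since you explicitly handle the $a_0$ coefficient of the constant atom and the unit-norm normalization in Definition~\ref{df:pairwise_Lip}, both of which the paper glosses over.
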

\begin{proof}
Let $\x \in \PL(1)$, that is,
$$
\x = \begin{bmatrix}
	{\bf 1}  &  \D^{(1)}  
	\end{bmatrix} \a,
$$
Then, we write the pairwise Lipschitz smooth criterion as
\begin{eqnarray*}
|x_i - x_j| 
& = &  | \sum_{k} \left( d(v_k, v_i) - d(v_k, v_j) \right)   a_k |
\\
& \leq &  \sum_{k} | d(v_k, v_i) - d(v_k, v_j) |  |a_k |
\\
\\
& \leq &  \sum_{k} |a_k |  | d(v_i, v_j) | \ = \ \left\| \a \right\|_1  | d(v_i, v_j) |.
\end{eqnarray*}
The parameter $C = \left\| \a \right\|_1$, which corresponds to the energy of the original graph signal.
\end{proof}

We now construct bandlimited signals that satisfy the total Lipschitz, local normalized neighboring and global normalized neighboring smoothness smoothness criteria.
\begin{defn}
A graph signal $\x$ is bandlimited with respect to a graph Fourier basis $\Vm$ with bandwidth $K$ when
\begin{equation*}
	\x = \Vm_{(K)} \a,
\end{equation*}
where $\a \in \R^{K}$ and $\Vm_{(K)}$ is a submatrix containing the first $K$ columns of $\Vm$. Denote this class by $\BL_{\Vm}(K)$~\cite{ChenVSK:15}.
\end{defn}
When $\Vm$ is the eigenvector matrix of the unnormalized graph Laplacian matrix, we denote it as $\Vm_{\LL}$ and can show that signals in $\BL_{\Vm_{\LL}}(K)$ are total Lipschitz smooth; when $\Vm$ is the eigenvector matrix of the transition matrix, we denote it as $\Vm_{\Pj}$ and can show that signals in $\BL_{\Vm_{\Pj}}(K)$ are local normalized neighboring smooth; when $\Vm$ is the eigenvector matrix of the weighted adjacency matrix, we denote it as $\Vm_{\W}$ and can show that signals in $\BL_{\Vm_{\W}}(K)$ are global normalized neighboring smooth.

\begin{myThm} ({\bf The graph Fourier basis of the graph Laplacian represents the total Lipschiz smooth signals})
\label{thm:BLvsGS}
For any $K \in \{1, \cdots, N \}$, $\BL_{\Vm_{\LL}}(K)$ is a subset of the total Lipschitz smooth with parameter $C$, when $C \geq \lambda^{(\LL)}_k$.
\end{myThm}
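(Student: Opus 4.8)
The plan is to translate the bandlimited condition into the graph Fourier domain and evaluate the total Lipschitz variation directly via the spectral decomposition of $\LL$. First I would record that $\LL$ is symmetric, so the observation made earlier in Section~\ref{sec:spg} applies: its eigenvector matrix $\Vm_{\LL}$ is orthonormal, $\Um_{\LL} = \Vm_{\LL}^T$, and we have $\LL = \Vm_{\LL} \Lambda \Vm_{\LL}^T$ with $\Lambda = {\rm diag}(\lambda^{(\LL)}_1, \ldots, \lambda^{(\LL)}_N)$ ordered so that $\lambda^{(\LL)}_1 \leq \cdots \leq \lambda^{(\LL)}_N$.

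Next, I would rewrite the total Lipschitz criterion using the identity already recorded in the excerpt, namely $\sum_{(i,j) \in \E} \W_{i,j}(x_i - x_j)^2 = \x^T \LL \x = \STV_{\LL}(\x)$. For $\x \in \BL_{\Vm_{\LL}}(K)$ we have $\x = \Vm_{(K)} \a$ with $\a \in \R^K$, i.e. $\x$ lies in the span of the first $K$ eigenvectors. Substituting and using orthonormality collapses the quadratic form to a weighted sum of squared coefficients, $\x^T \LL \x = \sum_{i=1}^K \lambda^{(\LL)}_i a_i^2$, since the cross terms vanish and the eigenvectors beyond index $K$ do not appear.

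The last step is the bound. Since the eigenvalues are nondecreasing, $\lambda^{(\LL)}_i \leq \lambda^{(\LL)}_K$ for every $i \leq K$, so $\sum_{i=1}^K \lambda^{(\LL)}_i a_i^2 \leq \lambda^{(\LL)}_K \sum_{i=1}^K a_i^2 = \lambda^{(\LL)}_K \left\| \a \right\|_2^2$. Because $\Vm_{(K)}$ has orthonormal columns, $\left\| \a \right\|_2 = \left\| \x \right\|_2 = 1$ by the unit-norm hypothesis built into the total Lipschitz definition, yielding $\STV_{\LL}(\x) \leq \lambda^{(\LL)}_K \leq C$ for any $C \geq \lambda^{(\LL)}_K$, which is exactly the claim.

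I do not expect a genuine obstacle here; the argument is essentially a one-line spectral computation. The only points requiring care are bookkeeping ones: invoking symmetry of $\LL$ to guarantee orthonormal eigenvectors (so that both the diagonalization and the norm identity $\left\| \a \right\|_2 = \left\| \x \right\|_2$ hold simultaneously), keeping the eigenvalue ordering straight so that the maximizing eigenvalue over the occupied band is $\lambda^{(\LL)}_K$ rather than $\lambda^{(\LL)}_1$, and remembering to use the unit-norm normalization to discharge the $\left\| \a \right\|_2^2$ factor. A non-symmetric graph structure matrix would break the clean orthonormality and require more thought, but for the unnormalized Laplacian this is automatic.
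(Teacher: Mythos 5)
Your proposal is correct and follows exactly the same route as the paper's proof: expand $\x$ in the Laplacian eigenbasis, evaluate $\x^T \LL \x = \sum_{k=1}^{K} \lambda^{(\LL)}_k \widehat{x}_k^2$ via orthonormality, bound by $\lambda^{(\LL)}_K$, and discharge the coefficient norm using the unit-norm hypothesis. The only difference is that you make explicit the bookkeeping (symmetry of $\LL$, orthonormality of $\Vm_{\LL}$, the identity $\left\| \a \right\|_2 = \left\| \x \right\|_2$) that the paper leaves implicit.
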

\begin{proof}
Let $\x$ be a graph signal with bandwidth $K$, that is,
$$
\x = \sum_{k=1}^{K} \widehat{x}_k \vv^{(\LL)}_k,
$$
Then, we write the total Lipschitz smooth criterion as
\begin{eqnarray*}
&& \sum_{(i,j) \in \E} \W_{i,j} ( x_i - x_j )^2 
\ = \ \x^T \LL \x 
\\
& =  &  (\sum_{k=1}^{K} \widehat{x}_k \vv^{(\LL)}_k)^T (\sum_{k=0}^{K-1} \widehat{x}_k \lambda_k \vv^{(\LL)}_k)
\ = \   \sum_{k=1}^{K} \lambda^{(\LL)}_k \widehat{x}_k^2 
\\
& \leq & \lambda^{(\LL)}_K   \sum_{k=1}^{K} \widehat{x}_k^2  
\ = \   \lambda^{(\LL)}_K.
\end{eqnarray*}
\end{proof}

\begin{myThm}({\bf The graph Fourier basis of the transition matrix represents the local normalized neighboring smooth signals})
\label{thm:LNNS}
For any $K \in \{1, \cdots, N \}$, $\BL_{\Vm_{\Pj}}(K)$ is a subset of the local normalized neighboring smooth with parameter $C$, when $C \geq  (1 - \lambda^{(\Pj)}_{K})^2$.
\end{myThm}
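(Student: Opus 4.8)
The plan is to recast the descriptive criterion of Definition~\ref{df:local_neighboring_smooth} as the squared Euclidean norm of the transition Laplacian $\Id-\Pj$ applied to $\x$, and then read off the constant from the spectrum of $\Pj$ restricted to the bandlimited subspace $\BL_{\Vm_{\Pj}}(K)$. First I would observe that the normalization factor at node $i$ is exactly its degree, $\sum_{j:(i,j)\in\E}\W_{i,j}=d_i$, so that the local normalized average is $\frac{1}{d_i}\sum_j\W_{i,j}x_j=(\Pj\x)_i$ because $\Pj={\rm diag}(\d)^{-1}\W$. Summing the squared residuals over all nodes then gives
$$
\sum_i\left(x_i-\frac{1}{\sum_{j:(i,j)\in\E}\W_{i,j}}\sum_{j:(i,j)\in\E}\W_{i,j}x_j\right)^2=\left\|(\Id-\Pj)\x\right\|_2^2 .
$$
This identity is the crux: it converts the node-wise smoothness quantity into a spectral object governed by $\Id-\Pj$.

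Next, for $\x\in\BL_{\Vm_{\Pj}}(K)$ I would expand $\x=\sum_{k=1}^{K}\widehat{x}_k\vv^{(\Pj)}_k$ and use $\Pj\vv^{(\Pj)}_k=\lambda^{(\Pj)}_k\vv^{(\Pj)}_k$ to obtain $(\Id-\Pj)\x=\sum_{k=1}^{K}\widehat{x}_k(1-\lambda^{(\Pj)}_k)\vv^{(\Pj)}_k$. Since the eigenvalues of a transition matrix lie in $[-1,1]$ and are ordered nonincreasingly, each factor satisfies $0\le 1-\lambda^{(\Pj)}_k\le 1-\lambda^{(\Pj)}_K$ for $k\le K$, so the largest spectral factor appearing in the sum is $1-\lambda^{(\Pj)}_K$ and squaring preserves the inequality. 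Bounding every $(1-\lambda^{(\Pj)}_k)^2$ by $(1-\lambda^{(\Pj)}_K)^2$ and invoking the unit-norm assumption then yields the constant $C=(1-\lambda^{(\Pj)}_K)^2$, exactly paralleling the Laplacian computation in Theorem~\ref{thm:BLvsGS}.

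The main obstacle is that $\Pj$ is not symmetric, so its eigenvectors $\vv^{(\Pj)}_k$ need not be orthonormal in the Euclidean inner product; consequently $\left\|\sum_k\widehat{x}_k(1-\lambda^{(\Pj)}_k)\vv^{(\Pj)}_k\right\|_2^2$ does not split as $\sum_k\widehat{x}_k^2(1-\lambda^{(\Pj)}_k)^2$, and $\sum_k\widehat{x}_k^2$ need not equal $\|\x\|_2^2=1$, so the clean bound is not immediate as it was in the Laplacian case. To make the argument rigorous I would pass to the symmetric companion of $\Pj$: since ${\rm diag}(\d)^{1/2}\Pj\,{\rm diag}(\d)^{-1/2}=\W_{\rm norm}$ is symmetric, $\Pj$ is self-adjoint with respect to the degree-weighted inner product $\langle\x,\y\rangle_{\d}=\x^T{\rm diag}(\d)\y$, and its eigenvectors are orthonormal in that inner product, which makes the spectral splitting exact for the weighted norm $\|(\Id-\Pj)\x\|_{\d}$. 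The remaining and most delicate step is to control the discrepancy between the Euclidean norm appearing in Definition~\ref{df:local_neighboring_smooth} and this weighted norm; the two coincide when the graph is regular, in which case the stated constant is exact, whereas in general one absorbs a factor depending on the ratio of extremal degrees into $C$.
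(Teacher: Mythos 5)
Your route is genuinely different from the paper's, and the comparison is instructive. The paper argues pointwise: it expands $\x=\sum_{k=1}^K \widehat{x}_k \vv^{(\Pj)}_k$, uses the eigenvector relation to write $x_i-(\Pj\x)_i=\sum_{k=1}^K \widehat{x}_k\,(1-\lambda^{(\Pj)}_k)\,(\vv^{(\Pj)}_k)_i$ at each node $i$, asserts that this is at most $(1-\lambda^{(\Pj)}_K)\,|x_i|$ in magnitude, and then sums squares over $i$ and invokes the unit norm. You instead work globally, rewriting the criterion as $\|(\Id-\Pj)\x\|_2^2$ and bounding the operator on the bandlimited subspace; crucially, you notice the obstacle the paper never mentions: $\Pj$ is not symmetric, its eigenvectors are not Euclidean-orthonormal, so the norm does not split across the expansion. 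Your repair---passing to the degree-weighted inner product $\langle\x,\y\rangle_{\d}=\x^T{\rm diag}(\d)\y$, in which $\Pj$ is self-adjoint---is the correct one, but it leaves a factor of order $d_{\max}/d_{\min}$ between the weighted and Euclidean norms that you cannot remove. Measured against the literal statement, that is the gap in your proposal: you never recover the constant $(1-\lambda^{(\Pj)}_K)^2$, only $(d_{\max}/d_{\min})(1-\lambda^{(\Pj)}_K)^2$, with equality of the two only for regular graphs.

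However, this gap cannot be closed, because the theorem is false as stated, and the paper's own proof breaks at exactly the point you flagged. The paper's pointwise step needs the inequality $\bigl|\sum_k c_k a_k\bigr|\le(\max_k c_k)\bigl|\sum_k a_k\bigr|$ with $c_k=1-\lambda^{(\Pj)}_k\ge 0$ and $a_k=\widehat{x}_k(\vv^{(\Pj)}_k)_i$, which fails when the $a_k$ have mixed signs---this is the coordinatewise shadow of the non-orthogonality you identified. Concretely, take the three-node path $v_1\text{--}v_2\text{--}v_3$ with weights $\W_{1,2}=1$ and $\W_{2,3}=3$. Then $\Pj$ has eigenvalues $1,0,-1$, the top two eigenvectors are $(1,1,1)^T$ and $(3,0,-1)^T$, and $\lambda^{(\Pj)}_2=0$, so the theorem asserts the criterion value is at most $1$ on unit-norm signals in $\BL_{\Vm_{\Pj}}(2)$. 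But the unit-norm signal
\begin{equation*}
\x \ = \ \tfrac{1}{\sqrt{78}}(7,-2,-5)^T \ = \ \tfrac{1}{\sqrt{78}}\bigl(-2\,(1,1,1)^T+3\,(3,0,-1)^T\bigr) \ \in \ \BL_{\Vm_{\Pj}}(2)
\end{equation*}
satisfies $(\Id-\Pj)\x=\tfrac{1}{\sqrt{78}}(9,0,-3)^T$, so the criterion value is $90/78=15/13>1$. So your "weaker" conclusion is in fact the correct repaired statement: the constant $(1-\lambda^{(\Pj)}_K)^2$ is valid for regular graphs, and in general must be inflated by a degree-ratio factor, which your weighted-norm argument, carried through explicitly, does prove.
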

\begin{proof}
Let $\x$ be a graph signal with bandwidth $K$, that is,
$$
\x = \sum_{k=1}^{K} \widehat{x}_k \vv^{(\Pj)}_k,
$$
Then, we write the local normalized neighboring smooth criterion as

\begin{eqnarray*}
&& \left| x_i - \frac{1}{\sum_{j \in \mathcal{N}_i} \W_{i,j} }  \sum_{j \in \mathcal{N}_i} \W_{i,j} x_j  \right|
\\
&= &   \left| \left(\sum_{k=1}^{K} \widehat{x}_k \vv^{(\Pj)}_k \right)_i - \sum_{j  \in \mathcal{N}_i} \Pj_{i,j} \left(\sum_{k=1}^{K} \widehat{x}_k \vv^{(\Pj)}_k \right)_j \right|
\\
& = &   \left| \sum_{k=1}^{K} \widehat{x}_k \left(  (\vv^{(\Pj)}_k)_i - \sum_{j \in \mathcal{N}_i} \Pj_{i,j} (\vv^{(\Pj)}_k)_j \right) \right|
\\
& = &   \left| \sum_{k=1}^{K} \widehat{x}_k  (1 - \lambda_k)  (\vv^{(\Pj)}_k)_i \right|
\\
& \leq &  (1 - \lambda^{(\Pj)}_{K})  \left| \sum_{k=0}^{K-1} \widehat{x}_k  (\vv^{(\Pj)}_k)_i  \right| \ = \  (1 - \lambda^{(\Pj)}_{K})  |x_i|.
\end{eqnarray*}
The last equality follows from the fact that $\vv^{(\Pj)}_k$ and $\lambda^{(\Pj)}_k$ are eigenvectors and eigenvalues of $\Pj$.
\begin{eqnarray*}
&& \sum_i \left(  x_i -  \frac{1}{\sum_{j  \in \mathcal{N}_i }}  \sum_{j  \in \mathcal{N}_i } \W_{i,j} x_j  \right)^2 
\\
& = & \sum_{i=1}^{N} |x_i - \frac{1}{\sum_{j \in \mathcal{N}_i} \W_{i,j} }  \sum_{j \in \mathcal{N}_i} \W_{i,j} x_j |^2
\\
& \leq & \sum_{i=0}^{N-1}   (1 - \lambda^{(\Pj)}_{K} )^2  |x_i|^2 \ = \   (1 - \lambda^{(\Pj)}_{K})^2 .
\end{eqnarray*}
\end{proof}

\begin{myThm} ({\bf The graph Fourier basis of the adjacency matrix represents the global normalized neighboring smooth signals})
\label{thm:GNNS}
For any $K \in \{1, \cdots, N \}$, $\BL_{\Vm_{\W}}(K)$ is a subset of the global normalized neighboring smooth with parameter $C$, when $C \geq  (1 - \lambda^{(\W)}_{K} / |\lambda_{\max}{(\W)}| )^2$.
\end{myThm}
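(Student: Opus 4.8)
The plan is to follow the same template as the preceding Theorem~\ref{thm:LNNS} for the transition matrix, since the global normalized neighboring criterion differs from the local one only in the normalization factor: here every node is divided by the fixed constant $|\lambda_{\max}(\W)|$ rather than by its own degree. First I would expand a bandlimited signal in the adjacency eigenbasis, $\x = \sum_{k=1}^{K} \widehat{x}_k \vv^{(\W)}_k$, and use the eigenvector identity $\sum_{j \in \mathcal{N}_i} \W_{i,j} (\vv^{(\W)}_k)_j = (\W \vv^{(\W)}_k)_i = \lambda^{(\W)}_k (\vv^{(\W)}_k)_i$ to evaluate the smoothed signal coordinatewise. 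This gives, for each node $i$,
\begin{equation*}
x_i - \frac{1}{|\lambda_{\max}(\W)|} \sum_{j \in \mathcal{N}_i} \W_{i,j} x_j = \sum_{k=1}^{K} \widehat{x}_k \Bigl( 1 - \frac{\lambda^{(\W)}_k}{|\lambda_{\max}(\W)|} \Bigr) (\vv^{(\W)}_k)_i,
\end{equation*}
so that the per-node attenuation factor is $c_k := 1 - \lambda^{(\W)}_k / |\lambda_{\max}(\W)|$.

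Next I would square and sum over $i$. Assuming $\W$ is symmetric so that $\{\vv^{(\W)}_k\}$ is orthonormal, Parseval's identity collapses the double sum to a single one,
\begin{equation*}
\sum_i \Bigl( x_i - \frac{1}{|\lambda_{\max}(\W)|} \sum_{j \in \mathcal{N}_i} \W_{i,j} x_j \Bigr)^2 = \Bigl\| \sum_{k=1}^{K} \widehat{x}_k c_k \vv^{(\W)}_k \Bigr\|_2^2 = \sum_{k=1}^{K} c_k^2 \, \widehat{x}_k^2 .
\end{equation*}
It then remains to bound each $c_k^2$ by $c_K^2$ and factor it out, using $\sum_{k=1}^{K} \widehat{x}_k^2 = \|\x\|_2^2 = 1$ to conclude that the left-hand side is at most $c_K^2 = (1 - \lambda^{(\W)}_K/|\lambda_{\max}(\W)|)^2$, which is exactly the claimed parameter $C$.

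The step I expect to be the crux --- and which is glossed over in the analogous proof of Theorem~\ref{thm:LNNS} --- is justifying $c_k^2 \le c_K^2$ for every $k \le K$. This needs two facts. First, by Perron--Frobenius applied to the nonnegative adjacency matrix, the eigenvalue of largest magnitude is the spectral radius, i.e.\ $|\lambda_{\max}(\W)| = \lambda^{(\W)}_1 \ge \lambda^{(\W)}_k$, which forces $0 \le c_k$. Second, the low-to-high frequency ordering gives $\lambda^{(\W)}_1 \ge \cdots \ge \lambda^{(\W)}_K$, hence $c_k \le c_K$; combining $0 \le c_k \le c_K$ yields $c_k^2 \le c_K^2$. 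I would also flag that the Parseval step presumes $\W$ symmetric (an undirected graph); in the general directed case the eigenbasis need not be orthonormal, and one would instead pass through the frame bounds of $\Vm_{\W}$, which would inflate the constant by the relevant conditioning factor of the transform.
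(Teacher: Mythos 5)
Your proposal is correct, but it does not follow the paper's own route at the decisive step, and the difference is worth spelling out. The paper proves this theorem only by reference to Theorem~\ref{thm:LNNS}, whose argument is carried out node by node: after the (identical) expansion $x_i - \frac{1}{|\lambda_{\max}(\W)|}\sum_{j} \W_{i,j} x_j = \sum_{k=1}^{K} \widehat{x}_k c_k (\vv^{(\W)}_k)_i$ with $c_k = 1-\lambda^{(\W)}_k/|\lambda_{\max}(\W)|$, the paper pulls the largest factor out of a \emph{signed} sum, claiming $\bigl|\sum_{k} \widehat{x}_k c_k (\vv_k)_i\bigr| \le c_K \bigl|\sum_{k} \widehat{x}_k (\vv_k)_i\bigr| = c_K |x_i|$, and then sums the squares over $i$. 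That factoring step is not valid in general: with signed terms one can have $|\sum_k a_k c_k| > (\max_k c_k)\,|\sum_k a_k|$ (take $a=(1,-1)$ and $c=(0,1)$, which even respects the increasing ordering of the $c_k$), so the paper's per-node template contains a gap. Your argument sidesteps it: you sum over $i$ \emph{first}, invoke orthonormality of the adjacency eigenbasis (Parseval, valid for symmetric $\W$) to diagonalize the quadratic form into $\sum_{k=1}^{K} c_k^2\,\widehat{x}_k^2$, and only then factor $c_K^2$ out of a sum of nonnegative terms, which is legitimate. The price is that you genuinely need $\W$ symmetric, i.e.\ an undirected graph, whereas the paper's (flawed) per-node route makes no such assumption; you flag this honestly, and it is consistent with the paper's own admission that nothing meaningful is claimed for directed graphs. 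Two small refinements: your appeal to Perron--Frobenius (hence to nonnegative weights) is stronger than necessary, since for a symmetric $\W$ every eigenvalue is real and $\lambda^{(\W)}_k \le |\lambda^{(\W)}_k| \le |\lambda_{\max}(\W)|$ already gives $c_k \ge 0$; and the monotonicity $c_k \le c_K$ is precisely the descending eigenvalue ordering that the paper's one-line remark after the theorem statement alludes to. So your proof is not just a rewording of the paper's: it is the repaired version of it.
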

The proof is similar to Theorem~\ref{thm:LNNS}. Note that for graph Laplacian, the eigenvalues are sorted in an ascending order; for the transition matrix and the adjacency matrix, the eigenvalues are sorted in a descending order.

Each of these three models generates smooth graph signals according to one of the four criteria in Definitions~\ref{df:pairwise_Lip},~\ref{df:global_Lip},~\ref{df:local_neighboring_smooth} and~\ref{df:global_neighboring_smooth}: $\PL(K)$ models Lipschitz smooth signals; $\BL_{\Vm_{\LL}}(K)$ models total Lipschitz smooth signals; $\BL_{\Vm_{\Pj}}(K)$ with  models the local normalized neighboring smooth signals; and $\BL_{\Vm_{\W}}(K)$ models the global normalized neighboring smooth signals; the corresponding graph representation dictionaries are $\D_{\rm poly(K)}$, $\Vm_{\LL}$, $\Vm_{\Pj}$, and $\Vm_{\W}$.

\subsubsection{Properties}
 We next study the properties of graph representation dictionaries for smooth graph signals, especially for graph Fourier bases $\Vm_{\LL}$, $\Vm_{\Pj}$, and $\Vm_{\W}$. We first visualize them in Figures~\ref{fig:geo_fourier}. Figure~\ref{fig:geo_fourier} compares the first four graph Fourier basis vectors of $\Vm_{\LL}$, $\Vm_{\Pj}$ and $\Vm_{\W}$ in a geometric graph. We see that $\Vm_{\W}$ tends to localize in some small regions; $\Vm_{\LL}$ and $\Vm_{\Pj}$ have similar behaviors. 
 
 We then check the properties mentioned in Section~\ref{sec:properties}.
 
 \mypar{Frame Bound} Graph polynomial dictionary is highly redundant and the frame bound is loose. When the graph is undirected, the adjacency matrix is symmetric, then $\Vm_{\LL}$ and $\Vm_{\W}$ are orthonormal. It is hard to draw any meaningful conclusion when the graph is directed; we leave it for the future work.
  
 \mypar{Sparse Representations} Graph polynomial dictionary provides sparse representations for polynomial graph signals. On the other hand, the graph Fourier bases provide sparse representations for the bandlimited graph signals. For approximately bandlimited graph signals, there are some residuals coming from the high-frequency components.
 
 \mypar{Uncertainty Principles} In classical signal processing, it is well known that signals cannot localize in both time and frequency domains at the same time~\cite{VetterliKG:12, DonohoS:89}. Some previous works extend this uncertainty principle to graphs by studying how well a graph signal exactly localize in both the graph vertex and graph spectrum domain~\cite{AgaskarL:13, TsitsveroBL:15}. Here we study how well a graph signal approximately localize in both the graph vertex and graph spectrum domain. We will see that the localization depends on the graph Fourier basis.
   
\begin{defn}
\label{df:concentrate_v}
A graph signal $\x$ is~\emph{$\epsilon$-vertex concentrated} on a graph vertex set $\Gamma$ when it satisfies
\begin{equation*}
	\left\| \x - \Id_{\Gamma} \x \right\|_2^2 \leq \epsilon,
\end{equation*}
where $\Id_{\Gamma} \in \R^{N \times N}$ is a diagonal matrix, with $(\Id_{\Gamma})_{i,i} = 1$ when $i \in \Gamma$ and $0$, otherwise.
\end{defn}

The vertex set $\Gamma$ represents a region that supports the main energy of signals. When $|\Gamma|$ is small, a $\epsilon$-vertex concentrated signal is approximately sparse.

 \begin{defn}
\label{df:concentrate_s}
A graph signal $\x$ is~\emph{$\epsilon$-spectrum concentrated} on a graph spectrum band $\Omega$ when it satisfies
\begin{equation*}
	\left\| \x - \Vm_{\Omega} \Um_{\Omega} \x \right\|_2^2 \leq \epsilon,
\end{equation*}
where $ \Vm_{\Omega} \in \R^{N \times |\Omega|}$ is a submatrix of $\Vm$ with columns selected by $\Omega$ and $ \Um_{\Omega} \in \R^{|\Omega| \times N}$ is a submatrix of $\Vm$ with rows selected by $\Omega$.
\end{defn}

The graph spectrum band $\Omega$ provides a bandlimited space that supports the main energy of signals. An equivalent formulation is $\left\| \widehat{\x} - \Id_{\Omega} \widehat{\x} \right\|_2^2 \leq \epsilon$. Definition~\ref{df:concentrate_s} is a simpler version of the approximately bandlimited space in~\cite{ChenVSK:15a}.

\begin{myThm}({\bf Uncertainty principle of the graph vertex and spectrum domains})
\label{thm:GNNS}
Let a unit norm signal $\x$ supported on an undirected graph  be $\epsilon_{\Gamma}$-vertex concentrated and $\epsilon_{\Omega}$-spectrum concentrated at the same time. Then,
\begin{equation*}
|\Gamma| \cdot |\Omega| \geq  \frac{(1 - (\epsilon_{\Omega}  +  \epsilon_{\Gamma} ))^2 }{ \left\| \Um_{\Omega} \right\|_{\infty}^2 }.
\end{equation*}

\end{myThm}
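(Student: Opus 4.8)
The plan is to follow the Donoho--Stark template, reducing the statement to a two-sided estimate on the operator norm of a product of two projections. Since the graph is undirected, $\RR$ is symmetric and $\Um = \Vm^{T}$, so the vertex restriction $\Pj = \Id_{\Gamma}$ and the spectral restriction $\Q = \Vm_{\Omega}\Um_{\Omega} = \Vm_{\Omega}\Vm_{\Omega}^{T}$ are both orthogonal projectors with $\|\Pj\|_{\rm op} \le 1$ and $\|\Q\|_{\rm op} \le 1$. The hypotheses read $\|\x - \Pj\x\|_{2}^{2} \le \epsilon_{\Gamma}$ and $\|\x - \Q\x\|_{2}^{2} \le \epsilon_{\Omega}$, together with $\|\x\|_{2} = 1$. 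I would show that $\|\Q\Pj\|_{\rm op}$ is simultaneously bounded below by a quantity close to $1$ and above by $\sqrt{|\Gamma|\,|\Omega|}\,\|\Um_{\Omega}\|_{\infty}$; comparing the two bounds yields the claim.

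For the lower bound, apply the triangle inequality to $\x - \Q\Pj\x$:
\begin{equation*}
\|\x - \Q\Pj\x\|_{2} \le \|\x - \Q\x\|_{2} + \|\Q(\x - \Pj\x)\|_{2} \le \|\x - \Q\x\|_{2} + \|\x - \Pj\x\|_{2},
\end{equation*}
where the last step uses $\|\Q\|_{\rm op} \le 1$. Hence $\|\Q\Pj\x\|_{2} \ge \|\x\|_{2} - \|\x - \Q\x\|_{2} - \|\x - \Pj\x\|_{2} \ge 1 - (\epsilon_{\Omega} + \epsilon_{\Gamma})$, and since $\|\Q\Pj\x\|_{2} \le \|\Q\Pj\|_{\rm op}\,\|\x\|_{2} = \|\Q\Pj\|_{\rm op}$, we obtain $\|\Q\Pj\|_{\rm op} \ge 1 - (\epsilon_{\Omega} + \epsilon_{\Gamma})$. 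For the upper bound, note that $\Vm_{\Omega}$ has orthonormal columns, so $\|\Q\Pj\vv\|_{2} = \|\Um_{\Omega}\Id_{\Gamma}\vv\|_{2}$ for every $\vv$, i.e.\ $\|\Q\Pj\|_{\rm op} = \|\Um_{\Omega}\Id_{\Gamma}\|_{\rm op}$. I would then dominate the operator norm by the Hilbert--Schmidt (Frobenius) norm and use the entrywise bound furnished by $\|\Um_{\Omega}\|_{\infty}$:
\begin{equation*}
\|\Um_{\Omega}\Id_{\Gamma}\|_{\rm op}^{2} \le \|\Um_{\Omega}\Id_{\Gamma}\|_{F}^{2} = \sum_{k \in \Omega}\sum_{i \in \Gamma} (\Um_{\Omega})_{k,i}^{2} \le |\Omega|\,|\Gamma|\,\|\Um_{\Omega}\|_{\infty}^{2}.
\end{equation*}

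Combining the two estimates gives $(1 - (\epsilon_{\Omega} + \epsilon_{\Gamma}))^{2} \le |\Gamma|\,|\Omega|\,\|\Um_{\Omega}\|_{\infty}^{2}$, which rearranges to the asserted inequality. The triangle inequality and the isometry reduction $\|\Q\Pj\|_{\rm op} = \|\Um_{\Omega}\Id_{\Gamma}\|_{\rm op}$ are routine; the step carrying the real content is the passage from the operator norm to the Hilbert--Schmidt norm together with the identification of $\|\Um_{\Omega}\|_{\infty}$ as the maximal entry magnitude, since this is exactly where the geometry of the specific graph Fourier basis enters and controls how tight the principle is (a delocalized basis with small $\|\Um_{\Omega}\|_{\infty}$ forces a strong vertex--spectrum trade-off, a basis that localizes does not). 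The one point demanding care is reconciling the squared-norm concentration definitions with the linear appearance of $\epsilon_{\Gamma},\epsilon_{\Omega}$ in the stated bound: the triangle-inequality step in its literal form produces $\sqrt{\epsilon_{\Gamma}}$ and $\sqrt{\epsilon_{\Omega}}$, so I would either adopt the non-squared concentration convention or state explicitly which convention makes the displayed constant exact.
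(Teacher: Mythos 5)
Your proposal is correct and is essentially the paper's own argument: both are Donoho--Stark norm comparisons for the operator $\Vm_{\Omega}\Um_{\Omega}\Id_{\Gamma}$, bounded below by $1-(\epsilon_{\Omega}+\epsilon_{\Gamma})$ via the triangle inequality and above by $\sqrt{|\Gamma|\,|\Omega|}\,\left\|\Um_{\Omega}\right\|_{\infty}$ via the Hilbert--Schmidt norm --- the paper's Parseval computation of $\left\| \Vm_{\Omega}\Um_{\Omega}\Id_{\Gamma} \right\|_{\rm HS}$ is exactly your isometry reduction to $\left\| \Um_{\Omega}\Id_{\Gamma} \right\|_{F}$. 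If anything, your write-up is tighter on two points the paper glosses over: your splitting $\x-\Q\Pj\x=(\x-\Q\x)+\Q(\x-\Pj\x)$ invokes the concentration hypotheses only on $\x$ itself, whereas the paper bounds $\left\| \Id_{\Gamma}\x-\Vm_{\Omega}\Um_{\Omega}\Id_{\Gamma}\x \right\|_2$ by $\epsilon_{\Omega}$ even though the spectrum hypothesis concerns $\x$ rather than $\Id_{\Gamma}\x$ (and writes that triangle-inequality step as an equality), and the squared-versus-unsquared $\epsilon$ convention mismatch you flag at the end is genuinely present in the paper's proof as well.
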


\begin{proof}
 We first show $\left\| \Vm_{\Omega} \Um_{\Omega}  \Id_{\Gamma} \right\|_2 \leq \left\| \Um_{\Omega} \right\|_{\infty}  \sqrt{ |\Gamma|\cdot |\Omega|}$.
\begin{eqnarray*}
&&	( \Vm_{\Omega} \Um_{\Omega} \Id_{\Gamma}  \x )_s
\ = \ \sum_{k \in \Omega} \Vm_{s,k} \left( \sum_{i \in \Gamma} \Um_{k,i} x_i \right)
\\
& = &  \sum_{i \in \Gamma}  \left(   \sum_{k \in \Omega} \Vm_{s,k} \Um_{k,i} \right) x_i
\ = \ \sum_i q(s,i) x_i,
\end{eqnarray*}
   where
   \begin{equation*}
	\label{eq:Delta}
	q(s,i) = 
  \left\{ 
    \begin{array}{rl}
      \sum_{k \in \Omega} \Vm_{s,k} \Um_{k,i}, & i \in \Gamma;\\
      0, & \mbox{otherwise}.
  \end{array} \right.
\end{equation*}
Let $\y^{(i)}$ be a graph signal with $y^{(i)}_s = q(s,i)$. Then, $( \widehat{\y^{(i)}} )_k = {\bf 1}_{k \in \Omega} \Um_{k,i}$. We then have
\begin{eqnarray*}
&&	\left\| \Vm_{\Omega} \Um_{\Omega} \Id_{\Gamma}  \right\|_{2}^2
 \ \leq \  \left\| \Vm_{\Omega} \Um_{\Omega} \Id_{\Gamma}  \right\|_{\rm HS}^2
\\
& = & \sum_{i \in \Gamma} \sum_{s} |q(s,i)|^2 
\ =  \sum_{i \in \Gamma} \left\| \y^{(i)} \right\|_2^2
\\
& = & \sum_{i \in \Gamma} \left\| \widehat{ \y^{(i)} } \right\|_2^2
\ = \  \sum_{i \in \Gamma} \sum_{k} \left( {\bf 1}_{k \in \Omega} \Um_{k,i} \right)^2
\\
& \leq & \left\| \Um_{\Omega} \right\|_{\infty}^2 \sum_{i \in \Gamma} \sum_{k} {\bf 1}_{k \in \Omega} 
\ = \   \left\| \Um_{\Omega} \right\|_{\infty}^2 |\Gamma| \cdot |\Omega|.
\end{eqnarray*}

We then show that $\left\|  \Vm_{\Omega} \Um_{\Omega} \Id_{\Gamma} \x  \right\|_2 \geq 1 - (\epsilon_{\Omega}  +  \epsilon_{\Gamma} )$.
Based on the assumption, we have
   \begin{eqnarray*}
 &&  \left\| \x - \Vm_{\Omega} \Um_{\Omega} \Id_{\Gamma}  \x  \right\|_2
   \\
& = &   \left\| \x - \Id_{\Gamma} \x \right\|_2  + \left\| \Id_{\Gamma} \x -  \Vm_{\Omega} \Um_{\Omega} \Id_{\Gamma} \x  \right\|_2 
	\\
& \leq & \epsilon_{\Omega}  +  \epsilon_{\Gamma}. 
   \end{eqnarray*}
Since $\x$ has a unit norm, by the triangle inequality,  we have
   \begin{equation*}
  \left\|  \Vm_{\Omega} \Um_{\Omega} \Id_{\Gamma}  \right\|_2 \geq 1 - (\epsilon_{\Omega}  +  \epsilon_{\Gamma} ).
   \end{equation*}
Finally, we combine two results and obtain  
\begin{equation*}
|\Gamma|\cdot |\Omega|  \geq  \frac{\left\| \Vm_{\Omega} \Um_{\Omega}  \Id_{\Gamma} \right\|_2^2}{\left\| \Um_{\Omega} \right\|_{\infty}^2 } \ > \ \frac{  (1 - (\epsilon_{\Omega}  +  \epsilon_{\Gamma} ))^2 }{ \left\| \Um_{\Omega} \right\|_{\infty}^2}
\end{equation*}
\end{proof}
We see that the lower bound involves with the maximum magnitude of $\Um_{\Omega}$. In classical signal processing, $\Um$ is the discrete Fourier transform matrix, so $\left\| \Um_{\Omega} \right\|_{\infty}  = 1/\sqrt{N}$; the lower bound is $O(N)$ and signals cannot localize in both time and frequency domain. However, for complex and irregular graphs, the energy of a graph Fourier basis vector may concentrate on a few elements, that is, $\left\| \Um_{\Omega} \right\|_{\infty} = O(1)$, as shown in Figures~\ref{fig:geo_fourier}(a)(b)(c)(d). It is thus possible that  graph signals can be localized in both the vertex and spectrum domain. We now illustrate this localization phenomenon
   
 \mypar{Localization Phenomenon}
 A part of this section has been shown in~\cite{LiangCK:16}. We show it here for the completeness. The \textit{localization} of a graph signal means that most elements in a graph signal are zeros, or have small values; only a small number of elements have large magnitudes and the corresponding nodes are clustered in one subgraph with a small diameter. 


Prior work uses~\emph{inverse participation ratio} (IPR) to quantify localization~\cite{CucuringuM:11}. The IPR of a graph signal $\x \in \R^N$ is
\begin{equation*}
	{\rm IPR} = \frac{\sum_{i=1}^N x_i^{4}}{(\sum_{i=1}^N x_i^{2})^2}.
\end{equation*}
 A large IPR indicates that $\x$ is localized, while a small IPR indicates that $\x$ is not localized. The range of IPR is from $0$ to $1$. For example, $\x=[1/\sqrt{N},1/\sqrt{N},\cdots,1/\sqrt{N}]^T$ is the most delocalized vector with $\text{IPR}=1/N$, while $\x=[1,0,\cdots,0]^T$ is the most localized vector with $\text{IPR}=1$. IPR has some shortcomings: a) IPR only promotes sparsity,  that is, a high IPR does not necessarily mean that the nonzero elements concentrate in a clustered subgraph, which is the essence of localization (Figure ~\ref{fig:cluster}); b) IPR does not work well for large-scale datasets. When $N$ is large, even if only a small set of elements are non zero, IPR tends to be small.

\begin{figure}[htb]
  \begin{center}
    \begin{tabular}{cc}
   \includegraphics[width=0.4\columnwidth]{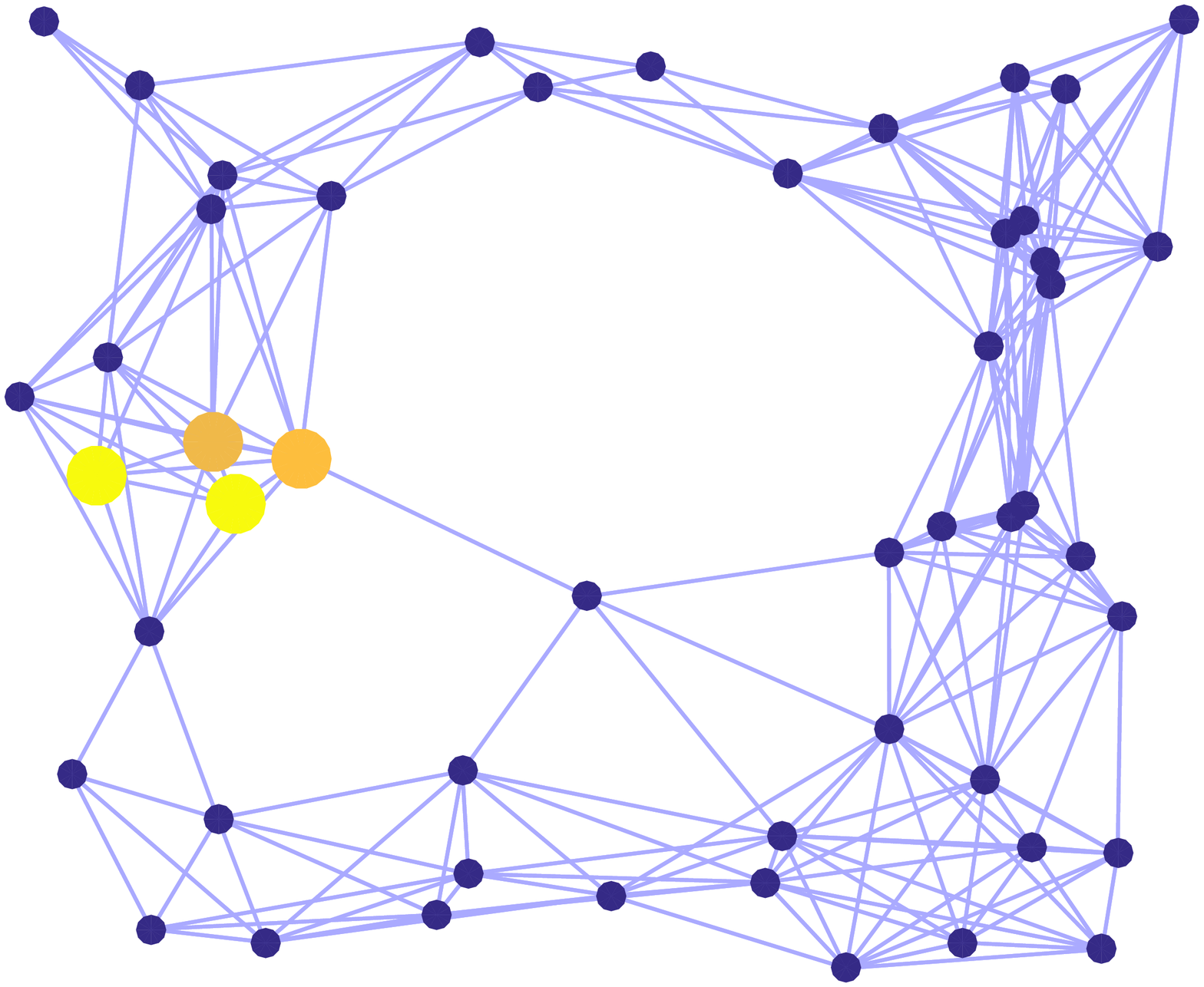} & \includegraphics[width=0.4\columnwidth]{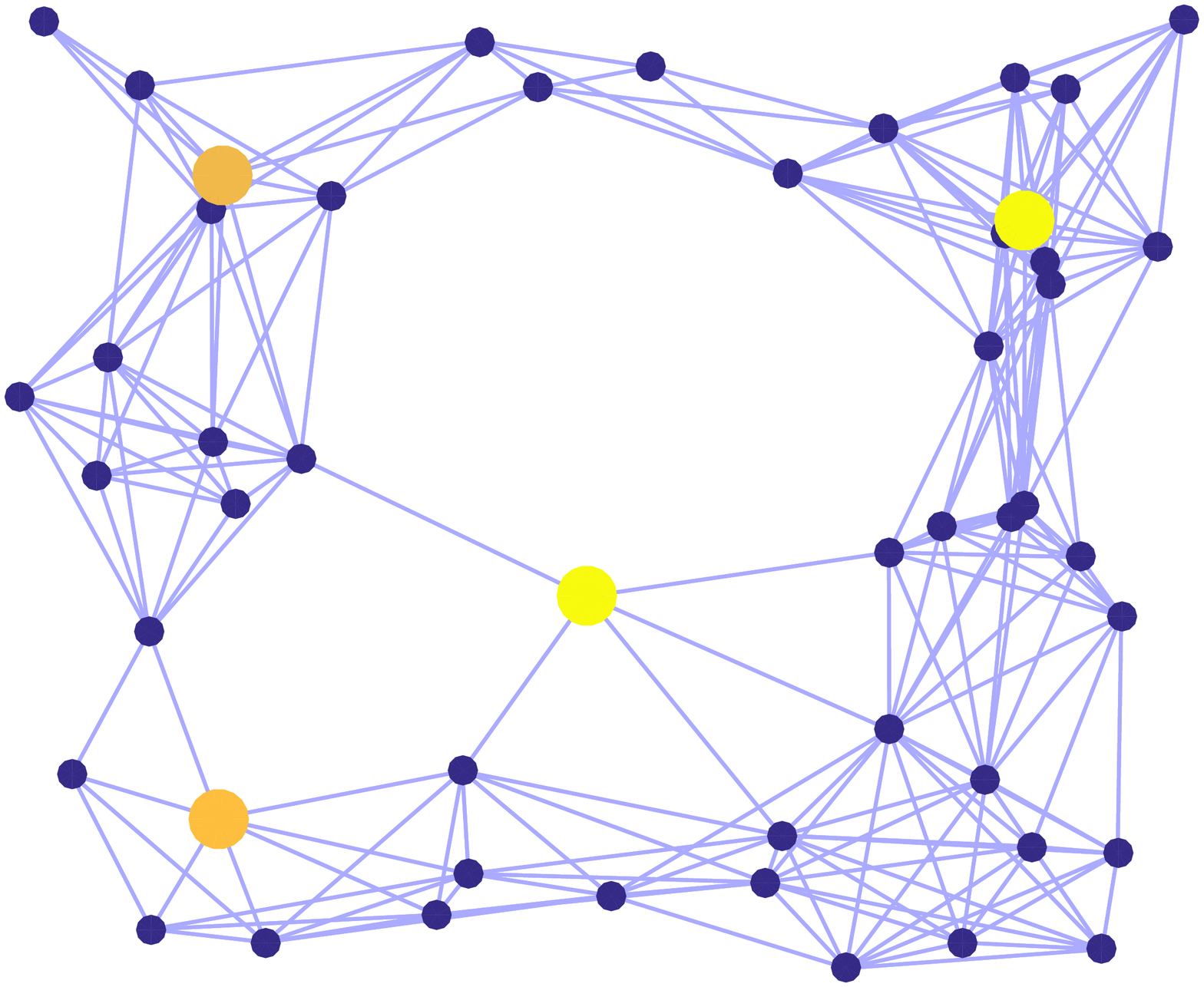}
\\
 {\small (a) clustered graph signal.} & {\small (b) unclustered graph signal.} 
 	\end{tabular}
  \end{center}
   \caption{\label{fig:cluster} Sparse graph signal. Colored nodes indicate large nonzero elements. }
\end{figure}
To solve this, we propose a novel measure to quantify the localization of a graph signal. We use \textit{energy concentration ratio} (ECR) to quantify the energy concentration property. The ECR is defined as 
\begin{equation*}
\rm{ECR} = \frac{S^*}{N},
\end{equation*}
where $S^*$ is the smallest $S$ that satisfies
$
\left\| \x_S \right\|_2^2 \geq 95\%  \left\| \x \right\|_2^2,
$
with $\x_S$ the first $S$ elements with the largest magnitude in $\x$. This indicates that 95\% energy of a graph signal is concentrated in the first $S^*$ elements with largest magnitude. ECR ranges from 0 to 1: when the signal is energy-concentrated, the ECR is small; when the energy of the signal is evenly distributed, the ECR is 1.

We next use \textit{normalized geodesic distance} (NGD) to quantify the clustered property.
Let $\M$ be the set of nodes that possesses 95\% energy of the whole signal. The normalized geodesic distance is defined as:
\begin{equation*}
	{\rm{NGD}} = \frac{1}{D}\frac{\sum_{ i, j \in \M, i \neq j}d(v_i,v_j)}{n(n-1)/2},
\end{equation*}
where $D$ is the diameter of the graph, $d(v_i,v_j)$ is the geodesic distance between nodes $i$ and $j$. Here we use the normalized average geodesic distance as a measure to determine whether the nodes are highly connected. We use the average geodesic distance instead of the largest geodesic distance to avoid the influence of outliers. The NGD ranges from 0 to 1: when the nodes are clustered in a small subgraph, the NGD is small; when the nodes are dispersive, the NGD is large.

We use ECR and NGD together to determine the localization of graph signals. When the two measures are small, the energy of the signal is concentrated in a small set of highly connected nodes, which can be interpreted as localization.

Each graph Fourier basis vector is regarded as a graph signal. When most basis vectors in the graph Fourier basis are localized, that is, the corresponding ECRs and NGDs are small, we call that graph Fourier basis localized.

We now investigate the localization phenomenon of graph Fourier bases of several real-world networks, including the arXiv general relativity and quantum cosmology (GrQc) collaboration network~\cite{LeskovecKF:07}, arXiv High Energy Physics - Theory (Hep-Th) collaboration network~\cite{LeskovecKF:07} and the Facebook `friend circles' network~\cite{LeskovecKF:07}. We find similar localization phenomena among different datasets. Due to the limited space, we only show the result of arXiv GrQc collaboration network. The arXiv GrQc network represents the collaborations between authors based on the submitted papers in general relativity and quantum cosmology category of arXiv. When the author $i$ and author $j$ coauthored a paper, the graph contains an undirected edge between node $i$ and $j$. The graph contains 5242 nodes and 14496 undirected edges. Since the graph is not connected, we choose a connect component with 4158 nodes and 13422 edges.

We investigate the Fourier bases of the weighted adjacency matrix, the transition matrix  and the unnormalized  graph Laplacian matrix in the arXiv GrQc network. 
 Figure~\ref{fig:GRQC_localization} illustrates the ECRs and NGDs of the first 50 graph Fourier basis vectors (low-frequency components) and the last 50 graph Fourier basis vectors (high-frequency components) of the three graph representation matrices, where the ECR and NGD are plotted as a function of the index of the corresponding graph Fourier basis vectors. We find that a large number of graph Fourier basis vectors has small ECRs and NGDs, which indicates that graph Fourier basis vectors of various graph representation matrices are localized. Among various graph representation matrices, the graph Fourier basis vectors of graph Laplacian matrix tend to be more localized, especially in high-frequency components. In low-frequency components, the graph Fourier basis vectors of adjacency matrix are more localized.

To combine the uncertainty principle previously, when the graph Fourier basis shows localization phenomenon, it is possible that a graph signal can be localized in both graph vertex and spectrum domain. Based the graph Fourier basis of the graph Laplacian, a high-frequency bandlimited signals can be well localized in the graph vertex domain; based on the graph Fourier basis of adjacency matrix, a low-frequency bandlimited signals can be well localized in the graph vertex domain.  The Fourier transform is famous for capturing the global behaviors and works as a counterpart of the delta functions; however, this may not be true on graphs. The study of new role of the graph Fourier transform will be an interesting future work.

\begin{figure}[htb]
  \begin{center}
    \begin{tabular}{cc}
     {\small low-frequency components} & {\small high-frequency components} 
     \\
 \includegraphics[width=0.45\columnwidth]{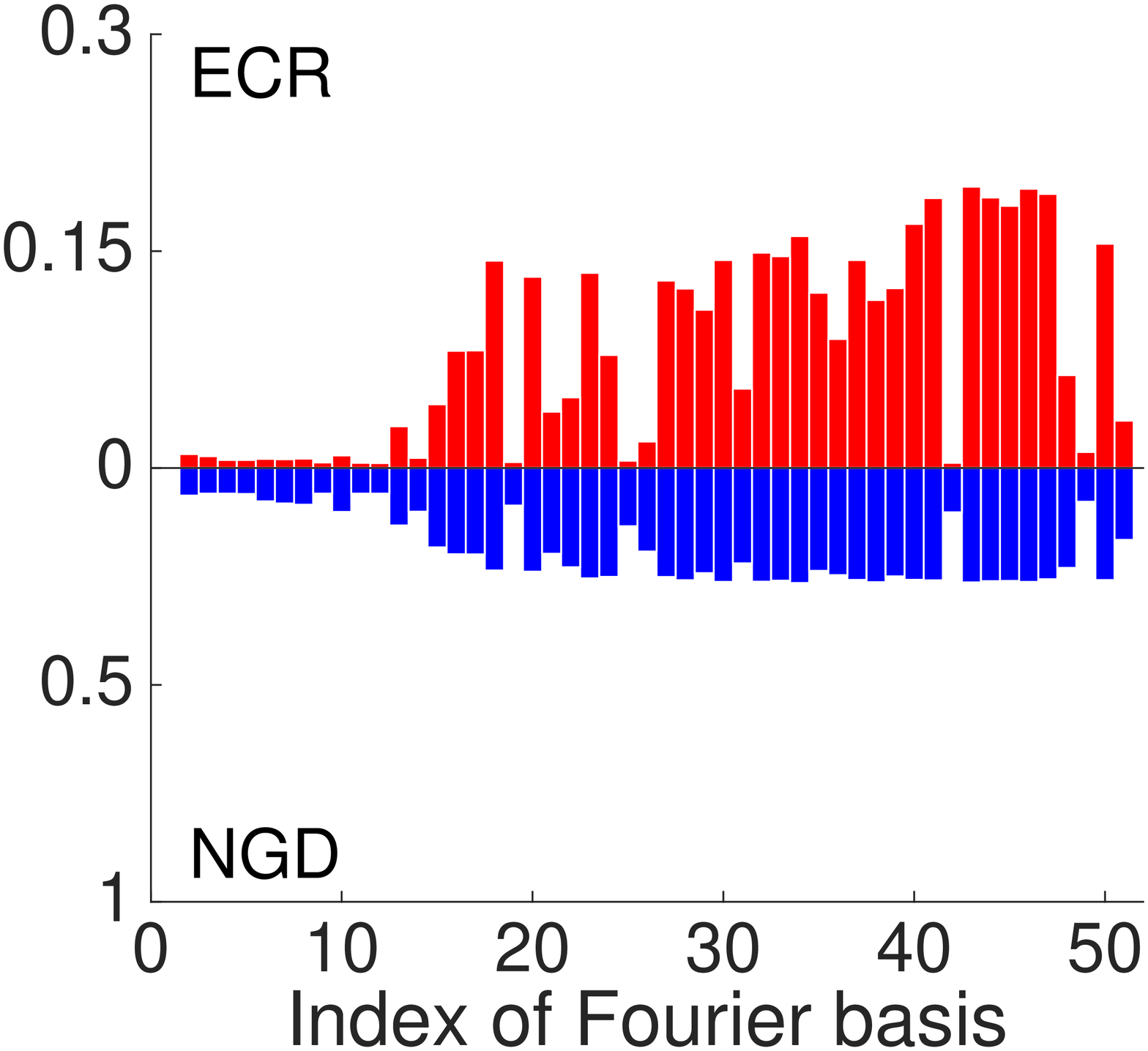} &
 \includegraphics[width=0.45\columnwidth]{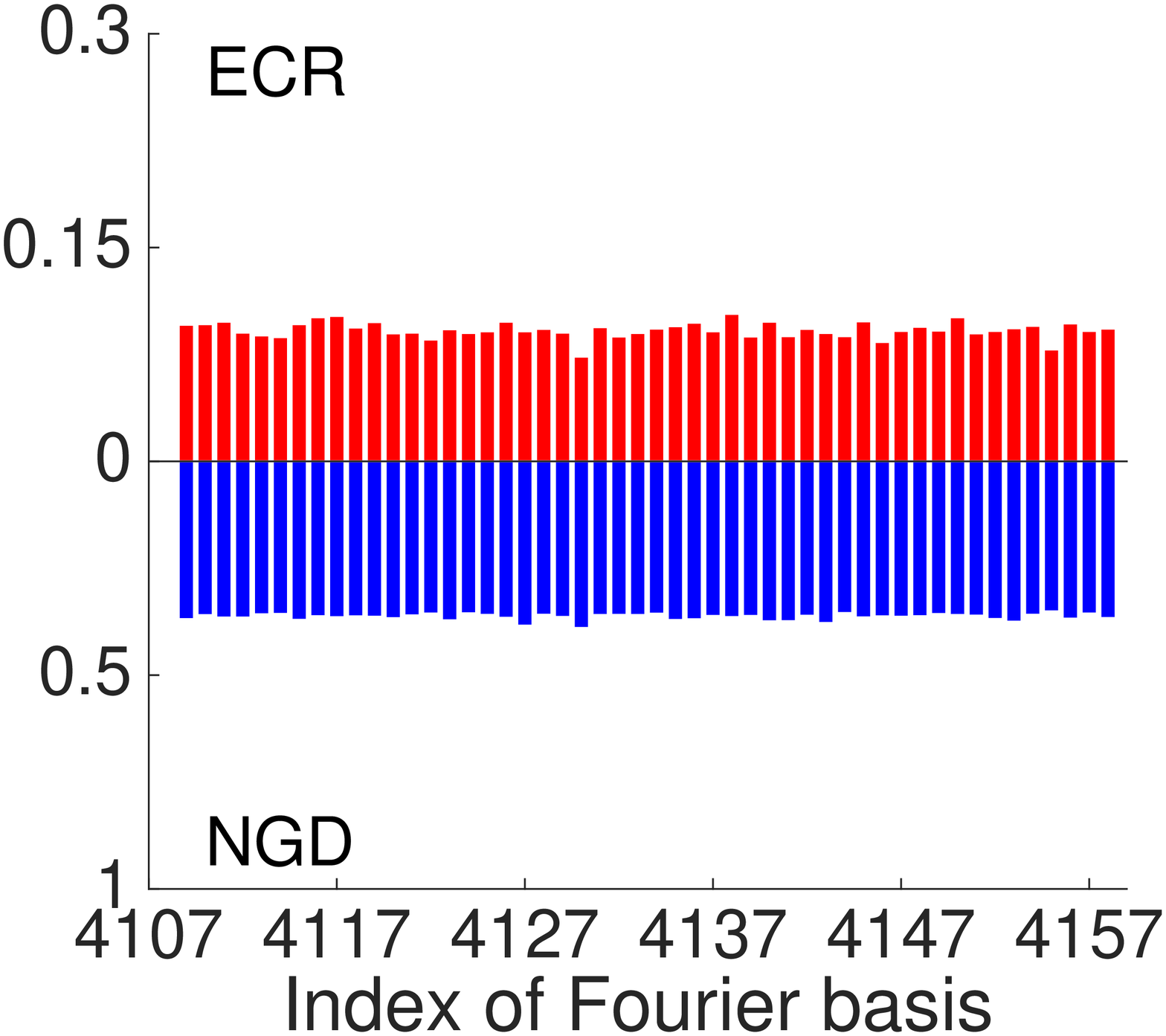}
\\
 {\small (a) $\W$.} & {\small (b) $\W$.} 
 \\
  \includegraphics[width=0.45\columnwidth]{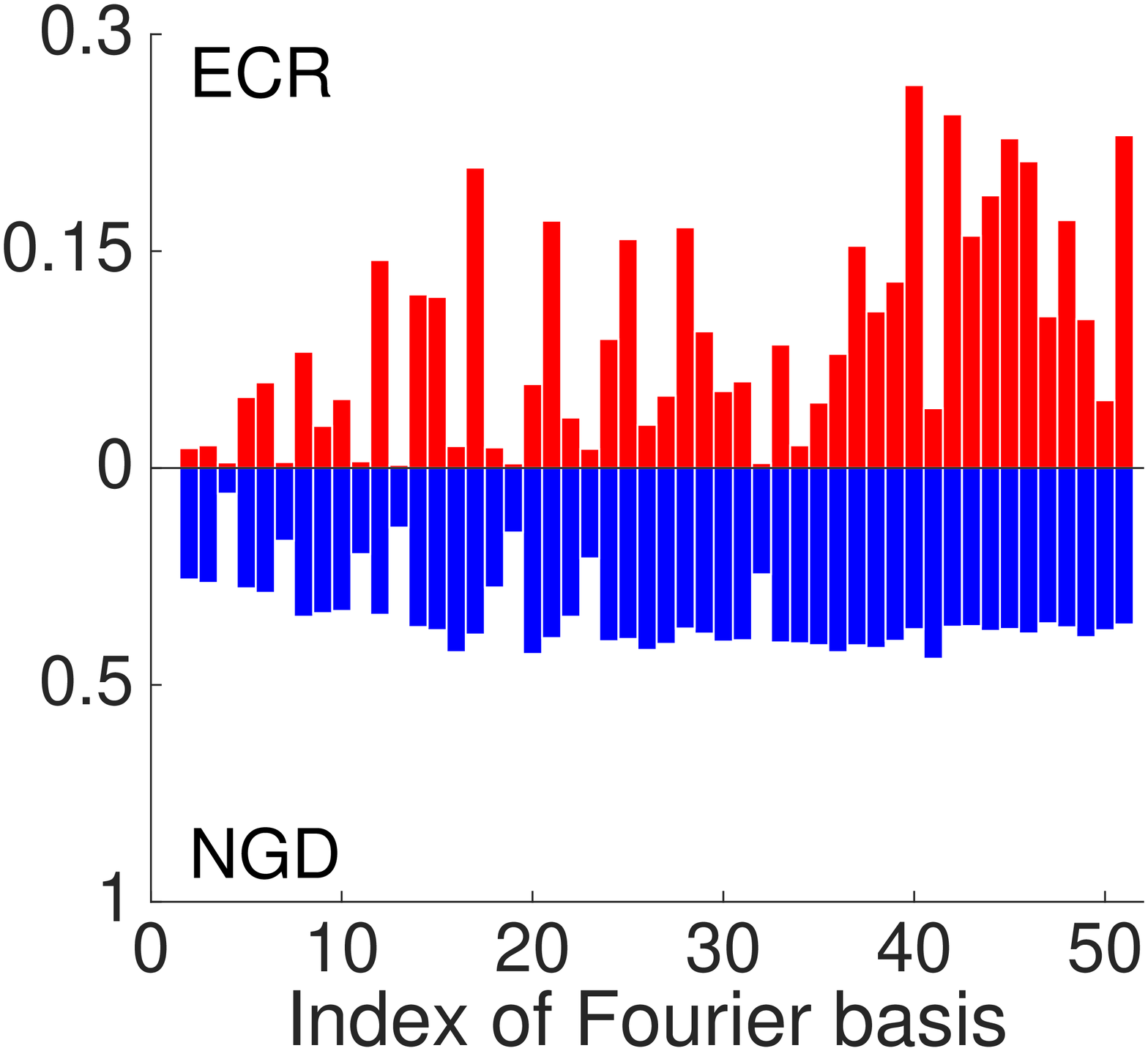} 
&
 \includegraphics[width=0.45\columnwidth]{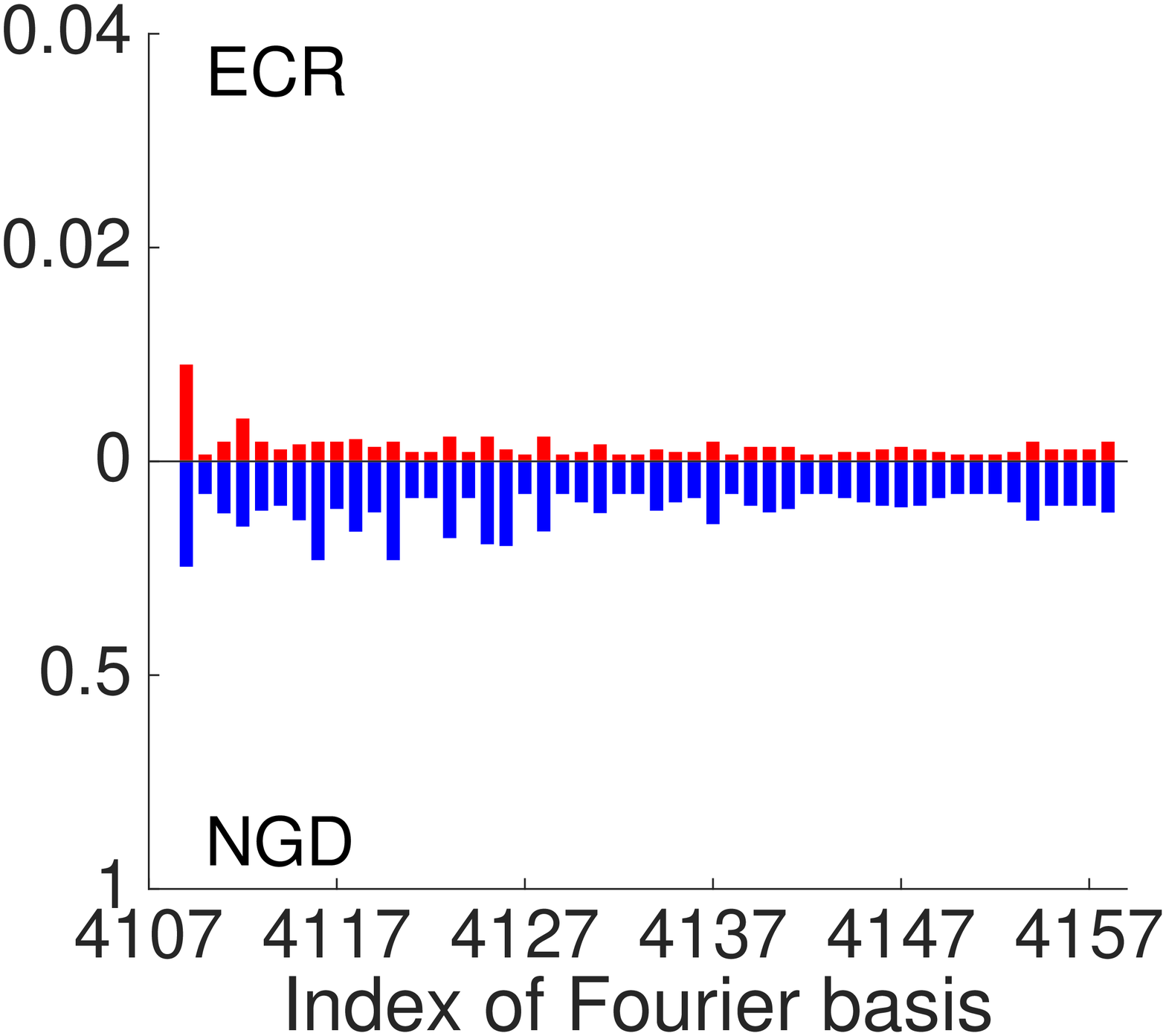}
\\
 {\small (c) $\Pj$.} & {\small (d) $\Pj$.}  
  \\
  \includegraphics[width=0.45\columnwidth]{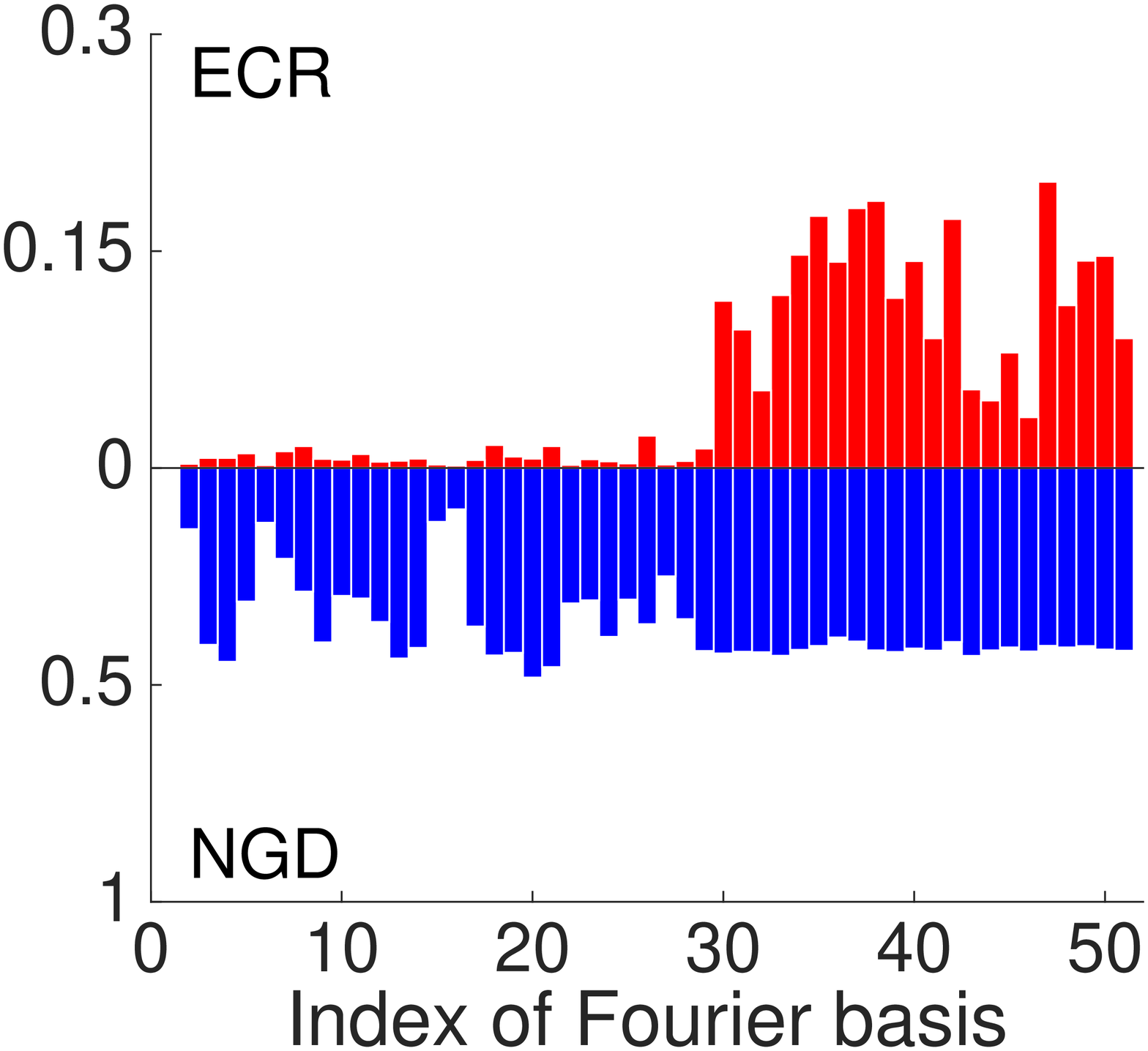} 
&
 \includegraphics[width=0.45\columnwidth]{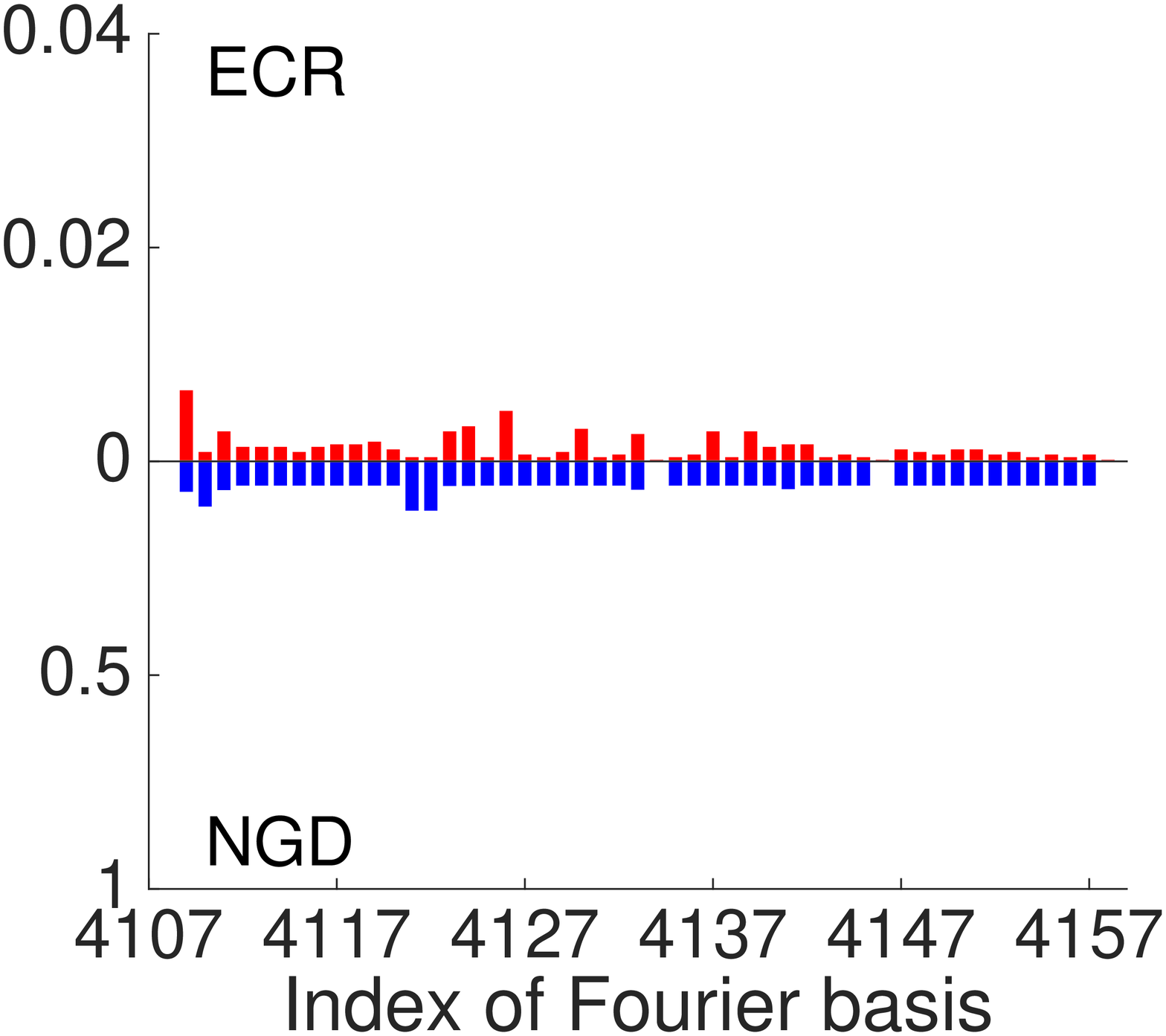}
\\
 {\small (e) $\LL$.} & {\small (f) $\LL$.} 
	\end{tabular}
  \end{center}
   \caption{\label{fig:GRQC_localization} Localization of graph Fourier bases of various graph representation matrices in the arXiv GrQc network. In low-frequency components, Fourier basis vectors of the adjacency matrix are localized; in high-frequency components, Fourier basis vectors of graph Laplacian matrix are localized.} 
\end{figure}

\subsection{Graph Signal Processing Tasks}
As mentioned in Section~\ref{sec:template_tasks}, we  focus on two tasks: approximation and sampling following with recovery.

\subsubsection{Approximation}
\label{sec:smooth_app}
We compare the graph Fourier bases based on different graph structure matrices.

\mypar{Algorithm}
We consider nonlinear approximation for the graph Fourier bases, that is, after expanding with a representation, we should choose the $K$ largest-magnitude expansion coefficients so as to minimize the approximation error. Let $\{ \phi_k \in \R^N \}_{k=1}^N$ and  $\{ \widehat{\phi}_k \in \R^N \}_{k=1}^N$ be a pair of biorthonormal basis and $\x \in  \R^N $ be a signal. Here the graph Fourier transform matrix $\Um = \{ \widehat{\phi}_k \}_{k=1}^N$ and the graph Fourier basis $\Vm = \{ \phi_k \}_{k=1}^N$. The nonlinear approximation to $\x$ is 
\begin{eqnarray}
\label{eq:nonlinear_approx}
   \x^* = \sum_{k \in \mathcal{I}_K} \left\langle {\x, \widehat{\phi}_k} \right\rangle \phi_k,
\end{eqnarray}
where $\mathcal{I}_K$ is the index set of the $K$ largest-magnitude expansion coefficients. When a basis promotes sparsity for $\x$, only a few expansion coefficients are needed to obtain a small approximation error. Note that~\eqref{eq:nonlinear_approx} is a special case of~\eqref{eq:sparse_coding} when the distance metric $d(\cdot, \cdot)$ is the $\ell_2$ norm and $\D$ is a basis.

Since the the graph polynomial dictionary is redundant, we solve the following sparse coding problem,
\begin{eqnarray}
\label{eq:l2_sparse_coding}
   \x^* =  &  \arg \min_{\a} & \left\| \x - \D_{\rm poly(2)} \a  \right\|_2^2,
   \\
   \nonumber
   & {\rm subject~to:~} &  \left\| \a \right\|_0 \leq K,
\end{eqnarray}
where $\D_{\rm poly(2)}$ is the graph polynomial dictionary with order $2$ and $\a$ are expansion coefficients. The idea is to use a linear combination of a few atoms from $\D_{\rm poly(2)}$ to approximate the original signal. When $\D$ is an orthonormal basis, the closed-form solution is exactly~\eqref{eq:nonlinear_approx}. We solve~\eqref{eq:sparse_coding} by using the orthogonal matching pursuit, which is a greedy algorithm~\cite{PatiRK:93}. Note that~\eqref{eq:l2_sparse_coding} is a special case of~\eqref{eq:sparse_coding} when the distance metric $d(\cdot, \cdot)$ is the $\ell_2$ norm.

\mypar{Experiments}
We test the four representations on two datasets, including the Minnesota road graph~\cite{MinnesotaGraph} and the U.S city graph~\cite{ChenSMK:14}.

\begin{figure}[htb]
  \begin{center}
    \begin{tabular}{cc}
     \includegraphics[width=0.4\columnwidth]{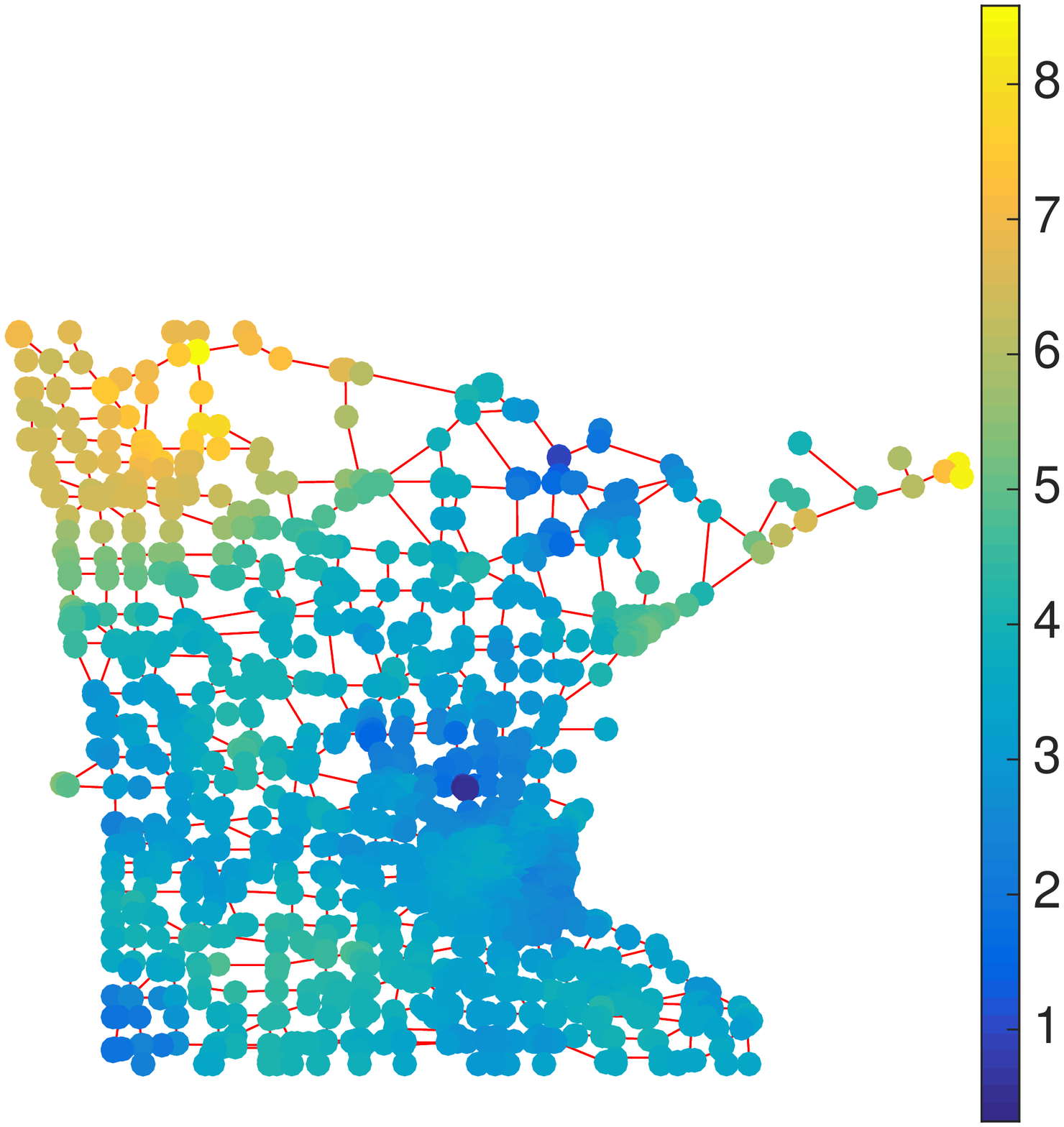} &
 \includegraphics[width=0.4\columnwidth]{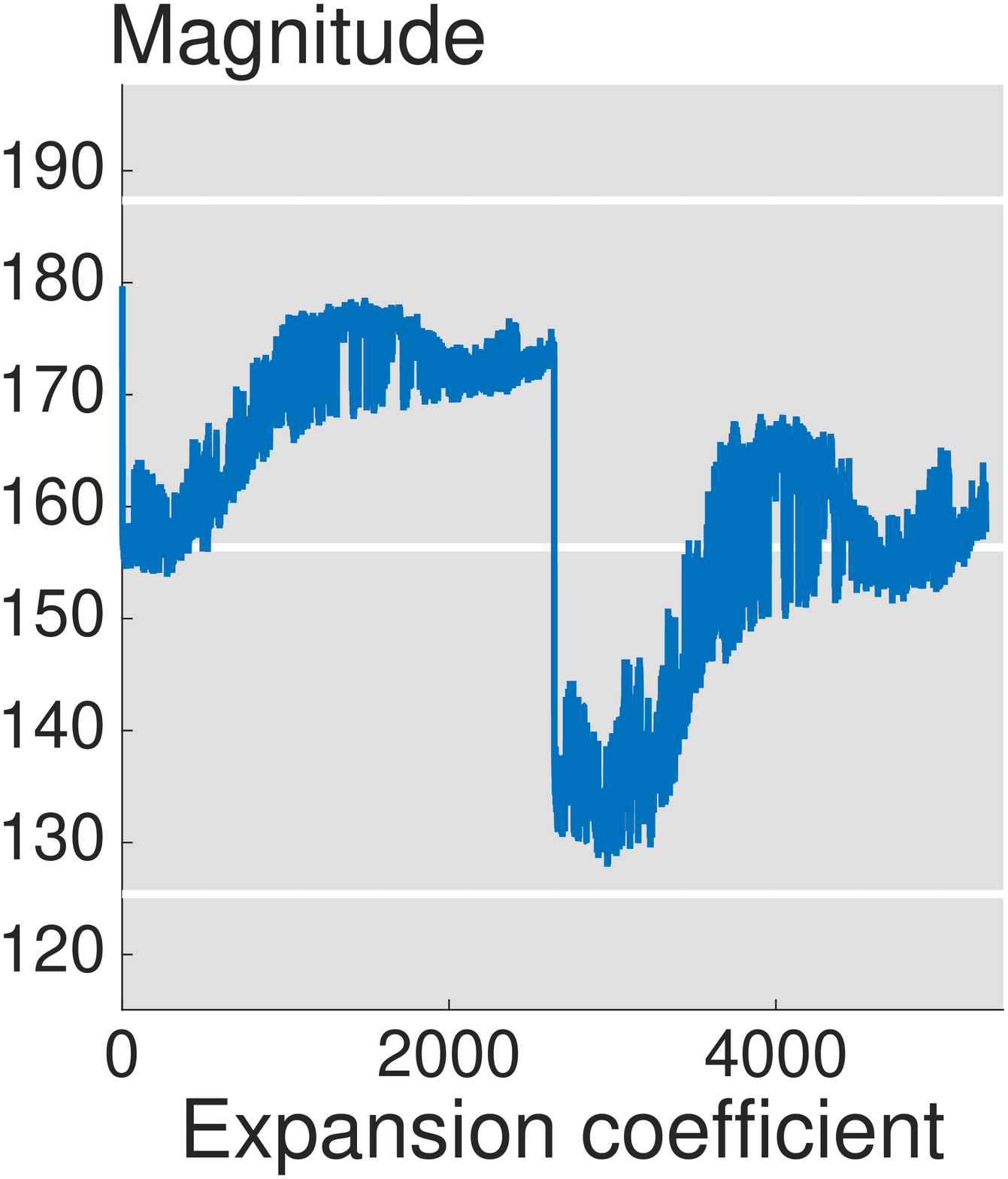} 
 \\
  {\small (a) Wind speed.} & {\small (b) Coefficients of $\D_{\rm poly(2)}$.} 
  \\
  \\
 \includegraphics[width=0.4\columnwidth]{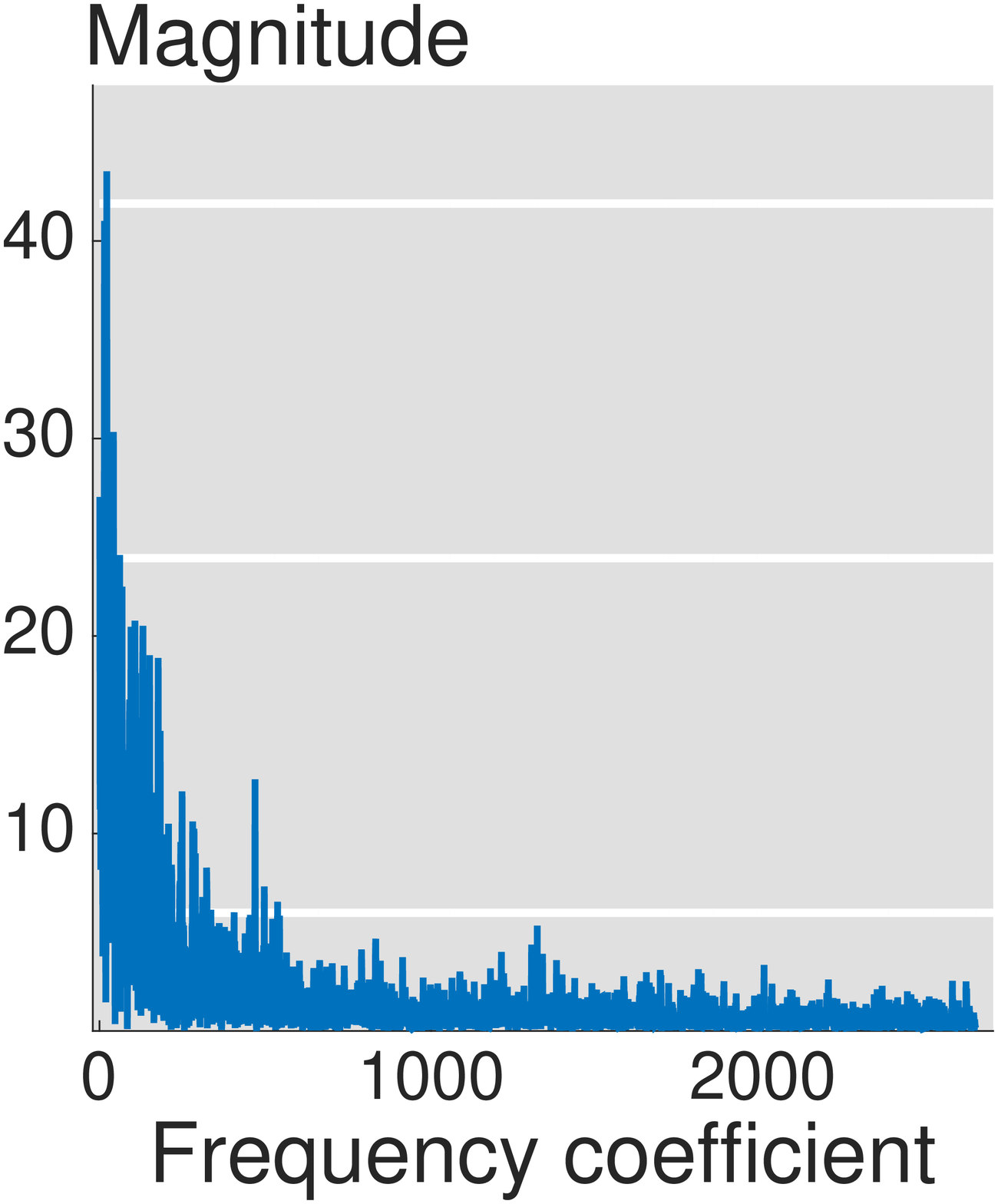} &
 \includegraphics[width=0.4\columnwidth]{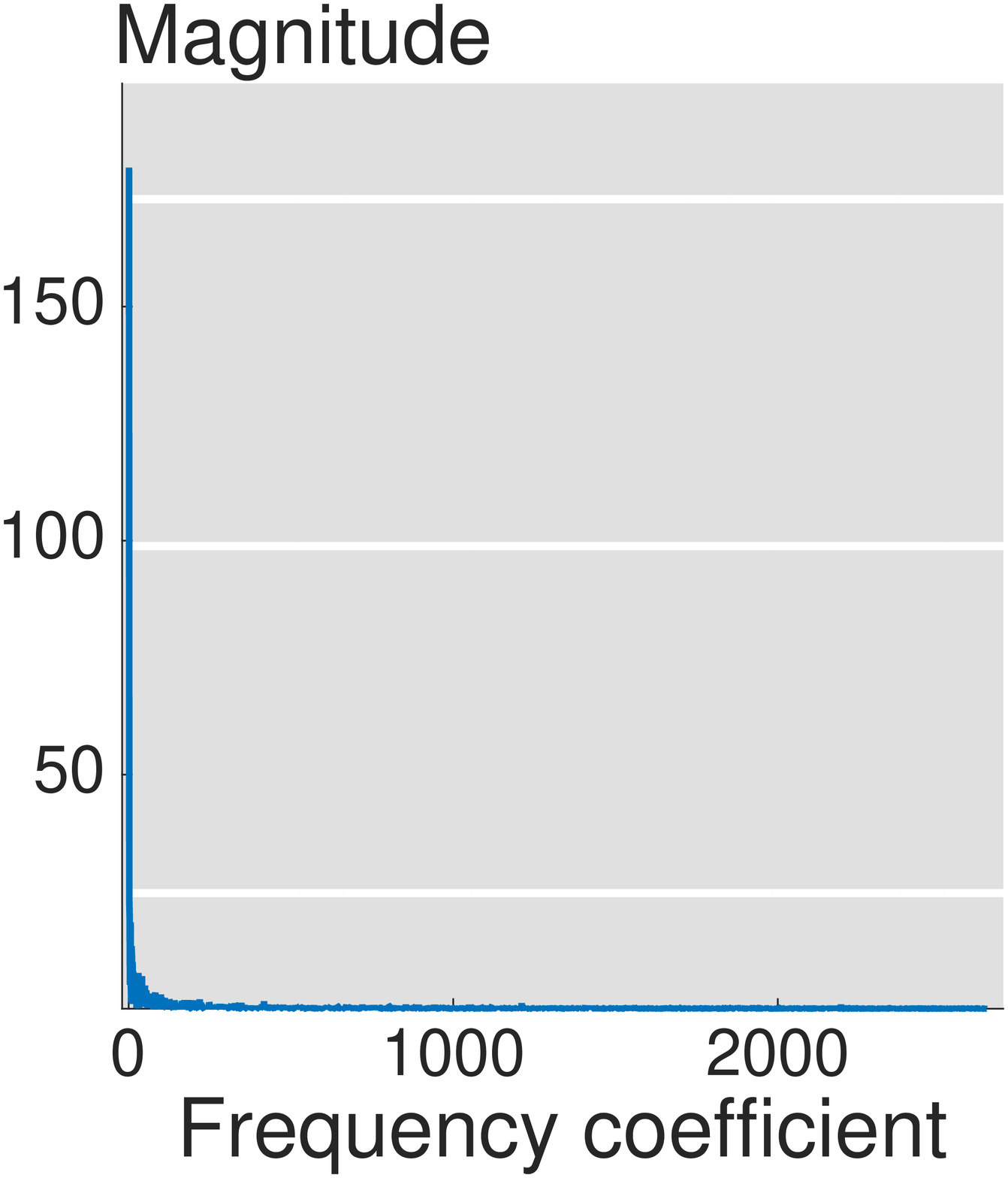} 
 \\
  {\small (c) Coefficients of $\Vm_{\W}$.} & {\small (d) Coefficients of $\Vm_{\LL}$.} 
 \\
 \\
 \includegraphics[width=0.4\columnwidth]{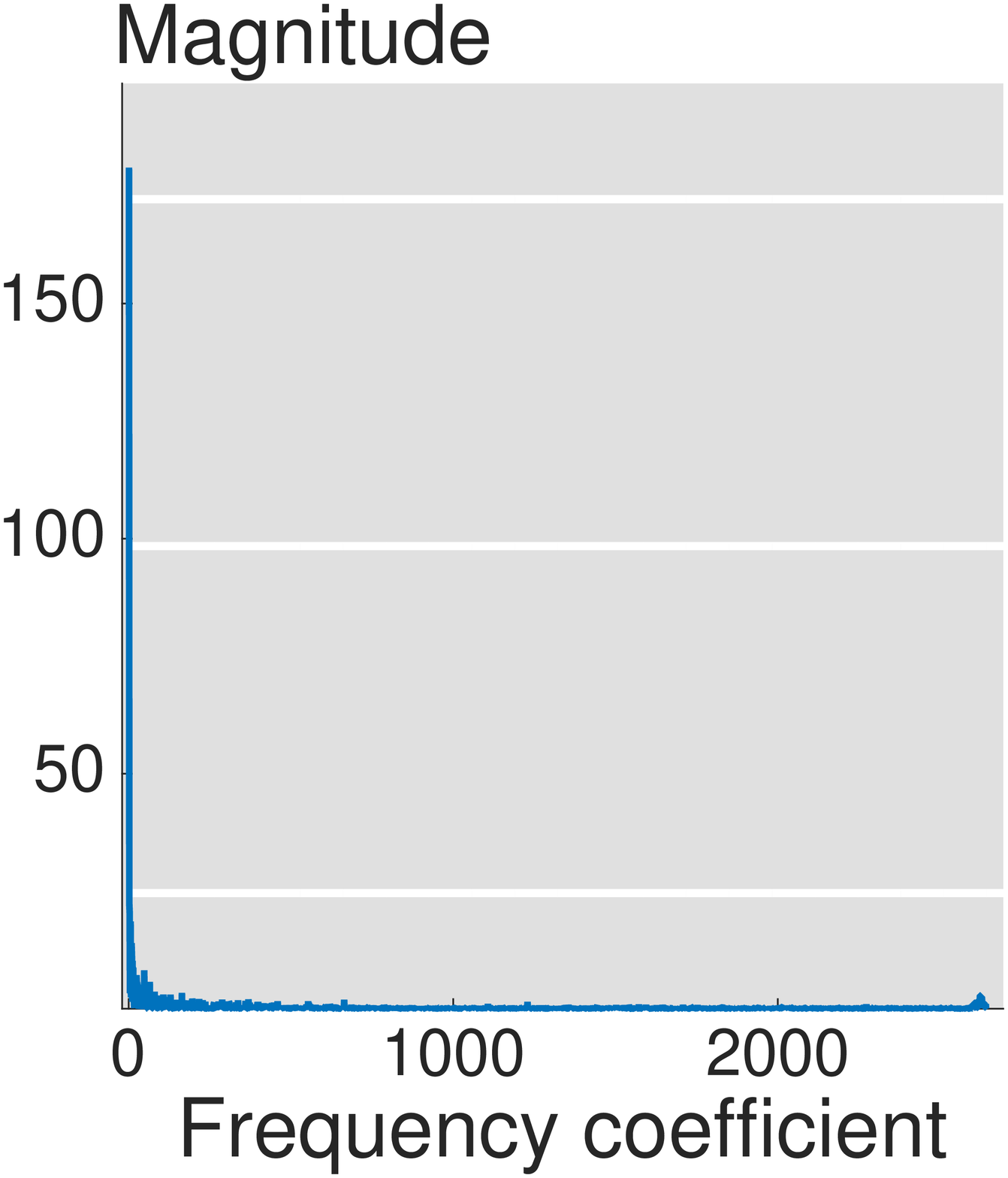} &
 \includegraphics[width=0.4\columnwidth]{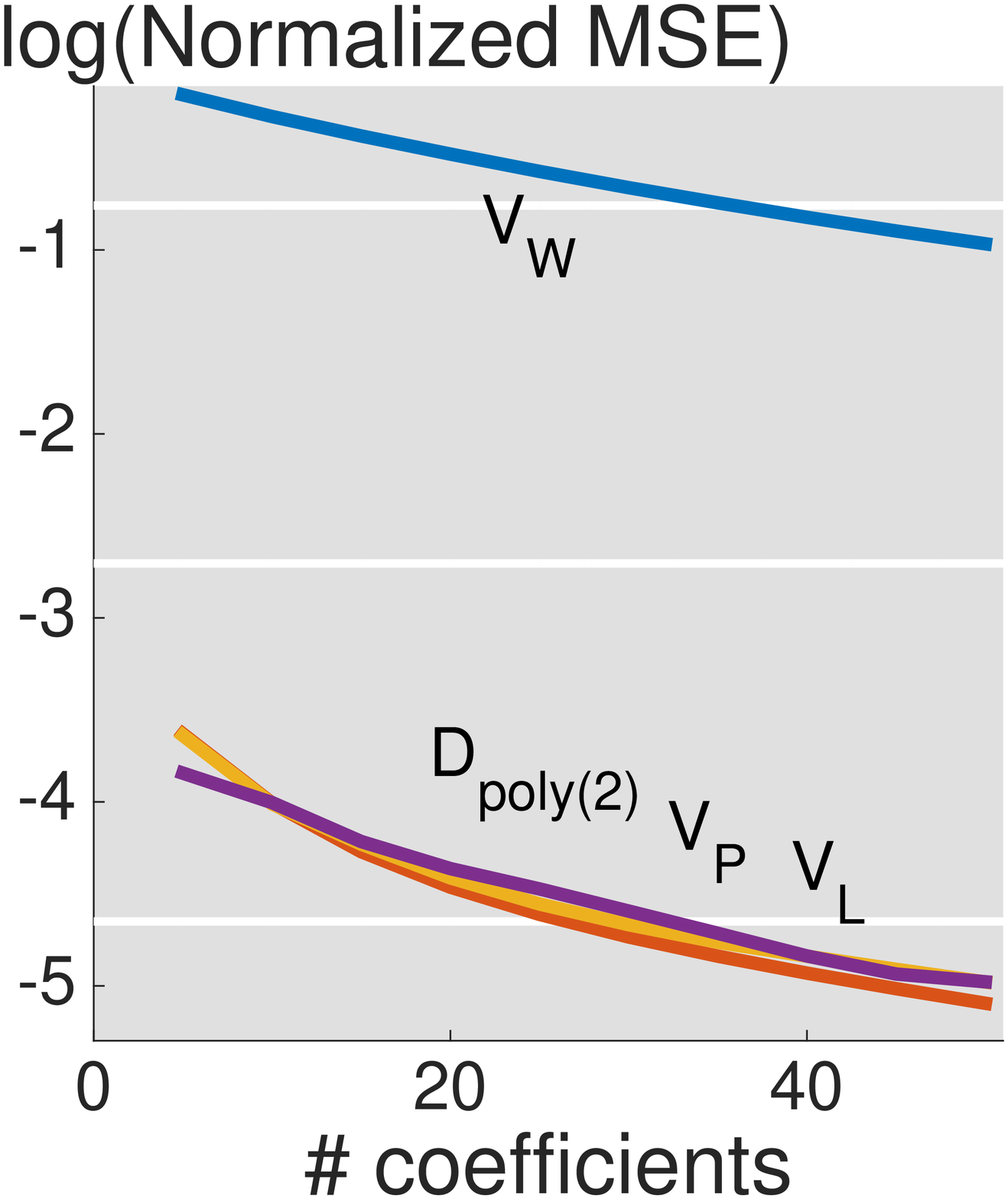}  
 \\
  {\small (e) Coefficients of $\Vm_{\Pj}$.} & {\small (f) Approximation error.} 
\end{tabular}
  \end{center}
   \caption{\label{fig:smooth_app_wind} Approximation of wind speed. $\Vm_{\LL}$ and $\D_{\rm poly(2)}$ tie $\Vm_{\Pj}$; all of them are much better than $\Vm_{\W}$. }
\end{figure}

The Minnesota road graph is a standard dataset including 2642 intersections and 3304 roads
 ~\cite{MinnesotaGraph}. we construct a  graph by modeling the intersections as nodes and the roads as undirected edges. We collect a dataset recording the wind speeds at those 2642 intersections~\cite{JiCVK:15}. The data records the hourly measurement of the wind speed and direction at each intersection. In this paper, we present the data of wind speed on January 1st, 2015. Figure~\ref{fig:smooth_app_wind}(a) shows a snapshot of the wind speeds on the entire Minnesota road. The expansion coefficients obtained by using four representations are shown in Figure~\ref{fig:smooth_app_wind}(b), (c), (d) and (e). The energies of the frequency coefficients of $\Vm_{\W}$, $\Vm_{\LL}$ and $\Vm_{\Pj}$ mainly concentrate on the low-frequency bands; $\Vm_{\LL}$ and $\Vm_{\Pj}$ are more concentrated; $\D_{\rm poly(2)}$ is redundant and the corresponding expansion coefficients $\D_{\rm poly(2)}^T \x$ are not sparse.

To make a more serious comparison, we evaluate the approximation error by using the normalized mean square error, that is,
\begin{equation}
\label{eq:NMSE} 
 {\rm Normalized~MSE} = \frac{ \left\|  \x^* - \x \right\|_2^2 }{ \left\| \x \right\|_2^2 },
\end{equation}
where $\x^*$ is the approximation signal and $\x$ is the original signal. Figure~\ref{fig:smooth_app_wind}(f) shows the approximation errors given by the four representations. The x-axis is the number of coefficients used in approximation, which is $K$ in~\eqref{eq:nonlinear_approx} and~\eqref{eq:sparse_coding} and the y-axis is the approximation error, where lower means better. We see that $\Vm_{\LL}$ and $\D_{\rm poly(2)}$ tie $\Vm_{\Pj}$; all of them are much better than $\Vm_{\W}$. This means that the wind speeds on the Minnesota road graph are well modeled by pairwise Lipschitz smooth, total Lipschitz smooth and local normalized neighboring smooth criteria. The global normalized neighboring smooth criterion is not appropriate for this dataset.

\begin{figure}[htb]
  \begin{center}
    \begin{tabular}{cc}
\includegraphics[width=0.4\columnwidth]{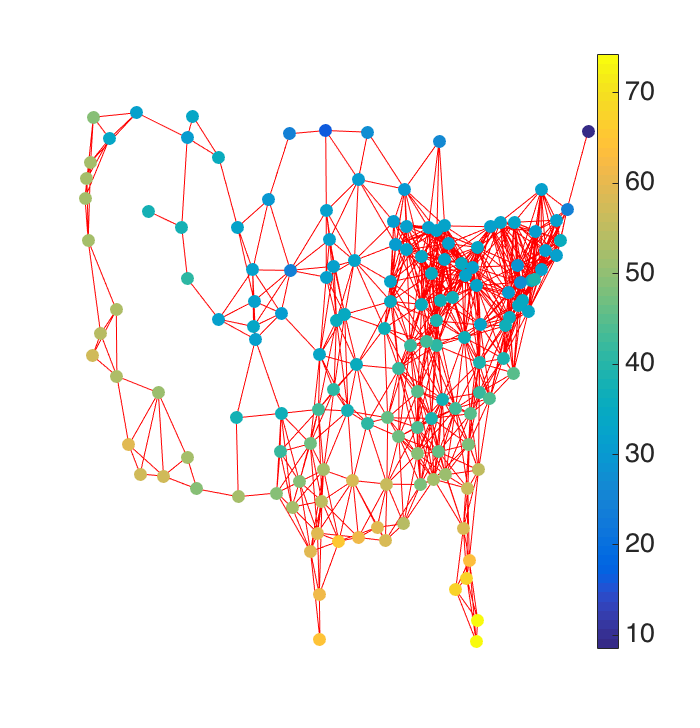} &
 \includegraphics[width=0.4\columnwidth]{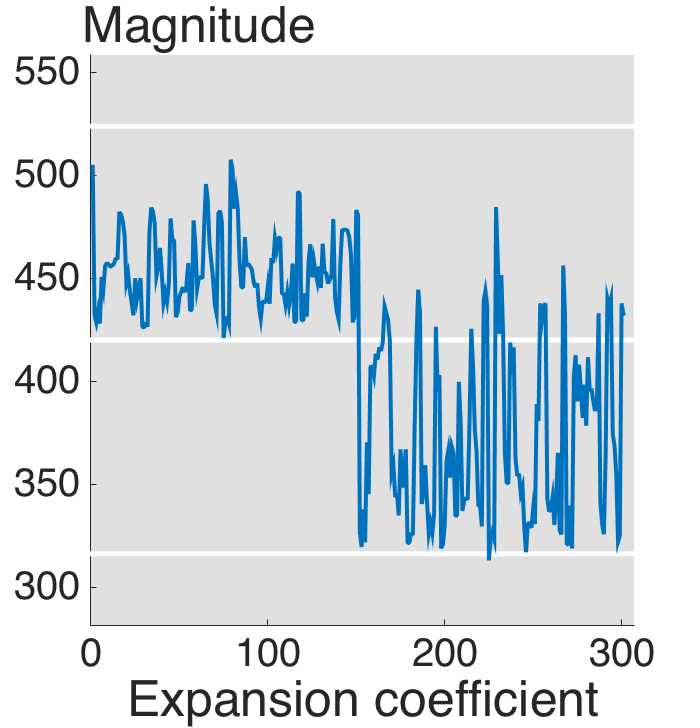} 
 \\
  {\small (a) Temperature.} & {\small (b) Coefficients of $\D_{\rm poly(2)}$.} 
 \\
 \\
 \includegraphics[width=0.4\columnwidth]{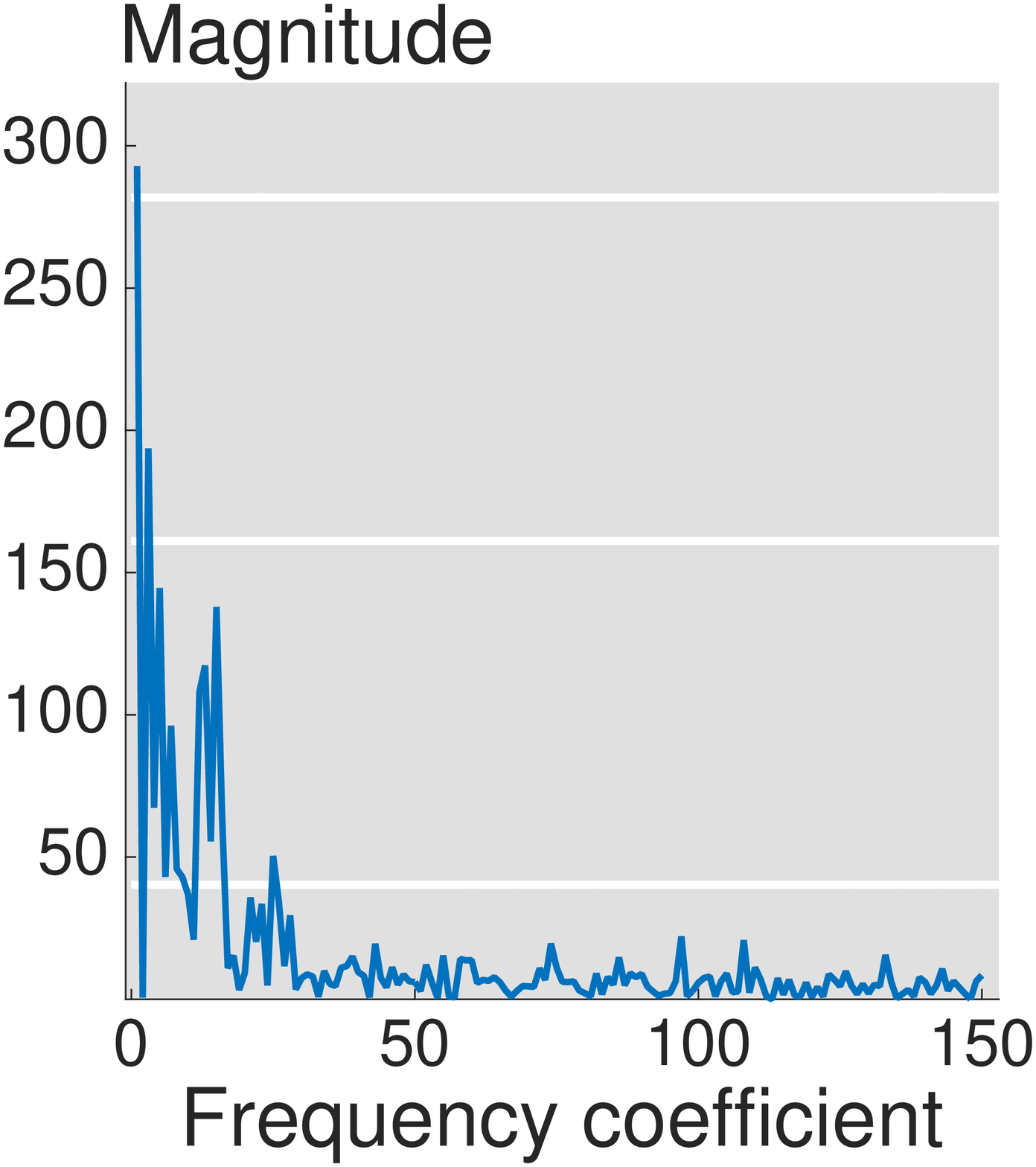} &
 \includegraphics[width=0.4\columnwidth]{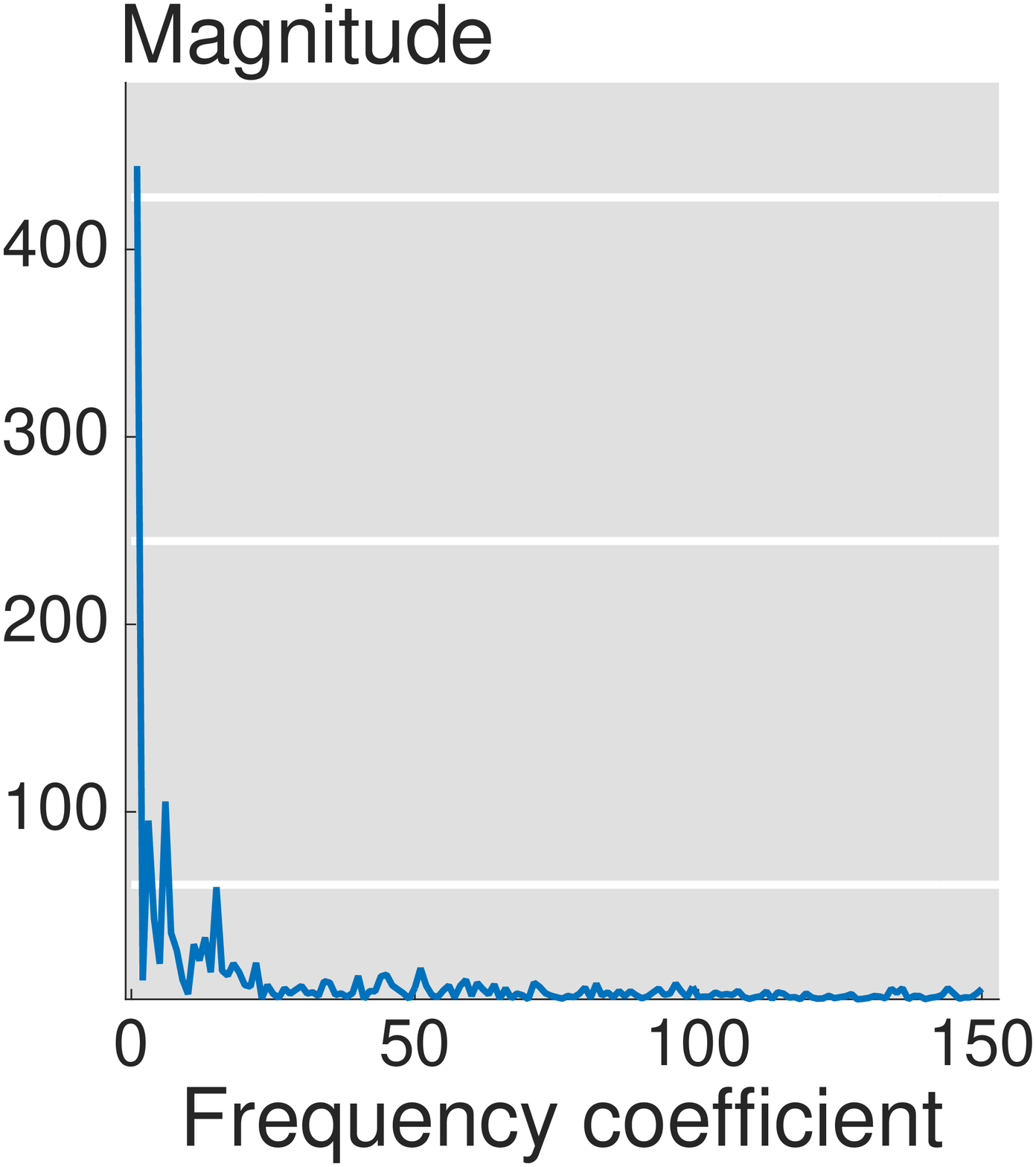} 
 \\
  {\small (c) Coefficients of $\Vm_{\W}$.} & {\small (d) Coefficients of $\Vm_{\LL}$.} 
 \\
 \\
 \includegraphics[width=0.4\columnwidth]{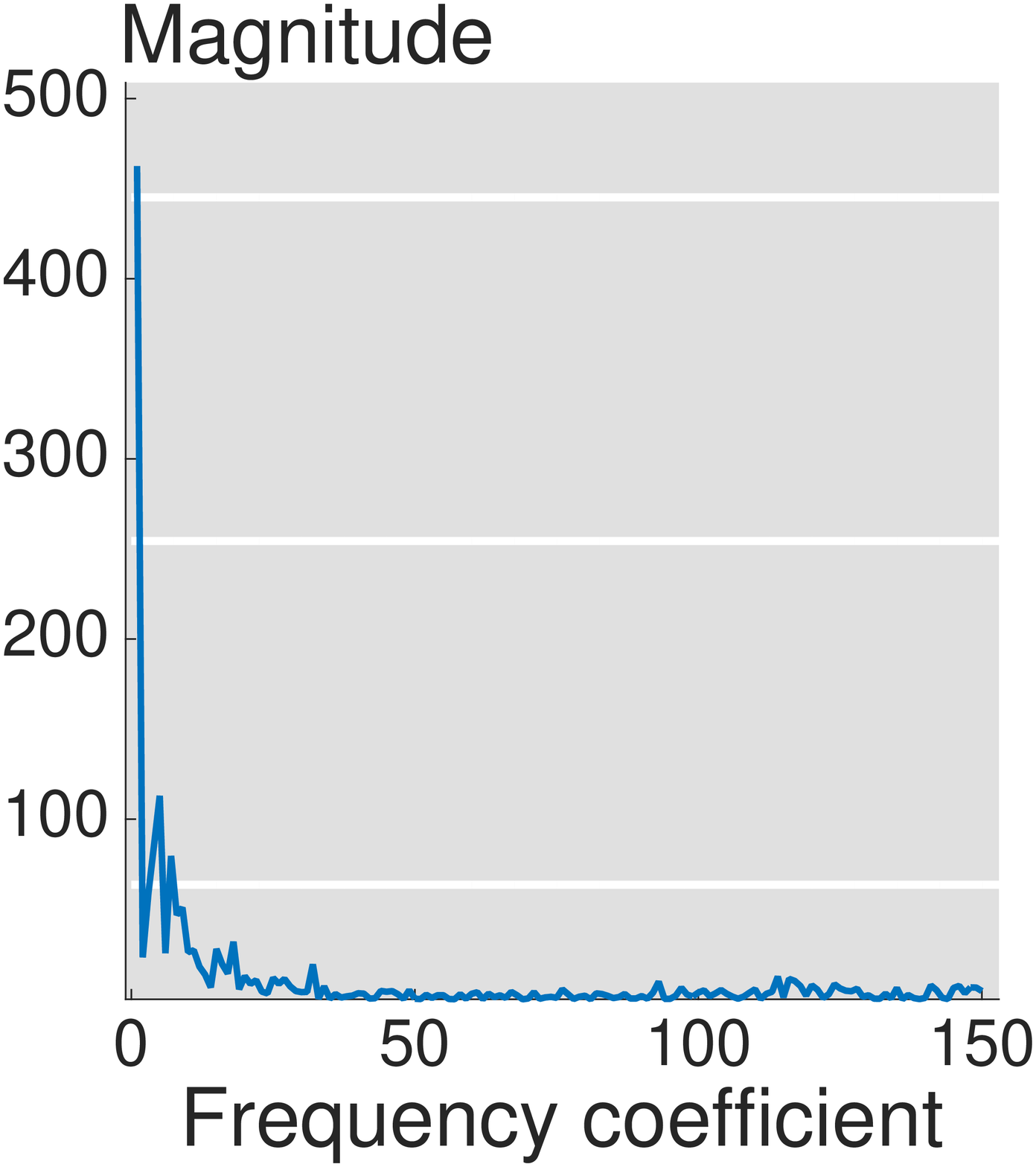} &
 \includegraphics[width=0.4\columnwidth]{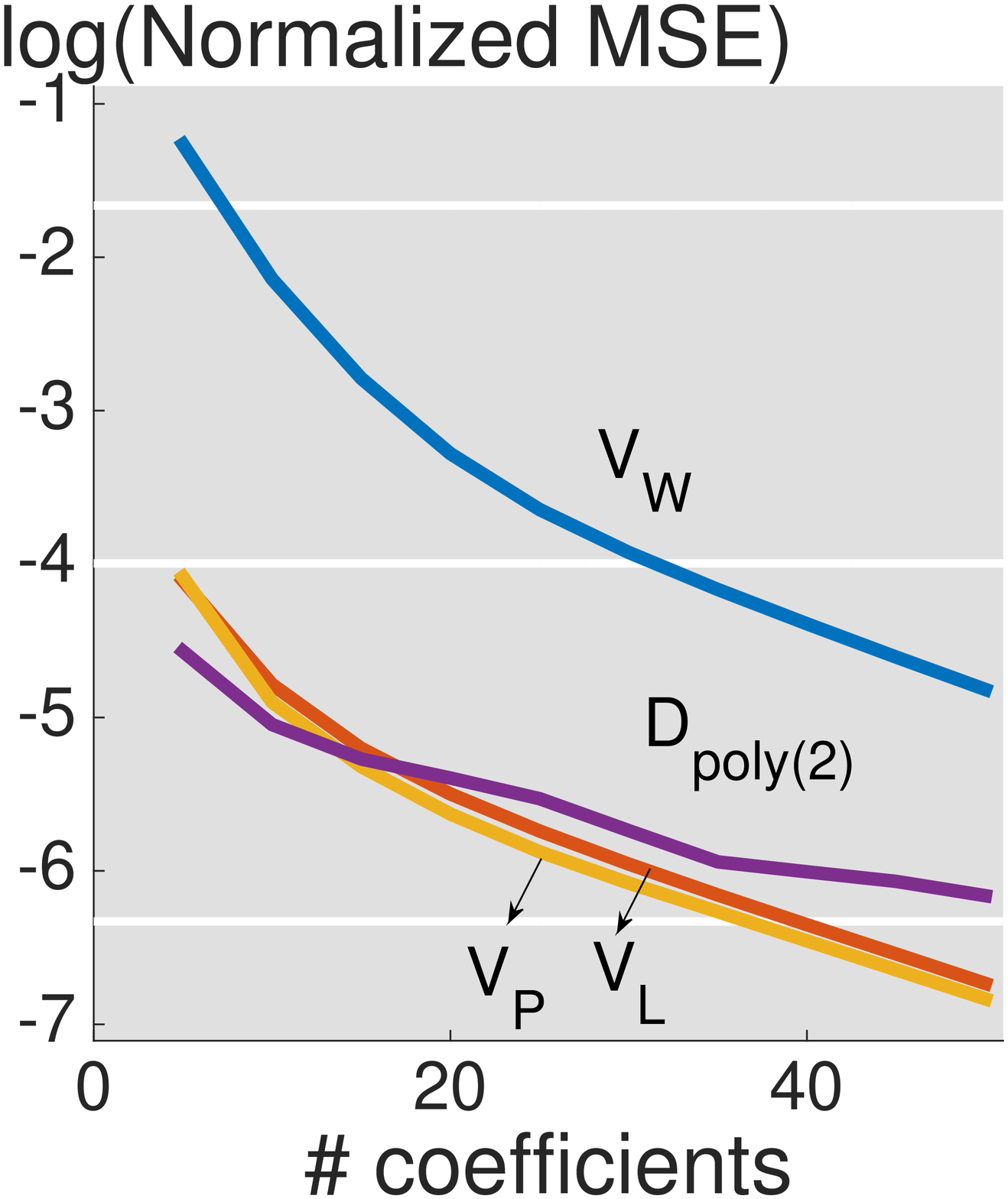}  
 \\
  {\small (e) Coefficients of $\Vm_{\Pj}$.} & {\small (f) Approximation err.} 
\end{tabular}
  \end{center}
   \caption{\label{fig:smooth_app_temp} Approximation of temperature. $\Vm_{\LL}$ ties $\Vm_{\Pj}$; both are slightly better than $\D_{\rm poly(2)}$ and are much better than $\Vm_{\W}$. }
\end{figure}

The U.S weather station graph is a network representation of $150$ weather stations across the U.S. We assign an edge when two 
weather stations are within 500 miles. The graph includes $150$ nodes and $1033$ undirected, unweighted edges. Each weather station has $365$ days of recordings (one recording per day), for a total of 365 graph signals. As an example, see Figure~\ref{fig:smooth_app_temp}(a). The expansion coefficients obtained by using four representations are shown in Figure~\ref{fig:smooth_app_temp}(b), (c), (d) and (e). 
Similarly to the wind speed dataset, the energies of the frequency coefficients of $\Vm_{\W}$, $\Vm_{\LL}$ and $\Vm_{\Pj}$ are mainly concentrated on the low-frequency bands; $\Vm_{\LL}$ and $\Vm_{\Pj}$ are more concentrated; $\D_{\rm poly(2)}$ is redundant and the corresponding expansion coefficients $\D_{\rm poly(2)}^T \x$ are not sparse.

The evaluation metric of the approximation error is also the normalized mean square error.  Figure~\ref{fig:smooth_app_temp}(f) shows the approximation errors given by the four representations. The results are averages over 365 graph signals. Again, we see that $\Vm_{\LL}$, $\D_{\rm poly(2)}$ and $\Vm_{\Pj}$ perform similarly; all of them are much better than $\Vm_{\W}$. This means that the wind speeds on the Minnesota road graph are well modeled by pairwise Lipschitz smooth, total Lipschitz smooth and local normalized neighboring smooth criteria. The global normalized neighboring smooth criterion is not appropriate for this dataset.

The results from two real-world datasets suggest that we should consider using pairwise Lipschitz smooth, total Lipschitz smooth and local normalized neighboring smooth criteria to model real-world smooth graph signals. In terms of the representation dictionary, among $\Vm_{\LL}$, $\D_{\rm poly(2)}$ and $\Vm_{\Pj}$, $\D_{\rm poly(2)}$ is redundant; $\Vm_{\Pj}$ is not orthonormal. We thus prefer using $\Vm_{\LL}$ because it is an orthonormal basis.

\subsubsection{Sampling and Recovery}
\label{sec:smooth_samplingandrecovery}
The goal is to recover a smooth graph signal from a few subsamples. A subsample is collected from an individual node each time; that is, we constraint the sampling pattern matrix $\F$ in~\eqref{eq:Psi} be an identity matrix. Previous works show that in this senario, experimentally designed sampling is equivalent to active sampling and is better than uniform sampling asymptotically~\cite{ChenVSK:15c}. Here we compare various sampling strategies based on experimentally designed sampling, which are implemented in a deterministic approach. The random approach sees~\cite{ChenVSK:15a}.

\mypar{Algorithm}
We follow the sampling and recovery framework in Section~\ref{sec:samplingandrecovery}.
Let a smooth graph signal be $\x = \D \a$. For example, $\D = \Vm_{\LL}$, then $\a$ are the frequency coefficients. In general, we assume that the energy of the expansion coefficient $\a$ is concentrated in a few known supports, that is, $\left\| \a_{\Omega} \right\|_2 \gg \left\| \a_{\Omega^c} \right\|_2$, where $|\Omega| \ll N $ and $\Omega$ is the known. We aim to recover $\a_{\Omega}$ and further approximate $\x$ by using $\D_{\Omega}\a_{\Omega}$. We consider the partial least squares (PLS), $x^*_{\rm PLS}  \ = \  \D_{\Omega}  \a^*_{\rm PLS} $, where
\begin{eqnarray}
\label{eq:PLS}
  \a^*_{\rm PLS}  & = & 	
	  \arg \min_{ \a}  \left\|  \Psi^T \x_{\Psi} -  \Psi^T \Psi  \D_{\Omega} \a_{\Omega} \right\|_2^2
	\nonumber
	\\
	& = &   \left(  \D_{\Omega}^T \Psi^T \Psi \D_{\Omega} \right)^{-1}  \D_{\Omega} \Psi^T \Psi \y
	\nonumber
	\\
	\nonumber
	& = & (\Psi \D_{\Omega} )^{\dagger} \Psi ( \D_{\Omega} \a_{\Omega} + \D_{\Omega^c} \a_{\Omega^c}  + \epsilon ),
\end{eqnarray}
where $\y = \x + \epsilon$ is the noisy version of the graph signal with $\epsilon$ is noise, $(\Psi \D_{\Omega} )^{\dagger} \Psi = \left(  \D_{\Omega}^T \Ss  \D_{\Omega} \right)^{-1}  \D_{\Omega}^T \Ss$ and $\Ss =  \Psi^T \Psi$, which is a diagonal matrix with $\Ss_{i,i}= 1$, when the $i$th node is sampled, and 0, otherwise. The recovery error is then
\begin{eqnarray*}
&&   \x^*_{\rm PLS}  - \x
\\
& = & \D_{\Omega}  (\Psi \D_{\Omega} )^{\dagger} \Psi ( \D_{\Omega^c} \a_{\Omega^c}  + \epsilon ),
\end{eqnarray*}
where the first term is bias and the second term is the variance from noise. 

We aim to optimize the sampling strategy by minimizing the recovery error and there are six cases to be considered: 
\begin{enumerate}[(a)]
	\item minimizing the bias in the worst case; that is,	
	$$
	\min_{\Psi}  \left\| (\Psi \D_{\Omega} )^{\dagger} \Psi  \D_{\Omega^c} \right\|_2,
	$$
	where $ \sigma_{\max}$ is the largest singular value;
	\item minimizing the bias in expectation; that is,	
	$$
	\min_{S}   \left\| (\Psi \D_{\Omega} )^{\dagger} \Psi   \D_{\Omega^c} \right\|_{F}^2,
	$$
	where $\left\| \cdot \right\|_F$ is the Frobenius norm;
	\item minimizing the variance from noise in the worst case; that is,	
	$$
	\min_{\Psi}  \left\| (\Psi \D_{\Omega} )^{\dagger} \Psi  \right\|_2;
	$$
	\item minimizing the variance from noise in expectation; that is,	
	$$
	\min_{\Psi}  \left\| (\Psi \D_{\Omega} )^{\dagger} \Psi  \right\|_F;
	$$
	\item minimizing the recovery error in worst case, that is,
$$
	\min_{\Psi}  \left\| (\Psi \D_{\Omega} )^{\dagger} \Psi  \D_{\Omega^c} \right\|_2	+ c \left\| (\Psi \D_{\Omega} )^{\dagger} \Psi  \right\|_2,
$$
	where $c$ is related to the signal-to-noise ratio;
	\item minimizing the recovery error in expectation; that is,
$$
	\min_{\Psi}  \left\| (\Psi \D_{\Omega} )^{\dagger} \Psi  \D_{\Omega^c} \right\|_F + c \left\| (\Psi \D_{\Omega} )^{\dagger} \Psi  \right\|_F,
$$
	where $c$ is related to the signal-to-noise ratio.
\end{enumerate}

We call the resulting sampling operator $\Psi$ of each setting as an~\emph{optimal sampling operator with respect to setting ($\cdot$)}.  As shown in the previous work, (c) can be solved by a heuristic greedy search algorithm as shown in~\cite{ChenVSK:15} and a dual set algorithm as shown in~\cite{BoutsidisDM:14}; (d) can be solved by a convex relaxation as shown in~\cite{DavenportMNW:15}. When we deal with a huge graph, $\D_{\Omega^c}$ can be a huge matrix, solving (a), (b), (e), and (f) can lead to computational issues. When we only want to avoid the interference from a small subset of  $\D_{\Omega^c}$, we can use a similar heuristic greedy search algorithm to solve (a) and (e) and a similar convex relaxation to solve (b) and (f). Note that the sampling strategies are designed based the recovery strategy of partial least squares; it is not necessary to be optimal in general. Some other recovery strategies are variation minimization algorithms~\cite{ZhuGL:03, ZhouS:04, ChenSMK:14, AnisAO:15}. Comparing to the efficient sampling strategies~\cite{AnisAO:15}, the optimal sampling  operator is more general because it does not require $\D_{\Omega}$ to have any specific property. For example, $\D_{\Omega}$ can contain high-frequency atoms.

\mypar{Relations to Matrix Approximation} 
We show that the nodes sampled by the optimal sampling operator are actually the prototype nodes that maximally preserve the connectivity information in the graph structure matrix. In the task of matrix approximation, we aim to find a subset of rows to minimize the reconstruction error. When we specify the matrix to be a graph structure matrix, we solve
\begin{eqnarray*}
 && \min_{\Psi \in \R^{M \times N} } \left\| \RR - \RR (\Psi \RR)^{\dagger} (\Psi \RR) \right\|_{\xi},
\end{eqnarray*}
where $\Psi$ is the subsampling operator in~\eqref{eq:Psi}
 and $\xi = 2, F$.~\cite{BoutsidisDM:14} shows that 
\begin{eqnarray*}
\left\| \RR - \RR (\Psi \RR)^{\dagger} (\Psi \RR)  \right\|_{\xi}^2  \leq   
\left\| \RR  - \RR_K \right\|_{\xi}^2  \left\| \left( \Psi \Vm_{(K)} \right)^{\dagger} \Psi \right\|_2^2,
\end{eqnarray*}
where $\RR_K$ is the best rank $K$ approximation of $\RR$ and $\Vm_{(K)}$ is the first $K$ columns of the graph Fourier transform matrix of $\RR$. We see that when $\D_{\Omega} = \Vm_{(K)}$, the rows sampled by the optimal sampling operator in (c) minimize the reconstruction error of the graph structure matrix. When sampling nodes, we always lose information, but the optimal sampling operator selects most representative nodes that maximally preserves the connectivity information in the graph structure matrix.

\mypar{Experiments}
The main goal is to compare sampling strategies based on various recovery strategies and datasets.

We compare $6$ sampling strategies in total. We consider the sampling strategies implementing the optimal sampling operator in three ways: solving the setting (c) by the greedy search algorithm (Opt(G)), solving the setting (c) by the dual set algorithm (Opt(D)) and solving the setting (d) by solving the convex relaxation algorithm (Opt(C)). We also consider the efficient sampling strategies with three parameter settings (Eff($k$), where $k$  is the connection order, varying as $1, 2, 3$)~\cite{AnisAO:15}, where the efficient sampling strategies provide fast implementations by taking the advantages of the properties of the graph Laplacian.

We test on $4$ recovery strategies in total, including the partial least squares (PLS)~\eqref{eq:PLS}, harmonic functions (HF)~\cite{ZhuGL:03}, total variation minimization (TVM)~\cite{ChenSMK:14} and graph Laplacian based variation minimization with connection order 1 (LVM($1$))~\cite{AnisAO:15}. Note that the proposed optimal sampling operator is based on PLS and  the compared efficient sampling strategy is based on LVM.

We test the sampling strategies on two real-world datasets, including the wind speeds on the Minnesota road graph and the temperature measurements on the U.S city graph. As shown in Figures~\ref{fig:smooth_app_wind} and~\ref{fig:smooth_app_temp}, these signals are smooth on the corresponding graphs, but are not bandlimited.

In Section~\ref{sec:smooth_app}, we conclude that the graph Fourier basis based on graph Laplacian  $\Vm_{\LL}$ models the graph signals in these two datasets well. We thus specify the first $K$ columns of $\Vm_{\LL}$ as $\D_{\Omega}$ in~\eqref{eq:PLS}, where $K = 0.65 M$ and $M$ is the sample size. The physical meaning of $K$ is the bandwidth of a bandlimited space, which means that we design the samples based on a small bandlimited space. We use the same bandwidths in the recovery strategies of the partial least squares and the iterative projection between two convex sets.

\begin{figure}[htb]
  \begin{center}
    \begin{tabular}{cc}
 \includegraphics[width=0.4\columnwidth]{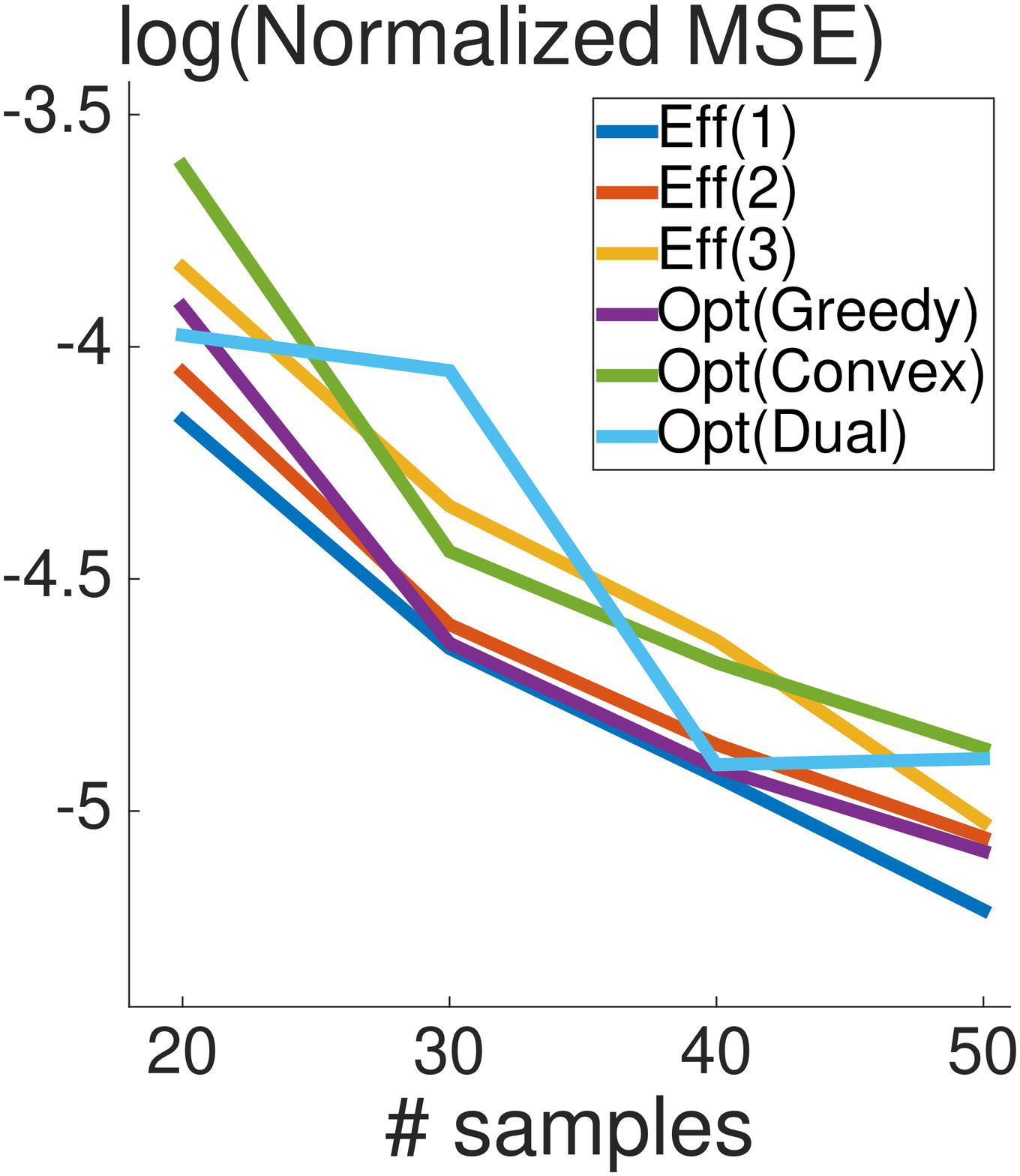}  & \includegraphics[width=0.4\columnwidth]{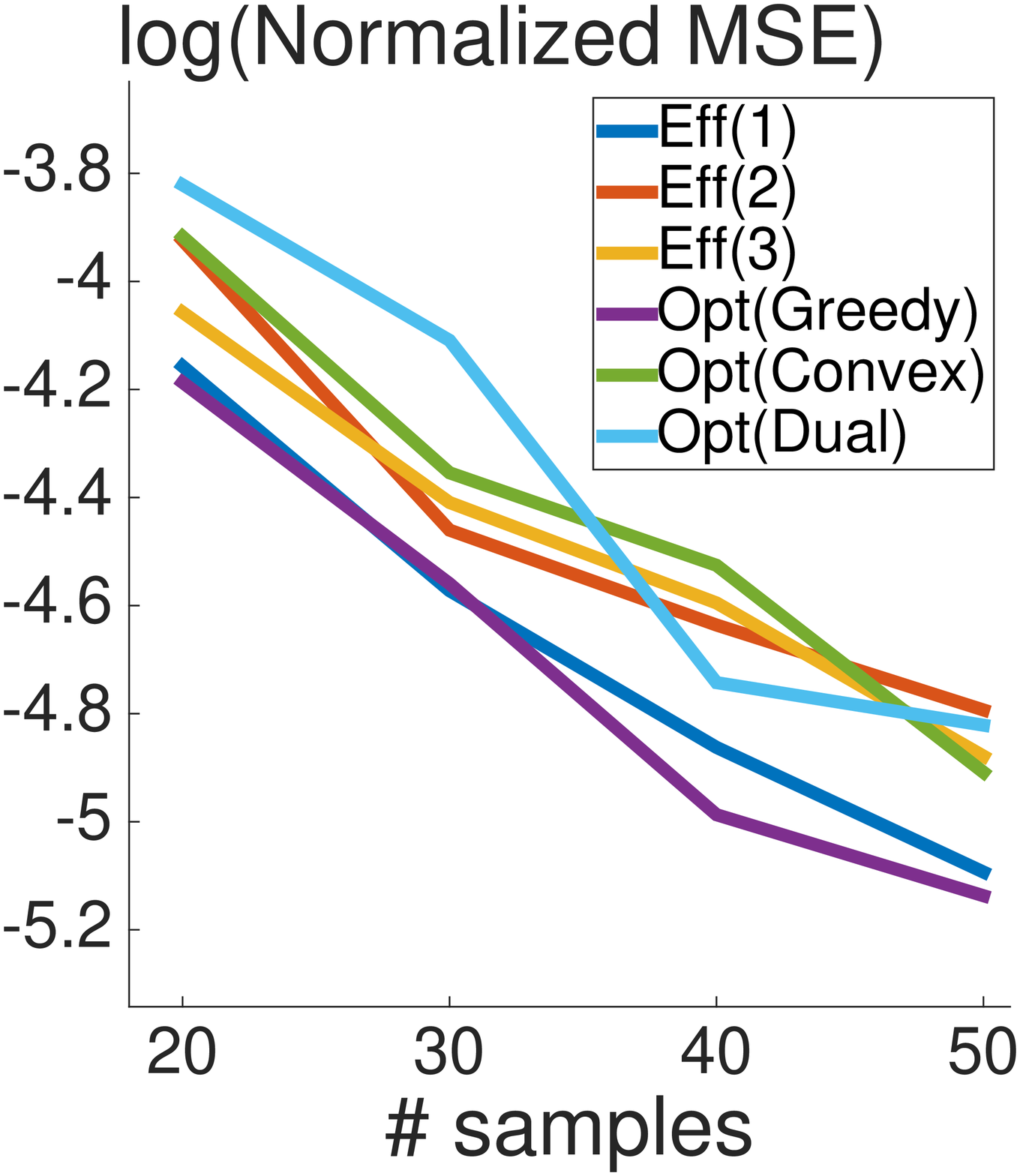}
\\
 {\small (a) PLS. } & {\small (b) HF. } 
 \\
 \includegraphics[width=0.4\columnwidth]{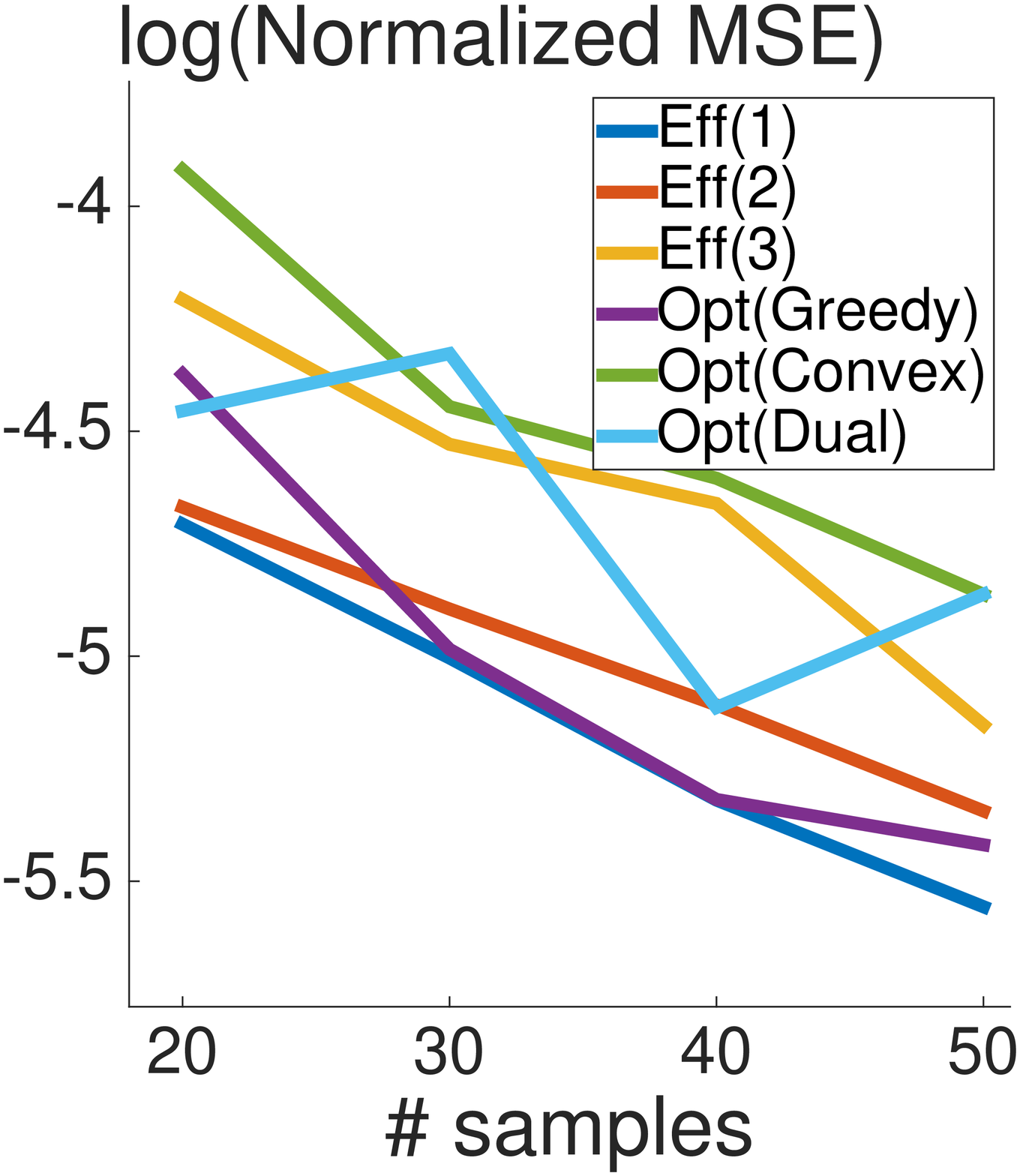}  & \includegraphics[width=0.4\columnwidth]{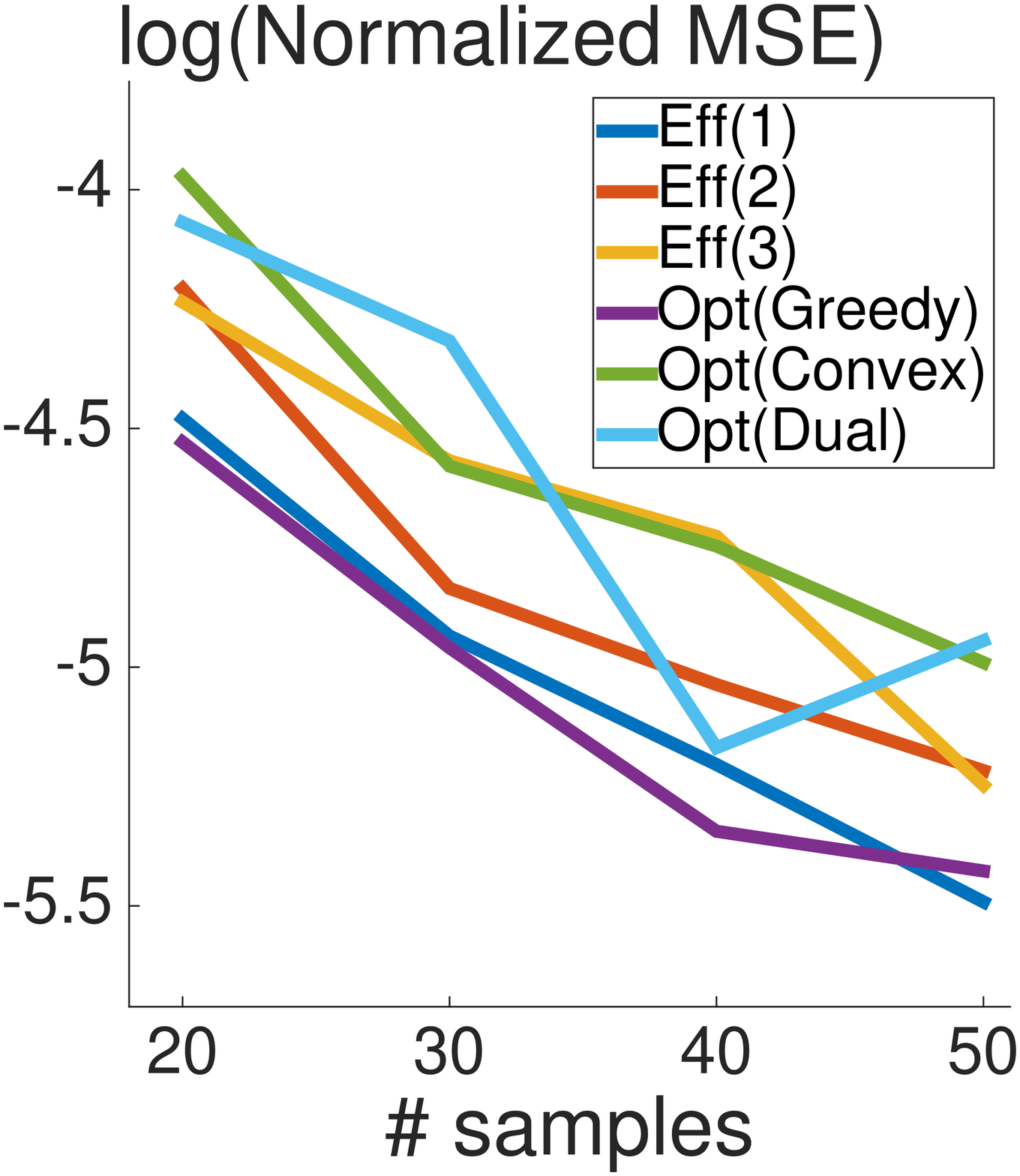}
\\
 {\small (c) TVM. } & {\small (d) LVM(1).} 
  \\
\end{tabular}
  \end{center}
   \caption{\label{fig:recovery_smooth_temp} Sampling and recovery of temperature measurements.  Lower means better. }
\end{figure}

Figures~\ref{fig:recovery_smooth_temp} and~\ref{fig:recovery_smooth_wind} shows the comparison of $6$ sampling strategies based on $4$ recovery strategies on the temperature dataset and the wind dataset, respectively. For each figure, the $x$-axis is the number of samples and the $y$-axis is the recovery error, which is evaluated by Normalized MSE~\eqref{eq:NMSE} in a logarithm scale. For both datasets, three optimal sampling operators are competitive with three efficient sampling strategies under each of $6$ recovery strategies. Since efficient sampling strategies need to compute the eigenvectors corresponds to the small eigenvalues, the computation is sometimes unstable when the graph is not well connnected. Within three optimal sampling operators, the greedy search algorithm provides the best performance. As shown in~\cite{AnisAO:15}, the computational complexity of greedy search algorithm is $O(N M^4)$, where $M$ is the number of samples, which is computationally inefficient; however, since we are under the experimentally designed sampling, all the algorithms are designed offline. When the sample size is not too huge, we prefer using a slower, but more accurate sampling strategy. The dual set algorithm also provides competitive performance and the computational complexity of dual set algorithm is $O(N M^3)$, which is more efficient than the greedy search algorithm. Thus, when one needs a small number of samples, we recommend the optimal sampling operator implemented by the greedy search algorithm; when one needs a large number of samples, we recommend the optimal sampling operator implemented by the dual set algorithm.

In~\cite{JiCVK:15}, the sampling followed with recovery of wind speeds is used to plan routes for autonomous aerial vehicles.

\begin{figure}[htb]
  \begin{center}
    \begin{tabular}{cc}
 \includegraphics[width=0.4\columnwidth]{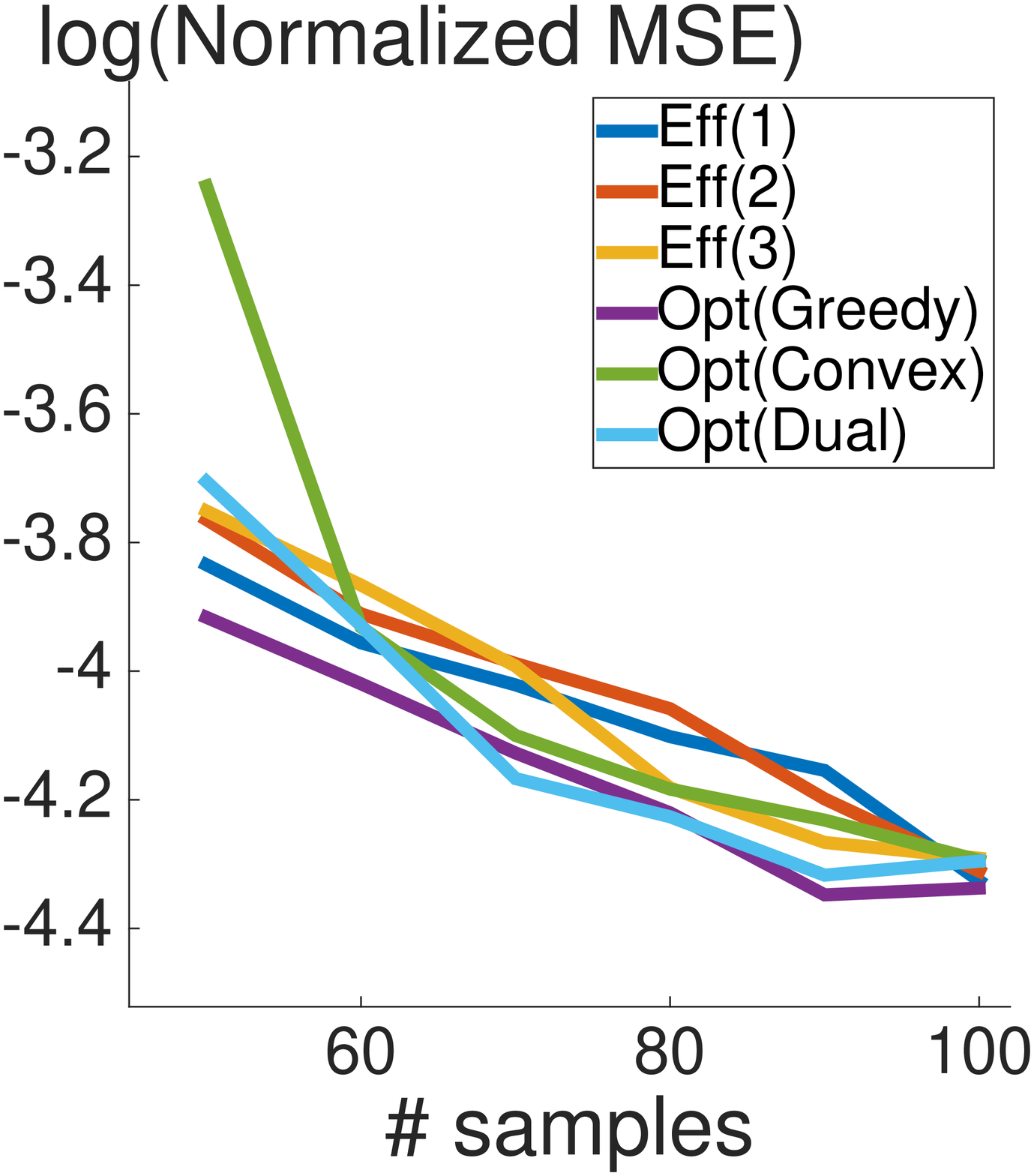}  & \includegraphics[width=0.4\columnwidth]{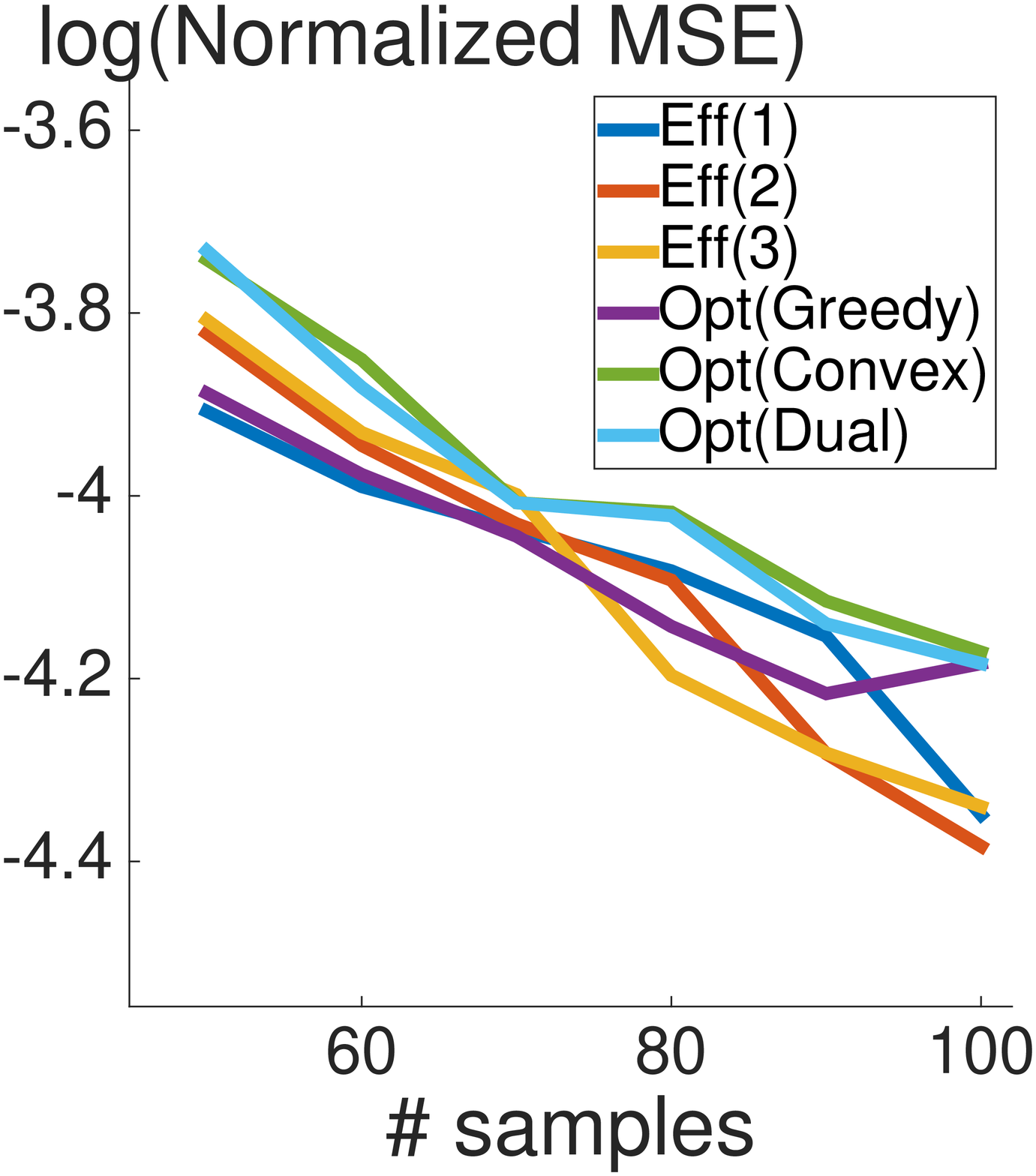}
\\
 {\small (a) PLS. } & {\small (b) HF. } 
 \\
 \includegraphics[width=0.4\columnwidth]{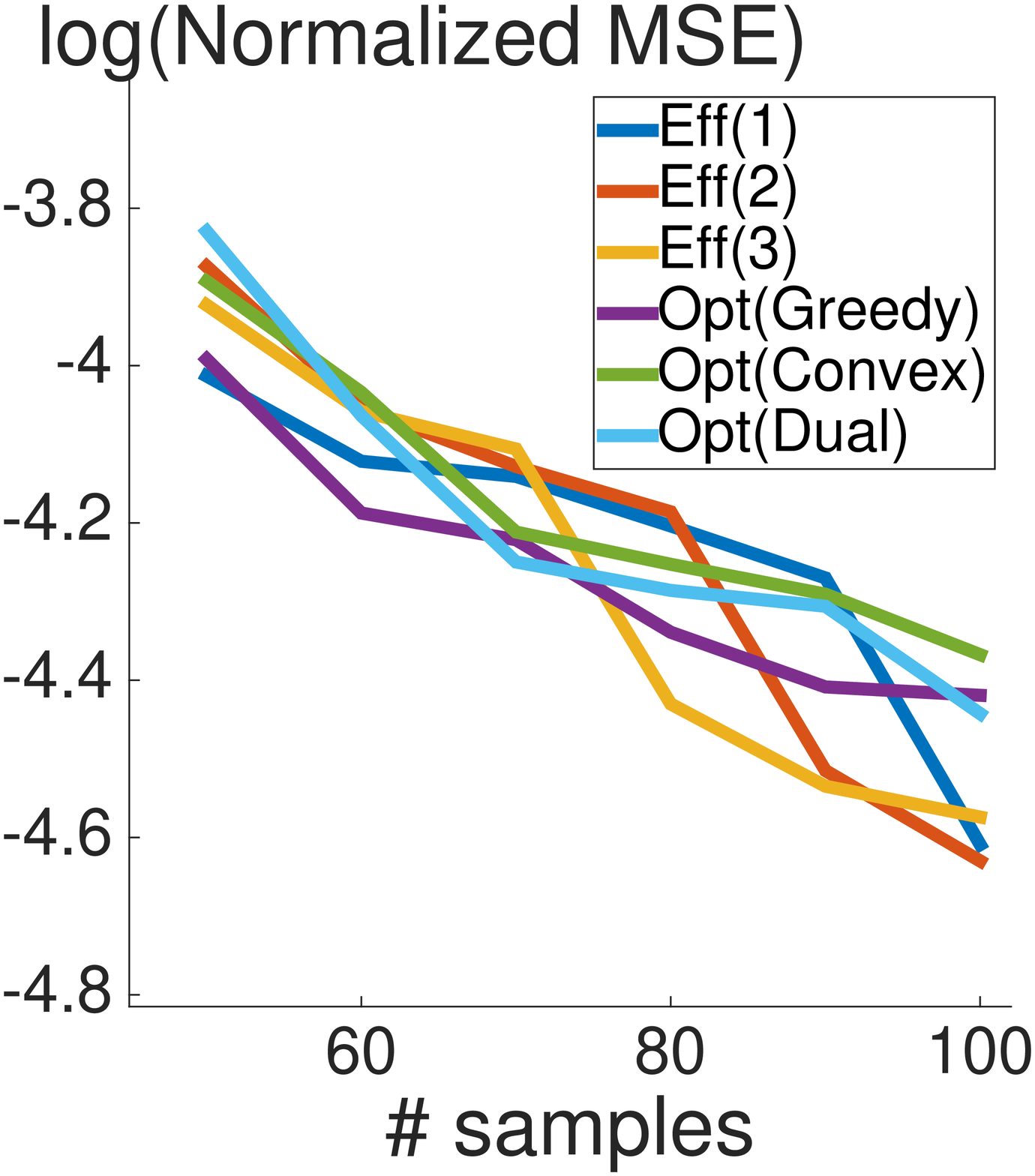}  & \includegraphics[width=0.4\columnwidth]{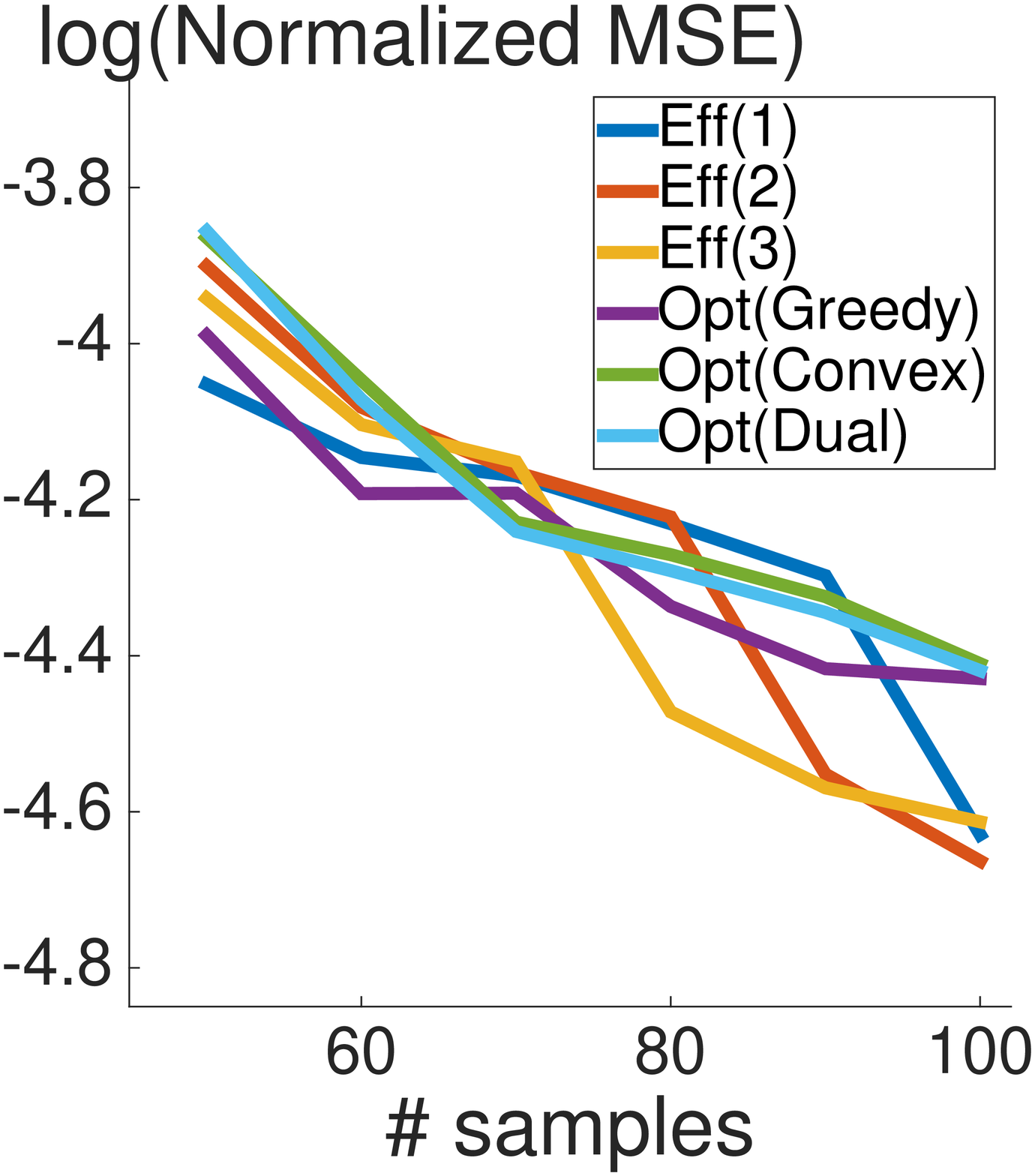}
\\
 {\small (c) TVM. } & {\small (d) LVM(1).} 
  \\
\end{tabular}
  \end{center}
   \caption{\label{fig:recovery_smooth_wind} Sampling and recovery of wind speeds. Lower mean better.  }
\end{figure}

\subsubsection{Case Study: Coauthor Network}
We aim to use the proposed approximation and sampling with recovery techniques to large-scale graphs. Here we present some preliminary results.

We collect the publications on three journals, including IEEE Transactions on Signal Processing (TSP), Image Processing (TIP) and Information Theory (TIT). The dataset includes $10,011$ papers on TSP contributed by $10,569$ authors, $5,304$ papers on TIP contributed by $8,388$ authors,  $13,303$ papers on TIT contributed by $9,533$ authors. We construct a coauthor network where nodes are unique authors and edges indicate the coauthorship. The edge weight $\W_{i,j}= 1$ when author $i$ and $j$ wrote at least one journal together, and $0$, otherwise.  The graph includes $25,282$ unique authors and $46,540$ coauthorships. We emphasize unique authors because some of them may publish papers on more than one journals. 

We form a graph signal by count the total number of papers on TSP for each unique author, which describes a distribution of the contribution to the signal processing community. Intuitively, the graph signal is smooth because people in the same community often write papers together. We then check which graph Fourier basis can well represent this smooth graph signal. Since the graph is huge, the full eigendecomposition is computational inefficient and we only compute $100$ eigenvectors. For the adjacency matrix, we compute the eigenvectors corresponding the largest $100$ eigenvalues; for the graph Laplacian matrix, we compute the eigenvectors corresponding the smallest $100$ eigenvalues; for the transition matrix, we compute the eigenvectors corresponding the first $100$ eigenvalues with largest magnitudes. Since the graph contains hundreds of disconnected components, the eigenvectors of graph Laplacian matrix do not converge all the time. We use the nonlinear approximation~\eqref{eq:nonlinear_approx} to approximate the graph signal. Figure~\ref{fig:tsp_err}(a) shows the approximation errors based on three graph Fourier bases, $\Vm_{\W}$, $\Vm_{\LL}$, $\Vm_{\Pj}$. We see that the graph Fourier basis based on the adjacency matrix provides much better representation for the graph signal of counting the number of papers in TSP; the first 100 eigenvectors of $\Vm_{\LL}$ and $\Vm_{\Pj}$ capture little information in the graph signal. 

When we only have this coauthor network and we aim to have a rough idea of contribution to the signal processing community from each author, it is clear that we do not want to check the publication lists of all the $25,282$ authors. Instead, we can use the sampling and recovery techniques in Section~\ref{sec:smooth_samplingandrecovery} to recover the contribution distribution from a few designed samples. From the aspects of matrix approximation, we aim to select most representative authors and minimize the lost information. Table~\ref{tab:tsp_sample} list the first 10 authors that we want to query. The first column is sorted based on the total number of papers published in all three transactions in a descending order (degree); the rest of the columns are the designed samples provided by the optimal sampling operator implemented by the greedy search algorithm. We try each of the three graph Fourier bases $\Vm_{\W}$, $\Vm_{\LL}$ and $\Vm_{\Pj}$ and the samples based on $\Vm_{\W}$ makes more sense. The intuition behind the designed samples is that we want to sample the hub of the large communities in the graph. For example, when two authors wrote many papers together, even both of them have a large number of papers, we only want to sample one from these two. Note that the optimal sampling operator selects most representative authors and it does not rank the importance of each authors.

\begin{figure}[htb]
  \begin{center}
 \includegraphics[width=0.5\columnwidth]{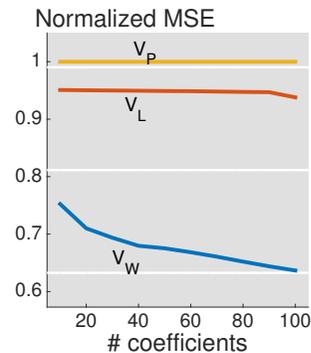} 
  \end{center}
   \caption{\label{fig:tsp_err} Approximation and recovery of contribution to TSP.  }
\end{figure}

\begin{table*}[htbp]
  \footnotesize
  \begin{center}
    \begin{tabular}{@{}llll@{}}
      \toprule
      Number of publications &  $\Psi^*{(\Vm_{\W})}$  &  $\Psi^*{(\Vm_{\LL})}$  &  $\Psi^*{(\Vm_{\Pj})}$  \\
      \midrule \addlinespace[1mm]
H. Vincent Poor  & H. Vincent Poor & Vishnu Naresh Boddeti  & Jang Yi \\

Shlomo Shamai & Aggelos K. Katsaggelos & Kourosh Jafari-Khouzani & Richard B. Wells \\

Dacheng Tao  & Truong Q. Nguyen  & Y.-S. Park  & Akaraphunt Vongkunghae \\

Truong Q. Nguyen & Shuicheng Yan  & Gabriel Tsechpenakis &  Tsung-Hung Tsai \\

A. Robert Calderbank & Dacheng Tao  & Tien C. Hsia &   Chuang-Wen You  \\

Yonina C. Eldar  & Tor Helleseth  & James Gonnella &   Arvin Wen Tsui  \\

Georgios B. Giannakis & Shlomo Shamai  & Clifford J. Nolan &  Min-Chun Hu \\

Shuicheng Yan & Michael Unser  & Ming-Jun Lai &  Wen-Huang Cheng  \\

Aggelos K. Katsaggelos & Petre Stoica & J. Basak &  Heng-Yu Chi \\

Tor Helleseth & Zhi-Quan Luo  & Shengyao Chen &  Chuan Qin \\










\bottomrule
\end{tabular} 
\caption{\label{tab:tsp_sample}. 10 most representative authors in TSP, TIP and TIT suggested by the optimal sampling operators. Suppose a new student wants to study this field by reading papers in TSP, TIP and TIT. Instead of reading papers from related authors, the student should query most representative authors to enrich his or her knowledge.}
\end{center}
\end{table*}

\section{Representations of Piecewise-constant Graph Signals}
\label{sec:R_PC}
In classical signal processing, a piecewise-constant signal means a signal that is locally constant in connected regions separated by lower-dimensional boundaries. It is often related to step functions, square waves and Haar wavelets. It is widely used in image processing~\cite{VetterliKG:12}. Piecewise-constant graph signals have been used in many applications related to graphs without having been explicitly defined; for example, in community detection, community
labels form a piecewise-constant graph signal for a social network; in
semi-supervised learning, classification labels form
a piecewise-constant graph signal for a graph constructed from
the dataset. While smooth graph signals emphasize slow transitions,
piecewise-constant graph signals emphasize fast transitions
(corresponding to boundaries) and localization on the vertex domain (corresponding to
signals being nonzeros in a local neighborhood).

\subsection{Graph Signal Models}
We introduce two definitions for piecewise-constant  graph signals: one comes from the descriptive approach and the other one comes from the generative approach. We also show their connections.

Let $\Delta$ be the~\emph{graph difference
  operator} (the oriented incidence matrix of $G$), whose rows
correspond to edges~\cite{SharpnackRS:12, WangSST:15}. For example, if
$e_i$ is a directed edge that connects the $j$th node to the $k$th
node ($j < k$), the elements of the $i$th row of $\Delta$ are
\begin{equation*}
\label{eq:Delta}
 \Delta_{i, \ell} = 
  \left\{ 
    \begin{array}{rl}
      - {\rm sgn}(\W_{j,k}) \sqrt{ | \W_{j,k} | }, & \ell = j;\\
       {\rm sgn}(\W_{j,k}) \sqrt{ | \W_{j,k} | }, & \ell = k;\\
      0, & \mbox{otherwise},
  \end{array} \right.
\end{equation*}
where $\W$ is the weighted adjacency matrix, sgn$(\cdot)$ denotes the sign, which is used to handle the negative edge weights. For a directed graph, $\W_{j,k}$ and $\W_{k,j}$ corresponds to two directed edges, representing in two rows in $\Delta$; for an undirected graph, since $\W_{j,k} = \W_{k,j}$, they correspond to a same undirected edge, representing in one row in $\Delta$.

To have more insight on the graph difference operator, we compare it with the graph shift operator. The graph shift operator diffuses each signal coefficient to its neighbors based on the edge weights and $\W \x$ is a graph signal representing the shifted version of $\x$. The $i$th element of $\W \x$,
$$
\left(  \W \x  \right)_i = \sum_{j \in {\rm Neighbor}(i)} \W_{i,j} x_j,
$$
assigns the weighted average of its neighbors' signal coefficients to the $i$th node. The difference between the original graph signal and the shift version, $\left( \Id - 1/\lambda_{\max} \W \right) \x$ where $\lambda_{\max}$ is the largest eigenvalue of $\W$, is a graph signal measuring the difference of $\x$. The term $\left\| \left( \Id - 1/\lambda_{\max} \W \right) \x \right\|_p^p$ measure the smoothness of a graph signal as shown in Definitions~\ref{df:local_neighboring_smooth} and~\ref{df:global_neighboring_smooth}.

The graph difference operator compares the signal coefficients of two nodes connected by each edge and $\Delta \x$ is an~\emph{edge signal} representing the difference of $\x$. The $i$th element of $\Delta \x$,
$$
\left(  \Delta \x  \right)_i = {\rm sgn}(\W_{j,k}) \sqrt{ | \W_{j,k} | } \left(  \x_k - \x_j \right),
$$
assigns the difference between two adjacent signal coefficients to the $i$th edge,  where the $i$th edge connects the $j$th node to the $k$th node ($j < k$). The term $\left\| \Delta \x \right\|_p^p$ also measures the smoothness of $\x$. Comparing two measures, $\left\| \left( \Id - 1/\lambda_{\max} \W \right) \x \right\|_p^p$ emphasizes the neighboring difference, which compares each signal coefficient with the weighted average of its neighbors' signal coefficients; and $\left\| \Delta \x \right\|_p^p$  emphasizes the pairwise difference, which compares each pair of adjacent signal coefficients. When $G$ is an undirected graph, $\left\| \Delta \x \right\|_2^2 = \x^T \LL \x$, where $\LL$ is the graph Laplacian matrix, which measures the total Lipschiz smoothness as shown in Definition~\ref{df:global_Lip}. When the graph is unweighted and all the edge weights are nonnegative, the elements of the $i$th row of $\Delta$ are simply
\begin{equation*}
\label{eq:C}
 \Delta_{i, \ell} = 
  \left\{ 
    \begin{array}{rl}
      1, & \ell = k;  \\
      -1, & \ell = j; \\
      0, & \mbox{otherwise}.
  \end{array} \right.
\end{equation*}

The class of piecewise-constant graph signals is a complement of the class of smooth graph signals, because many real-world graph signals contain outliers, which are hardly captured by smooth graph signals. Smooth graph signals emphasize the slow transition over nodes; and piecewise-constant graph signals emphasize fast transition, localization on the vertex domain: fast transition corresponds to the boundary and localization on the vertex domain corresponds to signals being nonzeros in a local neighbors.

We now define the class of piecewise-constant graph~signals.
\begin{defn}
  \label{df:pc_des}
  A graph signal $\x \in \R^N$ is~\emph{piecewise-constant} with $K \in
  \{0, 1, \cdots, N-1\}$ cuts, when it satisfies
  \begin{displaymath}
  \left\| \Delta \x  \right\|_0 \leq K.
\end{displaymath}
\end{defn}

Each element of $\Delta \x$ is the difference between two adjacenct signal coefficient; when $\left( \Delta \x \right)_i \neq 0$, we call the $i$th edge is inconsistent. The class $\PC_{G}(K)$ represents signals that contain at most $K$ inconsistent edges. For example, when $\x = \e_i$, $\left\| \Delta \x  \right\|_0$ is the out degree of the $i$th node; when $\x = {\bf 1}$, $\left\| \Delta \x  \right\|_0 = 0$. From the perspective of graph cuts, we cut inconsistent edges such that a graph is separated into several communities where signal coefficients are the same within each community. Note that Definition~\ref{df:pc_des} includes sparse graph signals. To eliminate sparse graph signals, we can add $\left\| \x \right\|_0 \geq S$ into the definition.

The piecewise-constant graph signal models in Definition~\ref{df:pc_des}  are introduce in a descriptive approach. Following these, we are going to translate the descriptive approach to the generative approach. We construct piecewise-constant graph signals by using local sets, which have
been used previously in graph cuts and graph signal
reconstruction~\cite{Luxburg:07, WangLG:14}.
\begin{defn}
  \label{df:localset}
  Let $\{S_c\}_{c=1}^C$ be the partition of the node set $\V$. We call $\{S_c\}_{c=1}^C$~\emph{local sets} when they satisfy that the subgraph corresponding to each local set is connected, that is, when $G_{S_c}$ is connected for all $c$.
\end{defn}

We can represent a local set $S$ by using a local-set-based graph signal, ${\bf 1}_{S} \in \R^N$, where
\begin{equation*}
\left(  {\bf 1}_{S}  \right)_i = 
  \left\{ 
    \begin{array}{rl}
      1, & v_i \in S;\\
      0, & \mbox{otherwise}.
  \end{array} \right.
\end{equation*}

For a local-set-based graph signal, we measure its smoothness by the normalized variation
$$
\STV_{\Delta, p} (S) = \frac{1}{\left\|  {\bf 1}_S  \right\|_p^p}\left\| \Delta {\bf 1}_S \right\|_p^p.
$$
For unweighted graphs, $\STV_{\Delta, 0} (S) = \STV_{\Delta, 1}(S) = \STV_{\Delta, 2} (S)$. It measures how hard it is to cut the boundary edges and make $G_S$ an isolated subgraph. We normalize the variation by the size of the local set, which implies that given the same cut cost, a larger local set is more smooth than a smaller local set.

\begin{defn}
  \label{df:pc_gen}
A graph signal $\x$ is piecewise-constant based on local sets $\{S_c\}_{c=1}^C$ when
\begin{equation*}
	\x \ = \  \sum_{c=1}^C a_c {\bf 1}_{S_c},
\end{equation*}
where $\{S_c\}_{c=1}^C$ forms a valid series of local sets. Denote this class by $\PC(C)$. 
\end{defn}

When the value of the graph signal on each local set is different, $\left\| \Delta \x  \right\|_0$ counts the total number of edges connecting nodes between local sets. Definition~\ref{df:pc_gen} defines the piecewise-constant graph signal in a generative approach; however, the corresponding representation dictionary has a huge number of atoms, which is impractical to use.

\subsection{Graph Dictionary}
We now discuss representations for piecewise-constant graph signals based on a designed multiresolution local sets. The corresponding representation dictionary has a reasonable size and provides sparse representations for arbitrary piecewise-constant graph signals.

\subsubsection{Design}
\label{sec:Dict_LSPC}
We aim to construct a series of local sets in a multiresolution fashion. We first define the multiresolution analysis on graphs.
\begin{defn}
\label{df:MRA}
A general multiresolution analysis on graphs consists of a sequence of embedded closed subspaces
\begin{equation*}
V_0 \subset  V_{1}  \subset V_{2}  \cdots \subset V_{K}
\end{equation*}
such that
\begin{itemize}
\item upward completeness
  \begin{equation*}
  		\bigcup_{i=0}^K  V_{i} = \R^N;
  \end{equation*}
\item downward completeness
  \begin{equation*}
  		\bigcap_{i=0}^K  V_{i} = \{ c {\bf 1_{\V}}, c \in \R \};
  \end{equation*}
\item existence of basis \  There exists an orthonormal basis $\{\Phi\}_{i}$ for $V_K$.
\end{itemize}
\end{defn}
Compared with the original multiresolution analysis, the complete space here is $\R^N$ instead of $\mathcal{L}_2(\R)$
because of the discrete nature of a graph; we remove scale invariance and translation invariance because the rigorous definitions of scaling and translation for graphs are still unclear. This is the reason we call it~\emph{general multiresolution analysis on graphs}.

\mypar{General Construction}
The intuition behind the proposed construction is to build the connection between the subspaces and local sets: a bigger subspace corresponds to a finer resolution on the graph vertex domain, or more localized local sets. We initialize $S_{0,1} = \V$
to correspond to the $0$th level subspace $V_0$, that is, $V_0 = \{
c_0 {\bf 1}_{S_{0,1}}, c_0 \in \R \}$. We then partition $S_{0,1}$
into two disjoint local sets $S_{1,1}$ and $S_{1,2}$, corresponding to the first level subspace $V_1$, where $V_1 = \{ c_1 {\bf 1}_{S_{1,1}}
+ c_2 {\bf 1}_{S_{1,2}}, c_1, c_2 \in \R \}$. We then
recursively partition each larger local set into two smaller local
sets. For the $i$th level subspace, we have $V_i = \sum_{j=1}^{2^i}
c_j {\bf 1}_{S_{i,j}}$ and then, we partition $S_{i,j}$ into
$S_{i+1,2j-1}, S_{i+1,2j}$ for all $j = 1, 2, \ldots, 2^i$.  We call
$S_{i,j}$ the parent set of $S_{i+1,2j-1}, S_{i+1,2j}$ and
$S_{i+1,2j-1}, S_{i+1,2j}$ are the children sets of $S_{i,j}$. When
$|S_{i,j}| \leq 1$, $S_{i+1,2j-1} = S_{i,j}$ and $S_{i+1,2j} =
\emptyset$. At the finest resolution, each local set corresponds to an
individual node or an empty set. In other words, we build a binary
decomposition tree that partitions a graph structure into multiple
local sets. The $i$th level of the decomposition tree corresponds to
the $i$th level subspace. The depth of the decomposition $T$
depends on how local sets are partitioned; $T$ ranges from
$N$ to $\ceil*{\log N}$, where $N$ corresponds to partitioning one
node at a time and $\ceil*{\log N}$ corresponds to an even partition
at each level.

It is clear that the proposed construction of local sets satisfies three requirements in Definition~\ref{df:MRA}. The initial subspace $V_0$ has the coast resolution. Through partitioning, local sets zoom into
increasingly finer resolutions in the graph vertex domain. The subspace $V_T$ with finest resolution zoom into each individual node and covers the entire $\R^N$.  Classical scale invariance requires that when $f(t) \in V_0$, then $f(2^m t) \in V_m$, which is ill-posed
in the graph domain because graphs are finite and discrete;
the classical translation invariance requires that when $f(t) \in
V_0$, then $f(t-n) \in V_0$, which is again ill-posed, this time
because graphs are irregular. The essence of scaling and translation
invariance, however, is to use the same function and its scales and
translates to span different subspaces, which is what the proposed
construction promotes. The scaling function is ${\bf 1}_{S}$; the hierarchy of partition is similar to the scaling and translation, that is, when ${\bf 1}_{S_{i,j}} \in \V_{i}$, then ${\bf 1}_{S_{i+1,2j-1}}, {\bf 1}_{S_{i+1,2j}} \in \V_{i+1}$, and when ${\bf 1}_{S_{i+1,2j-1}} \in \V_{i+1}$ then ${\bf 1}_{S_{i+1,2j}} \in \V_{i+1}$.

 To summarize the construction, we build a local set decomposition tree by recursively partitioning a local set into two disjoint local sets until that all the local sets are individual nodes. We now show a toy example in Figure~\ref{fig:decomposition}. In Partition 1, we partition the entire node set $S_{0,1} = \V = \{1, 2, 3, 4 \}$ into two disjoint local sets $S_{1,1} =  \{1, 2 \}, S_{1,2} =  \{3, 4 \}$. Thus, $V_1 = \{ c_1  {\bf 1_{S_{1,1}}} +  c_2  {\bf 1_{S_{1,2}}}  , c_1, c_2 \in \R \}$. Similarly, in Partition 2, we partition $S_{1,1}$ into two disjoint connected sets $S_{2,1} =  \{1 \}, S_{2,2}  =  \{ 2 \}$; in Partition 3, we partition $S_{1,2}$ into $S_{2,3}  =  \{3 \}, S_{2,4}  =  \{ 4 \}$. Thus, $V_2 = \{ c_1  {\bf 1_{S_{2,1}}} +  c_2  {\bf 1_{S_{2,2}}} + c_3 {\bf 1_{S_{2,3}}} +  c_4  {\bf 1_{S_{2,4}}} , c_1, c_2, c_3, c_4 \in \R \} = \R^4$.
 
\begin{figure}[t]
  \begin{center}
     \includegraphics[width= 1\columnwidth]{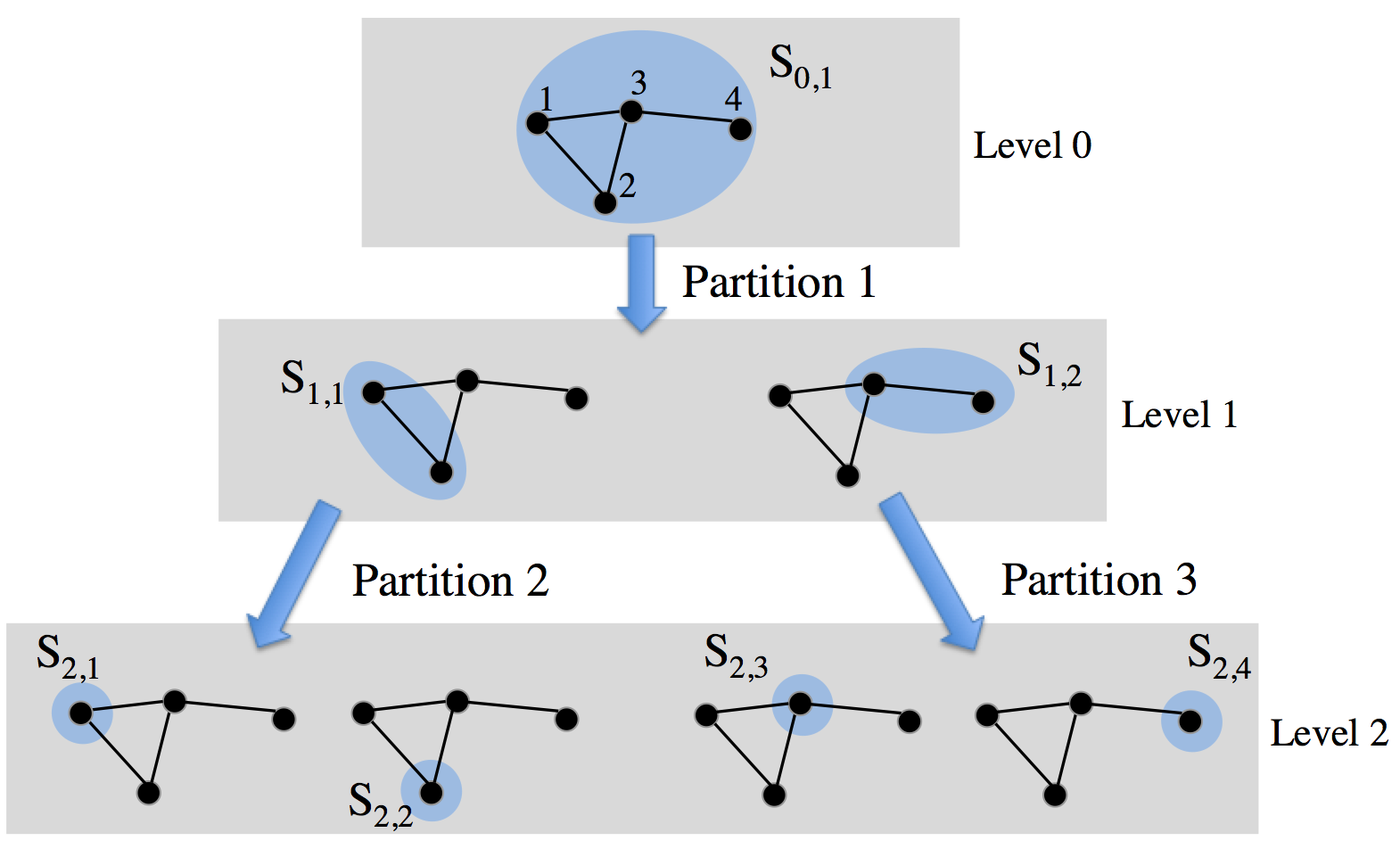}
  \end{center}
  \caption{\label{fig:decomposition} Local set decomposition tree. In each partition, we decompose a node set into two disjoint connected set and generate a basis vector to the wavelet basis. $S_{0,1}$ is in Level 0, $S_{1,1}, S_{1,2}$ are in Level 1, and $S_{2,1}, S_{2,2}, S_{2,3}, S_{2,4}$ are in Level~2.}
\end{figure}
 
\mypar{Graph Partition Algorithm}
 The graph partition is the key step to construct the local sets.  From the perspective of promoting smoothness of graph signals, we partition a local set $S$ into two disjoint local set $S_1, S_2$ by solving the following optimization problem
\begin{eqnarray}
\label{algo:partition}
& \min_{S_1, S_2} & \STV_{\Delta, 0} (S_1) + \STV_{\Delta, 0} (S_2)
\\
\nonumber
& {\rm subject~to:} & S_1 \cap S_2 = \emptyset, S_1 \cup S_2 = S, 
\\
\nonumber
&& G_{S_1}~{\rm and}~G_{S_2}  {\rm~are~connected}.
\end{eqnarray}  
Ideally, we aim to solve~\ref{algo:partition} to obtain two children local sets, however, it is nonconvex and hard to solve. Instead, we consider three relaxed methods to partition a graph.
 
The first one is based on spectral clustering~\cite{Luxburg:07}. We first obtain the graph Laplacian matrix of a local set and compute the eigenvector corresponding to the second smallest eigenvalue of the graph Laplacian matrix. We then set the median number of the eigenvector as the threshold; we put the nodes whose corresponding values in the eigenvector are no smaller than the threshold into a children local set and put the nodes whose corresponding values in the eigenvector are smaller than the threshold into the other children local set. This method approximately solves~\ref{algo:partition} by ignoring the second constraint; it guarantees that two children local sets have the same number of nodes, but does not guarantee that they have the same number of nodes are connected.

The second one is based on spanning tree. To partition a local set, we first obtain the maximum spanning tree of the subgraph and then find a balance node in the spanning tree. The balance node partition the spanning tree into two subtrees with the closet number of nodes~\cite{SharpnackKS:13}. We remove the balance node from the spanning tree, the resulting largest connected component form a children local set and the other nodes including the balance node forms the other children local set. This method approximately solves~\ref{algo:partition} by approximating a subgraph by the corresponding maximum spanning tree; it guarantee that two children local sets are connected, but does not guarantees that they have the same number of nodes. When the original sbugraph is highly connected, the spanning tree loses some connection information and the shape of the local set may not capture the community in the subgraph.

The third one is based on the 2-means clustering. We first randomly select 2 nodes as the community center and assign every other node to its nearest community center based on the geodesic distance. We then recompute the community center for each community by minimizing the summation of the geodesic distances to all the other nodes in the community and assign node to its nearest community center again. We keep doing this until the community centres converge after a few iterations. This method is inspired from the classical $k$-means clustering; it also guarantees that two children local sets are connected, but does not guarantees that they have the same number of nodes.

In general, the proposed construction of local sets does not restrict to any particular graph partition algorithm; depending on the applications, the partition step can also be implemented by many other existing graph partition algorithms.

\mypar{Dictionary Representations}
We collect local sets by level in
ascending order in a dictionary, with atoms corresponding to each
local set, that is, $ \D_{\rm LSPC} = \{ {\bf 1}_{S_{i, j}} \}_{i=0, j = 1}^{i=T, j = 2^i}$.
We call it the~\emph{local-set-based piecewise-constant dictionary}.
After removing empty sets, the dictionary has $2N-1$ atoms, that is,
$\D_{\rm LSPC} \in \R^{N \times (2N-1)}$; each atom is a
piecewise-constant graph signal with various sizes and localizing
various parts of a graph.

\mypar{Wavelet Basis}
We construct a wavelet basis based on the local-set-based piecewise-constant dictionary. We combine two local sets partitioned from the same parent local set to form a basis vector. Let the local sets $S_{i+1,2j-1}, S_{i+1,2j}$ have the same parent local set $S_{i,j}$, the basis vector combing these two local sets is
\begin{eqnarray*}
&& \sqrt{ \frac{|S_{i+1,2j-1}| |S_{i+1,2j}|}{|S_{i+1,2j-1}|+|S_{i+1,2j}|}} \bigg(  \frac{1}{|S_{i+1,2j-1}|} {\bf 1}_{S_{i+1,2j-1}} 
\\
&& - \frac{1}{|S_{i+1,2j}|} {\bf 1}_{S_{i+1,2j}} \bigg).
\end{eqnarray*}
To represent in a matrix form, the wavelet basis is
\begin{eqnarray*}
\W_{\rm LSPC} = \D_{\rm LSPC} \D_2,
\end{eqnarray*}
where the downsampling matrix
\begin{eqnarray*}
 \D_2 & = &
 \begin{bmatrix}
 \frac{1}{\left\| \d_1 \right\|_2} & \phantom{+}0  & \cdots & \phantom{+}0 \\
 0 & \phantom{+} g(\d_2, \d_3)  & \cdots & \phantom{+}0 \\
 0 & -g(\d_3, \d_2)  & \cdots & \phantom{+}0 \\
  0 & \phantom{+}0 & \cdots & \phantom{+}0 \\
 0 & \phantom{+}0   & \cdots & \phantom{+}0 \\
 \vdots & \phantom{+}\vdots & \ddots & \phantom{+}\vdots \\
  0 & \phantom{+}0  & \cdots & \phantom{+}g(\d_{2N-2}, \d_{2N-1} )  \\
 0 & \phantom{+}0  & \cdots & -g(\d_{2N-1}, \d_{2N-2}) \\
 \end{bmatrix}
 \\
 & \in  & \R^{(2N-1) \times N},
\end{eqnarray*}
with $\d_i$ is the $i$th column of $\D_{\rm LSPC}$, and 
 $$
 g(\d_i, \d_j) = \sqrt{ \frac{\left\| \d_j \right\|_0} { ( \left\| \d_i \right\|_0  + \left\| \d_j \right\|_0 ) \left\| \d_i \right\|_0  }}.
 $$
The downsampling matrix $\Um_2$ combines two consecutive column vectors in $\D_{\rm LSPC}$ to form one column vector in $\W_{\rm LSPC}$ and the function $g(\cdot, \cdot)$ reweights the column vectors in $\D_{\rm LSPC}$ to ensure that each column vector in $\W_{\rm LSPC}$ has norm 1 and sums to 0. 

Another explanation is that when we recursively partition a node set into two local sets, each partition generates a wavelet basis vector. We still use Figure~\ref{fig:decomposition} as an example. In Partition 1, we partition the entire node set $S_{0,1} = \{1, 2, 3, 4 \}$ into $S_{1,1} =  \{1, 2 \}, S_{1,2} =  \{3, 4 \}$ and generate a basis vector   
\begin{eqnarray*}
&& \sqrt{ \frac{|S_{1,1}| |S_{1,2}|}{|S_{1,1}|+|S_{1,2}|}} \left(  \frac{1}{|S_{1,1}|} {\bf 1}_{S_{1,1}} - \frac{1}{|S_{1,2}|} {\bf 1}_{S_{1,2}}\right) 
\\
& = & \frac{1}{2}\begin{bmatrix}
1& 1 & -1 & -1
\end{bmatrix};
\end{eqnarray*}
in Partition 2, we partition $S_{1,1}$ into two disjoint connected sets $S_{2,1} =  \{1 \}, S_{2,2} =  \{ 2 \}$ and generate a basis vector   
\begin{eqnarray*}
&& \sqrt{ \frac{|S_{2,1} | |S_{2,2} |}{|S_{2,1} |+|S_{2,2} |}} \left(  \frac{1}{|S_{2,1} |} {\bf 1}_{S_{2,1} } - \frac{1}{|S_{2,2} |} {\bf 1}_{S_{2,2} }\right) 
\\
& = & \frac{1}{\sqrt{2}}\begin{bmatrix}
1& -1 & 0 & 0
\end{bmatrix};
\end{eqnarray*}
in Partition 3, we partition $S_{1,2}$ into $S_{2,3} =  \{3 \}, S_{2,4} =  \{ 4 \}$ and generate a basis vector   
\begin{eqnarray*}
&& \sqrt{ \frac{|S_{2,3}| |S_{2,4}|}{|S_{2,3}|+|S_{2,4}|}} \left(  \frac{1}{|S_{2,3}|} {\bf 1}_{S_{2,3}} - \frac{1}{|S_{2,4}|} {\bf 1}_{S_{2,4}}\right) 
\\
& = & \frac{1}{\sqrt{2}}\begin{bmatrix}
0 & 0 & 1 & -1
\end{bmatrix}.
\end{eqnarray*}

We summarize the construction of  the~\emph{local-set-based wavelet basis} in Algorithm~\ref{alg:wavelet}.
\begin{algorithm}[h]
  \footnotesize
  \caption{\label{alg:wavelet} Local-set-based  Wavelet Basis Construction }
  \begin{tabular}{@{}lll@{}}
    \addlinespace[1mm]
   {\bf Input} 
      & $G(\V, \E, \Adj )$~~graph \\
     {\bf Output}  
      & $\W_{\rm LSPC}$~~~~~~wavelet basis \\
    \addlinespace[2mm]
    {\bf Function} & &\\
    & initialize a stack of node sets $\mathbb{S}$ and a set of vectors $\W$ \\
    & push $S = \V$ into $\mathbb{S}$ \\ 
    & add $\w = \frac{1}{ \sqrt{|S|}} {\bf 1}_S$ into $\W_{\rm LSPC}$ \\
    & while the cardinality of the largest element of $\mathbb{S}$ is bigger than $1$  \\ 
    &~pop up one element from $\mathbb{S}$ as $S$ \\
    &~partition $S$ into two disjoint connected sets $S_1, S_2$ \\
    &~push $S_1, S_2$ into $\mathbb{S}$ \\
    &~add $\w = \sqrt{ \frac{|S_1| |S_2|}{|S_1|+|S_2|}} \left(  \frac{1}{|S_1|} {\bf 1}_{S_1} - \frac{1}{|S_2|} {\bf 1}_{S_2}\right)$ into $\W_{\rm LSPC}$\\
    & end \\
    & {\bf return} $\W_{\rm LSPC}$ \\  
     \addlinespace[1mm]
  \end{tabular}
\end{algorithm}

\subsubsection{Properties}
We now analyze some properties of the proposed construction of the local sets and wavelet basis. The main results are
\begin{itemize}
\item  the local-set-based dictionary provides a multiresolution representation;
\item there exists a tradeoff between smoothness and fine resolution in partitioning the local sets;
\item  the local-set-based wavelet basis is an orthonormal basis;
\item the local-set-based wavelet basis promotes sparsity for piecewise-constant graph signals.
\end{itemize}

\begin{myThm}
\label{thm:MRA}
The proposed construction of local sets satisfies the multiresolution analysis on graphs.
\end{myThm}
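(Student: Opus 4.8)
The plan is to verify the three requirements of Definition~\ref{df:MRA} directly from the recursive partition that defines the subspaces $V_i = \{\sum_{j=1}^{2^i} c_j \mathbf{1}_{S_{i,j}} : c_j \in \R\}$. The single structural fact driving everything is that each parent set is the disjoint union of its two children, $S_{i,j} = S_{i+1,2j-1} \cup S_{i+1,2j}$ with $S_{i+1,2j-1} \cap S_{i+1,2j} = \emptyset$ (including the degenerate case $|S_{i,j}| \le 1$, where one child equals $S_{i,j}$ and the other is empty). An easy induction on $i$, starting from $S_{0,1} = \V$, then shows that $\{S_{i,j}\}_{j=1}^{2^i}$ is a partition of $\V$ at every level.

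First I would establish the nesting $V_i \subset V_{i+1}$. From the disjoint-union identity we get the pointwise relation $\mathbf{1}_{S_{i,j}} = \mathbf{1}_{S_{i+1,2j-1}} + \mathbf{1}_{S_{i+1,2j}}$, under the convention that the indicator of the empty set is the zero vector. Hence every generator of $V_i$ lies in $V_{i+1}$, and since $V_i$ is the span of these generators, $V_i \subseteq V_{i+1}$. Each $V_i$ is a finite-dimensional subspace of $\R^N$, hence automatically closed.

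Next I would check upward and downward completeness, both of which reduce to the endpoints of the chain by the nesting. At the coarsest level $S_{0,1} = \V$, so $V_0 = \{c\mathbf{1}_\V : c \in \R\}$ by construction; because the chain is increasing, $\bigcap_{i} V_i = V_0 = \{c\mathbf{1}_\V : c \in \R\}$, which is downward completeness. At the finest level $i = T$ the partition has been refined until each nonempty local set is a single node, so the nonzero generators $\{\mathbf{1}_{S_{T,j}}\}$ are exactly the standard basis vectors $\{\e_n\}_{n=1}^N$; therefore $V_T = \R^N$, and using the nesting once more, $\bigcup_i V_i = V_T = \R^N$, which is upward completeness.

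Finally, existence of an orthonormal basis for $V_K = V_T = \R^N$ is immediate, and the natural witness is the local-set-based wavelet basis $\W_{\rm LSPC}$ built in Algorithm~\ref{alg:wavelet}, whose orthonormality is asserted separately. The only care needed throughout is the bookkeeping of the degenerate partitions: I must confirm that the empty-set convention keeps the disjoint-union identity valid, and that the recursion indeed terminates with singletons at level $T$, so that the finest generators span all of $\R^N$. This edge-case handling, rather than any deep argument, is the main (and only mild) obstacle.
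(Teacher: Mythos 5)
Your proof is correct and follows essentially the same route as the paper, which simply asserts that the three requirements of Definition~\ref{df:MRA} are clear from the construction (nesting via recursive partitioning, $V_0$ as the coarsest space, $V_T = \R^N$ at the singleton level, and the wavelet basis of Algorithm~\ref{alg:wavelet} as the orthonormal basis). Your writeup is a more careful version of that same verification, with the disjoint-union identity and empty-set bookkeeping made explicit.
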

We have shown this in the previous section. We list here for completeness. In the original multiresolution analysis, more localization in the  time domain leads to more high-frequency components. Here we show a similar result.
\begin{myThm}
\label{thm:smooth}
A series of local sets with a finer resolution is less smooth, that is, for all $i$,
\begin{equation*}
\sum_{j=1}^{2^i} \STV_{\Delta, 0} (S_{i,j}) \leq  \sum_{j=1}^{2^{i+1}}  \STV_{\Delta, 0}  (S_{i+1,j}) .
\end{equation*}
\end{myThm}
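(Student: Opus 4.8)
The plan is to reduce the level-wise inequality to a single partition step and then verify an elementary ratio inequality. First I would note that for the $0$-``norm'' the quantity $\STV_{\Delta,0}(S) = \|\Delta {\bf 1}_S\|_0 / \|{\bf 1}_S\|_0$ is purely combinatorial: since $\|{\bf 1}_S\|_0 = |S|$, and since the entry of $\Delta {\bf 1}_S$ on an edge $(j,k)$ is nonzero exactly when precisely one of $j,k$ lies in $S$ (the edge weight is irrelevant because only the support matters), $\|\Delta {\bf 1}_S\|_0$ equals $\mathrm{cut}(S)$, the number of edges joining $S$ to $\V \setminus S$. Hence $\STV_{\Delta,0}(S) = \mathrm{cut}(S)/|S|$, with the convention $\STV_{\Delta,0}(\emptyset) = 0$.

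Next, because each parent set $S_{i,j}$ is split into exactly its two children $S_{i+1,2j-1},S_{i+1,2j}$, and as $j$ ranges over $1,\dots,2^i$ the child indices $2j-1,2j$ sweep out all of $1,\dots,2^{i+1}$, the level-$(i+1)$ sum regroups by parent:
\[
\sum_{j=1}^{2^{i+1}} \STV_{\Delta,0}(S_{i+1,j}) = \sum_{j=1}^{2^{i}}\Big[\STV_{\Delta,0}(S_{i+1,2j-1}) + \STV_{\Delta,0}(S_{i+1,2j})\Big].
\]
It therefore suffices to prove the per-partition inequality $\STV_{\Delta,0}(S) \le \STV_{\Delta,0}(S_1) + \STV_{\Delta,0}(S_2)$ whenever $S = S_1 \cup S_2$ is a disjoint partition into connected pieces, and then sum over $j$.

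For the single step I would classify the edges according to which blocks their endpoints lie in: let $a$ count edges from $S_1$ to $\V \setminus S$, let $b$ count edges from $S_2$ to $\V \setminus S$, and let $e$ count the cross edges between $S_1$ and $S_2$ (this bookkeeping is orientation-independent, so it also covers the directed case). With $n_1 = |S_1|$, $n_2 = |S_2|$ and $|S| = n_1 + n_2$, the boundary counts are $\mathrm{cut}(S) = a+b$, $\mathrm{cut}(S_1) = a+e$, and $\mathrm{cut}(S_2) = b+e$, so the claim reads $\tfrac{a+b}{\,n_1+n_2\,} \le \tfrac{a+e}{n_1} + \tfrac{b+e}{n_2}$. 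Dropping the nonnegative term $e$ on the right and using $\tfrac{a}{n_1} \ge \tfrac{a}{\,n_1+n_2\,}$ together with $\tfrac{b}{n_2} \ge \tfrac{b}{\,n_1+n_2\,}$ closes the estimate.

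The argument is elementary; the only points needing care are the combinatorial identification of $\STV_{\Delta,0}$ with $\mathrm{cut}/|S|$ and the degenerate case $|S_{i,j}| \le 1$, where $S_{i+1,2j} = \emptyset$ contributes $0$ and $S_{i+1,2j-1} = S_{i,j}$, so the per-partition inequality holds with equality. Summing the per-partition inequalities over $j = 1,\dots,2^{i}$ then yields the theorem.
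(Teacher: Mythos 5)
Your proof is correct and follows essentially the same route as the paper's: reduce the level-wise sum to the per-partition inequality $\STV_{\Delta,0}(S) \le \STV_{\Delta,0}(S_1) + \STV_{\Delta,0}(S_2)$, then combine the cardinality bound $|S| \ge |S_1|, |S_2|$ with the edge count ${\rm cut}(S_1)+{\rm cut}(S_2) \ge {\rm cut}(S)$ (your $a,b,e$ bookkeeping is just a more explicit version of the paper's steps (a) and (b), applied in the opposite order). Your explicit handling of the degenerate case $|S_{i,j}| \le 1$ with the convention $\STV_{\Delta,0}(\emptyset)=0$ is a small point of care that the paper omits.
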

\begin{proof}
We first show that the sum of variations of two children local sets is larger than the variation of the parent local set.
\begin{eqnarray*}
&& \STV_{\Delta, 0} (S_{i+1,2j-1}) + \STV_{\Delta, 0} (S_{i+1,2j})  
\\
& = & \frac{1}{ |S_{i+1,2j-1}| }\left\| \Delta {\bf 1}_{ S_{i+1,2j-1} } \right\|_0 + 
\frac{1}{ |S_{i+1,2j}| }\left\| \Delta {\bf 1}_{ S_{i+1,2j} } \right\|_0
\\
& \stackrel{(a)}{ \geq } &  \frac{1}{ |S_{i,j}| }  \left(  \left\| \Delta {\bf 1}_{ S_{i+1,2j-1} } \right\|_0 + \left\| \Delta {\bf 1}_{ S_{i+1,2j} }  \right\|_0 \right)
\\
& \stackrel{(b)}{\geq} &  \frac{1}{ |S_{i,j}| }\left\| \Delta {\bf 1}_{S_{i,j}} \right\|_0 \ = \ \STV_{\Delta, 0}  (S_{i,j}),
\end{eqnarray*}
where $(a)$ follows from that the cardinality of the the parent local set is larger than either of its children local sets and $(b)$ follows from that we need to cut a boundary to partition two children local sets. Since every local set in the $i$th level has two children local sets in the $i+1$th level and every local set in the $i+1$th level has a parent local set in the $i$th local, we sum them together and obtain Theorem~\ref{thm:smooth}.
\end{proof}

Theorem~\ref{thm:smooth} shows that  by zooming in the graph vertex domain, the partitioned local sets get less smooth; in other words, we have to tradeoff the smoothness to obtain a finer resolution.

We next show that the local-set-based wavelet basis is a valid orthonormal basis.
\begin{myThm}
The local-set-based wavelet basis constructs an orthonormal basis.
\end{myThm}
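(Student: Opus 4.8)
The plan is to verify the three defining properties of an orthonormal basis of $\R^N$: that the construction outputs exactly $N$ vectors, that each has unit norm, and that they are pairwise orthogonal. The first combined with the latter two immediately gives that $\W_{\rm LSPC}$ is an orthonormal basis. For the count, I would observe that the decomposition tree is a binary tree whose leaves are the $N$ singleton sets, so it has exactly $N-1$ internal nodes; each internal node is one partition step and thus contributes one wavelet vector. Adding the single scaling vector ${\bf 1}_{\V}/\sqrt{N}$ yields $N$ vectors in total, which is the right number for a basis of $\R^N$.

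For the unit-norm property I would compute directly. The scaling vector has squared norm $\frac{1}{N}\left\| {\bf 1}_{\V} \right\|_2^2 = 1$. For a wavelet vector $\w$ generated by partitioning a set into children $S_1, S_2$, the two pieces have disjoint supports, so
\begin{equation*}
\left\| \w \right\|_2^2 = \frac{|S_1||S_2|}{|S_1|+|S_2|}\left(\frac{1}{|S_1|}+\frac{1}{|S_2|}\right) = \frac{|S_1||S_2|}{|S_1|+|S_2|}\cdot\frac{|S_1|+|S_2|}{|S_1||S_2|} = 1 .
\end{equation*}

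The orthogonality is the crux, and the plan is to exploit the tree structure. I would first record the structural fact that the collection of all local sets appearing in the decomposition forms a laminar family: any two of them are either nested or disjoint, since the children sets are obtained by partitioning their parent. Each wavelet vector $\w$ is supported on its generating set $S = S_1 \cup S_2$ and is zero-mean, because $\sum_k \w_k$ is proportional to $\frac{1}{|S_1|}|S_1| - \frac{1}{|S_2|}|S_2| = 0$; the scaling vector is constant. Orthogonality then follows by case analysis. The scaling vector is orthogonal to every wavelet vector since the latter is zero-mean. For two wavelet vectors with generating sets $S$ and $S'$: if $S \cap S' = \emptyset$, their supports are disjoint and the inner product vanishes; if they are nested, say $S' \subsetneq S$, then by the laminar property $S'$ lies entirely inside one child of $S$, on which $\w$ takes a single constant value $c$, whence $\langle \w, \w' \rangle = c \sum_{k \in S'} \w'_k = 0$.

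The only non-mechanical part, and hence the step I expect to be the main obstacle, is the nested case: establishing the laminar property and recognizing that a descendant wavelet always sees its ancestor wavelet as a constant on its support, which is precisely what lets the zero-mean property force orthogonality. Everything else reduces to the arithmetic above. I would then conclude by combining the three facts: $N$ mutually orthogonal unit vectors in $\R^N$ are linearly independent and therefore span $\R^N$, so $\W_{\rm LSPC}$ is an orthonormal basis.
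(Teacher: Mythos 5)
Your proposal is correct and follows essentially the same route as the paper's own proof: direct norm computation on disjoint supports, orthogonality via the zero-mean property combined with a nested-versus-disjoint case analysis (the paper asserts the laminar structure you make explicit), and a count of $N$ vectors to conclude spanning. The only difference is presentational: you name the laminar-family property and state the final ``$N$ orthonormal vectors in $\R^N$ form a basis'' step explicitly, which the paper leaves implicit.
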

\begin{proof}
First, we show each vector has norm one.
\begin{eqnarray*}
&& \left\| \sqrt{ \frac{|S_1| |S_2|}{|S_1|+|S_2|}} \left(  \frac{1}{ | S_1 |} {\bf 1}_{S_1} - \frac{1}{| S_2 |} {\bf 1}_{S_2}\right)  \right\|_2^2 
\\
&  \stackrel{(a)}{=} & 
|S_1|  \left( \sqrt{\frac{|S_1| |S_2|}{|S_1|+|S_2|}}  \frac{1}{ | S_1 |}  \right)^2 +  |S_2|  \left( \sqrt{\frac{|S_1| |S_1|}{|S_1|+|S_2|}}  \frac{1}{ | S_2 |}  \right)^2 
\\
& = & 1,
\end{eqnarray*}
where $(a)$ follows from that $S_1 \cap S_2 = \emptyset$.
Second, we show each vector is orthogonal to the other vectors. We have
\begin{eqnarray*}
{\bf 1}^T \w = \sqrt{ \frac{|S_1| |S_2|}{|S_1|+|S_2|}} \left( \sum_{i \in S_1} \frac{1}{ | S_1 |}  -  \sum_{i \in S_2} \frac{1}{| S_2 |}  \right) = 0.
\end{eqnarray*}
Thus, each vector is orthogonal to the first vector, ${\bf 1}_{\V}/\sqrt{|{\V}| }$. Each other individual vector is generated from two node sets. Let $S_1, S_2$ generate $\w_i$ and $S_3, S_4$ generate $\w_j$. Due to the construction, there are only two conditions, two node sets of one vector belongs to one node set of the other vector, and all four node sets do not share element with each other. For the first case, without losing generality, let $\left( S_3 \cup S_4 \right) \cap S_1=  S_3 \cup S_4$, we have
\begin{eqnarray*}
 \w_i^T \w_j 
& = &  \sqrt{ \frac{|S_1| |S_2|}{|S_1|+|S_2|}
 \frac{|S_3| |S_4|}{|S_3|+|S_4|}} \left( \sum_{i \in S_3} \frac{1}{ | S_3 |}  -  \sum_{i \in S_4} \frac{1}{| S_4 |} \right) 
\\ 
& = & 0.
\end{eqnarray*}
For the second case, the inner product between  $\w_i$ and $\w_j $ is zero because their supports do not match.
Third, we show that $\W_{\rm LSPC}$ spans $\R^N$. Since we recursively partition the node set until the cardinalities of all the node sets are smaller than 2, there are $N$ vectors in $\W_{\rm LSPC}$.
\end{proof}

We show that the local-set-based wavelet basis is a good representation for piecewise-constant graph signals through promoting the sparsity.
\begin{myThm}
\label{thm:sparse}
Let $\W$ be the output of Algorithm~\ref{alg:wavelet} and $T$ be the maximum level of the decomposition in Algorithm~\ref{alg:wavelet} . For all $\x \in \R^N$, we have
\begin{eqnarray*}
\left\| \W_{\rm LSPC}^T \x \right\|_0 \leq  \left\|  \Delta \x \right\|_0  T.
\end{eqnarray*}
\end{myThm}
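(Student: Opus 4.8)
The plan is to read off, for each atom of $\W_{\rm LSPC}$, an explicit formula for the corresponding coefficient and then exploit the tree structure of the decomposition. Apart from the coarsest scaling atom ${\bf 1}_{\V}/\sqrt{|\V|}$, every atom is attached to one partition step, in which a parent local set $S$ is split into children $S_1,S_2$; the corresponding entry of $\W_{\rm LSPC}^T\x$ equals, up to a positive factor,
\[
\sqrt{\tfrac{|S_1||S_2|}{|S_1|+|S_2|}}\Big(\tfrac{1}{|S_1|}\sum_{i\in S_1} x_i-\tfrac{1}{|S_2|}\sum_{i\in S_2} x_i\Big),
\]
i.e. the difference of the averages of $\x$ over the two children. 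The first and central observation is therefore that this coefficient can be nonzero only if $\x$ is non-constant on the parent set $S$. Since $G_S$ is connected (Definition~\ref{df:localset}), a non-constant $\x$ on $S$ forces at least one edge inside $S$ to be inconsistent, hence to contribute a nonzero entry to $\Delta\x$; otherwise equality $x_j=x_k$ would propagate along paths of $G_S$ and $\x$ would be constant on $S$.

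I would then count by level of the decomposition tree. Each wavelet atom's parent set lies at a level in $\{0,1,\dots,T-1\}$, so the sum ranges over at most $T$ levels. Fix a level $\ell$. The parent sets at level $\ell$ are pairwise disjoint, since they form a partition of a subset of $\V$; consequently any single inconsistent edge has both endpoints inside at most one level-$\ell$ set. Combining this with the observation above, each nonzero level-$\ell$ coefficient forces a distinct inconsistent edge inside its (disjoint) parent, so the number of nonzero wavelet coefficients at level $\ell$ is at most the number of inconsistent edges lying inside some level-$\ell$ set, which is at most $\left\|\Delta\x\right\|_0$. Summing over the at most $T$ levels yields $\left\|\Delta\x\right\|_0\cdot T$, exactly the claimed bound on the number of nonzero wavelet coefficients.

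The step I expect to require the most care is this level bookkeeping: the per-set fact ``nonzero coefficient $\Rightarrow$ inconsistent edge inside the parent'' is local, and the work is in converting it into the global factor $T$ by charging each inconsistent edge to at most one parent per level. As a cross-check I would also present the dual, per-edge viewpoint: the parent sets containing a fixed edge $e$ form a nested chain running from the root down to the smallest set whose split separates the endpoints of $e$, and such a chain meets at most $T$ levels; summing over inconsistent edges again gives $\left\|\Delta\x\right\|_0\cdot T$. The one genuinely separate item is the coarsest scaling coefficient $\langle {\bf 1}_{\V}/\sqrt{|\V|},\x\rangle$, which lies outside the wavelet count and must be treated on its own, as it can be nonzero even when $\Delta\x=0$ (e.g. a globally constant signal); everything else—the averaging identity and the propagation-of-equality argument on connected $G_S$—is routine.
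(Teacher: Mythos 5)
Your proof is correct for every atom except the constant one, and it is worth stating plainly that it is \emph{not} the paper's argument: it is a corrected version of it. The paper charges a nonzero coefficient $\w^T\x$ to inconsistent edges in ${\rm Supp}(\Delta\w)$ --- edges across which the atom $\w$ itself changes value, i.e.\ the boundary edges of the two children --- and asserts that each edge is activated at most once per decomposition level. Both steps fail as literally stated. On the three-node path $1$--$2$--$3$ with local sets $\{1,2\},\{3\}$, then $\{1\},\{2\}$, and $\x=(0,1,1)^T$, the atom $\w \propto \tfrac{1}{2}{\bf 1}_{\{1,2\}}-{\bf 1}_{\{3\}}$ has a nonzero coefficient, but the only edge in ${\rm Supp}(\Delta\w)$ is the \emph{consistent} edge $(2,3)$, while the unique inconsistent edge $(1,2)$ is not in ${\rm Supp}(\Delta\w)$; the paper's intermediate inequality $\left\| \W_{\rm LSPC}^T \x \right\|_0 \le \sum_{e \in {\rm Supp}(\Delta\x)} {\rm activations}(e)$ then reads $2 \le 1$ even after discarding the constant atom, which is false. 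Moreover, an edge joining two distinct same-level parent sets is activated by both of their wavelets, so ``at most once per level'' also fails. Your key lemma replaces ``$\w$ changes value across $e$'' by ``$e$ lies \emph{inside} the parent set of $\w$'': a nonzero coefficient forces $\x$ to be non-constant on the parent, and connectedness of local sets (Definition~\ref{df:localset}) then produces an inconsistent edge inside it; disjointness of same-level parents makes the per-level charge injective. This is exactly the notion of activation under which the paper's counting scheme becomes valid, and your per-edge ``nested chain'' cross-check is the corrected form of the paper's ``at most $T$ activations'' step. Note also that your lemma is where the connectedness requirement on local sets actually gets used; the paper's proof never invokes it, although the statement fails without it.

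Your caveat about the coarsest atom is not cosmetic: as stated, the theorem is false once the constant atom is counted. For $\x = {\bf 1}$ one has $\left\|\Delta\x\right\|_0 T = 0$ but $\left\|\W_{\rm LSPC}^T\x\right\|_0 = 1$, and in the path example above the full count is $3 > 2 = \left\|\Delta\x\right\|_0 T$. The bound is true for the $N-1$ genuine wavelet atoms (your argument proves this), or with an additive $+1$; the paper's proof has the same unacknowledged gap, since the constant atom activates no edge under any definition.
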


\begin{proof}
When an edge $e \in {\rm Supp}(\Delta \w)$, where Supp denotes the indices of nonzero elements, we call that the edge $\e$ is activated by the vector $\w$. Since each edge is activated at most once in each decomposition level, so each edge is activated by at most $T$ basis elements. Let activations($e$) be the number of basis elements in $\W_{\rm LSPC}$ that activates $e$. 
\begin{eqnarray*}
\left\| \W_{\rm LSPC}^T \x \right\|_0 \leq  \sum_{ e \in {\rm Supp}(\Delta \w) }  {\rm activations}(e) \leq \left\|  \Delta \x \right\|_0  T.
\end{eqnarray*}
\end{proof}
The maximum level of the decomposition is determined by the choice of graph partition algorithm. Theorem~\ref{thm:sparse} shows that what it matters is the cardinality of each local set, instead of the shape of each local set. To achieve the best sparse representation, we should partition each local set as evenly as possible. Note that when the partition is perfectly even, the resulting wavelet basis is the same with the classical Haar wavelet basis.

\begin{myCorollary}
\label{thm:sparse}
For all $\x \in \R^N$,  we have $\left\| \a^* \right\|_0 \leq  2 T \left\|  \Delta \x \right\|_0$, where $T$ is the maximum level of the decomposition and
\begin{eqnarray*}
\a^* & = &   \arg \min_{\a}   \left\| \a \right\|_0, 
\\
   && {\rm subject~to:~}  \x \ = \ \D_{\rm LSPC} \a.
\end{eqnarray*}
\end{myCorollary}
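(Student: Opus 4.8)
The plan is to exhibit one explicit feasible representation of $\x$ over $\D_{\rm LSPC}$ whose sparsity already meets the claimed bound, and then invoke the optimality of $\a^*$. Since the local-set-based wavelet basis $\W_{\rm LSPC}$ is orthonormal, I would start from the exact expansion $\x = \W_{\rm LSPC} \b$ with wavelet coefficients $\b = \W_{\rm LSPC}^T \x$. Using the factorization $\W_{\rm LSPC} = \D_{\rm LSPC} \D_2$ from the construction, I can rewrite this as $\x = \D_{\rm LSPC}(\D_2 \b)$, so that $\a := \D_2 \b$ is a feasible point of the $\ell_0$ minimization, i.e.\ it satisfies $\x = \D_{\rm LSPC} \a$.

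The next step is to bound $\left\| \a \right\|_0 = \left\| \D_2 \b \right\|_0$ using the sparsity pattern of the downsampling matrix $\D_2$. By inspection of its definition, every column of $\D_2$ has at most two nonzero entries (a pair $g(\d_{2j},\d_{2j+1})$, $-g(\d_{2j+1},\d_{2j})$, with a single nonzero in the first column); equivalently, each wavelet vector is a linear combination of at most two dictionary atoms. Consequently each nonzero coordinate of $\b$ can switch on at most two coordinates of $\a = \D_2 \b$, giving $\left\| \D_2 \b \right\|_0 \leq 2 \left\| \b \right\|_0$. One can note further that each row of $\D_2$ carries exactly one nonzero, so that no cancellation occurs and the count is in fact exact, but the crude factor of two is all that is needed.

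Finally I would chain this with the sparsity estimate already established for the wavelet transform, namely $\left\| \b \right\|_0 = \left\| \W_{\rm LSPC}^T \x \right\|_0 \leq T \left\| \Delta \x \right\|_0$, to get $\left\| \a \right\|_0 \leq 2 T \left\| \Delta \x \right\|_0$. Since $\a^*$ is by definition the minimizer of $\left\| \cdot \right\|_0$ over all $\a$ with $\x = \D_{\rm LSPC} \a$, and $\a$ is one such feasible vector, $\left\| \a^* \right\|_0 \leq \left\| \a \right\|_0 \leq 2 T \left\| \Delta \x \right\|_0$, which is the claim.

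There is essentially no hard analytic step here: the real content is packaged in the preceding theorem, and the corollary is a bookkeeping argument that transfers a bound on the orthonormal wavelet coefficients to a bound on the redundant dictionary coefficients. The only point that deserves care is precisely this transition, namely verifying that $\D_2$ expands each basis vector into at most two atoms so that the constant $2$ (rather than something larger) is correct. I would double-check the degenerate cases produced by Algorithm~\ref{alg:wavelet}, in particular the scaling atom ${\bf 1}_{\V}$ and any partition in which a child set is a singleton or empty, to confirm that the at-most-two-nonzeros-per-column property of $\D_2$ holds uniformly and that $\a = \D_2 \b$ indeed reproduces $\x$.
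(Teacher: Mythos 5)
Your proposal is correct and takes essentially the same route as the paper: the paper's own (one-sentence) proof argues that since $\W_{\rm LSPC}$ is obtained from $\D_{\rm LSPC}$, the worst case is to represent $\x$ in the wavelet basis, which is exactly your feasible point $\a = \D_2 \b$, with the factor $2$ coming from each wavelet vector being a combination of at most two dictionary atoms. Your write-up simply makes explicit the bookkeeping (feasibility via $\W_{\rm LSPC} = \D_{\rm LSPC}\D_2$, the at-most-two-nonzeros-per-column property of $\D_2$, and the appeal to the preceding theorem) that the paper leaves implicit.
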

Since $\W$ is obtained from $ \D_{\rm LSPC}$,  the worst case is to represent $\x$ by using $\W$.

\begin{myCorollary}
Let the local-set-based wavelet basis evenly partition the node set each time. We have
\begin{eqnarray*}
\left\| \W_{\rm LSPC}^T \x \right\|_0 \leq  \left\|  \Delta \x \right\|_0 \ceil*{\log N}.
\end{eqnarray*}
\end{myCorollary}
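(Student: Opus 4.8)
The plan is to treat this corollary as a direct specialization of Theorem~\ref{thm:sparse}, so that the only genuinely new content is evaluating the maximum decomposition depth $T$ under the hypothesis of even partitioning. Theorem~\ref{thm:sparse} already supplies, for every $\x \in \R^N$, the bound $\left\| \W_{\rm LSPC}^T \x \right\|_0 \leq \left\| \Delta \x \right\|_0 \, T$; hence it suffices to prove that evenly partitioning the node set at each level forces $T = \ceil*{\log N}$, after which I simply substitute this value.

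To pin down $T$, I would track the cardinality of the largest local set as the decomposition proceeds. At level $0$ there is a single set of size $N$. The hypothesis of even partitioning means each local set of size $m$ is split into two children of sizes $\ceil*{m/2}$ and $\floor*{m/2}$, so the larger child has size $\ceil*{m/2}$. By induction on the level $i$, the largest local set at level $i$ then has cardinality at most $\ceil*{N/2^i}$; the inductive step rests on the elementary ceiling identity $\ceil*{\ceil*{m/2^i}/2} = \ceil*{m/2^{i+1}}$, which propagates the halving bound cleanly from one level to the next. The recursion (the \textbf{while} loop in Algorithm~\ref{alg:wavelet}) terminates exactly when every local set is a singleton, i.e.\ when $\ceil*{N/2^i} \leq 1$, which first holds as soon as $2^i \geq N$; the smallest such integer is $i = \ceil*{\log N}$. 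This identifies $T = \ceil*{\log N}$.

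Combining the two ingredients, I substitute $T = \ceil*{\log N}$ into the bound of Theorem~\ref{thm:sparse} to obtain $\left\| \W_{\rm LSPC}^T \x \right\|_0 \leq \left\| \Delta \x \right\|_0 \ceil*{\log N}$, which is the claim. The only delicate point I anticipate is the ceiling bookkeeping when $N$ is not a power of two: one must confirm that repeatedly halving (with rounding up) the maximum cardinality reaches size one in precisely $\ceil*{\log N}$ steps rather than one step more. I expect this to be routine given the ceiling identity above, but it is the single place where a careless off-by-one could creep in, so I would verify that step explicitly before declaring the result.
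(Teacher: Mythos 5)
Your proposal is correct and follows exactly the route the paper intends: the corollary is an immediate specialization of Theorem~\ref{thm:sparse}, using the fact (asserted without proof in the paper's construction section) that even partitioning yields decomposition depth $T = \ceil*{\log N}$. Your induction via the ceiling identity $\ceil*{\ceil*{m/2^i}/2} = \ceil*{m/2^{i+1}}$ correctly fills in the depth computation that the paper leaves implicit, so your argument is, if anything, slightly more complete than the paper's.
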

We see that the local-set-based piecewise-constant wavelet basis provides a sparse representation for the piecewise-constant graph signals. We conjecture the proposed construction is the optimal orthonormal basis to promote the sparsity for piecewise-constant graph signals.
\begin{myConj} Let $\W$ be the local-set-based wavelet basis with the even partition. Then,
\begin{eqnarray*}
 \W_{\rm LSPC} & = & \arg \min_{\F}  \max_{\x \in \{ \left\|  \Delta \x \right\|_0 \leq K \} }  \left\|  \F^T \x \right\|_0, 
 \\
 && {\rm subject~to}: \F {\rm~is~orthornormal}.
\end{eqnarray*}
\end{myConj}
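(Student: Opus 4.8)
The plan is to show that $\W_{\rm LSPC}$ attains the minimax value by bounding that value from above (a bound the even-partition basis meets) and then proving that no orthonormal basis can do strictly better, so that $\W_{\rm LSPC}$ lies in the argmin set. Writing $v^* = \min_{\F}\max_{\|\Delta\x\|_0\le K}\|\F^T\x\|_0$, I would first pin down a candidate value for $v^*$ and then establish a matching lower bound valid for \emph{every} orthonormal $\F$.

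First I would dispatch the upper bound, which is essentially already in hand: by the bound $\|\W_{\rm LSPC}^T\x\|_0 \le \|\Delta\x\|_0\lceil\log N\rceil$ established above for the even partition, the candidate basis satisfies $\max_{\|\Delta\x\|_0\le K}\|\W_{\rm LSPC}^T\x\|_0 \le K\lceil\log N\rceil$. To make this sharp I would identify the worst case for $\W_{\rm LSPC}$ explicitly: a $K$-cut signal whose inconsistent edges sit on distinct root-to-leaf chains of the decomposition tree, so that each cut activates a full column of $\lceil\log N\rceil$ wavelet atoms with no overlap. This fixes the value achieved by $\W_{\rm LSPC}$ and hence an upper bound on $v^*$.

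The substance of the proof is the lower bound: for every orthonormal $\F$ there must exist $\x$ with $\|\Delta\x\|_0\le K$ and $\|\F^T\x\|_0$ at least the value above. I would attack this with a rank/coverage argument driven by a structured family of test signals built from the multiresolution tree, for example the piecewise-constant signals obtained by assigning $\pm$ signs to nested chains of local sets, collected as the columns of a matrix $X$ each satisfying $\|\Delta\x\|_0\le K$. Since $\F$ is orthonormal, $\F^T X$ has the same rank as $X$, and the number of columns $f_i$ of $\F$ that are hit (nonzero rows of $\F^T X$) is at least $\mathrm{rank}(X)$; engineering the family so that $\mathrm{rank}(X)$ scales like $K\lceil\log N\rceil$ while keeping the family small then forces, via $\sum_t\|\F^T\x_t\|_0 \ge |\bigcup_t\mathrm{Supp}(\F^T\x_t)|$, some column to carry large transform support. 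A probabilistic refinement would likely give a cleaner constant: drawing random signs on the tree and using $\mathbb{E}\|\F^T\x\|_0 = \sum_i\Pr[\langle f_i,\x\rangle\ne 0]$, the spread of the random signal limits how many orthonormal directions can remain orthogonal to it, and the maximum over the support then dominates this expectation.

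The hard part will be converting the rank/coverage count, which naturally controls only the \emph{union} of supports across the family, into a bound on the \emph{single} worst-case signal with the exact constant $K\lceil\log N\rceil$; naive averaging loses a factor equal to the family size. Matching that constant exactly, so that $\W_{\rm LSPC}$ genuinely lies in the argmin rather than being merely order-optimal, is precisely what keeps the statement a conjecture: the lower bound must hold uniformly over all orthonormal bases, including ones adversarially adapted to the graph and to $\W_{\rm LSPC}$ itself, and the argument must also accommodate general graphs, where the partition cannot be made perfectly even and $\lceil\log N\rceil$ is replaced by the tree depth $T$.
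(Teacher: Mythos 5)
This statement is posed in the paper as a conjecture --- it appears as an explicit \emph{Conjecture} environment immediately after the corollary giving $\left\| \W_{\rm LSPC}^T \x \right\|_0 \leq \left\| \Delta \x \right\|_0 \lceil \log N \rceil$ --- and the paper offers no proof of it, so there is no paper argument to compare yours against. Your proposal, as you yourself concede in the final paragraph, is also not a proof: the entire content of the conjecture is the uniform lower bound $\max_{\| \Delta \x \|_0 \leq K} \| \F^T \x \|_0 \geq \max_{\| \Delta \x \|_0 \leq K} \| \W_{\rm LSPC}^T \x \|_0$ for \emph{every} orthonormal $\F$, including bases adapted adversarially to the graph, and your rank/coverage argument controls only the union (equivalently, the average) of the supports $\mathrm{Supp}(\F^T \x_t)$ over a family of test signals. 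Passing from that to a bound on a \emph{single} worst-case signal loses a factor equal to the family size, and you propose no mechanism for recovering it. So the gap you name is exactly the gap that makes this statement a conjecture; the proposal reorganizes the problem but does not close it.

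There is also a concrete error in the half you regard as already in hand. You propose to pin the value achieved by $\W_{\rm LSPC}$ at exactly $K \lceil \log N \rceil$ by exhibiting a $K$-cut signal whose inconsistent edges activate disjoint chains of $\lceil \log N \rceil$ atoms each. Such a signal cannot exist for $K \geq 2$: a wavelet atom generated by splitting a local set $S$ can have a nonzero coefficient only if $S$ contains both endpoints of some inconsistent edge (the atom is supported on $S$ and orthogonal to signals constant on $S$), so the atoms attributable to a given cut edge form the path in the decomposition tree from the root down to the level at which that edge's endpoints are first separated. All of these paths share the root atom, and typically several of the top levels, so the chains for distinct cut edges necessarily overlap; in a balanced binary tree the union of $K$ such root-to-leaf paths has cardinality at most $\sum_{\ell \geq 0} \min(2^{\ell}, K)$, which is of order $K \lceil \log (N/K) \rceil + O(K)$ and strictly below $K \lceil \log N \rceil$. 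Consequently the candidate value your matching argument targets is not the value $\W_{\rm LSPC}$ actually attains; a correct proof along your lines would first have to identify $\max_{\| \Delta \x \|_0 \leq K} \| \W_{\rm LSPC}^T \x \|_0$ exactly --- a graph-dependent quantity --- and then establish the lower bound for every orthonormal basis against that exact, smaller constant, which makes the precise matching your strategy requires even more delicate than your sketch suggests.
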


In general, the graph difference operator provides more sparse representation than the local-set-based wavelet basis, however, the graph difference operator is not necessarily a one-to-one mapping and is bad at reconstruction; the graph difference operator only focuses on the pairwise relationship. On the other hand, the local-set-based wavelet basis is good at reconstruction and provides multiresolution view in the graph vertex domain.

The even partition minimizes the worst case; it does not necessarily mean that the even partition is good for all the applications. For example, a graph has two communities, a huge one and a tiny one, which hints that a piecewise-constant graph signal sits on a part of either of two communities. In this case, we cut a few edges to partition two communities and assign a local set for each of them, instead of partitioning the huge community to make sure that two local sets have a same cardinality.

\subsection{Graph Signal Processing Tasks}

\subsubsection{Approximation}
\label{sec:pc_app}
Approximation is a standard task to evaluate a representation and it is similar to compression. The goal is to use a few expansion coefficients to approximate a graph signal. We compare the graph Fourier transform~\cite{SandryhailaM:13}, the windowed graph Fourier transform~\cite{ShumanRV:15}, the local-set-based wavelet basis and dictionary. The graph Fourier transform is the eigenvector matrix of the graph shift and the windowed graph Fourier transform provides vertex-frequency analysis on graphs. For the local-set-based piecewise-constant wavelet basis and dictionary, we also consider three graph partition algorithms, including spectral clustering, spanning tree and 2-means.

\mypar{Algorithm}
Since the graph Fourier transform and the local-set-based wavelet bases are orthonormal bases, we consider nonlinear approximation for the graph Fourier bases, that is, after expanding in with a representation, we should choose the $K$ largest-magnitude expansion coefficients so as to minimize the approximation error. Let $\{ \phi_k \in \R^N \}_{k=1}^N$ and  $\{ \widehat{\phi}_k \in \R^N \}_{k=1}^N$ be a pair of biorthonormal basis and $\x \in  \R^N $ be a signal. Here the graph Fourier transform matrix $\Um = \{ \widehat{\phi}_k \}_{k=1}^N$ and the graph Fourier basis $\Vm = \{ \phi_k \}_{k=1}^N$. The nonlinear approximation to $\x$ is 
\begin{eqnarray}
\label{eq:nonlinear_approx}
   \x^* = \sum_{k \in \mathcal{I}_K} \left\langle {\x, \widehat{\phi}_k} \right\rangle \phi_k,
\end{eqnarray}
where $\mathcal{I}_K$ is the index set of the $K$ largest-magnitude expansion coefficients. When a basis promotes sparsity for $\x$, only a few expansion coefficients are needed to obtain a small approximation error. Note that~\eqref{eq:nonlinear_approx} is a special case of~\eqref{eq:sparse_coding} when the distance metric $d(\cdot, \cdot)$ is the $\ell_2$ norm and $\D$ is an orthonormal basis.

Since the the windowed graph dictionary and the local-set-based piecewise-constant dictionaries are redundant, we solve the following sparse coding problem,
\begin{eqnarray}
\label{eq:sparse_coding_pc}
   \x' =  &  \arg \min_{\a} & \left\| \x - \D \a  \right\|_2^2,
   \\
   \nonumber
   & {\rm subject~to:~} &  \left\| \a \right\|_0 \leq K,
\end{eqnarray}
where $\D$ is a redundant dictionary and $\a$ is a sparse code. The idea is to use a linear combination of a few atoms from $\D$ to approximate the original signal. When $\D$ is an orthonormal basis, the closed-form solution is exactly~\eqref{eq:nonlinear_approx}. We solve~\eqref{eq:sparse_coding} by using the orthogonal matching pursuit, which is a greedy algorithm~\cite{PatiRK:93}. Note that~\eqref{eq:sparse_coding_pc} is a special case of~\eqref{eq:sparse_coding} when the distance metric $d(\cdot, \cdot)$ is the $\ell_2$ norm.

\mypar{Experiments}
We test the four representations on two datasets, including the Minnesota road graph~\cite{MinnesotaGraph} and the U.S city graph~\cite{ChenSMK:14}.

For the Minnesota road graph, we simulate a piecewise-constant graph signal by randomly picking 5 nodes as community centers and assigning each other node to its nearest community center based on the geodesic distance. We assign a random integer to each community. The simulated graph signal is shown in Figure~\ref{fig:min_signal}. The signal contains $5$ piecewise constants and $84$ inconsistent edges. The frequency coefficients and the wavelet coefficients obtained by using three graph partition algorithms are shown in Figure~\ref{fig:min_signal}(b), (c), (d) and (e). The sparsities of the wavelet coefficients for spectral clustering, spanning tree, and 2-means are $364$, $254$, and $251$, respectively; the proposed wavelet bases provide much better sparse representations than the graph Fourier transform.

The evaluation metric of the approximation error is the normalized mean square error, that is,
\begin{equation*}
 {\rm Normalized~MSE} = \frac{ \left\|  \x' - \x \right\|_2^2 }{ \left\| \x \right\|_2^2 },
\end{equation*}
where $\x'$ is the approximation signal and $\x$ is the original signal. Figure~\ref{fig:min_signal}(f) shows the approximation errors given by the four representations. The x-axis is the number of coefficients used in approximation, which is $K$ in~\eqref{eq:nonlinear_approx} and~\eqref{eq:sparse_coding} and the y-axis is the approximation error, where lower means better.  We see that the local-set-based wavelet with spectral clustering and local-set-based dictionary with spectral clustering provides much better performances and the windowed graph Fourier transform catches up with graph Fourier transform around $15$ expansion coefficients. Figure~\ref{fig:min_signal}(g) and (h) compares the local-set-based wavelets and dictionaries with three different partition algorithms, respectively. We see that the spanning tree and 2-means have similar performances, which are better than spectral clustering. This is consistent with the sparsities of the wavelet coefficients, where the wavelet coefficients of spanning tree and 2-means are more sparse than those of spectral clustering.

\begin{figure}[htb]
  \begin{center}
    \begin{tabular}{ccc}
\includegraphics[width=0.4\columnwidth]{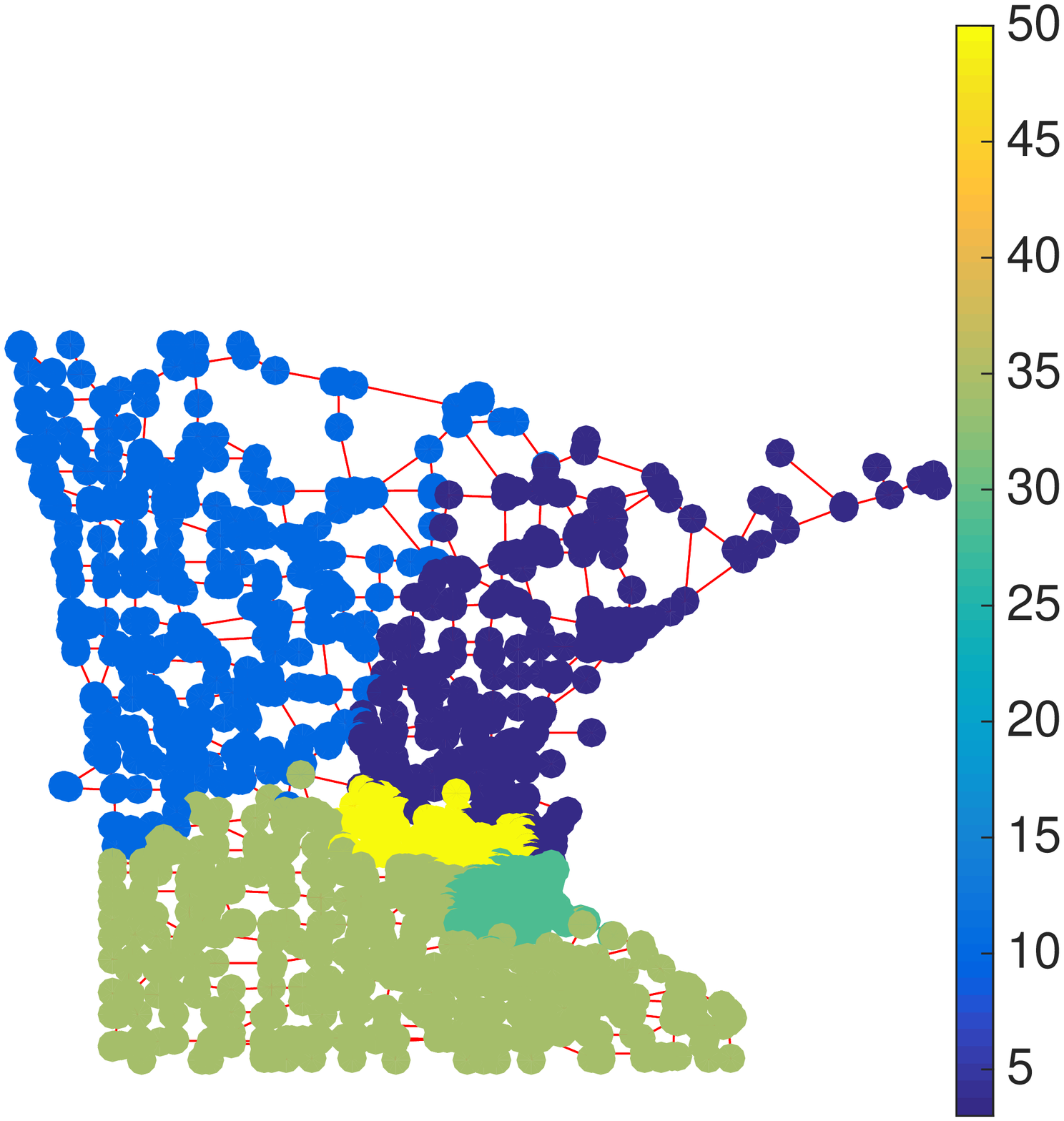}  & \includegraphics[width=0.4\columnwidth]{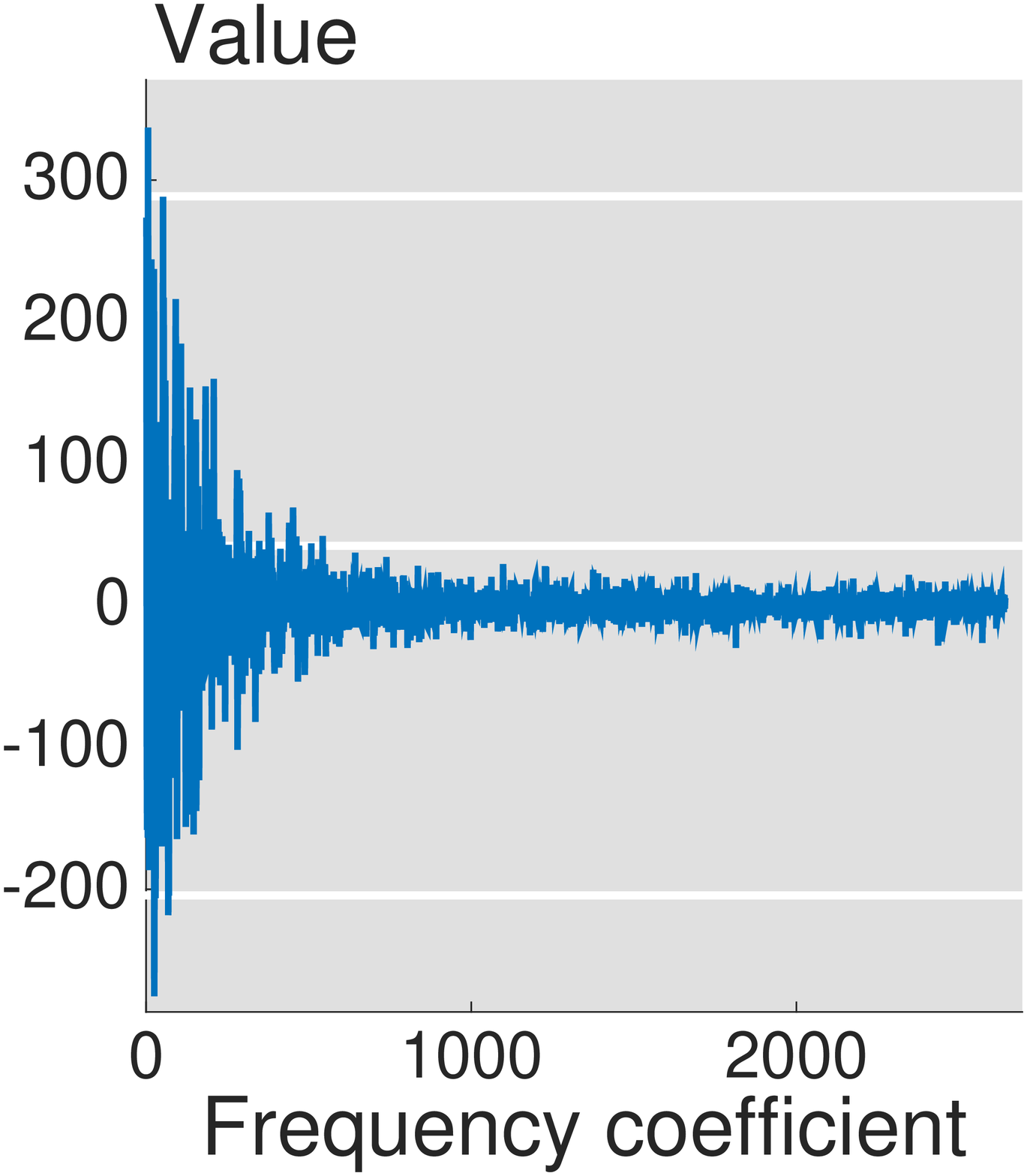} 
\\
 {\small (a) Signal.} & {\small (b) Frequency coefficients.} 
 \\
\includegraphics[width=0.4\columnwidth]{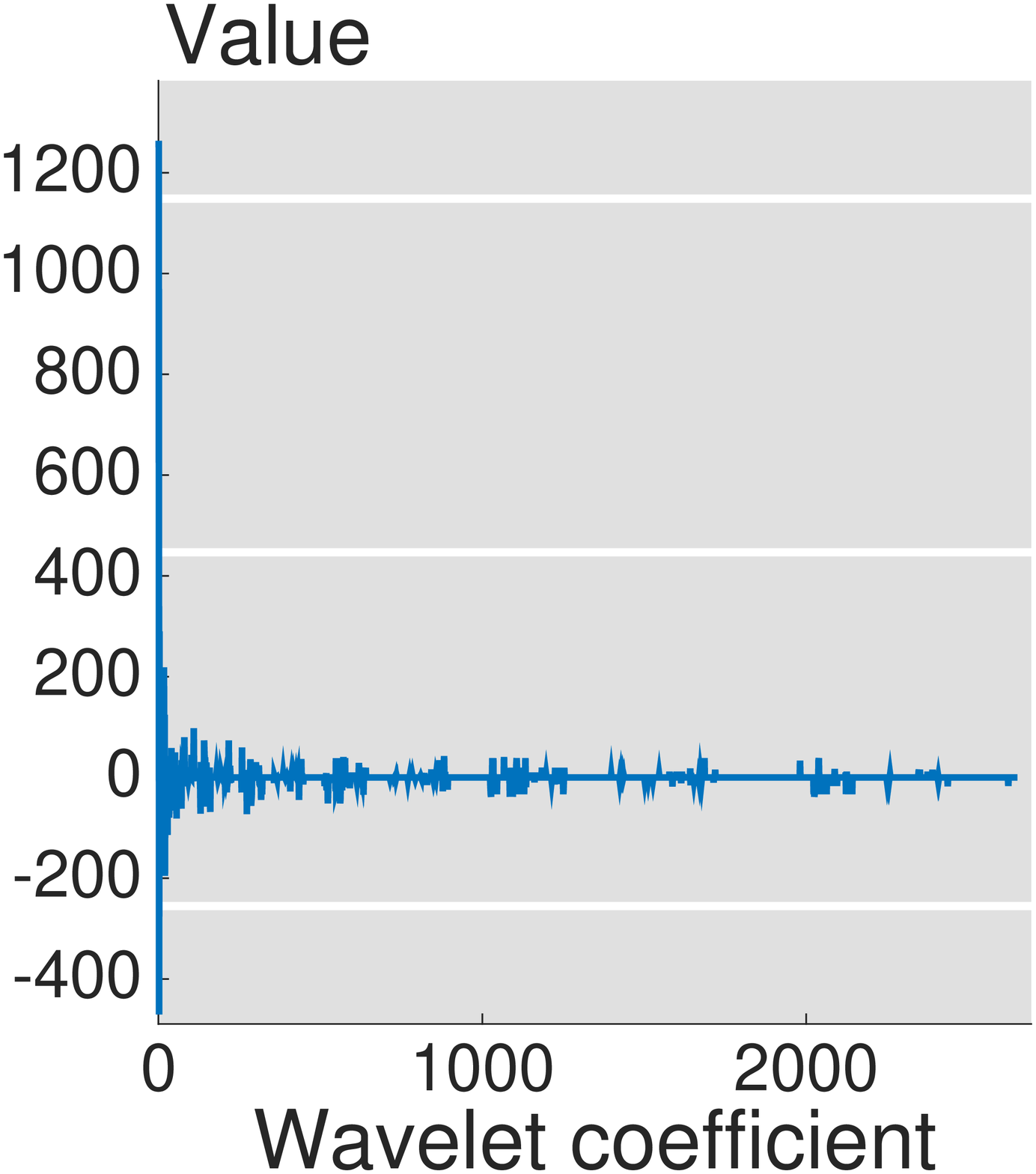}  &
\includegraphics[width=0.4\columnwidth]{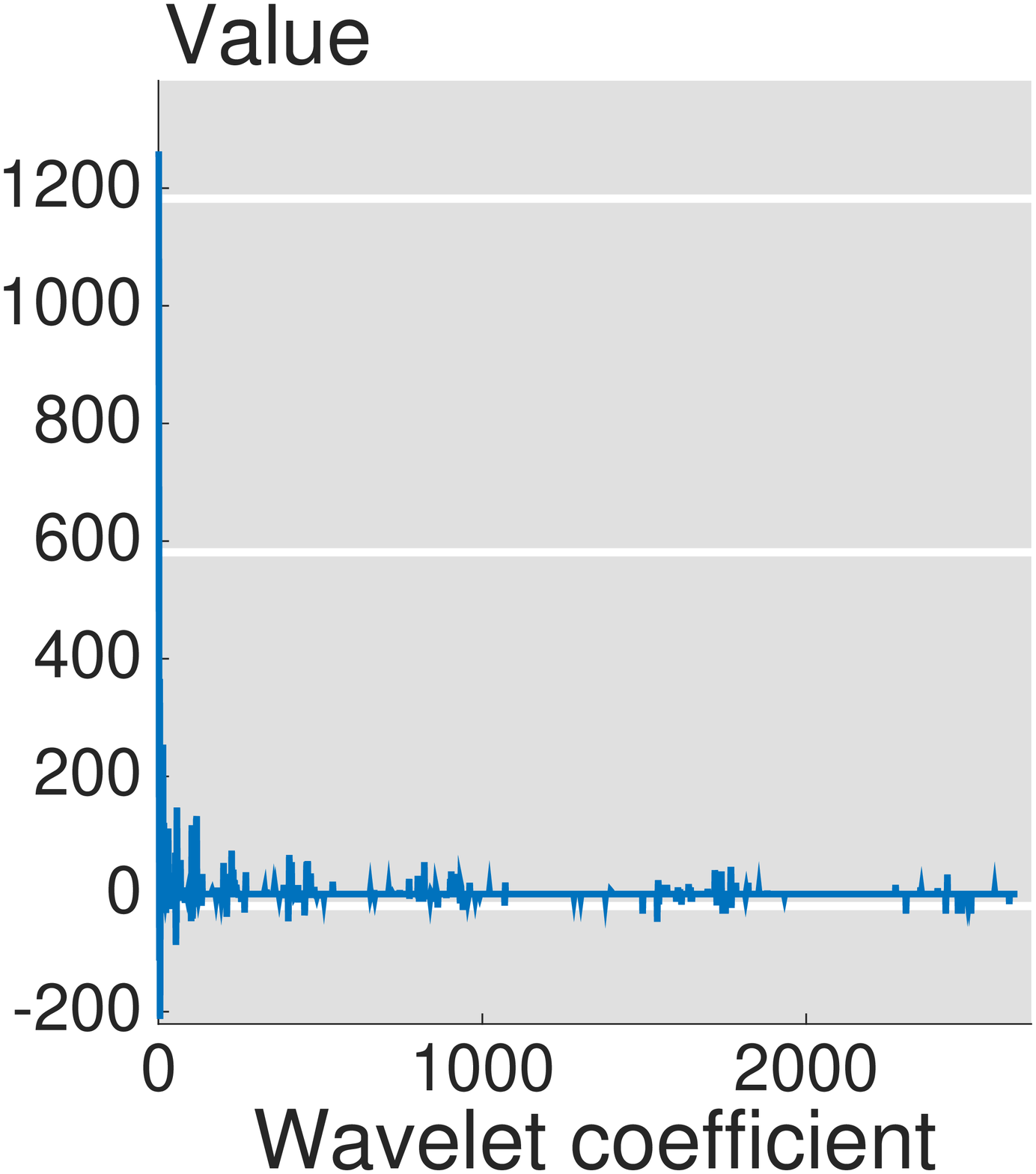}
\\
 {\small (c) Wavelet coefficients} & {\small (d) Wavelet coefficients} 
 \\
 {\small from spectral clustering.} & {\small from spanning tree.} 
 \\
\includegraphics[width=0.4\columnwidth]{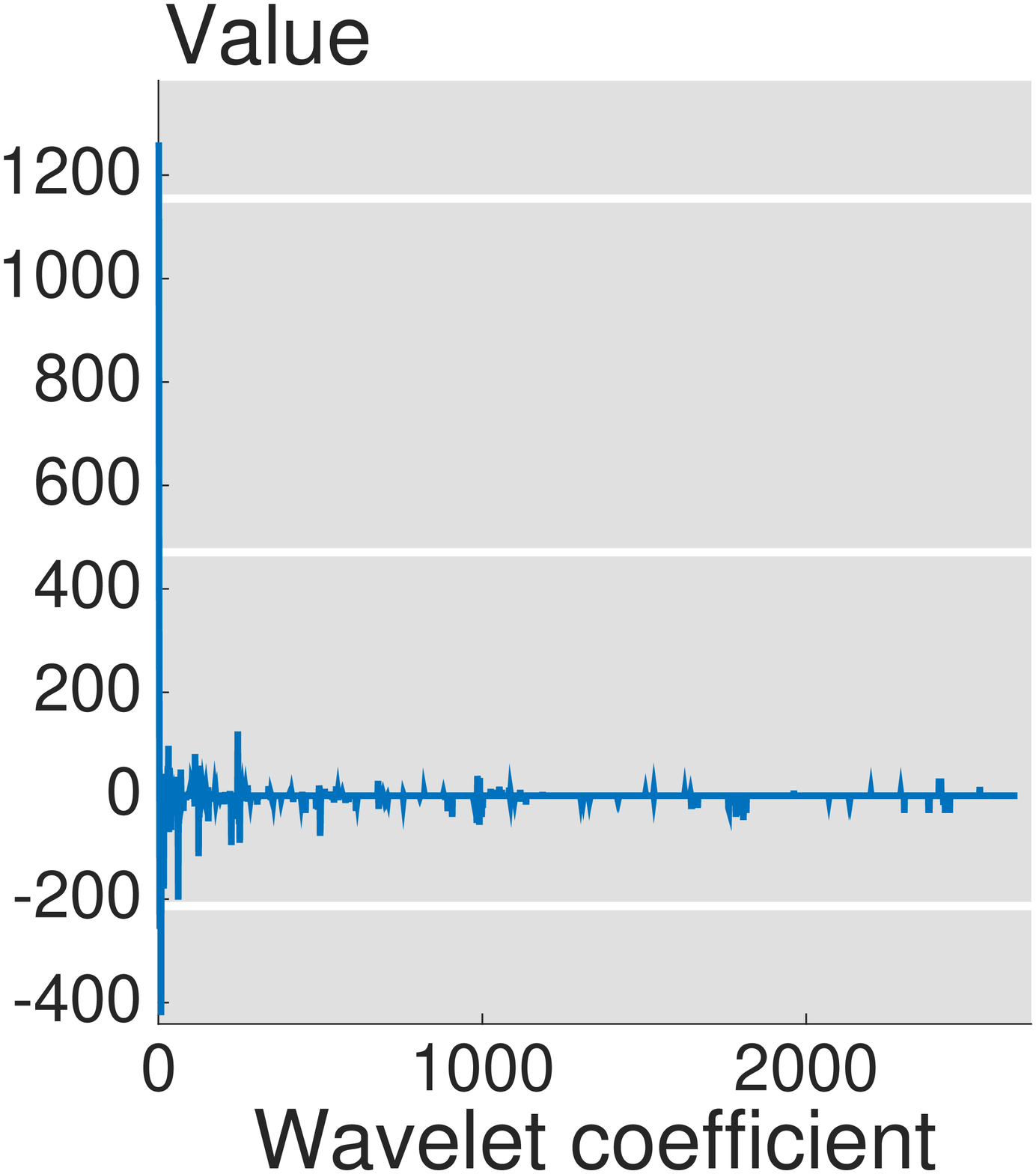}  &
\includegraphics[width=0.4\columnwidth]{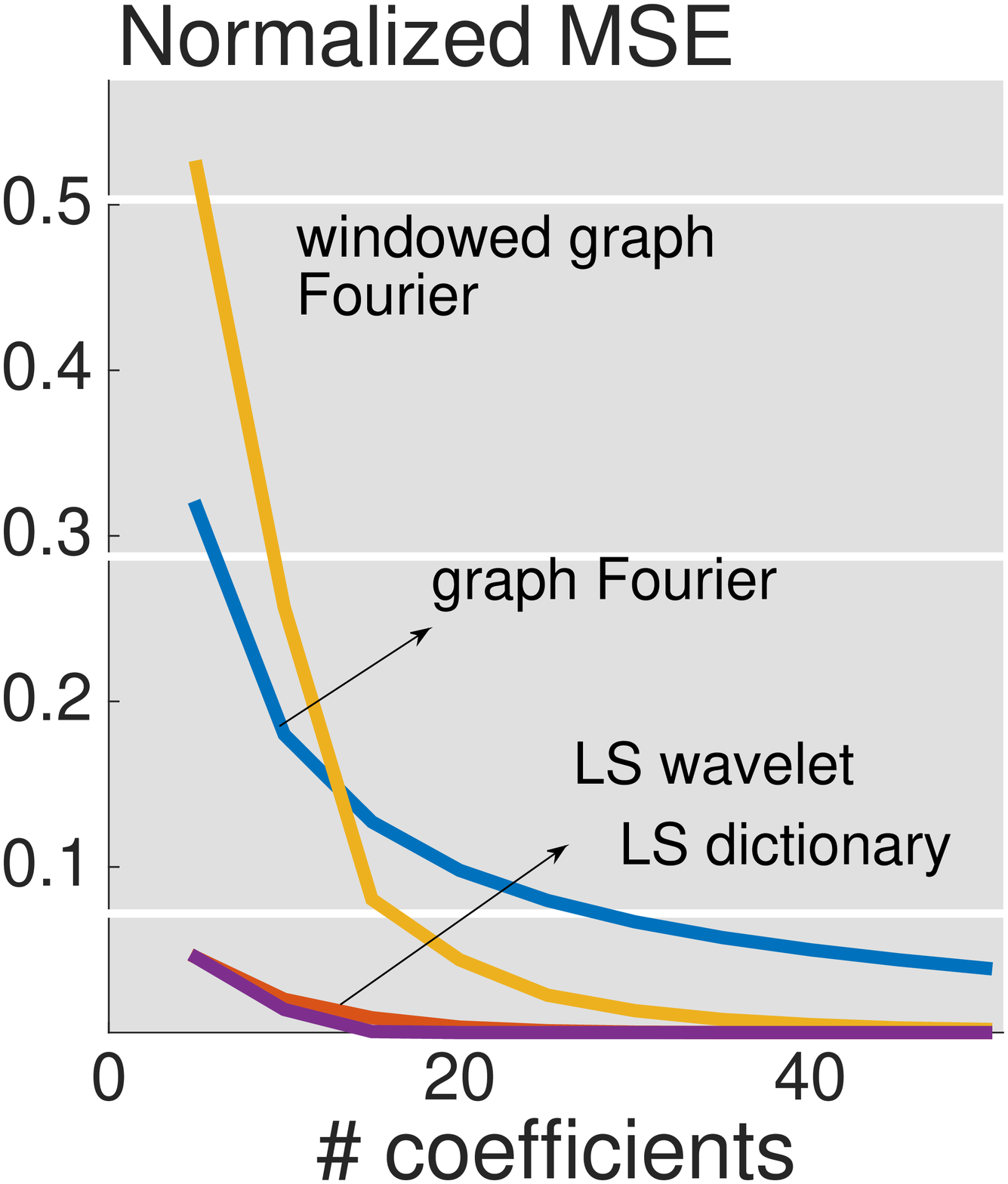}
\\
 {\small (e) Wavelet coefficients} & {\small (f) Error comparison.} 
 \\
 {\small from 2-means.} &
 \\
 \includegraphics[width=0.4\columnwidth]{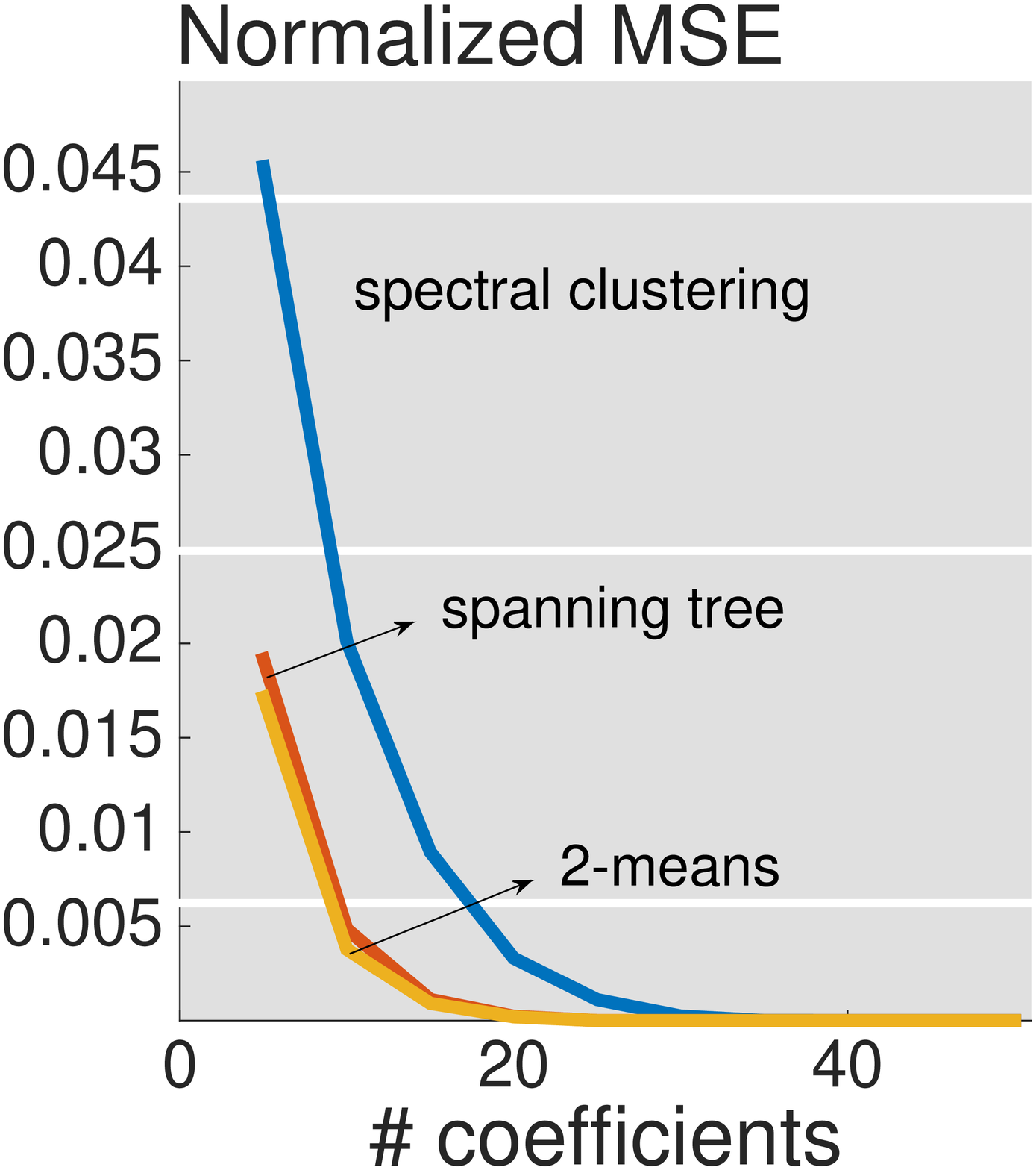} &
\includegraphics[width=0.4\columnwidth]{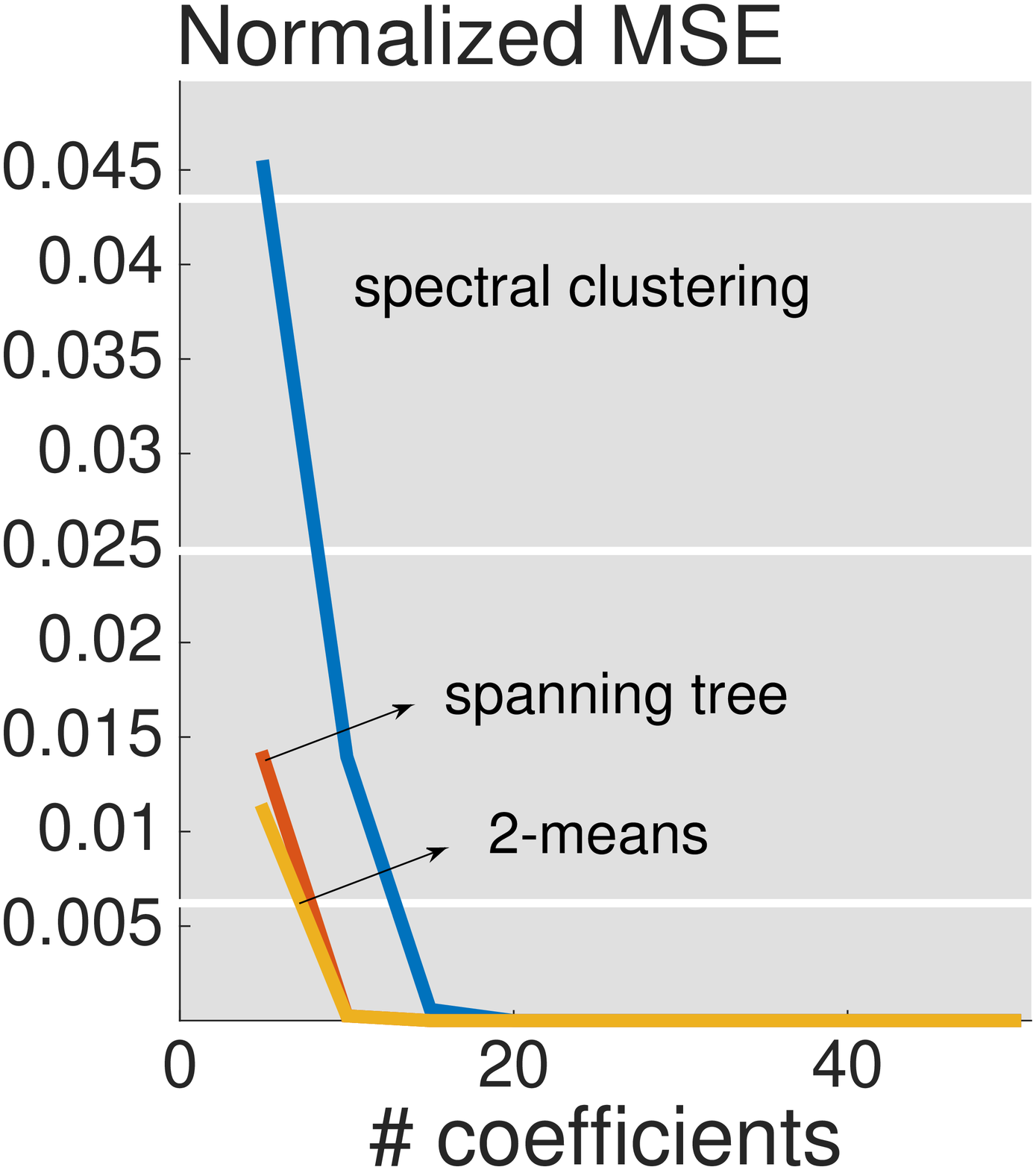}
\\
 {\small (g) Error (Wavelet)} & {\small (h) Error (Dictionary).} 
\end{tabular}
  \end{center}
   \caption{\label{fig:min_signal} Approximation on the Minnesota road graph.}
\end{figure}

The U.S city graph is a network representation of $150$ weather stations across the U.S. We assign an edge when two 
weather stations are within 500 miles. The graph includes $150$ nodes and $1033$ undirected, unweighted edges. Based on the  geographical area, we partition the nodes into four communities, including the north area (N), the middle area (M), the south area (S), and the west area (W). The corresponding piecewise-constant graph signal is
\begin{equation}
\label{eq:pc_us}
  \x =  {\bf 1}_{N} + 2 \cdot {\bf 1}_{M} + 3 \cdot {\bf 1}_{S} + 4 \cdot {\bf 1}_{W}.
\end{equation}
The graph signal is shown in Figure~\ref{fig:us_signal}(a), where dark blue indicates the north area, the light  indicates the middle area, the dark yellow indicates the south area and  the light yellow indicates the west area. The signal contains $4$ piecewise constants and $144$ inconsistent edges.

 The frequency coefficients and the wavelet coefficients obtained by using three graph partition algorithms are shown in Figure~\ref{fig:us_signal}(b), (c), (d) and (e). The sparsities of the wavelet coefficients for spectral clustering, spanning tree, and 2-means are $45$, $56$, and $41$, respectively; the proposed wavelet bases provide much better sparse representations than the graph Fourier transform.

The evaluation metric of the approximation error is also the normalized mean square error.  Figure~\ref{fig:us_signal}(f) shows the approximation errors given by the four representations. Similarly to Figure~\ref{fig:us_signal}(d), the local-set-based wavelet with spectral clustering and local-set-based dictionary with spectral clustering provides much better performances and the windowed graph Fourier transform catches up with graph Fourier transform around $25$ expansion coefficients. Figure~\ref{fig:us_signal}(g) and (h) compares the local-set-based wavelets and dictionaries with three different graph partition algorithms, respectively. We see that the spectral clustering provides the best performance.

\begin{figure}[htb]
  \begin{center}
    \begin{tabular}{ccc}
\includegraphics[width=0.4\columnwidth]{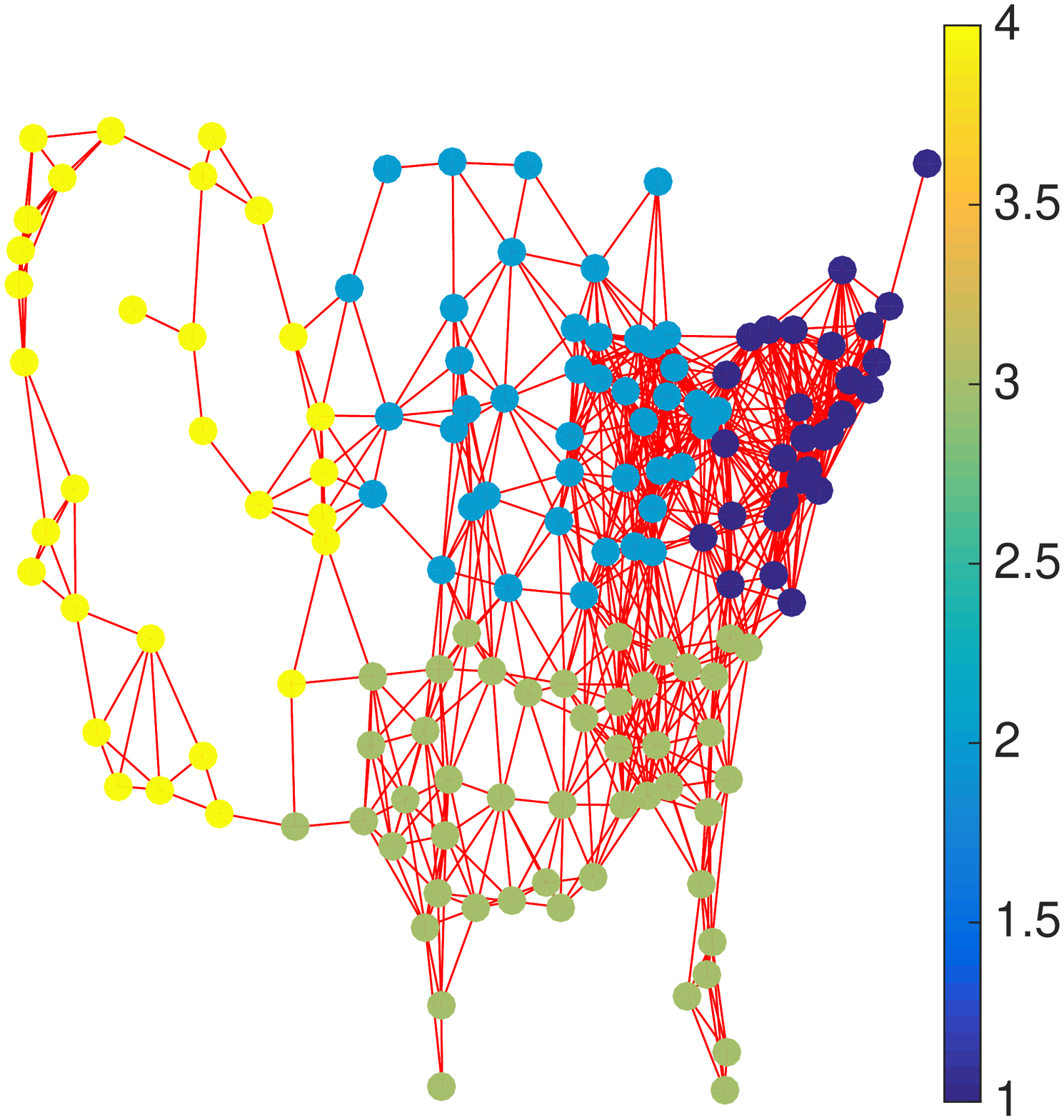}  & \includegraphics[width=0.4\columnwidth]{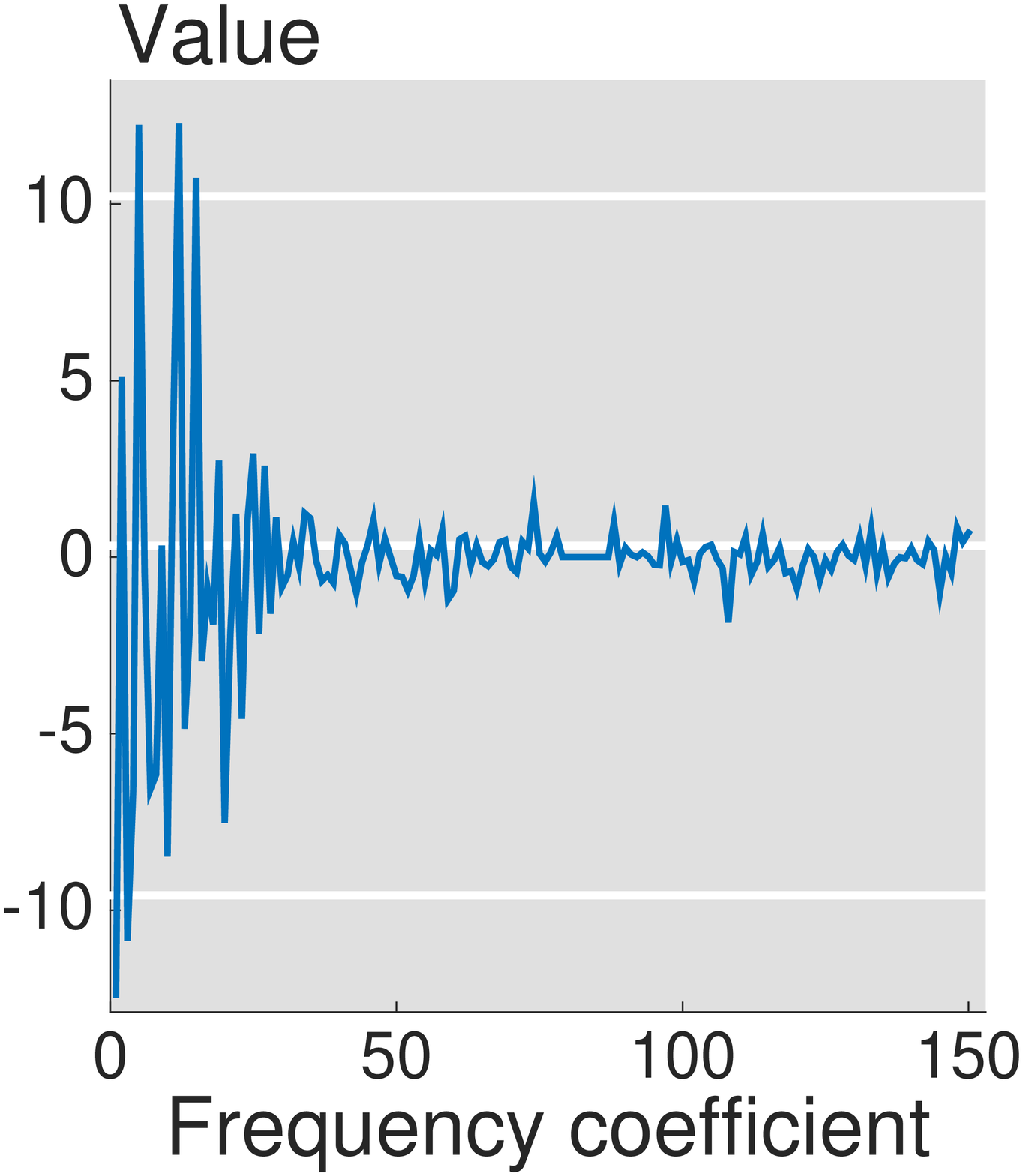} 
\\
 {\small (a) Signal.} & {\small (b) Frequency coefficients.} 
 \\
\includegraphics[width=0.4\columnwidth]{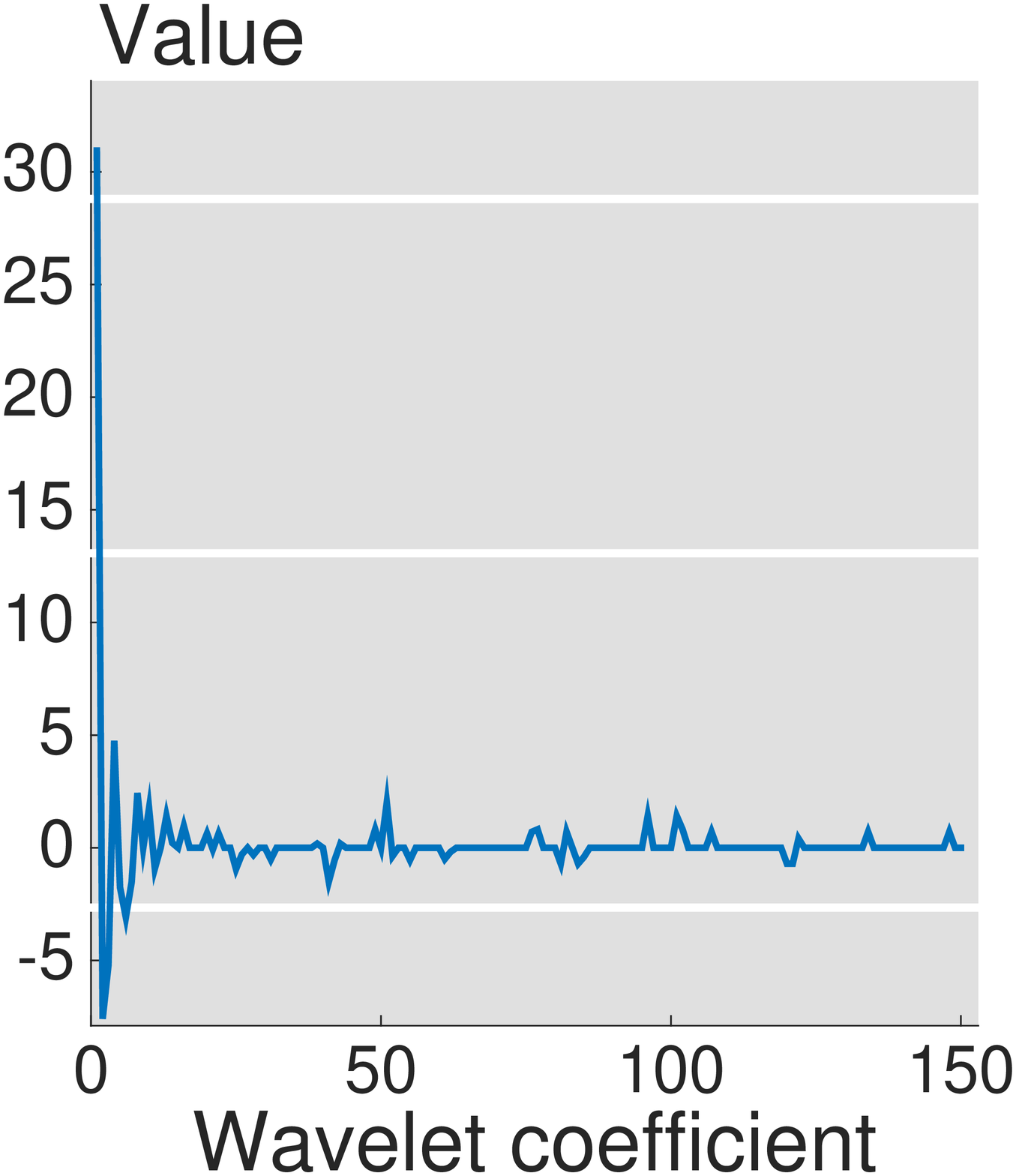}  &
\includegraphics[width=0.4\columnwidth]{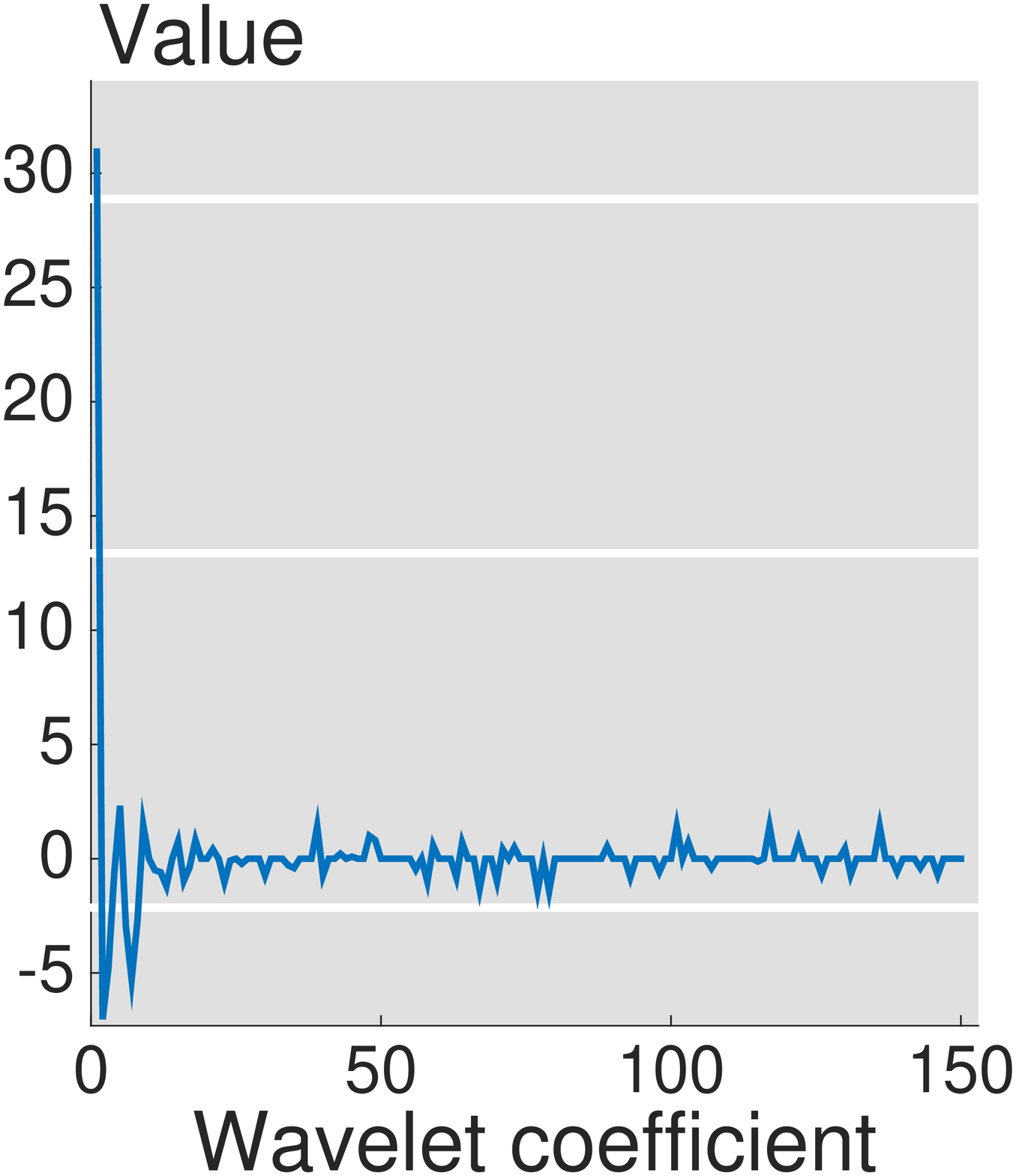}
\\
 {\small (c) Wavelet coefficients} & {\small (d) Wavelet coefficients} 
 \\
 {\small from spectral clustering.} & {\small from spanning tree.} 
 \\
\includegraphics[width=0.4\columnwidth]{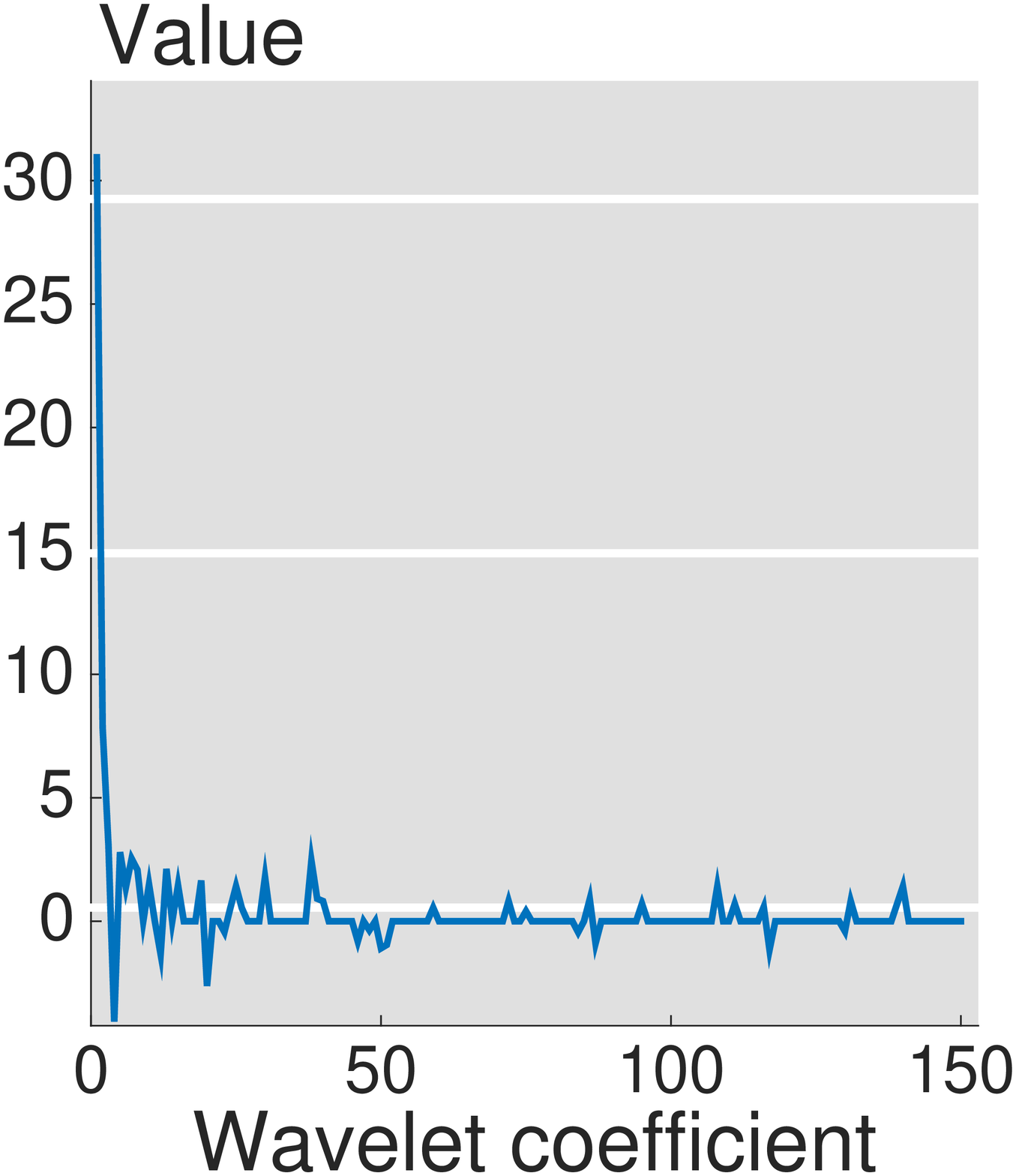}  &
\includegraphics[width=0.4\columnwidth]{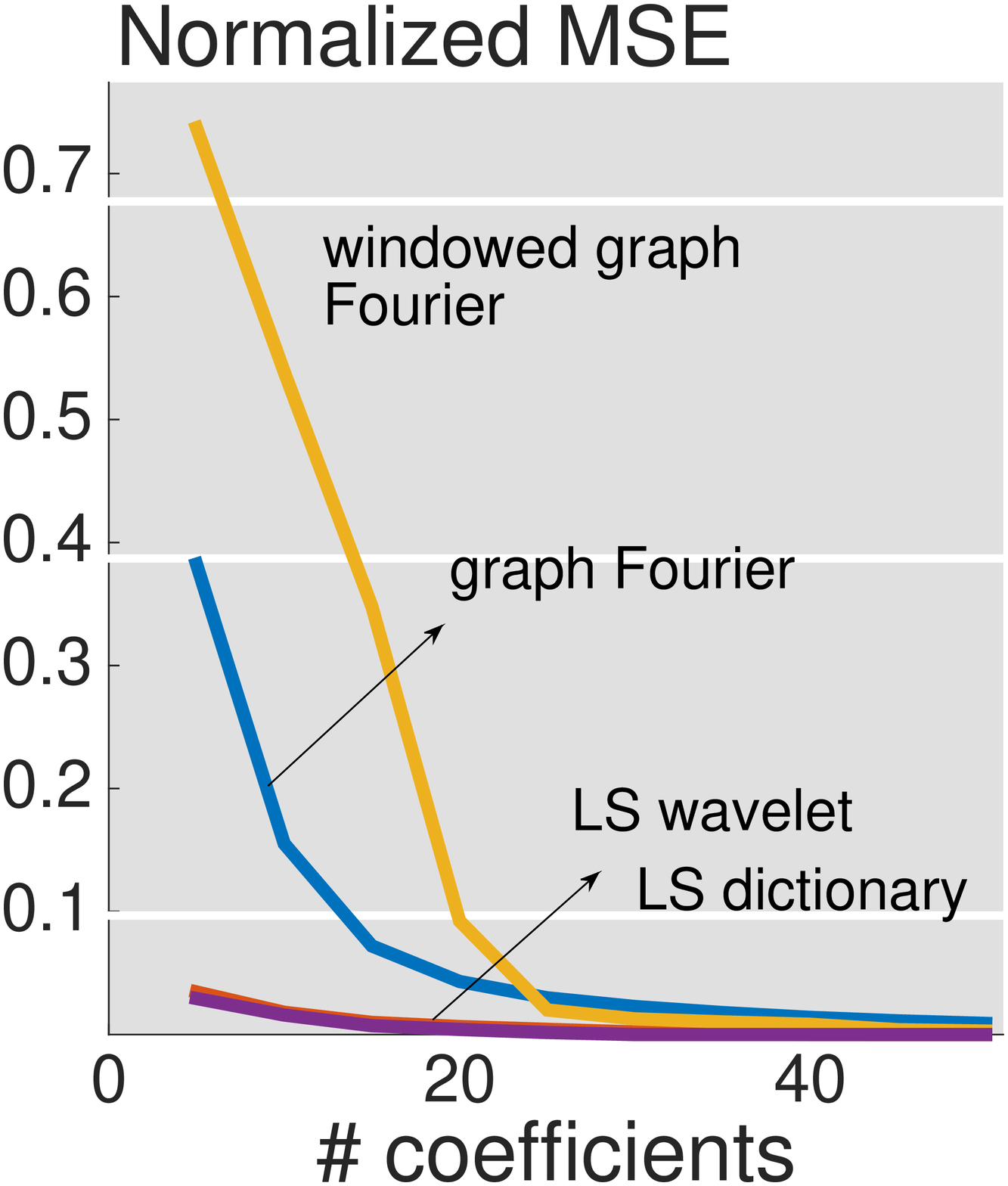} 
 \\
 {\small (e) Wavelet coefficients} & {\small (f) Error comparison.} 
 \\
 {\small from 2-means.} &
 \\
 \includegraphics[width=0.4\columnwidth]{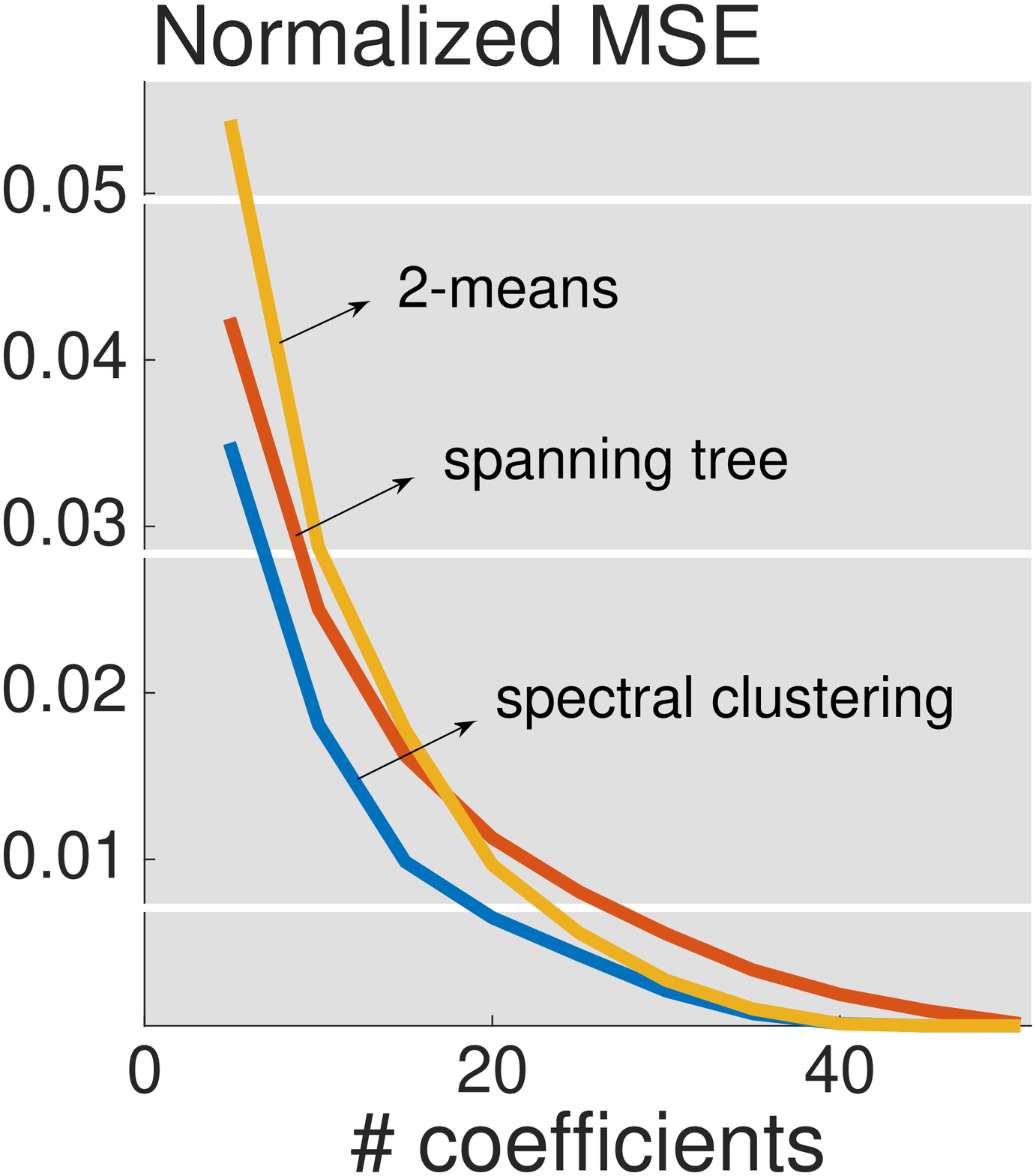} &
\includegraphics[width=0.4\columnwidth]{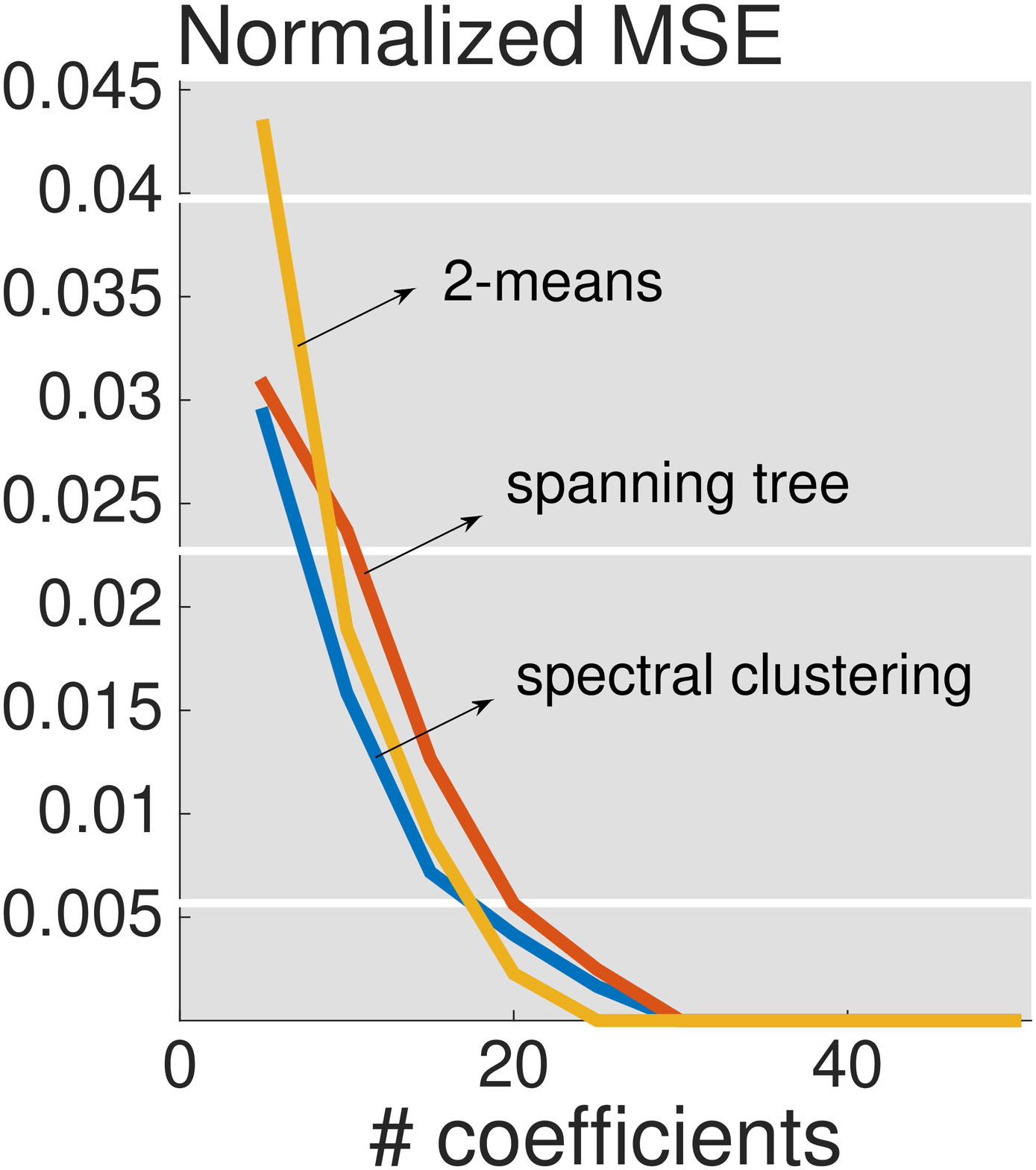}
\\
 {\small (g) Error (wavelet).} & {\small (h) Error (dictionary).} 
\end{tabular}
  \end{center}
   \caption{\label{fig:us_signal} Approximation on the U.S city graph.}
\end{figure}

To summarize the task of approximation, the proposed local set based representations provide a reliable approximation to a piecewise-constant graph signal because of the sparsity promotion.

\subsubsection{Sampling and Recovery}
\label{sec:SR}
The goal of sampling and recovery is to collect a few samples from a graph signal, and then to recover the original graph signal from those samples either exactly or approximately. Similar procedures are widely used in many applications, such as semi-supervised learning and active learning .

\mypar{Algorithm}
We consider the following recovery algorithm based on the multiresolution local sets. Let $m$ be the number of samples.
 We use the multiresolution decomposition of the local sets as shown in Section~\ref{sec:Dict_LSPC}.
 Instead of obtaining a full decomposition tree, we partition the local sets until we obtain $m$ leaf local sets. Those local sets may not be in the same decomposition level , but the union of them still cover the entire space. For the local sets in the same level of the decomposition tree, we first partition the one that has the largest number of nodes. For each leaf local set, we choose a center that has the minimum summation of the geodesic distances to all the other nodes in the leaf local set. We use those centers for the $M$  leaf local sets as the sampled set. Let $\x \in \R^N$ be a piecewise-constant graph signal, $\M = (\M_1, \cdots, \M_M)$ be the designed sampled set, with each sampled node $\M_j$ be the center of the $j$th leaf local set $S_j$. The recovered graph signal is 
\begin{equation*}
\x' \ = \   \sum_{j=1}^{M}  x_{\M_j} {\bf 1}_{S_j}.
\end{equation*}

We obtain a simple upper bound for the recovery error of this algorithm.
\begin{myThm}
\label{thm:recovery_err}
Let the original graph signal $\x \in \PC_G(K)$.
The recovery error is bounded as
\begin{eqnarray*}
\sum_{i=1}^{N} \Id ( x_i \neq x'_i ) \leq K \max_{j = 1, \cdots, M} | S_j |,
\end{eqnarray*}
where $\Id (\cdot)$ is the indicator function.
\end{myThm}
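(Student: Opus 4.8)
The plan is to split the total error node-by-node across the leaf local sets and to argue that only those sets on which $\x$ is non-constant can contribute. Since each partition step splits a set into two disjoint connected pieces, the frontier of the decomposition tree—the leaf local sets $\{S_j\}_{j=1}^{M}$—is a partition of $\V$, so I would begin with
\begin{equation*}
\sum_{i=1}^{N} \Id(x_i \neq x'_i) \ = \ \sum_{j=1}^{M} \sum_{i \in S_j} \Id(x_i \neq x'_i).
\end{equation*}
On $S_j$ the recovered signal is the constant $x_{\M_j}$, so if $\x$ is constant on $S_j$ then $x_i = x_{\M_j} = x'_i$ for every $i \in S_j$ and this set contributes nothing. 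Hence only \emph{impure} local sets, on which $\x$ takes at least two distinct values, can produce errors, and each such set contributes at most $|S_j| \leq \max_{j} |S_j|$ errors.

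The crux is to bound the number of impure local sets by $K$. Here I would invoke the connectivity guaranteed by the definition of local sets: since $G_{S_j}$ is connected, if $\x$ takes two distinct values on $S_j$, then any path inside $G_{S_j}$ joining two nodes carrying different values must cross an edge whose endpoints differ, i.e.\ an inconsistent edge with $(\Delta \x)_e \neq 0$ lying entirely inside $S_j$. Because the local sets are pairwise disjoint, an edge internal to $S_j$ cannot be internal to any other local set, so distinct impure local sets are charged to disjoint nonempty sets of inconsistent edges. Consequently the number of impure local sets is at most the total number of inconsistent edges, which is $\left\| \Delta \x \right\|_0 \leq K$ by the assumption $\x \in \PC_G(K)$.

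Combining the two observations yields
\begin{equation*}
\sum_{i=1}^{N} \Id(x_i \neq x'_i) \ \leq \ \big(\text{number of impure local sets}\big) \cdot \max_{j} |S_j| \ \leq \ K \max_{j = 1, \cdots, M} |S_j|,
\end{equation*}
which is the claimed bound.

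The main obstacle is the counting argument in the middle paragraph: one must make precise both that each impure local set forces at least one inconsistent edge strictly inside it, and that these edges are never shared between sets. The connectivity requirement in Definition~\ref{df:localset} is exactly what produces the internal inconsistent edge, while the disjointness of the $S_j$ is exactly what prevents double counting—so both features of the construction are essential and should be stated explicitly rather than taken for granted.
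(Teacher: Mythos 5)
Your proof is correct and follows essentially the same route as the paper's own (much terser) argument: errors can only occur in leaf local sets containing an inconsistent edge, there are at most $K$ such edges hence at most $K$ erroneous sets, and each contributes at most $\max_j |S_j|$ errors. Your version is in fact more rigorous than the paper's, since you make explicit the two facts the paper leaves implicit --- that connectedness of $G_{S_j}$ forces an inconsistent edge \emph{inside} any impure set, and that disjointness of the leaf sets prevents double counting of those edges.
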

\begin{proof}
The error happens only when there exists at least one inconsistent edge in a community. Since there are $K$ inconsistent edges, we make errors in at most $K$ communities. The worst case is that each error is made in the one of the largest $K$ communities.
\end{proof}
Theorem~\ref{thm:recovery_err} shows that the size of the largest community influences the recovery error.  When we use the even partition, the size of the largest local set is minimized, which minimizes the upper bound. Similar to Theorem~\ref{thm:sparse}, Theorem~\ref{thm:recovery_err} also shows the importance of the even partition again. This algorithm studies the graph structure before taking samples, which belongs to the experimentally designed sampling. In the classical regression for piecewise-constant functions, it is known that experimentally designed sampling has the same performance with random sampling asymptotically~\cite{CastroWN:05}. When we restrict the problem setting to sample only a few nodes from a finite graph, however, random sampling can lead to the uneven partition where some communities are much larger than the others. As a deterministic approach, the experimentally designed sampling minimizes the error bound and is better than random sampling when the sample size is small.

We also consider two other recovery algorithms, including trend filtering on graphs and harmonic functions. For trend filtering on graphs, we consider
\begin{eqnarray*}
\label{eq:trend_filtering}
   \x' =  &  \arg \min_{\t} & \left\| \x_\M - \t_\M  \right\|_2^2 + \mu \left\| \Delta \t \right\|_1,
\end{eqnarray*}
where $\M$ is the sampling node set obtained by random sampling. We want to push the recovered graph signal to be close to the original one at the sampled nodes and to be piecewise constant. For harmonic functions, we consider
\begin{eqnarray*}
\label{eq:trend_filtering}
   \x' =  &  \arg \min_{\t} & \left\| \x_\M - \t_\M  \right\|_2^2 + \mu \left\| \Delta \t \right\|_2^2,
\end{eqnarray*}
where $\left\| \Delta \t \right\|_2^2 = \t^T \LL \t$ and $\LL$ is the graph Laplacian matrix. Harmonic functions are proposed to recover a smooth graph signal which can be treated as an approximation of a piecewise-constant graph signal. When we obtain the solution, we assign each coefficient to its closest constant in $\x$.

\mypar{Experiments}
We test the four representations on two datasets, including the Minnesota road graph~\cite{MinnesotaGraph} and the U.S city graph~\cite{ChenSMK:14}.

For the Minnesota road graph, we simulate a piecewise-constant graph signal by randomly picking 5 nodes as community centers and assigning each other node a community label based on the geodesic distance. We assign a random integer to each community. We still use the simulated graph signal in Figure~\ref{fig:min_signal}(a).

The evaluation metric of the recovery error is the percentage of  mislabel, that is,
\begin{equation*}
 {\rm Error} = \frac{ \sum_{i=1}^N \Id( x_i \neq x_i') }{ N },
\end{equation*}
where $\x'$ is the recovered signal and $\x$ is the ground-truth signal. Figure~\ref{fig:recovery}(a) shows the recovery errors given by three algorithms. The x-axis is the ratio between the number of samples and the total number of nodes and the y-axis is the recovery error, where lower means better. 
Since harmonic functions and trend filtering are based on random sampling, the results are averaged over 50 runs. SC indicates spectral clustering, ST indicates spanning tree and 2M indicates 2-means. We see that the local-set-based recovery algorithms are better than harmonic functions and trend filtering, especially when the sample ratio is small.

For the U.S city graph, we use the same piecewise-constant graph signal~\eqref{eq:pc_us} as the ground truth. The evaluation metric of the recovery error is the percentage of  mislabel. Figure~\ref{fig:recovery}(b) shows the recovery errors given by three recovery strategies with two different sampling strategies. Similarly to the recovery of the Minnesota road graph, we see that the local-set-based recovery algorithms are better than harmonic functions and trend filtering, especially when the sample ratio is small.

\begin{figure}[htb]
  \begin{center}
    \begin{tabular}{cc}
\includegraphics[width=0.45\columnwidth]{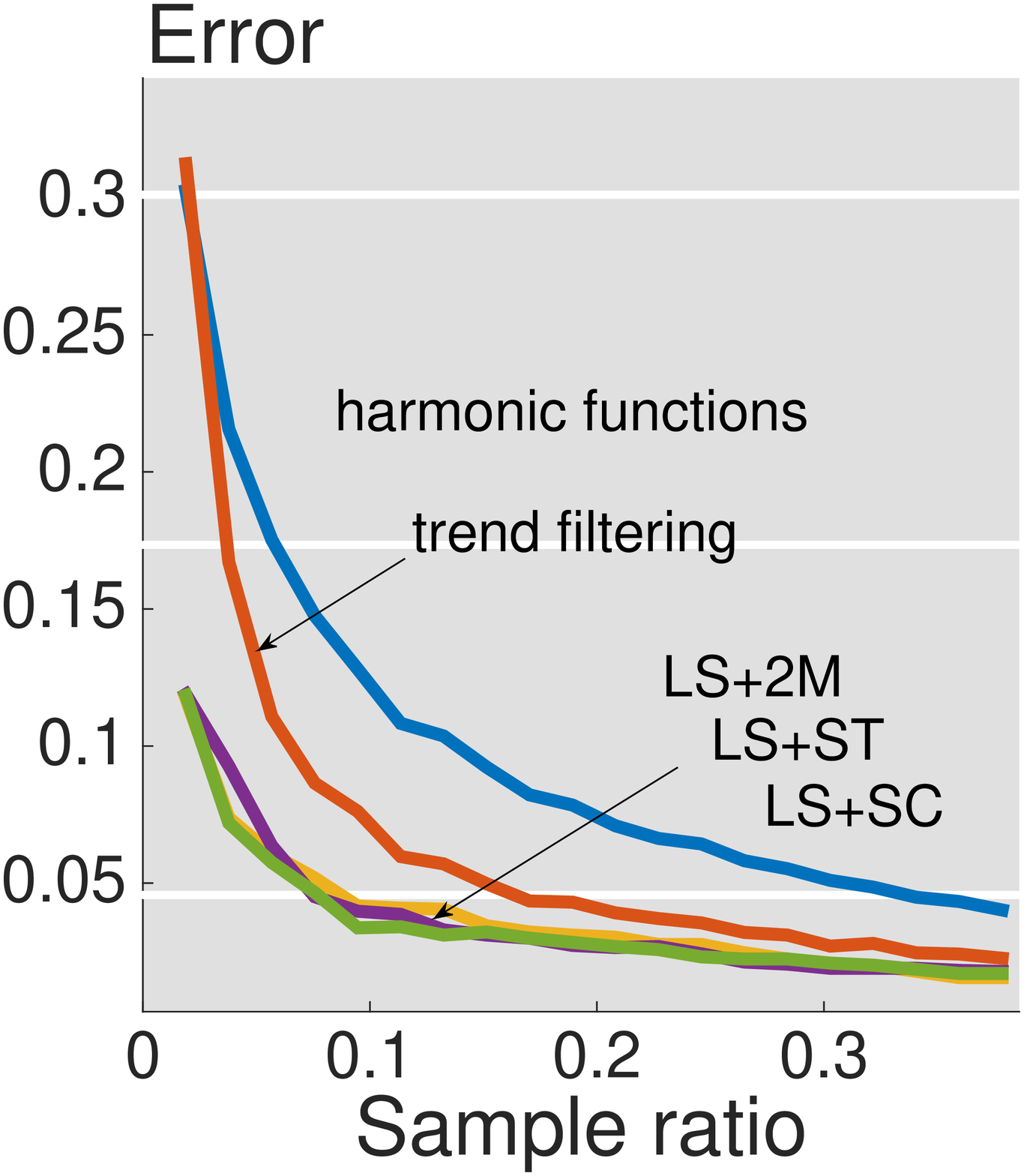}  & \includegraphics[width=0.45\columnwidth]{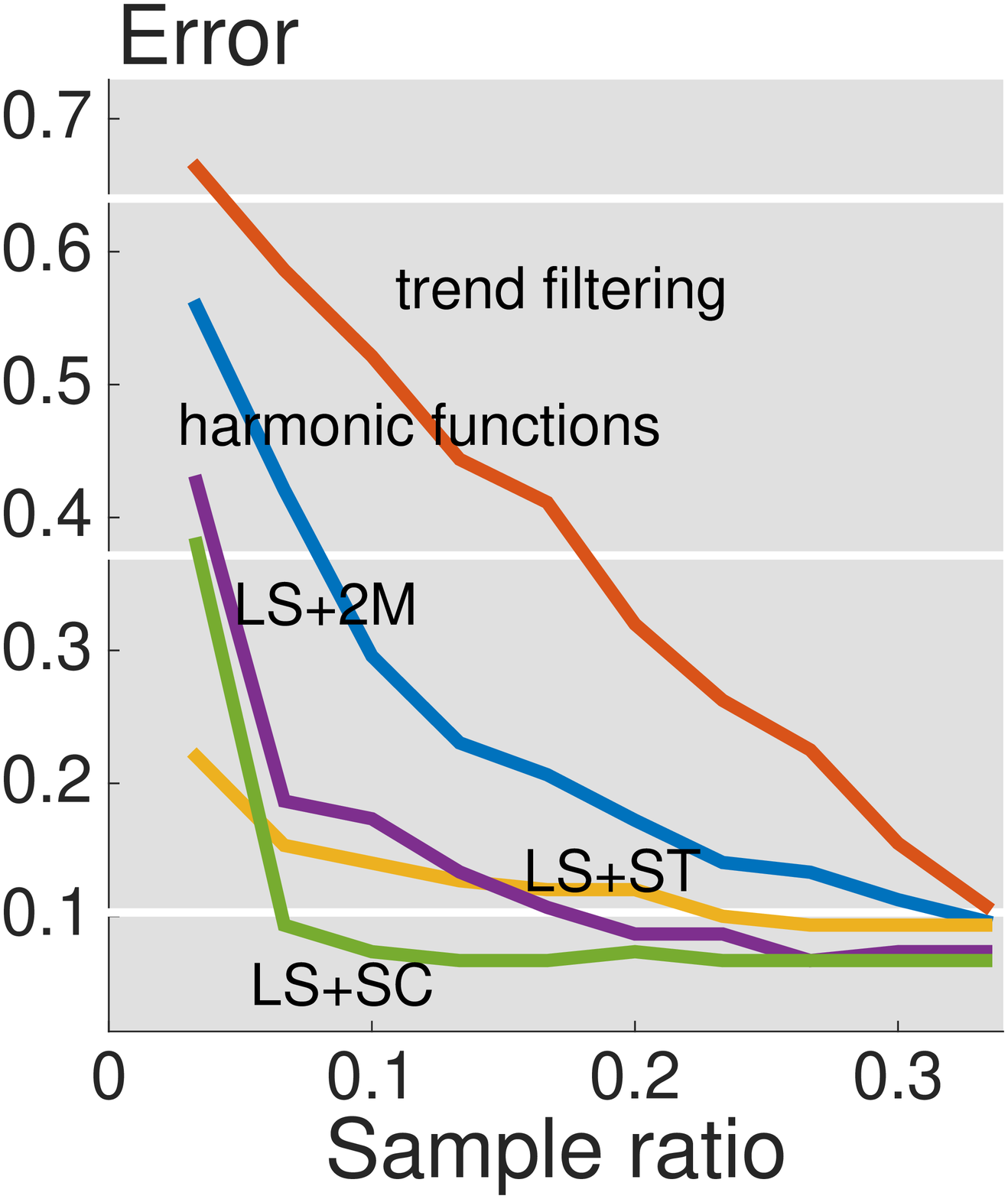}
\\
 {\small (a) Minnesota road graph.} & {\small (b) U.S. city graph} 
\end{tabular}
  \end{center}
   \caption{\label{fig:recovery} Comparison of recovery errors. LS+SC represents the local-set-based recovery algorithm with spectral clustering partition; LS+ST represents the local-set-based recovery algorithm with spanning tree partition; LS+2M represents the local-set-based recovery algorithm with 2-means partition. }
\end{figure}

To summarize the task of sampling and recovery, the proposed center-assign algorithm is simple and useful in the recovery. The experimentally designed sampling based on local sets tries to minimizes the upper bound in Theorem~\ref{thm:recovery_err} and make each local set have similar sizes. It provides a deterministic approach to choose sampled nodes; it works better than random sampling when the sample ratio is small and have a similar performance with random sampling asymptotically.

\subsubsection{Case Study: Epidemics Process}
Epidemics process has been modeled as the diffusion process of ideas/opinions/beliefs/innovations over a finite-sized, static,
connected social network~\cite{ZhangM:14}. In the terminology of epidemics, if the state of each node is either susceptible or infected, it is usually model by the susceptible-infected-susceptible (SIS) model. Nodes that are infected have a certain rate ($\gamma$) to recover and return to be susceptible; nodes that are susceptible can be contagious if infected by its neighboring infected nodes under certain rate ($\beta$).

Here we adopt the SIS model on network, which takes the network structure into account and help us estimate the macroscopic behavior of an epidemic outbreak~\cite{Newman:10}. In SIS model on network, $\beta$ is the infection rate quantify the probability per unit time that the infection will be transmitted from an infective individual to a susceptible one, $\gamma$ is the recovery (or healing) probability per unit time that an infective individual recovers and becomes susceptible again. To be more accurate, the infection rate studied here is a part of endogenous infection rate, which has the form of $\beta d$, where $d$ is the number of infected neighbors of the susceptible node~\cite{ZhangM:14, ZhangM:12}.  Since $\beta d$ dependents on the structure of the network, $\beta$ is referred as topology dependent infection rate, and since recovery is a spontaneous action and the recovery probability is identical for all the infective nodes, $\gamma$ is considered to be network topology independent~\cite{ZhangM:14}.

We consider a task to estimate the disease incidence, or the percentage of the infected nodes at each time. A simple method is that, in each time, we randomly sample some nodes, query their states, and calculate the percentage of the infected nodes. This method provides an unbiased estimator to estimate the disease incidence. However, this method has two shortcomings: first, it loses information on graphs and cannot tell which nodes are infected; second, since it is a random approach, it needs a huge number of samples to ensure a reliable estimation. 

We can model the states of nodes as a graph signal where $1$ represents infective and $0$ represents susceptible. When the topology dependent infection rate is high and the healing probability is low, the 
infection spreads locally; that is, nodes in a same community get infected in a same time and the corresponding graph signal is piecewise constant. We can use the sampling and recovery algorithm in Section~\ref{sec:SR} and then calculate the percentage of the infected nodes in the recovered graph signal. In this way, we can visualize the graph and tell which nodes may be infected because we recover the states of all the nodes; we also avoid the randomness effect because the algorithm is based on the experimentally designed sampling.

We simulate an epidemics process over the Minnesota road graph by using the SIS model. We set $\gamma$ be $0.1$,  and $\beta$ be $0.6$. In the first day, we randomly select three nodes to be infected and diffuses it for $49$ days. Figure~\ref{fig:min_estimation} shows the states of nodes in the $10$th day and the $20$th day. We see that three small communities are infected in the $10$th day; these communities are growing bigger in the $20$th day. Since the healing probability is nonzero, a few susceptible nodes still exist within the communities.
 
\begin{figure}[htb]
  \begin{center}
    \begin{tabular}{cc}
 \includegraphics[width=0.45\columnwidth]{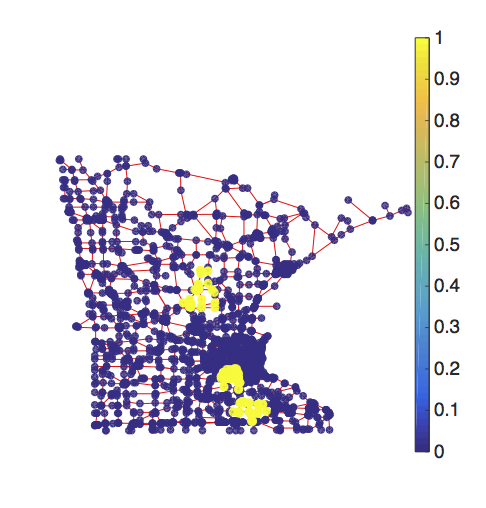}  & \includegraphics[width=0.45\columnwidth]{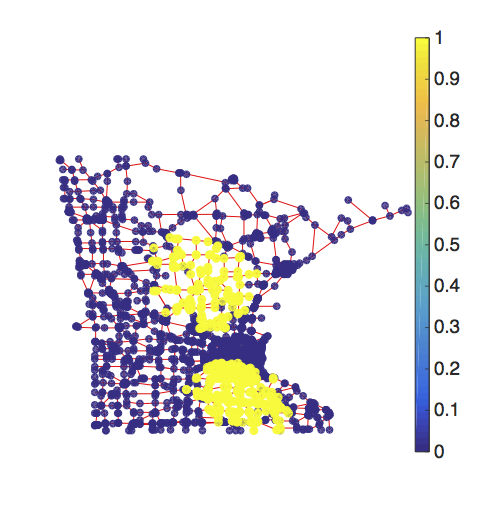}
\\
 {\small (a) $10$th day.} & {\small (b) $20$th day.} 
\end{tabular}
  \end{center}
   \caption{\label{fig:min_estimation} Epidemics process over the Minnesota road graph. Yellow indicates infection and blue indicate susceptible.}
\end{figure}

We compare the results of two algorithms: one is based on random sampling following with calculating the percentage of infection within the sampled nodes; the other one is based on the local-set-based recovery algorithm following with calculating the percentage of infection within the recovered graph signal. The evaluation metric is the frequency that the result of the local-set-based recovery algorithm is closer to the groundtruth, that is,
\begin{equation*}
{\rm Success~rate}  \ = \ \frac{1}{M}  \sum_{i=1}^M  \Id ( | \hat{x}^{(2)} - x_0 | <  | \hat{x}^{(1)}_{i} - x_0 | ),
\end{equation*} 
where $ x_0$ is the ground truth of the percentage of infection,
$ \hat{x}^{(1)}_{i}$ is the estimation of the random algorithm in the $i$th trials, $\hat{x}^{(2)}$ is the estimation of the local-set-based recovery algorithm, and $M$ is the total number of random trials; we choose $M= 1000$ here.  The success rate measures the frequency when the local-set-based recovery algorithm has a better performance. When the success rate is bigger than $0.5$, the local-set-based recovery algorithm is better;  When the success rate is smaller than $0.5$, the random algorithm is better. Figure~\ref{fig:Success_rate} shows the success rates given by the local-set-based recovery with three different graph partition algorithms. In each figure, the x-axis is the day ($50$ days in total); the y-axis is the success rate; the darker region means that local-set-based recovery algorithm fails and the lighter  region means that local-set-based recovery algorithm successes; and the number shows the percentage of success or fail within 50 days. We see that given 100 samples, the local-set-based recovery algorithms are slightly worse than the random algorithm; given 1000 samples, the local-set-based recovery algorithms are slightly better than the random algorithm.

\begin{figure}[htb]
  \begin{center}
    \begin{tabular}{cc}
 \includegraphics[width=0.4\columnwidth]{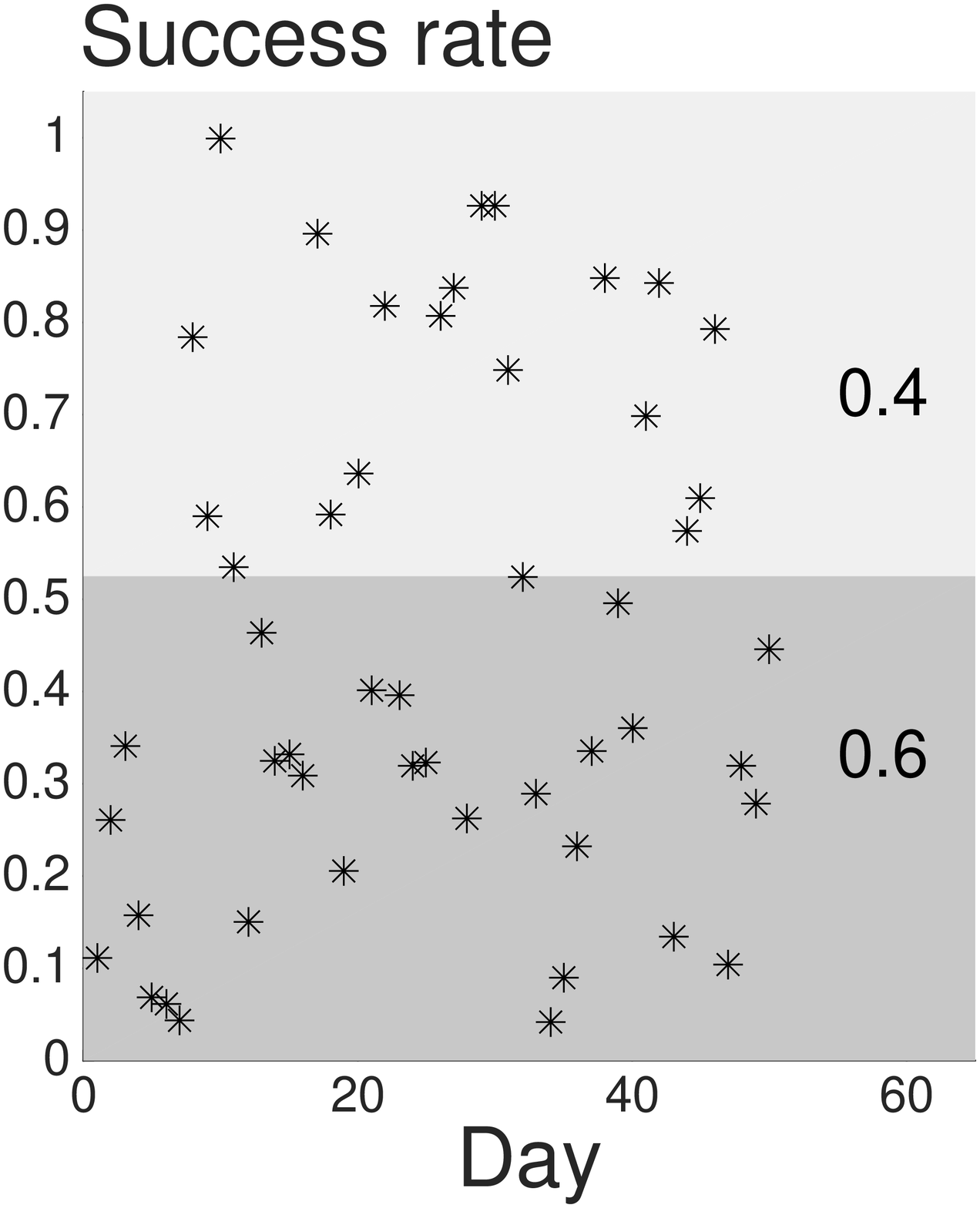}  & \includegraphics[width=0.4\columnwidth]{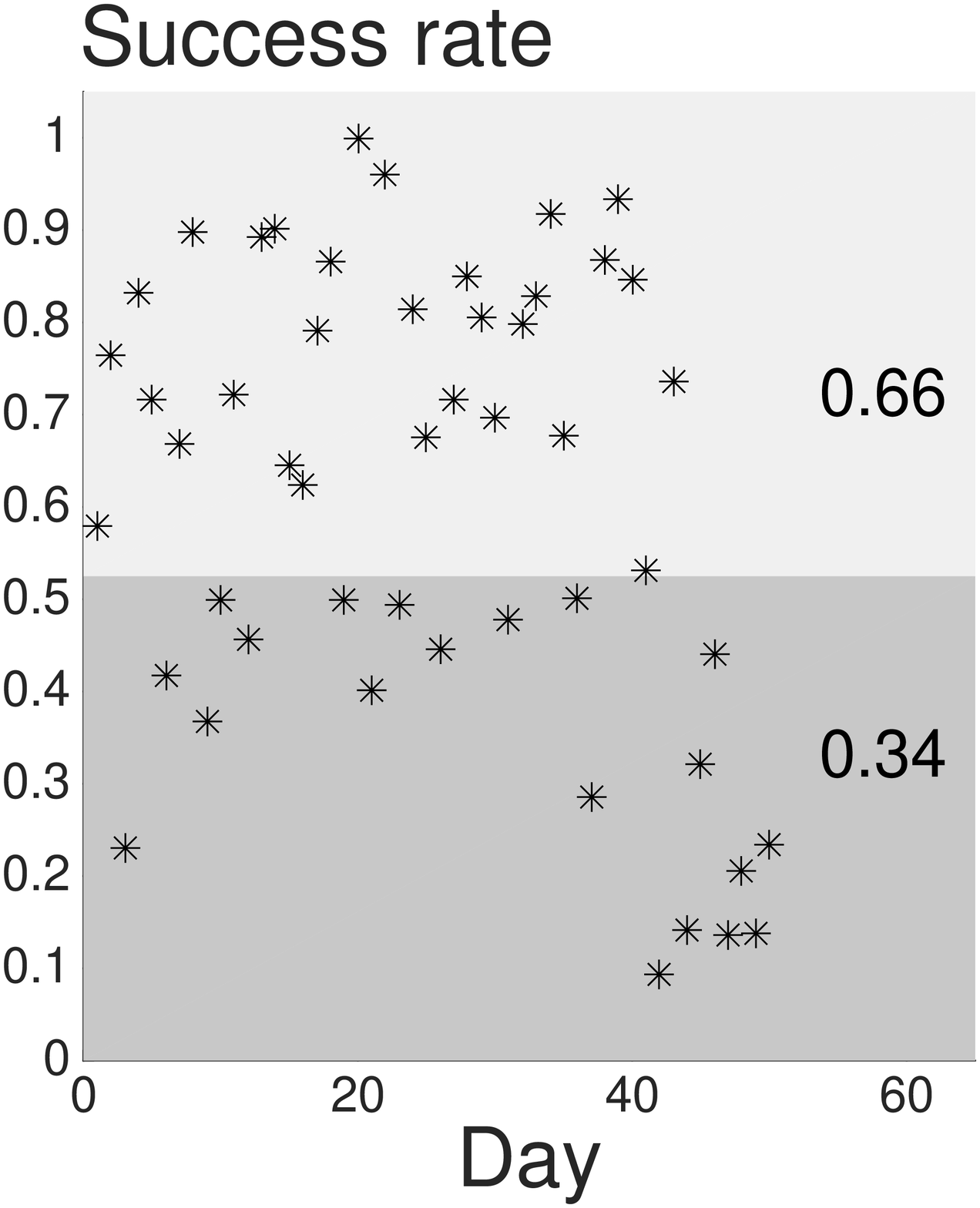}
\\
 {\small (a) Spectral clustering} & {\small (b) Spectral clustering} 
 \\
 { with $100$ samples.} & { with $1000$ samples.} 
  \\
 \includegraphics[width=0.4\columnwidth]{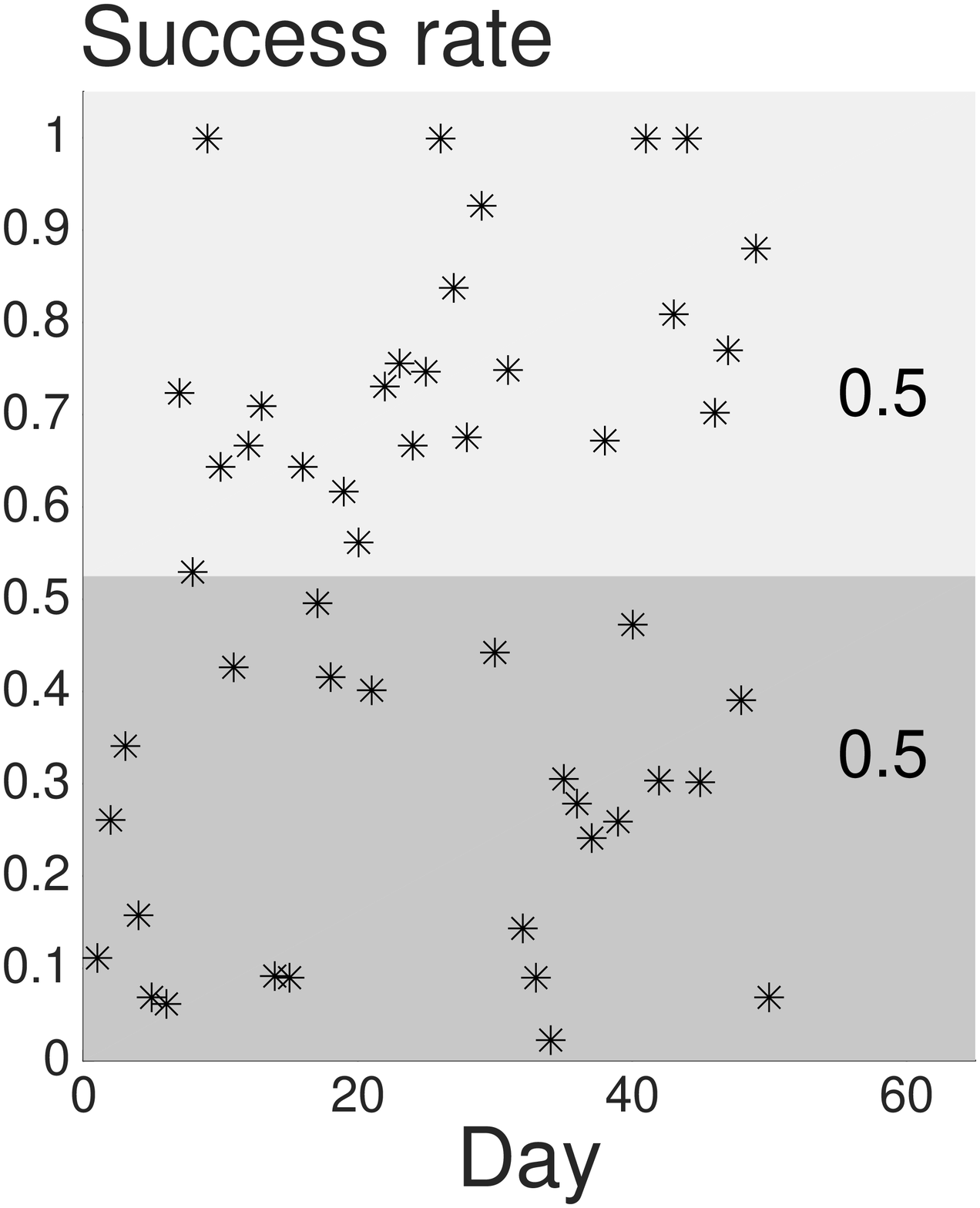}  & \includegraphics[width=0.4\columnwidth]{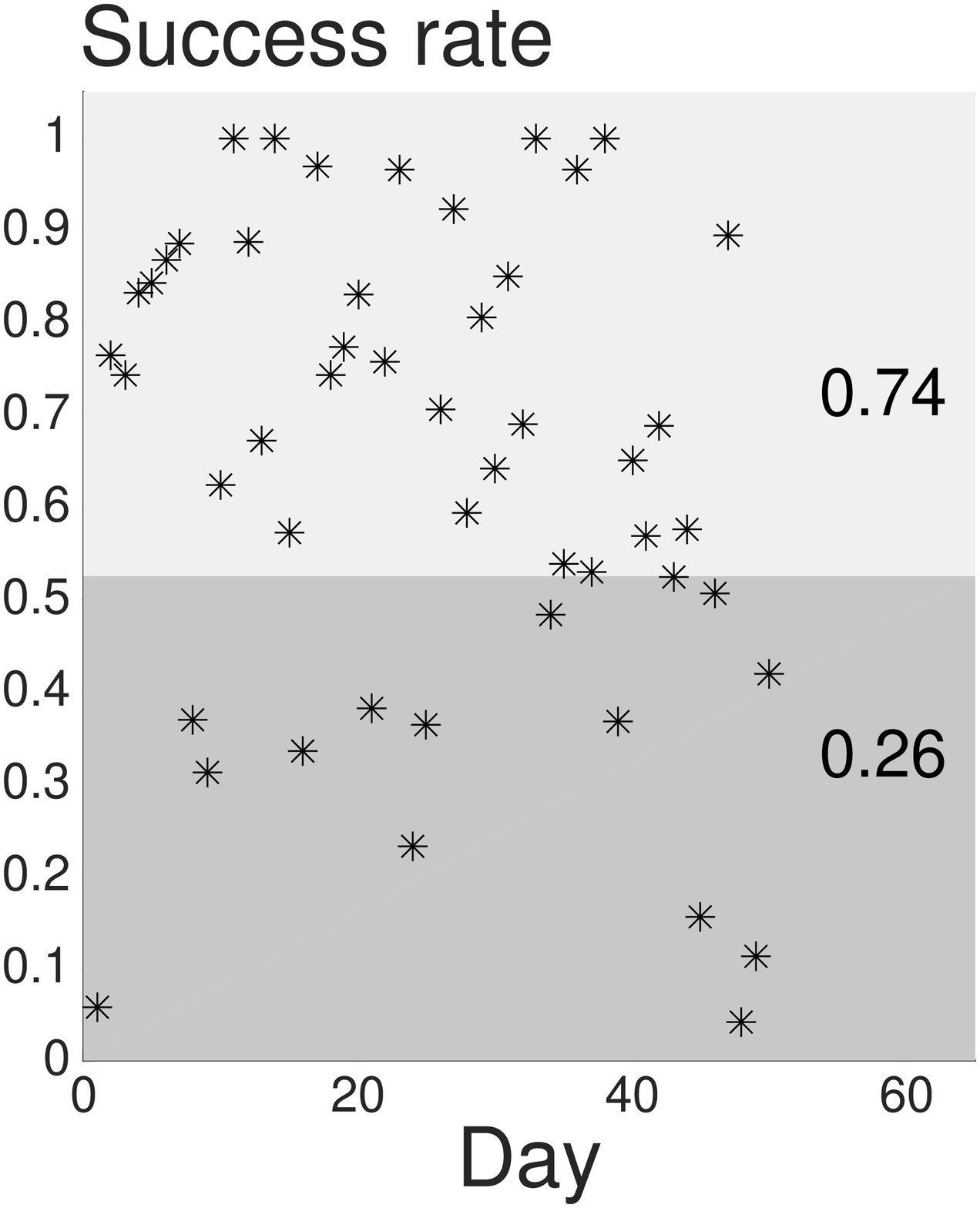}
\\
 {\small (c) Spanning tree} & {\small (d) Spanning tree} 
 \\
 { with $100$ samples.} & { with $1000$ samples.} 
  \\
 \includegraphics[width=0.4\columnwidth]{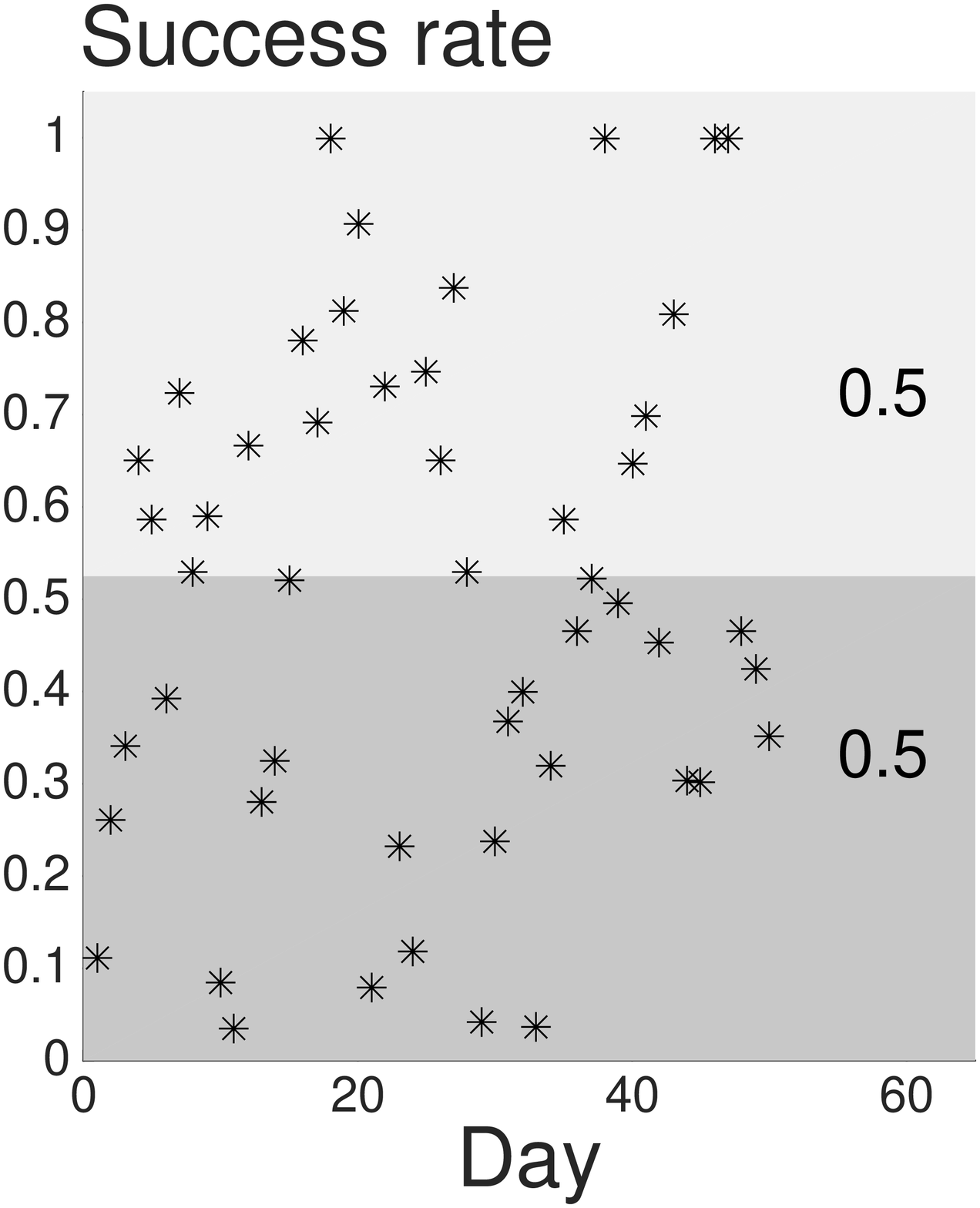}  & \includegraphics[width=0.4\columnwidth]{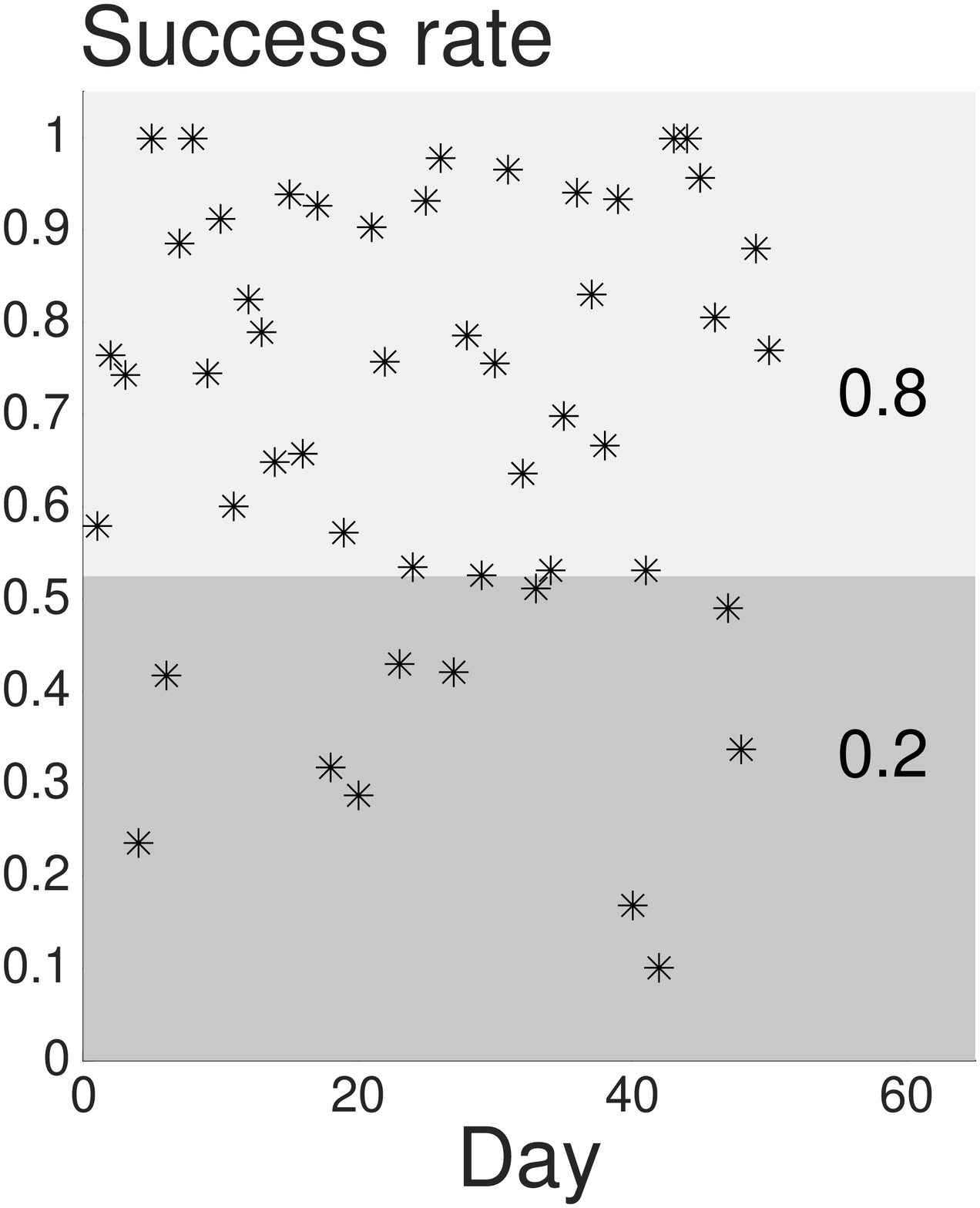}
\\
 {\small (e) 2-means} & {\small (f)  2-means} 
 \\
 { with $100$ samples.} & { with $1000$ samples.} 
  \\
\end{tabular}
  \end{center}
   \caption{\label{fig:Success_rate} Success rate of estimating the disease incidence.  }
\end{figure}

In Figure~\ref{fig:recovery_20}, we show the recovered states by the local-set-based recovery algorithm with 2-means partition on the $20$th day. When having a few samples, the local-set-based recovery algorithms can recover the states in general, but cannot zoom into details and provide accurate estimations; when taking more samples, the local-set-based recovery algorithms recover the states better and provide better estimations. We see that the local-set-based recovery algorithm with 2-means partition provides both good estimation and good visualization.

\begin{figure}[htb]
  \begin{center}
    \begin{tabular}{cc}
 \includegraphics[width=0.45\columnwidth]{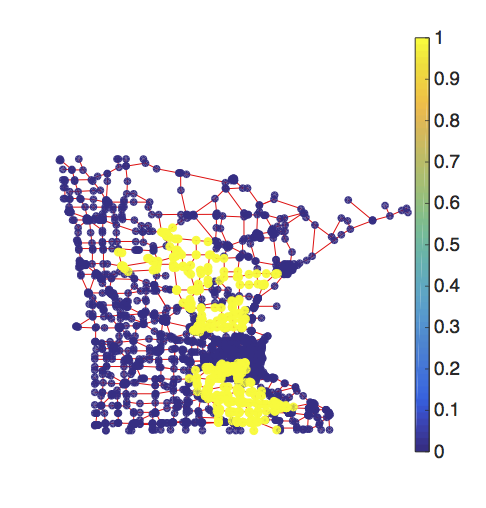}  & \includegraphics[width=0.45\columnwidth]{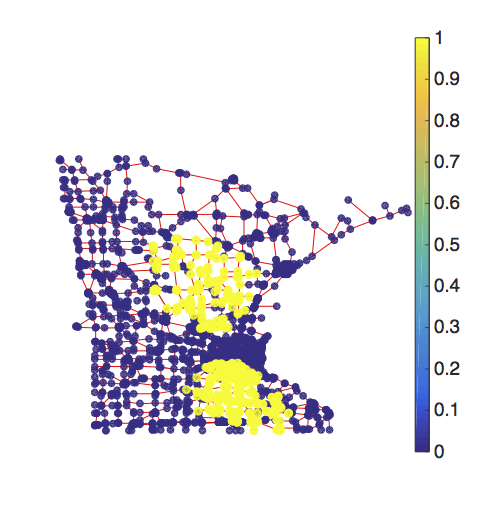}
\\
 {\small (e) 2-means} & {\small (f)  2-means} 
 \\
 { with $100$ samples.} & { wiith $1000$ samples.} 
  \\
\end{tabular}
  \end{center}
   \caption{\label{fig:recovery_20} Recovery of the node state on the $20$th day.  }
\end{figure}

\section{Representations of Piecewise-smooth Graph Signals}
\label{sec:R_PS}
To be able to deal with as wide a class of real-world graphs signals as possible, we combine smooth and piecewise-constant graph signals into piecewise-smooth graph signals.

\subsection{Graph Signal Models}
Based on smooth graph signal models, we have two types of piecewise-smooth signal models, including the piecewise-polynomial class and the piecewise-bandlimited class.

\begin{defn}
A graph signal $\x$ is piecewise-polynomial with $C$ pieces and degree $K$ when
\begin{equation*}
	\x \ = \  \sum_{c=1}^C  \x^{(c)}  {\bf 1}_{ S_c},
\end{equation*}
where $\x^{(c)}$ is a $k$th order polynomial signal on the subgraph $G_{S_c}$ with $x^{(c)}_i  \ = \   a_{c} + \sum_{j \in S_c} \sum_{k=1}^K a_{k, j, c} d^k(v_i, v_j). $
Denote this class by $\PPL(C, K)$. 
\end{defn}
\noindent $\PPL(1, K)$ is the polynomial class with degree $K$, $\PL(K)$ from Definition~\ref{df:polynomial_sig}, and $\PPL(C, 0)$ is the piecewise-constant class with $C$ pieces, $\PC(C)$ from Definition~\ref{df:pc_gen}. The degrees of freedom for a local set $S_c$ at the polynomial degree $k$ is the number of origins, that is,
$
\left\| \begin{bmatrix}
a_{k, 1, c} & a_{k, 2, c} &  \ldots & a_{k, |S_c|, c}
\end{bmatrix} \right\|_0.$

\begin{defn}
A graph signal $\x$ is piecewise-bandlimited with $C$ pieces and bandwidth $K$ when
\begin{equation*}
	\x \ = \  \sum_{c=1}^C  \x^{(c)}  {\bf 1}_{ S_c},
\end{equation*}
where $\x^{(c)}$ is a bandlimited signal on the subgraph $G_{S_c}$ with $x^{(c)}_i = \sum_{k=0}^K a_{k, c} \Vm^{(c)}_{i,k}$, and $\Vm^{(c)}$ is a graph Fourier basis of $G_{S_c}$.
Denote this class by $\PBL_{\Vm}(C, K)$. 
\end{defn}
\noindent We use zero padding to ensure $\Vm^{(c)} \in \R^{N \times N}$ for each~$G_{S_c}$. Still, $\Vm^{(c)}$ can be the eigenvector matrix of  the adjacency matrix, graph Laplacian matrix or the transition matrix.

\subsection{Graph Dictionary}
\label{ssec:Graph_D}
The representations of piecewise-smooth graph signals is based on the local-set piecewise-constant dictionary. To represent piecewise-smooth graph signals, we use multiple atoms for each local set. We take the piecewise-polynomial signals as an example. For each local set, 
\begin{equation*}
	\D_{S_{i, j}}   \ = \ \begin{bmatrix}
	{\bf 1}  &  \D^{(1)}_{S_{i, j}}  &  \D^{(2)}_{S_{i, j}}   &  \ldots &  \D^{(K)}_{S_{i, j}} 
	\end{bmatrix},
\end{equation*}
where  $(  \D_{S_{i, j}}^{(k)} )_{m,n}  = d^k(v_m, v_n)$, when $v_m, v_n \in S_{i,j}$; and 0, otherwise. The number of atoms in $\D_{S_{i, j}}^{(k)}$ is $1+K |S_{i,j}|$. We collect the sub-dictionaries for all the multiresolution local sets to form the~\emph{local-set-based piecewise-smooth dictionary}, that is,
$ \D_{\rm LSPS}~=~\{ \D_{S_{i, j}}  \}_{i=0, j = 1}^{T,  2^i} $. The number of atoms in $\D_{\rm LSPS}$ is $O(K N T)$, where $K$ is the maximum degree of polynomial, $N$ is the size of the graph and $T$ is the maximum level of the decomposition. When we use  even partitioning, the total number of atoms is $O(K N \log N)$.

Similarly, to model piecewise-bandlimited signals, we replace $\D_{S_{i, j}}$ by the graph Fourier basis of each subgraph $G_{S_{i,j}}$. The total number of atoms of the corresponding $\D_{\rm LSPS}$ is then $O(N T)$. For piecewise-smooth signals, we cannot use the sparse coding to do exact approximation. To minimize the approximation error, both the sizes and the shapes of the local sets matter for piecewise-smooth graph signals.

\subsubsection{Properties}
\begin{myThm}
\label{thm:PPL}
For all $\x \in \PBL_{\Vm_L}(C,K)$, where $\Vm_L$ is the graph Fourier basis of the graph Laplacian matrix $\LL$,   we have $\left\| \a^* \right\|_0 \leq  2 K T\left\|  \Delta \x_{\PC} \right\|_0$, where $T$ is the maximum level of the decomposition, $\x_{\PC}$ is a piecewise-constant signal that corresponds the same local sets with $\x$ and
\begin{eqnarray*}
   \a^* &  = &   \arg \min_{\a}   \left\| \a \right\|_0,
   \\
   && {\rm subject~to~}  \left\| \x - \D_{\rm LSPS} \a \right\|_2^2 \leq \epsilon_{\rm par}  \left\| \x \right\|_2^2,
\end{eqnarray*}
where $\epsilon_{\rm par} $ is a constant determined by graph partitioning.
\end{myThm}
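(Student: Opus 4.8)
\noindent The plan is to prove the inequality by \emph{exhibiting a single feasible coefficient vector} whose support has size at most $2KT\|\Delta\x_{\PC}\|_0$; since $\a^*$ is by definition the sparsest vector meeting the error constraint, such a construction immediately forces $\|\a^*\|_0\le 2KT\|\Delta\x_{\PC}\|_0$. I would build this vector in two stages, first locating the pieces of $\x$ with the piecewise-constant machinery and then inserting the bandlimited content piece by piece.

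First I would fix the geometry using the piecewise-constant layer. Since $\x_{\PC}$ is constant on each piece $S_c$ underlying $\x$, the quantity $\|\Delta\x_{\PC}\|_0$ counts exactly the boundary edges between pieces, and the corollary following Theorem~\ref{thm:sparse} applied to $\x_{\PC}$ yields a collection $\mathcal{L}$ of multiresolution local sets, with $|\mathcal{L}|\le 2T\|\Delta\x_{\PC}\|_0$, whose indicator atoms represent $\x_{\PC}$. Mirroring that argument on the decomposition tree, I would extract from $\mathcal{L}$ a subcollection of local sets, each contained in a single piece $S_c$, that tiles the pieces on which $\x$ is supported; the factor $2T$ and the dependence on $\|\Delta\x_{\PC}\|_0$ are inherited verbatim from the piecewise-constant count, because each boundary edge is straddled by at most $T$ tree nodes and each wavelet atom expands into two indicator atoms.

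Second I would install the bandlimited content. For every local set $S_{i,j}$ in this subcollection, the dictionary $\D_{\rm LSPS}$ supplies the graph Fourier basis of the subgraph $G_{S_{i,j}}$, from which I retain only the first $K$ atoms and use them to match the bandlimited signal $\x^{(c)}$ restricted to $S_{i,j}$. The total support size is then at most $K\cdot|\mathcal{L}|\le 2KT\|\Delta\x_{\PC}\|_0$, which is the asserted bound.

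The hard part will be certifying feasibility, that is, that this vector $\a$ satisfies $\|\x-\D_{\rm LSPS}\a\|_2^2\le\epsilon_{\rm par}\|\x\|_2^2$. The obstacle is that a signal bandlimited on $G_{S_c}$, once restricted to a strictly smaller local set $S_{i,j}\subset S_c$, is in general \emph{not} bandlimited with respect to the Fourier basis of $G_{S_{i,j}}$, so projecting it onto the first $K$ modes of $G_{S_{i,j}}$ discards some energy; the imperfect alignment of the multiresolution local sets with the true piece boundaries contributes a second source of error. I would bound the per-set projection residual using the spectral content of $\x^{(c)}$ and the way the partition subdivides each piece, sum these residuals over the finitely many local sets in $\mathcal{L}$, and normalize by $\|\x\|_2^2$. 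Because the statement only claims the existence of a partition-dependent constant $\epsilon_{\rm par}$ rather than an explicit value, it suffices to verify that this aggregate residual is finite and partition-determined; the counting bound established in the first two stages then transfers directly to $\a^*$.
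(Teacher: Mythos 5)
Your skeleton is exactly the paper's: exhibit one feasible coefficient vector supported on at most $2T\left\|\Delta\x_{\PC}\right\|_0$ tree local sets (via the corollary to Theorem~\ref{thm:sparse}) with at most $K$ subgraph-Fourier atoms per set, then invoke minimality of $\a^*$. The gap is in the feasibility certificate, which you correctly flag as ``the hard part'' and then dispose of with a remark that is circular rather than probative: saying that ``it suffices to verify that this aggregate residual is finite and partition-determined'' proves nothing, because for each fixed $\x$ the residual is trivially finite (everything is finite-dimensional), while ``partition-determined'' --- i.e., bounded by $\epsilon_{\rm par}\left\|\x\right\|_2^2$ with $\epsilon_{\rm par}$ independent of the signal, uniformly over the whole class $\PBL_{\Vm_L}(C,K)$ --- is precisely the conclusion that has to be established. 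Until that uniform bound is shown, your constructed vector is not known to satisfy the constraint $\left\|\x-\D_{\rm LSPS}\a\right\|_2^2\le\epsilon_{\rm par}\left\|\x\right\|_2^2$, so the comparison with the minimizer $\a^*$ never gets off the ground.

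The paper closes exactly this hole with a concrete spectral mechanism that your proposal never identifies, and it is the substance of the proof. First, bandlimitedness with respect to the Laplacian eigenbasis is converted into a quadratic-form (smoothness) bound, $\x^T\LL\x\le\lambda_K\,\x^T\x$; this is the only place the hypothesis that $\Vm_L$ comes from $\LL$ is used, and nothing analogous is available for a generic basis. Second, the Laplacian quadratic form decomposes across any partition as the sum of subgraph quadratic forms plus a nonnegative cut term, $\x^T\LL\x=\sum_{c}\x_{S_c}^T\LL_{S_c}\x_{S_c}+\x^T\LL_{\rm cut}\x$, so the smoothness budget transfers from each signal piece to the smaller tree local sets tiling it. Third, on each such local set $S$, the energy lost by projecting onto the first $K$ eigenvectors of $\LL_{S}$ is at most $\bigl(\x_{S}^T\LL_{S}\x_{S}\bigr)/\lambda^{(S)}_{K+1}$, the standard bound on high-frequency energy by the quadratic form. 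Summing these per-set residuals over the at most $2T\left\|\Delta\x_{\PC}\right\|_0$ local sets and normalizing yields $\epsilon_{\rm par}\le\lambda_K/\min_{S}\lambda^{(S)}_{K+1}$, a constant depending only on the graph, the partition, and $K$. Your outline correctly locates the obstacle (a signal bandlimited on $G_{S_c}$ is not bandlimited on a smaller $S_{i,j}\subset S_c$), but identifying the obstacle is not the same as overcoming it; without the chain bandlimited $\Rightarrow$ smooth $\Rightarrow$ smoothness localizes $\Rightarrow$ per-set spectral projection bound, the proof is incomplete.
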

\begin{proof}
The main idea is that we approximate a bandlimited signal in the original graph by using bandlimited signals in subgraphs. Based on the eigenvectors of graph Laplacian matrix, we define the bandlimited space, where each signal can be represented as $\x = \Vm_{(K)} \a,$ where $\Vm_{(K)}$ is the submatrix of $\Vm$ containing the first $K$ columns in $\Vm$. We can show that this bandlimited space is a subspace of the total pairwise smooth space $\{\x: \x^T \LL \x \leq  \lambda_K  \x^T \x \}$. 

\begin{eqnarray*}
&& \x^T \LL \x 
\  = \  \sum_{i,j \in \E} \W_{i,j} (x_i - x_j)^2
\\
& = & \sum_{S_c} \sum_{i,j \in \E_{S_c}} \W_{i,j} (x_i - x_j)^2  + \sum_{i,j \in (\E / \cup_{c} \E_{S_c}) } \W_{i,j} (x_i - x_j)^2
\\
& = &  \sum_{S_c} \x_{S_c}^T \LL_{S_c} \x_{S_c} + \x^T \LL_{\rm cut} \x \ \leq \    \lambda_K  \x^T \x,
\end{eqnarray*}
where $\LL_{S_c}$ is the graph Laplacian matrix of the subgraph $G_{S_c}$ and  $\LL_{S_c}$ stores the residual edges, which are cut in the graph partition algorithm.

Thus, $\{\x: \x^T \LL \x \leq  \lambda_K  \x^T \x \}$ is a subset of $\bigcup_{S_c}  \{\x_{S_c}: \x_{S_c}^T \LL_{S_c} \x_{S_c} \leq  \lambda_K  \x^T \x - \x^T \LL_{\rm cut} \x  \}$; that is, any  total pairwise smooth graph signal in the whole graph can be precisely represented by total pairwise smooth graph signals in the subgraphs.

In each local set, when we use the bandlimited space 
$\{ \x: \x = {\Vm_{S_c}}_{(K)} \a \}$ to approximate the space $\{\x_{S_c}: \x_{S_c}^T \LL_{S_c} \x_{S_c} \leq   c \x_{S_c}^T \x_{S_c} \}$, the maximum error we suffer from is $c \x_{S_c}^T \x_{S_c} / \lambda^{(S_c)}_{K+1}$, which is solved by the following optimization problem,
\begin{eqnarray*}
&&	\max_{\x}  \left\|  \x - \Vm_{S_c} \Vm_{S_c}^T \x \right\|_2^2
	\\
&&	{\rm subject~to:}~   \x^T \LL_{S_c} \x \leq   c  \x^T \x.
\end{eqnarray*}
In other words, in each local set, the maximum error to represent $ \{\x_{S_c}: \x_{S_c}^T \LL_{S_c} \x_{S_c} \leq  \lambda_K  \x^T \x - \x^T \LL_{\rm cut} \x   \}$ is 
$(\lambda_K  \x^T \x - \x^T \LL_{\rm cut} \x)/\lambda^{(S_c)}_{K+1}$. Since all the local sets share the variation budget of $\lambda_K \x^T \x$ together, the maximum error we suffer from is $\epsilon_{\rm par} \left\| \x \right\|_2^2 = (\lambda_K  \x^T \x - \x^T \LL_{\rm cut} \x)/ \min_{S_c} \lambda^{(S_c)}_{K+1}$.

In Theorem~\ref{thm:sparse}, we have shown that we need at most $2 T \left\| \Delta \x_{\PC} \right\|_0$ local sets to represent the piecewise-constant template of $\x$. Since we use at most $K$ eigenvectors in each local set, we obtain the results in Theorem~\ref{thm:PPL}.
\end{proof}

We further use the local-set piecewise-smooth dictionary to detect piecewise-smooth signals from random noises.
\begin{myThm}
\label{thm:PPL_Detect}
We consider statistically testing the null and alternative hypothesis,
\begin{eqnarray*}
H_0 &:& \y \sim \N(0, \sigma^2 \Id), 
\\
H_1 &:&  \y \sim \N(\x, \sigma^2 \Id),~\x \in \PBL_{\Vm_L}(C,K).
\end{eqnarray*}
We solve the following optimization problem
\begin{eqnarray*}
   \a_{\y}^* &  = &   \arg \min_{\a}  \left\| \y - \D_{\rm LSPS} \a \right\|_2^2   
   \\
   && {\rm subject~to~}  \left\| \a \right\|_0 \leq 2KT  \left\| \Delta \x_{\rm PC} \right\|_0,
\end{eqnarray*}
by using the matching pursuit algorithm. We reject the null if $\left\| \a_{\y}^* \right\|_{\infty} > \tau$. Set $\tau = \sigma \sqrt{2 \log(KNT/\delta)}$. If
$$
\frac{\left\| \x \right\|_2}{\sigma} \geq C \sqrt{2KT \left\| \Delta \x_{\rm PC} \right\|_0 }  \sqrt{8 \log(\frac{KNT}{\delta})},
$$
where $C = \max_{ |\Omega| \leq 2KT \left\| \Delta \x_{\rm PC} \right\|_0  } \left\| (\D_{\rm LSPS})_{\Omega} \right\|_2 /(1- \sqrt{\epsilon_{\rm par}}) $. Then under $H_0$, $\mathbb{P}({\rm Reject}) \leq \delta$, and under $H_1$, $\mathbb{P} ({\rm Reject}) \leq \delta$.
\end{myThm}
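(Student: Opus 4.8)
The plan is to split this into a Type~I (false alarm) bound under $H_0$ and a Type~II (missed detection) bound under $H_1$, both calibrated by the threshold $\tau$. I note first that the stated conclusion under $H_1$ should read $\mathbb{P}({\rm Reject}) \geq 1-\delta$; as printed both lines coincide, which cannot be the intended detection guarantee. The common engine is Gaussian concentration of the inner products $\langle \epsilon, \d_i \rangle$ between the noise and the unit-norm atoms $\d_i$ of $\D_{\rm LSPS}$, combined with a union bound over the $O(KNT)$ atoms. The calibration $\tau = \sigma\sqrt{2\log(KNT/\delta)}$ is precisely what makes this union bound collapse to $\delta$, and the factor $\sqrt{8}$ appearing in the SNR hypothesis (versus the $\sqrt{2}$ in $\tau$) is exactly what supplies a $2\tau$-versus-$\tau$ separation margin between the two hypotheses.

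For $H_0$, where $\y = \epsilon \sim \N(0,\sigma^2\Id)$, each normalized atom gives $\langle \epsilon, \d_i\rangle \sim \N(0,\sigma^2)$, so $\mathbb{P}(|\langle \epsilon, \d_i\rangle| > \tau) \leq 2e^{-\tau^2/(2\sigma^2)}$, and a union bound over the $O(KNT)$ atoms with the chosen $\tau$ yields $\mathbb{P}\!\left(\max_i |\langle \epsilon, \d_i\rangle| > \tau\right) \leq \delta$. I would then show that the test statistic is controlled by this maximum: the first matching-pursuit coefficient equals $\max_i |\langle \epsilon, \d_i\rangle|$, and I would argue that the remaining coefficients are no larger in magnitude, so that $\left\|\a_{\y}^*\right\|_{\infty} \leq \tau$ with probability at least $1-\delta$, giving $\mathbb{P}({\rm Reject}) \leq \delta$.

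For $H_1$, write $\y = \x + \epsilon$ with $\x \in \PBL_{\Vm_L}(C,K)$. By Theorem~\ref{thm:PPL} there is a representation $\x = \D_{\rm LSPS}\a$ supported on $\Omega$ with $|\Omega| \leq s := 2KT\left\|\Delta \x_{\rm PC}\right\|_0$ and $\left\|\x - (\D_{\rm LSPS})_{\Omega}\a_{\Omega}\right\|_2 \leq \sqrt{\epsilon_{\rm par}}\left\|\x\right\|_2$, hence $\left\|(\D_{\rm LSPS})_{\Omega}\a_{\Omega}\right\|_2 \geq (1-\sqrt{\epsilon_{\rm par}})\left\|\x\right\|_2$. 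Since $\left\|(\D_{\rm LSPS})_{\Omega}\a_{\Omega}\right\|_2 \leq \left\|(\D_{\rm LSPS})_{\Omega}\right\|_2 \sqrt{s}\,\left\|\a\right\|_{\infty}$, replacing the operator norm by its worst case over $|\Omega| \leq s$ (which is exactly the constant $C$ in the statement) gives $\left\|\a\right\|_{\infty} \geq \left\|\x\right\|_2/(C\sqrt{s})$. The SNR hypothesis then forces $\left\|\a\right\|_{\infty} \geq \sigma\sqrt{8\log(KNT/\delta)} = 2\tau$, so the signal carries a coefficient of magnitude at least $2\tau$. Reusing the $H_0$ concentration event, under which the noise shifts each relevant inner product by at most $\tau$, matching pursuit on $\y$ must still report this coefficient at magnitude exceeding $2\tau - \tau = \tau$, so $\left\|\a_{\y}^*\right\|_{\infty} > \tau$ and we reject, with probability at least $1-\delta$.

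The main obstacle is the rigorous control of matching pursuit, which is greedy and data-adaptive rather than a clean linear projection. Under $H_0$ the residuals depend on which atoms were already chosen, so bounding every coefficient by $\max_i |\langle \epsilon, \d_i\rangle|$ must be argued uniformly over all selected supports of size $\leq s$; under $H_1$ one must ensure the greedy rule actually lands on the informative high-magnitude atom instead of being diverted by correlated atoms, and one must relate the representation coefficient $\a$ (from a non-orthonormal dictionary) to the inner-product selection that drives the algorithm. This is where the operator norm $\left\|(\D_{\rm LSPS})_{\Omega}\right\|_2$ inside $C$ plays the role of a conditioning/incoherence guarantee. Making these two adaptivity arguments airtight is the crux; once they are in place, the remaining algebra linking the SNR condition to the $2\tau$ margin is routine.
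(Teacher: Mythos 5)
Your proposal follows essentially the same route as the paper's own proof: the same Type~I/Type~II split with $\tau$ calibrated by a union-bounded Gaussian maximum over the $O(KNT)$ atoms under $H_0$, and under $H_1$ the same algebraic chain $\left\| \a_{\x}^* \right\|_{\infty} \geq \left\| \a_{\x}^* \right\|_{2} / \sqrt{\left\| \a_{\x}^* \right\|_0} \geq (1-\sqrt{\epsilon_{\rm par}})\left\| \x \right\|_2 \big/ \left( \left\| (\D_{\rm LSPS})_{\Omega} \right\|_2 \sqrt{2KT \left\| \Delta \x_{\rm PC} \right\|_0} \right)$ from Theorem~\ref{thm:PPL}, combined with the SNR condition to obtain the $2\tau$-versus-$\tau$ separation via $\left\| \a_{\y}^* \right\|_{\infty} \geq \left\| \a_{\x}^* \right\|_{\infty} - \tau$. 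The two adaptivity issues you flag as the crux (uniformly bounding the greedy matching-pursuit coefficients of pure noise by the maximal atom--noise correlation, and guaranteeing the pursuit on $\y$ retains a coefficient within $\tau$ of the signal's) are precisely the two steps the paper asserts without proof, and your correction of the $H_1$ conclusion to $\mathbb{P}({\rm Reject}) \geq 1-\delta$ matches the paper's evident intent of controlling the type~2 error.
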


\begin{proof}
Under the null, we have $\left\| \a_{\e}^* \right\|_{\infty} < \sigma \sqrt{2 \log( KNT /\delta)}$ with probability at least $1-\delta$. We set $\tau = \sigma \sqrt{2 \log(KNT/\delta)}$ to control the type 1 error.

For the alternative, $\left\| \a_{\y}^* \right\|_{\infty} \geq \left\| \a_{\x}^* \right\|_{\infty} - \sigma \sqrt{2 \log(KNT/\delta)}$ with probability $1-\delta$. In Theorem~\ref{thm:PPL}, we see that it is feasible to have $\left\| \x - \D_{\rm LSPS} \a_\x \right\|_2^2 \leq \epsilon_{\rm par} \left\| \x \right\|_2^2$ with $\left\| \a_\x \right\|_0 = 2KT  \left\| \Delta \x_{\rm PC} \right\|_0$. We then have
\begin{eqnarray*}
 \left\| \a_{\x}^* \right\|_{\infty} \geq  \frac{ \left\| \a_{\x}^* \right\|_{2}  }{ \sqrt{ \left\| \a_{\x}^* \right\|_0 } } 
\geq 
  \frac{ 1 - \sqrt{ \epsilon_{\rm par}}}{ \left\| (\D_{\rm LSPS})_{\Omega} \right\|_2 \sqrt{ 2KT  \left\| \Delta \x_{\rm PC} \right\|_0  } }   \left\| \x \right\|_2,
\end{eqnarray*}
where $\Omega$ is the support of zero elements in $\a_{\x}^*$. Then, we force the lower bound of $\left\| \a_{\y}^* \right\|_{\infty} $ to control the type 2 error.
\end{proof}

\subsection{Graph Signal Processing Tasks}

\subsubsection{Approximation}
Good approximation for piecewise-smooth graph signals are potentially useful in many applications, such as visualization, denoising, active sampling~\cite{CastroWN:05,ChenVSK:15a} and semi-supervised learning~\cite{Zhu:05}. Similarly to Section~\ref{sec:pc_app}, we compare the windowed graph Fourier transform~\cite{ShumanRV:15} with the local-set-based dictionaries. We use the balance cut of the spanning tree to obtain the local sets~\cite{SharpnackKS:13}. The approximation is implemented by solving the sparse coding problem~\eqref{eq:sparse_coding_pc}.

\mypar{Experiments}
We still test the representations on two datasets, the Minnesota road graph~\cite{MinnesotaGraph} and the U.S city graph~\cite{ChenSMK:14}. On the Minnesota road graph, we simulate $100$ piecewise-constant graph signals as follows: we random choose three nodes as cluster centers and assign all other nodes to their nearest cluster centers based on the geodesic distance. We assign a random integer to each cluster. We further obtain $100$ piecewise-polynomial graph signals by element-wise multiplying a polynomial function, $- d^2(v_0, v) + 12 d(v_0, v)$, where $v_0$ is a reference node that assigns randomly. As an example, see Figure~\ref{fig:Signal}(a). 

On the U.S city graph, we use the real temperature measurements. The graph includes 150 weather stations and each weather station has $365$ days of recordings (one recording per day), for a total of 365 graph signals. As an example, see Figure~\ref{fig:Signal}(b).

\begin{figure}[htb]
  \begin{center}
    \begin{tabular}{cc}
 \includegraphics[width=0.45\columnwidth]{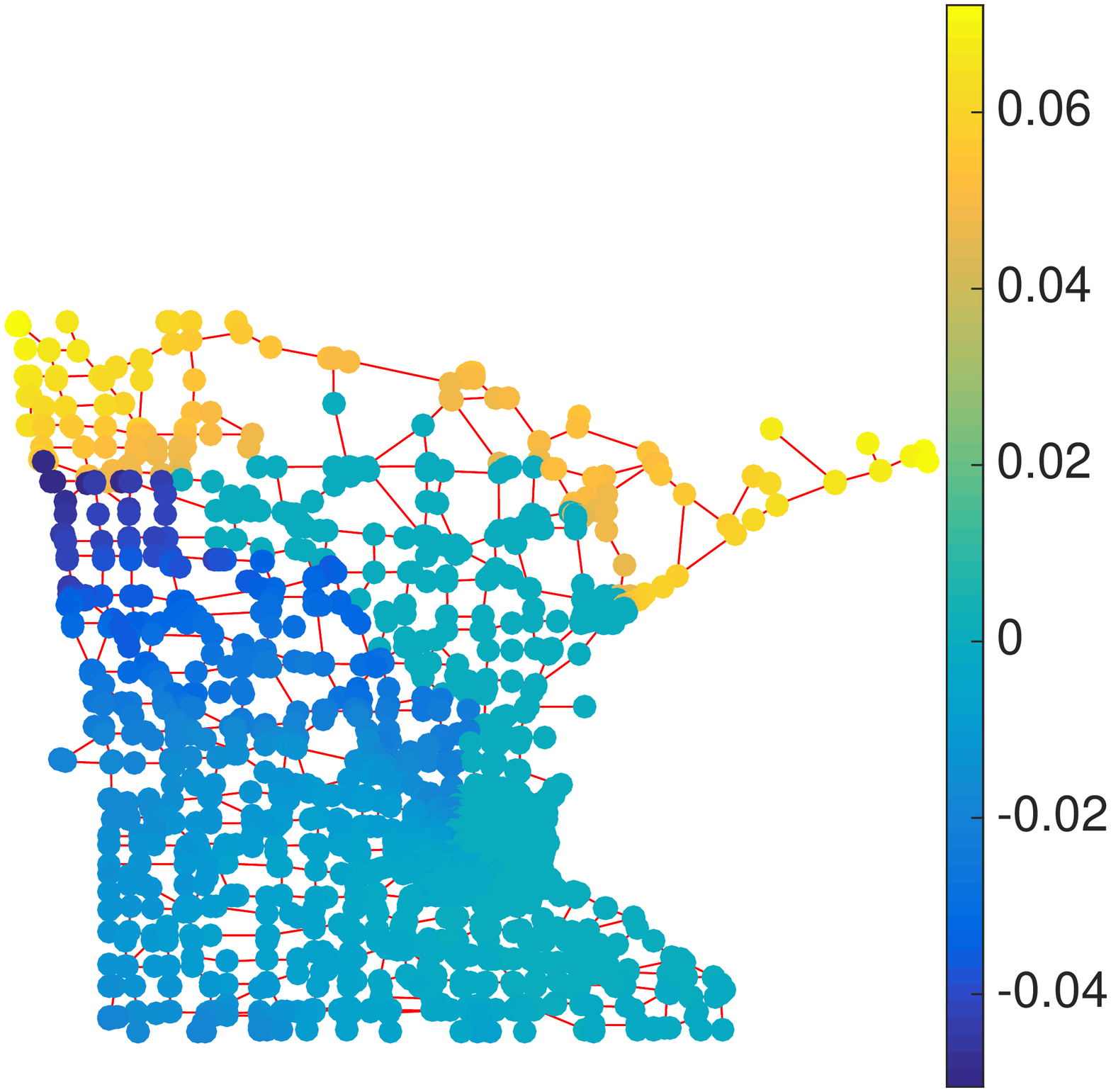} &
 \includegraphics[width=0.4\columnwidth]{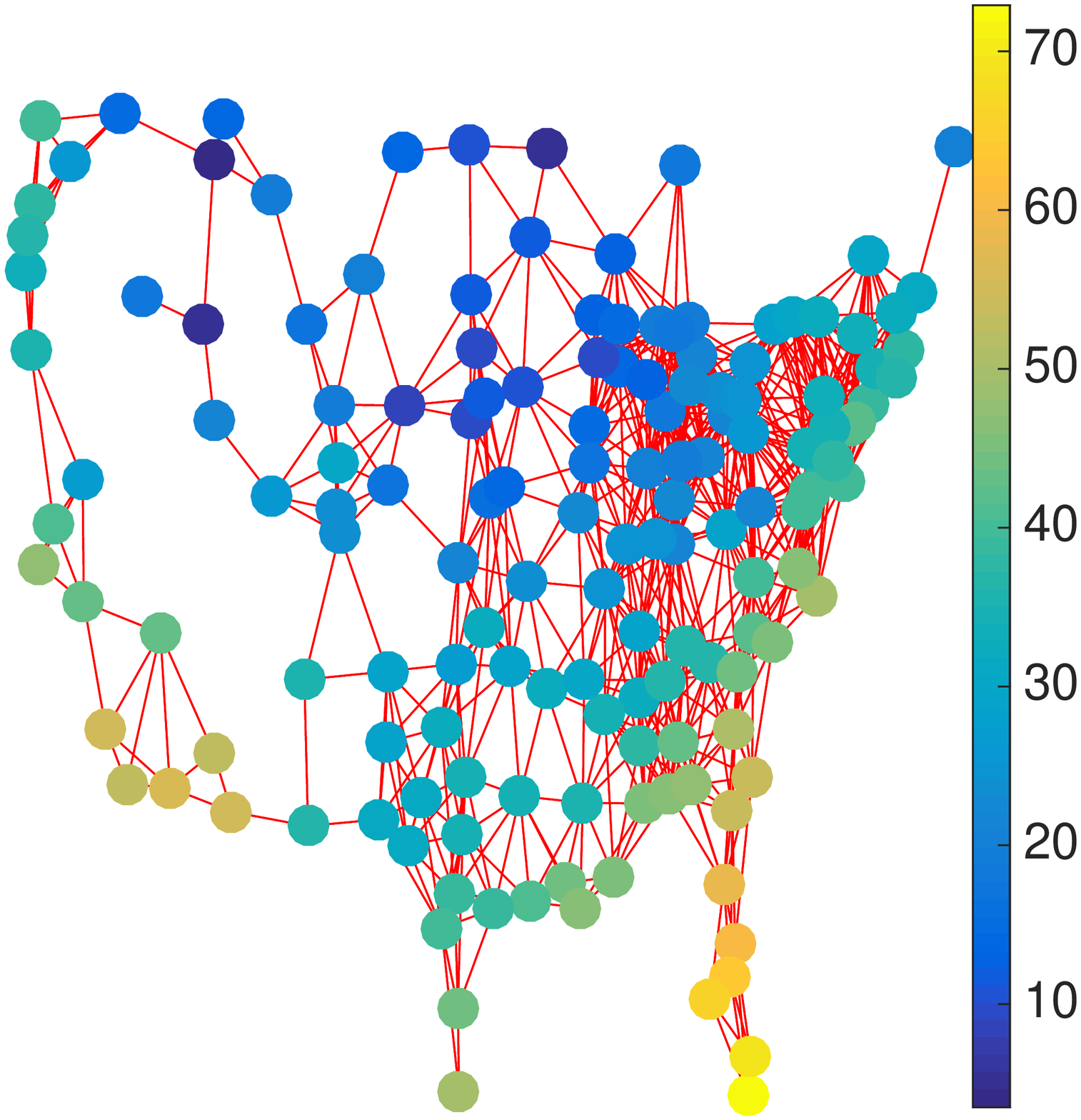}
\\
 {\small (a) Minnesota.} & {\small (b) U.S weather station.} 
\end{tabular}
  \end{center}
   \caption{\label{fig:Signal} Graph signal.}
\end{figure}

The approximation error is measured by the normalized mean square error. Figure~\ref{fig:Approximation} shows the averaged approximation errors.  LSPC denotes local-set-based piecewise-constant dictionary and LSPS denotes local-set-based piecewise-smooth dictionary.  For the windowed graph Fourier transform, we use $15$ filters; for LSPS, three piecewise-smooth models provide tight performances; here we show the results of the piecewise-polynomial smooth model with degree $K = 2$. We see that the local-set-based dictionaries perform better than the windowed graph Fourier transform; local-set-based piecewise-smooth dictionary is slightly better than local-set-based piecewise-constant dictionary; even though the windowed graph Fourier transform is solid in theory, provides highly redundant representations and is useful for visualization, it does not well approximate complex  graph signals.
\begin{figure}[htb]
  \begin{center}
    \begin{tabular}{cc}
 \includegraphics[width=0.4\columnwidth]{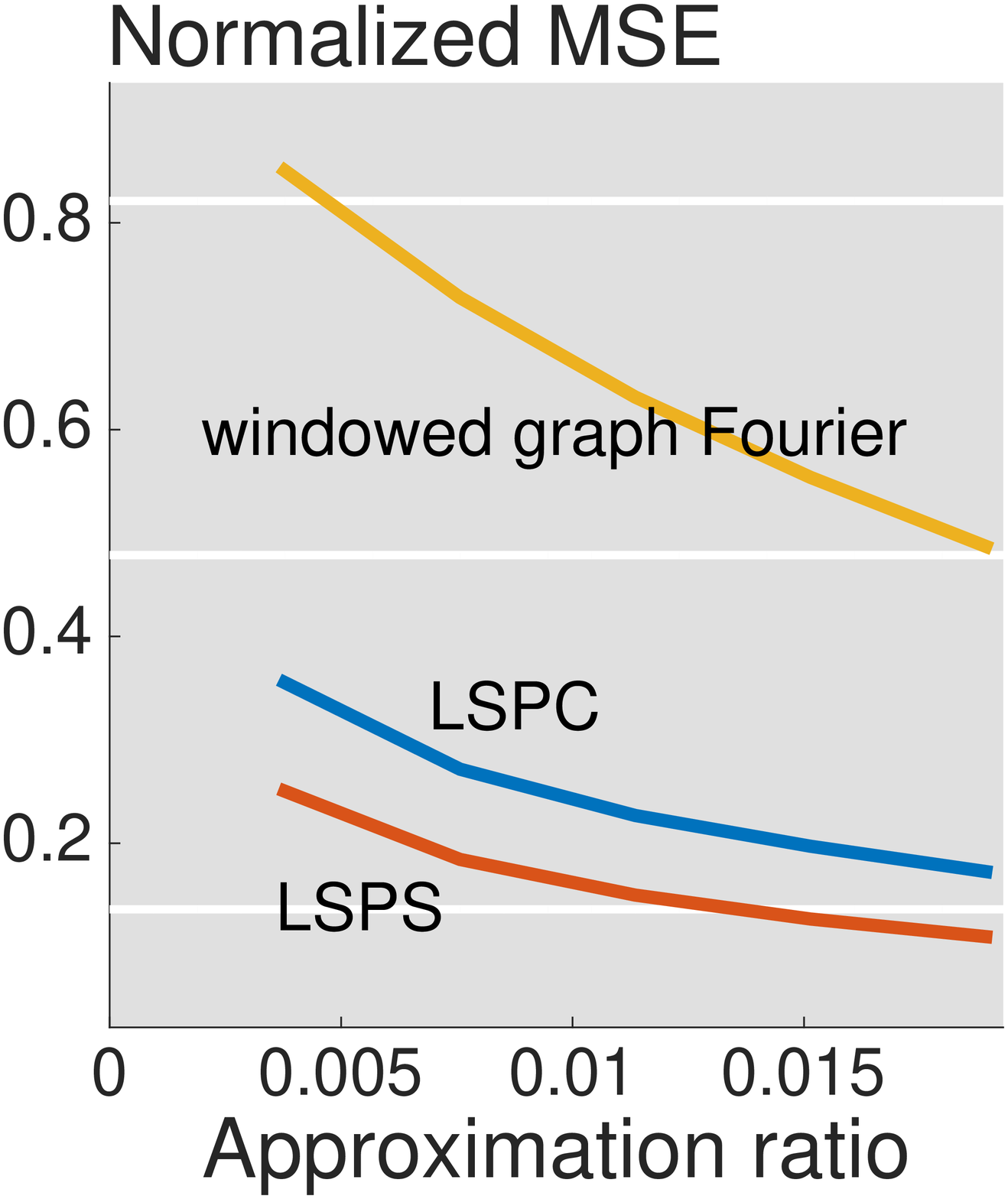} &
 \includegraphics[width=0.4\columnwidth]{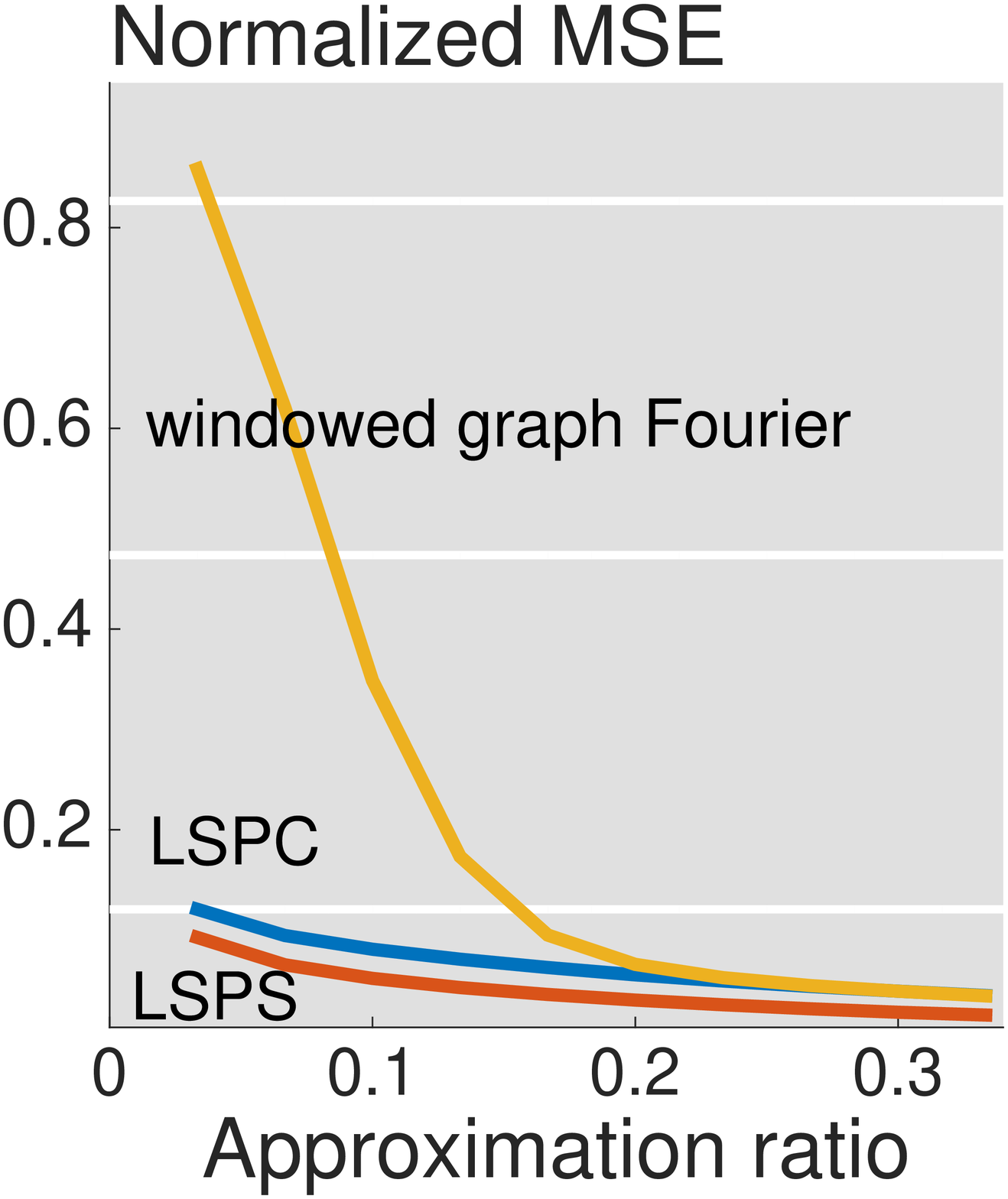}
\\
 {\small (a) Minnesota.} & {\small (b) U.S weather station.} 
\end{tabular}
  \end{center}
   \caption{\label{fig:Approximation} Approximation error. Approximation ratio is the percentage of used coefficients ( $s$ in~\eqref{eq:sparse_coding_pc}). }
\end{figure}

\subsubsection{Sampling and Recovery}
The sampling and recovery techniques of piecewise-smooth graph signals are similar to those of smooth graph signals. The basic idea is to assume the energy of a graph signal concentrates on a few expansion coefficients. For smooth graph signals, it is clear that the energy concentrates on the low-pass frequency band; however, for piecewise-smooth graph signals, we do not have any prior knowledge. We leave the  blind recovery as the future work. Some related works about the blind recovery are discussed in~\cite{SegarraMLR:15, VarmaCK:15}.

\subsection{Case Study: Environmental Change Detection}
This case study is motivated to providing an intuitive and reliable approach to detect environmental change using sparse coding. The detected change has many potential applications, such as offering suggestion to the authorities, or monitoring the variation trend in a local or global view.  Besides, most environmental data is collected by sensors, detecting anomaly records can give prior knowledge on the status of sensors and functioned as a pre-processing procedure for subsequent sensor data analysis. 
 
We study the daily distribution of \ce{SO2} during ${\rm 2014}$ in mainland U.S., the underlying graph is composed by ${\rm 444}$ operating sensors, which record the daily average at various locations. Since the adjacent sensors have similar records, we model the daily graph signal as a piecewise smooth signal, and build local-set-based piecewise-smooth dictionary ($ \D_{\rm LSPS}$), introduced in \ref{ssec:Graph_D} to represent it. In the experiment $ \D_{\rm LSPS}$ is considered as a piecewise polynomial dictionary. 

In this case study, our task is that for arbitrary successive two days, we use the proposed dictionary to detect the areas that whose recorded data changed the most saliently between the two days. The intuition behind this is that the difference of the recorded \ce{SO2} will have bigger magnitude in the targeted areas compared with their neighborhood, and the proposed multi-resolution dictionary can zoom into that certain areas so to detect them. Mathematically, the areas are encoded in the activated atoms corresponding to the top sparse coding coefficient in terms of magnitude. We use the matching pursuit to get the sparse coding $\a$ of $\x$ with respect to the built dictionary. 

In Figure~\ref{fig:detectChange}, we aim to illustrate the benefits of the proposed graph dictionary on detecting the very area whose recorded data changed the most from ${\rm May\ 26}$ to ${\rm May\ 27}$. Figure~\ref{fig:detectChange} (a) (b) show the snapshot of \ce{SO2} distributed on ${\rm May\ 26}$ and ${\rm May\ 27}$, (c) illustrates the snapshot of data records difference between these two days. (d), (e) show the detected area by the activated atom corresponding to the top $1$ and $6$ sparse coding coefficient, and they contain $57$ and $62$ nodes, respectively. As comparison, in (f) we rank the records difference in terms of magnitude and highlight the top $57$ nodes. Since (f) is generated without taking the graph structure into account, the highlighted nodes spread out; while the graph dictionary has the graph information built in, the highlighted nodes in (d) and (e) cluster into group, and give us a better geographical knowledge on which part of U.S. having \ce{SO2} distribution changes the most during these two days. In this experiment, $ \D_{\rm LSPS}$ is built by 2-means graph partition algorithm, other graph partition algorithms, spectral clustering and spanning tree, can also build graph dictionary that are good at detect records data change, and can provide better geographical knowledge than just simply rank the difference coefficient.
\begin{figure}[htb]
  \begin{center}
    \begin{tabular}{cc}
 \includegraphics[width=0.4\columnwidth]{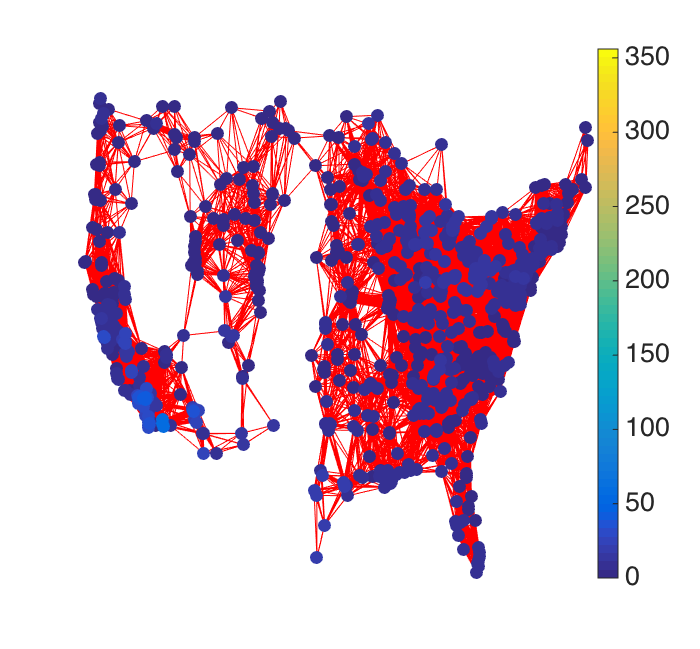}  & \includegraphics[width=0.4\columnwidth]{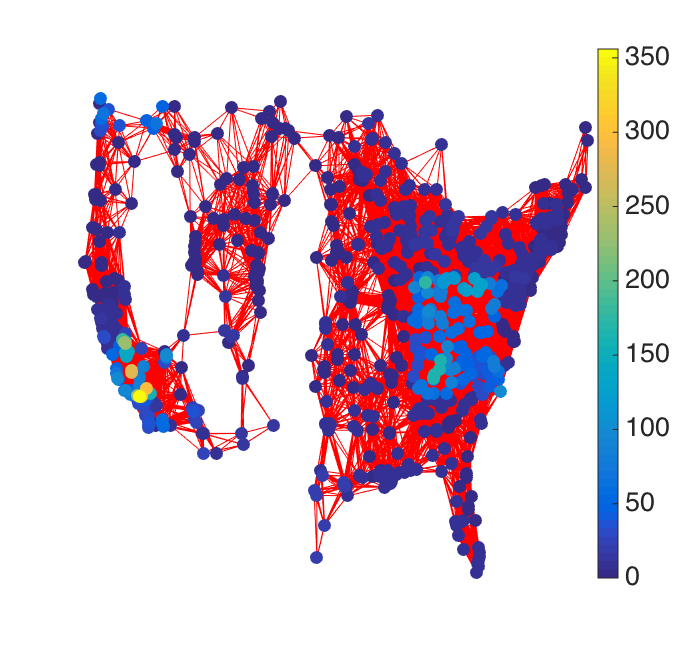}
\\
 {\small (a) Snapshot of \ce{PM_{2.5}}} & {\small (b) Snapshot of corrupted} 
 \\
 { on ${\rm Jan.\ 12}$.} & { \ce{PM_{2.5}} on ${\rm Jan.\ 12}$.} 
\\
\includegraphics[width=0.4\columnwidth]{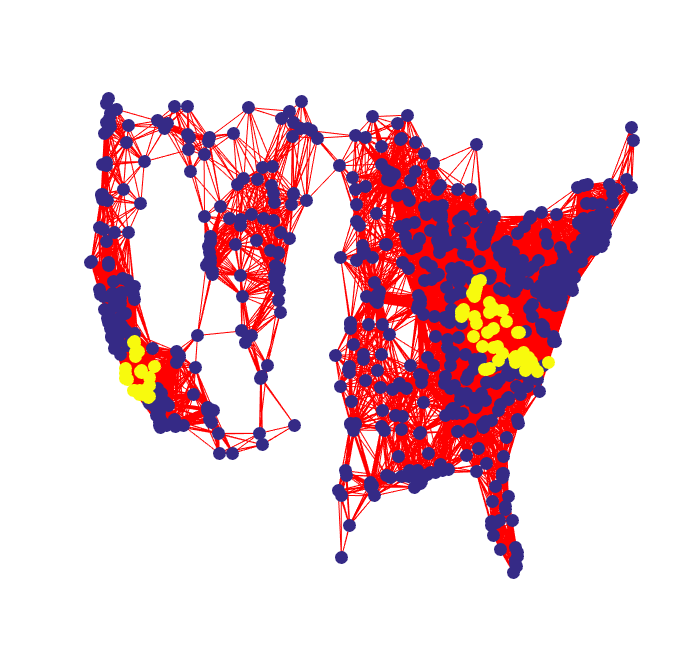}  & \includegraphics[width=0.4\columnwidth]{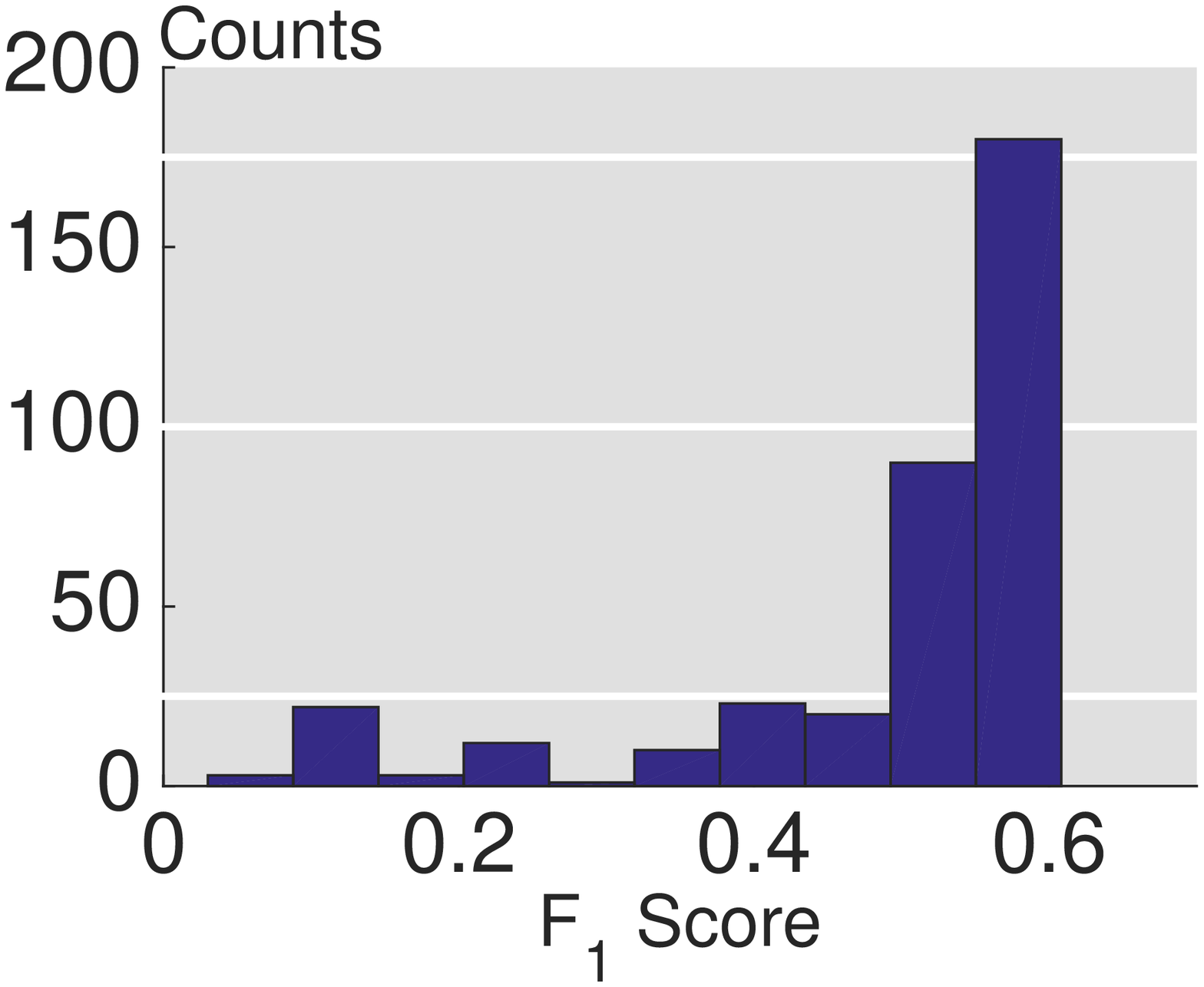}
\\
 {\small (c)  Detected area by atoms} & {\small (d) Histogram of $\rm F_1$ Score} 
 \\
 {corresponding to top $2$ $|{\a}_i|$} & {of detected areas} 
  \\
  {using spectral clustering.}&{using spectral clustering.}
  \\
 \includegraphics[width=0.4\columnwidth]{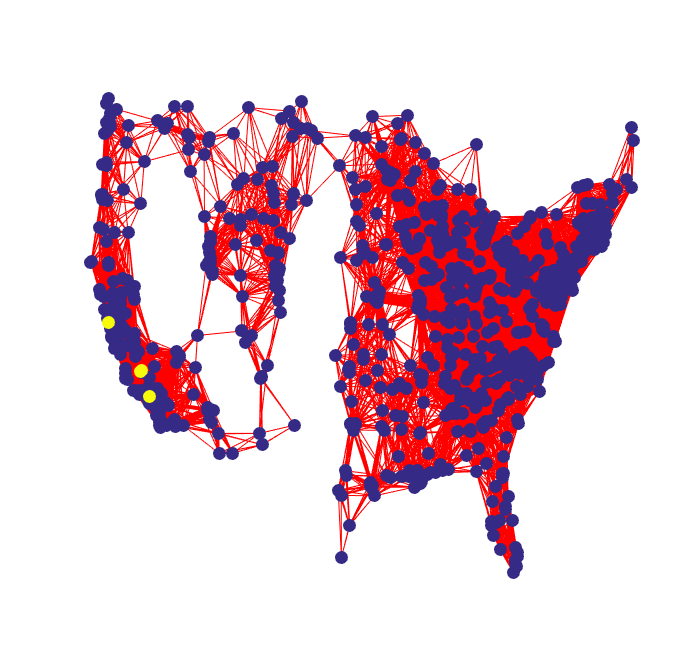}  & \includegraphics[width=0.4\columnwidth]{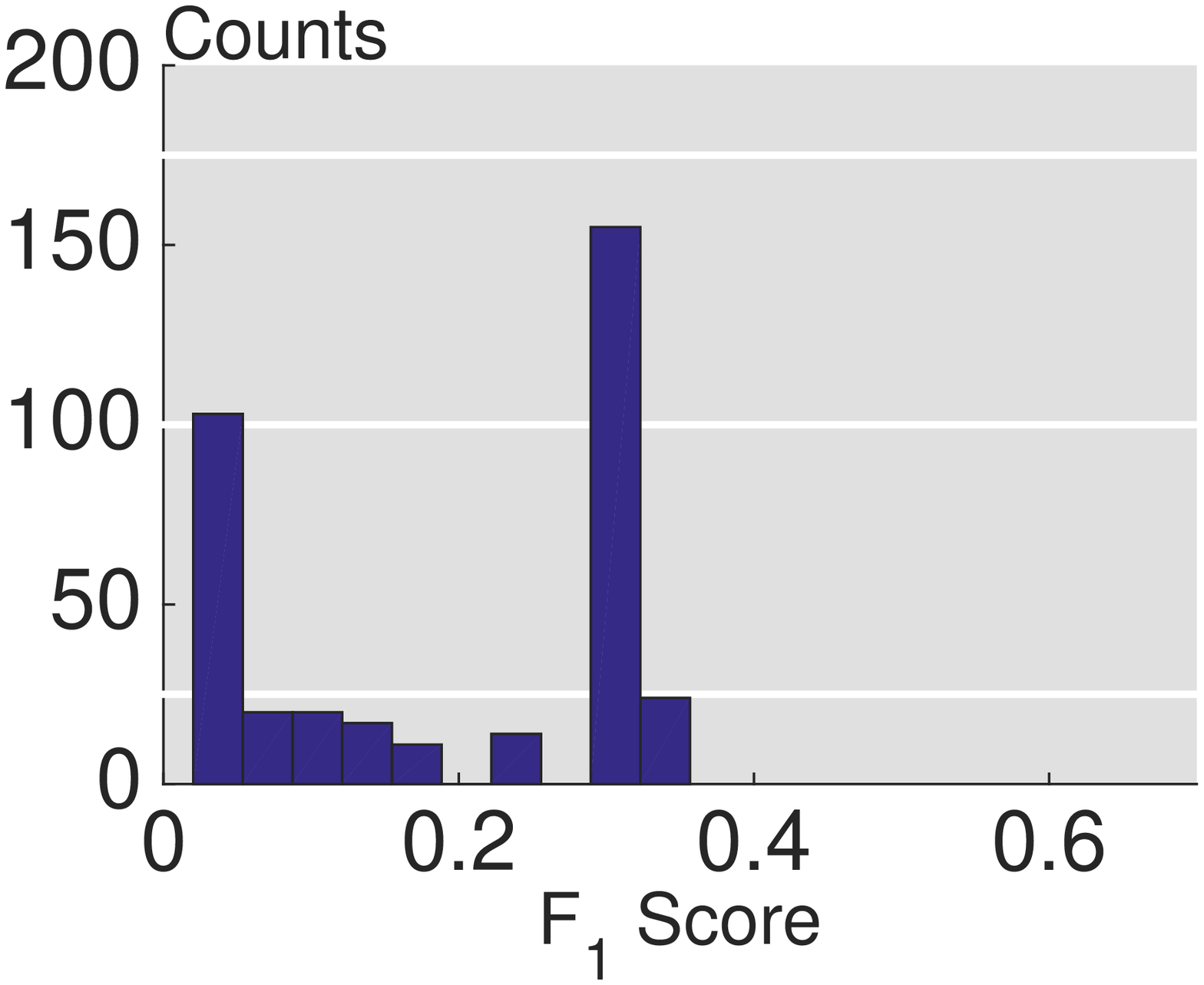}
 \\
 {\small (e) Detected area by atoms} & {\small (f)  Histogram of $\rm F_1$ Score} 
 \\
  {corresponding to top $2$ $|{\a}_i|$} & {of detected areas} 
  \\
  {using MST.} & {using MST.}
  \\
  \includegraphics[width=0.4\columnwidth]{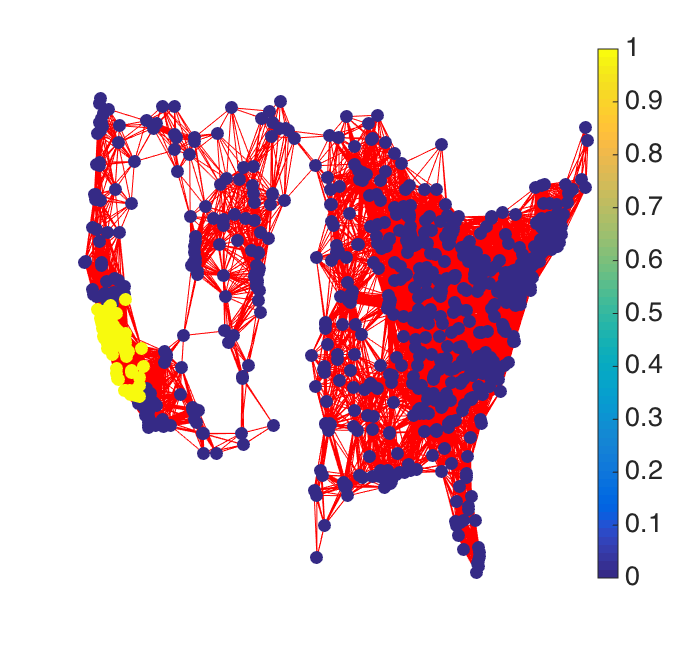}  & \includegraphics[width=0.4\columnwidth]{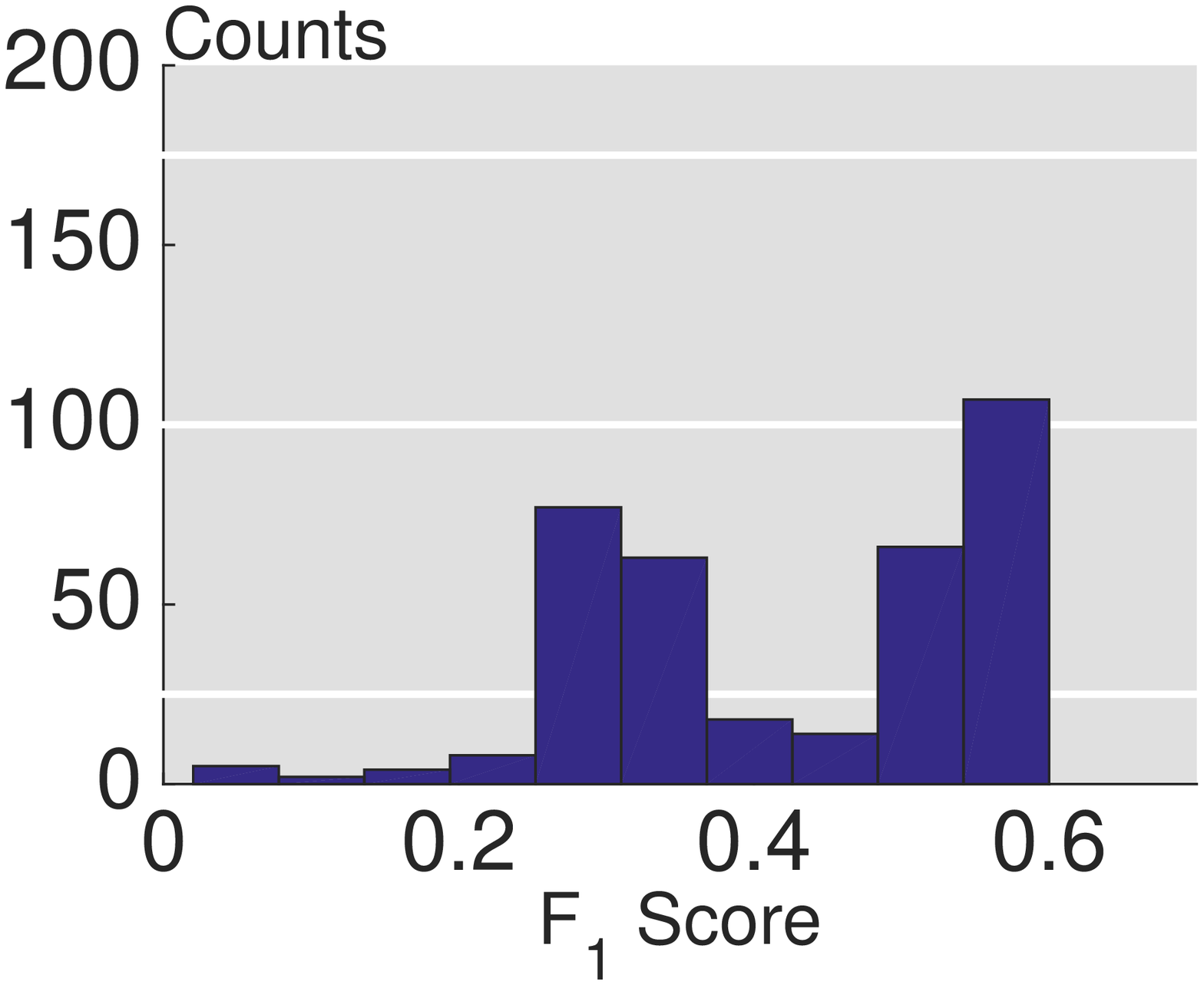}
 \\
 {\small (g) Detected areas by atoms} & {\small (h)  Histogram of $\rm F_1$ Score} 
 \\
  {corresponding to top $2$ $|{\a}_i|$.} & {of detected areas}
  \\
  {using 2-means.}&{using 2-means.}
\end{tabular}
  \end{center}
   \caption{\label{fig:detectChange} Detecting area with most salient data change for ${\rm May\ 26}$ and ${\rm May\ 27}$ by using $ \D_{\rm LSPS}$ built by 2-means graph partition algorithms.}
\end{figure}

\section{Conclusions}
\label{sec:conclusions}
Graph signal representation has been considered in many previous literature~\cite{CrovellaK:03, HammondVG:11, ShumanNFOV:13, ThanouSF:14}. There are mainly two approaches to design a representation for graph signals: one is based on the graph Fourier domain and the other one is based on the graph vertex domain. 

The representations based on the graph Fourier domain are based on the spectral properties of the graph. The most fundamental representation based on the graph Fourier domain is the graph Fourier transform, which is the eigenvectors of a matrix that represents a graph structure~\cite{ShumanNFOV:13, SandryhailaM:13}. Based on the graph Fourier transform, people propose various versions of multiresolution transforms on graphs, including diffusion wavelets~\cite{CoifmanM:06},  spectral graph wavelets~\cite{HammondVG:11}, 
graph quadrature mirror filter banks~\cite{NarangO:12}, windowed graph Fourier transform~\cite{ShumanRV:15}, polynomial graph dictionary~\cite{ThanouSF:14}. The main idea is to construct a series of graph filters on the graph Fourier domain, which are localized on both the vertex and graph Fourier domains. The advantages of the representations on the graph Fourier domain are: first, it avoids the complex and irregular connectivity on the graph vertex domain because each frequency is independent; second, it is efficient, because the construction is simply to determine a series of filter coefficients, where the computation is often accelerated by the polynomial approximation; third, it is similar to the design of the classical wavelets. However, there are two shortcomings: first, it loses the discrete nature of a graph. That is, the construction is not directly based on the graph frequencies; instead, it proposes a continuous kernel and then we samples the values from the continuous kernel;  second, the locality on the graph vertex domain is worse than the representations based on the graph vertex domain. It is true that this construction provides the better locality on the graph Fourier domain. However, the locality on the graph Fourier domain is vague, abstract, and is often less important in most real-world applications.

The representations based on the graph vertex domain are based on the connectivity properties of the graph. The advantages are: first, it provides better locality on the graph vertex domain and is easier to visualize; second, it provides a better understanding on the connectivity of a graph, which is avoided by the graph Fourier transform for the representations on the graph Fourier domain. Some examples of the representations include multiscale wavelets on trees~\cite{GavishNC:10}, graph wavelets for spatial analysis~\cite{CrovellaK:03}, spanning tree wavelet basis~\cite{SharpnackKS:13}.

\begin{itemize}
\item Spanning tree wavelet basis proposes a localized basis on a spanning tree. The proposed local-set-based piecewise-constant wavelet basis is mainly inspired from this work and the proposed representations generalize the results by using more general graph partition algorithms;

\item Multiscale wavelets on trees provides a hierarchy tree representation for a dataset. It proposes a wavelet-like orthonormal basis based on a balanced binary tree, which is similar to the proposed local-set-based piecewise-constant wavelet basis. The previous work focuses on high dimensional data and constructs a decomposition tree bottom up; the proposed local-set representations focus on a graph structure and and construct a decomposition tree top down, which is
useful for capturing clusters;

\item Graph wavelets for spatial traffic analysis proposes a general wavelet representation on graphs. The wavelet basis vectors are not generated from a single function, that is, the wavelet coefficients at different scales and locations are different; the proposed representations resemble the Haar wavelet basis in spirit and are generated from a single indicator function.
\end{itemize}

In this paper, we proposed a unified framework for representing and modeling data on graphs. Based on this framework, we study three typical classes of graph signals: smooth graph signals, piecewise-constant graph signals, and piecewise-smooth graph signals. For each class, we provide an explicit definition of the graph signals and construct a corresponding graph dictionary with desirable properties. We then study how such graph dictionary works in two standard tasks: approximation and sampling followed with recovery, both from theoretical as well as algorithmic perspectives. Finally, for each class, we present a case study of a real-world problem by using the proposed methodology.

\bibliographystyle{IEEEbib}
\bibliography{bibl_jelena}

\end{document}